




\documentclass[sigconf]{aamas} 


\usepackage{balance} 
\usepackage{natbib}
\usepackage{soul}
\usepackage{caption}
\usepackage{graphicx}
\usepackage{amsmath}
\usepackage{amsthm}
\usepackage{algorithm}
\usepackage{algorithmicx}
\usepackage{subfigure}
\usepackage{textcomp}
\usepackage{algpseudocode}
\usepackage{enumitem}
\usepackage{mathtools}
\usepackage{hyperref}
\usepackage{booktabs}
\usepackage{xcolor}
\usepackage{diagbox}
\usepackage{tikz}
\usepackage{lipsum}
\usepackage{epsfig,bm,dsfont}
\DeclareFixedFont{\myfont}{OT1}{ptm}{m}{n}{8pt}
\DeclareFixedFont{\myfontb}{OT1}{ptm}{bx}{n}{8pt}

\definecolor{DSgray}{cmyk}{0,1,0,0}


\newtheorem{myThm}{Theorem}

\DeclareMathOperator*{\argmin}{arg\,min}

\newcommand{\qw}{\mathbb{E}}

\newcommand{\citethm}[1]{Theorem \ref{#1}}

\newcommand{\citeeq}[1]{Equation \eqref{#1}}

\newtheorem{definition}{Definition}

\newtheorem{assumption}{Assumption}
\newtheorem*{lemma*}{Lemma}

\newtheorem*{theorem*}{Theorem}

\urlstyle{same}



\setcopyright{ifaamas}
\acmConference[AAMAS '21]{Proc.\@ of the 20th International Conference on Autonomous Agents and Multiagent Systems (AAMAS 2021)}{May 3--7, 2021}{Online}{U.~Endriss, A.~Now\'{e}, F.~Dignum, A.~Lomuscio (eds.)}
\copyrightyear{2021}
\acmYear{2021}
\acmDOI{}
\acmPrice{}
\acmISBN{}



\acmSubmissionID{14}


\title[AAMAS-2021 Formatting Instructions]{Imitation Learning from Pixel-Level Demonstrations by HashReward}




\author{Xin-Qiang Cai, Yao-Xiang Ding, Yuan Jiang, Zhi-Hua Zhou}
\affiliation{
  \department{National Key Laboratory for Novel Software Technology}
  \institution{Nanjing University}}
\email{{caixq, dingyx, jiangy, zhouzh}@lamda.nju.edu.cn}





\begin{abstract}
One of the key issues for imitation learning lies in making policy learned from limited samples to generalize well in the whole state-action space. This problem is much more severe in high-dimensional state environments, such as game playing with raw pixel inputs. Under this situation, even state-of-the-art adversary-based imitation learning algorithms fail. Through empirical studies, we find that the main cause lies in the failure of training a powerful discriminator to generate meaningful rewards in high-dimensional environments. Although it seems that dimensionality reduction can help, a straightforward application of off-the-shelf methods cannot achieve good performance. In this work, we show in theory that the balance between dimensionality reduction and discriminative training is essential for effective learning. To achieve this target, we propose HashReward, which utilizes the idea of supervised hashing to realize such an ideal balance. Experimental results show that HashReward could outperform state-of-the-art methods for a large gap under the challenging high-dimensional environments.
\end{abstract}


\keywords{Imitation Learning; High-Dimensional Environments; Hashing}


         
\newcommand{\BibTeX}{\rm B\kern-.05em{\sc i\kern-.025em b}\kern-.08em\TeX}


\begin{document}


\pagestyle{fancy}
\fancyhead{}


\maketitle 


\section{Introduction}
\label{sec:introduction}
In recent years, reinforcement learning (RL) has achieved a great breakthrough in many domains including robot controlling and game playing 
\citep{DBLP:journals/nature/MnihKSRVBGRFOPB15,DBLP:conf/icml/SchulmanLAJM15}. In spite of remarkable success, there are two main issues unsolved. First, in common situations, RL algorithms rely so much on the well-specified reward functions and exploration strategies, which require delicate designs in many complex problems. Second, the sample and computational complexities for practical RL algorithms are usually large, making it an unacceptable choice in solving many practical problems. On the other hand, imitation learning (IL), aiming at learning a good policy from demonstrations, enables the possibility of sample efficient policy learning without the need of designing rewarding and exploration strategies by hand. 

Nevertheless, since collecting expert demonstrations is costly, the number of expert trajectories for IL is usually limited in practice. This may significantly increase the risks of over-fitting. One of the key ideas to improve generalization performance is to learn a proper reward function from expert demonstrations. The learner can get high rewards only when it generates behaviors similar to the demonstrations.
Guided by such rewards, the learner is encouraged to mine out the expert's policy by RL algorithms, instead of directly performing behavior cloning \citep{DBLP:reference/ml/AbbeelN10,DBLP:conf/nips/HoE16}.
This idea motivates the emergence of adversary-based IL. Under this family of approaches, the policy and the adversarial discriminator are jointly trained during learning, in order to force the learner to minimize the discrepancy between the generated distribution and the demonstrated data. One representative approaches among them, generative adversarial imitation learning (GAIL) \citep{DBLP:conf/nips/HoE16}, achieves state-of-the-art performance in many tasks with relatively low-dimensional state spaces. However, it has been verified that the performance of such approaches, even deep neural network-based GAIL, degenerates seriously in tasks with high-dimensional state spaces~\citep{DBLP:journals/corr/abs-1810-10593,yu2020intrinsic}. On the other hand, it is well known that deep RL algorithms 
can achieve even superhuman performance under these domains \citep{DBLP:journals/nature/MnihKSRVBGRFOPB15}. 

In this work, we propose thorough studies towards understanding why adversary-based algorithms fail in high-dimensional IL environments. For obtaining a good policy, the discriminator plays an important role. The cause for performance degeneration in high-dimensional space can be interpreted by the hardness of properly dealing with the discrimination-rewarding trade-off in the discriminator learning process. Because the number of demonstration samples is limited, under high-dimensional state space, the learned discriminator may bias towards the discrimination side, providing meaningless rewarding signals for policy learning. Experimental observations and theoretical results are provided to support the above interpretations: They disclose that an ideal balance between performing dimensionality reduction (DR) and discriminative training is essential for learning.

Based on the above findings, we study a practical approach to enhance discriminator learning. It is non-trivial to guarantee that discriminative information in high-dimensional space can be preserved after performing DR. This suggests that directly utilizing off-the-shelf unsupervised DR algorithms, such as unsupervised autoencoder and hashing, is not the right choice because they make the processes of DR and discriminator learning totally isolated. As a result, essential information for discrimination can be heavily lost due to DR, making discriminator training biased towards discrimination or rewarding only. To address this issue, we propose a novel IL algorithm named HashReward, which utilizes a supervised hashing strategy to incorporate DR and discriminator training into a unified procedure. Experiments show that HashReward achieves significant improvement comparing to other state-of-the-art IL approaches under high-dimensional environments, which are broadly recognized to be challenging for IL algorithms.

The rest of the paper is organized as follows. Section~\ref{sec:related-work} discusses related work. Section~\ref{sec:problem-formulation} provides preliminaries. Section~\ref{sec:high-dim-imitation} demonstrates how to solve the underlying discrimination-rewarding trade-off problem for adversary-based IL methods, and then introduces HashReward algorithm. Section~\ref{sec:experiment-results} reports the experimental setup and results. Finally, Section~\ref{sec:conclusion} concludes the paper.

\section{Related Work}
\label{sec:related-work}
Recently, adversary-based methods have achieved great success in a wide range of IL scenarios, including robotic control~\citep{DBLP:conf/nips/HoE16}, game playing~\citep{DBLP:conf/aaai/HesterVPLSPHQSO18}, and simulating environments~\citep{DBLP:conf/aaai/Shi0DCZ19}.
 As the outputs from the discriminator are utilized as reward signals for the learner, they can be treated as a generalization of the family of IRL approaches. Among them, GAIL \citep{DBLP:conf/nips/HoE16} achieves state-of-the-art performance in handling low-dimensional IL problems, while its performance degenerates significantly in high-dimensional environments as discussed in Section~\ref{sec:introduction}. Though effective high-dimensional IL is challenging, there are a few works related to this topic, which assume additional signals besides provided demonstrations. ~\citep{DBLP:conf/aaai/HesterVPLSPHQSO18} and~\citep{DBLP:conf/nips/AytarPBP0F18} augmented RL to learn from both environment rewards and expert demonstrations, meanwhile~\citep{DBLP:conf/nips/ChristianoLBMLA17},~\citep{DBLP:conf/icml/BrownGNN19} and~\citep{DBLP:conf/nips/IbarzLPILA18} utilized human preference to enhance IL. In comparison, we consider pure IL without using additional environmental signals, which is more challenging. There are some related works proposed for this scenario, e.g., CNN-AIRL~\citep{DBLP:journals/corr/abs-1810-10593}, which uses an adversarial IRL method to play Atari game \textit{Enduro} with pixels inputs. But it utilizes autoencoder as the DR method, whose unsupervised information is not enough illustrated in the experiments. D-REX~\citep{brown2019drex} improves~\citep{DBLP:conf/icml/BrownGNN19} by learning a behavior cloning model to generate the ranked samples, and has achieved promising performance under the IL problems with suboptimal demonstrations. But in order to train the behavior cloning model efficiently, D-REX requires samples from quite different experts, which are not available in the settings of most IL tasks. GIRIL~\citep{yu2020intrinsic} is a non-GAIL method, and deals with the high-dimensional IL problem from a different angle than DR, which encodes action signals into VAE and utilizes the mechanism of curiosity to produce reward signals.
  But as will be discussed in subsequent sections, appropriate dimensionality reduction is inevitable for solving high-dimensional IL problems. VAIL~\citep{DBLP:conf/iclr/PengKTAL19} is the most closely related work to ours. It improves GAIL by employing information-theoretic regularization to learn a better feature representation as the input to the discriminator, and successfully used image feature in a continuous domain. Nevertheless, VAIL does not utilize explicit supervised loss in DR, which is crucial as shown in our experiments. In summary, the high-dimensional IL problem remains challenging for existing IL approaches.

Several recent works \citep{DBLP:journals/jcss/StrehlL08,DBLP:journals/corr/BlundellUPLRLRW16,DBLP:conf/nips/TangHFSCDSTA17,DBLP:conf/ijcai/YinCP18} have shown that the latent hashing features with unsupervised information learned by autoencoder can help address the exploration issue in challenging RL problems. Through hashing, a high-dimensional state is effectively discretized to make similar states mapping into the same hashing code~\citep{DBLP:conf/aaai/JiangL18}, leading to the convenience for utilizing counting-based exploration. Though unsupervised hashing seems to be promising for improving IL, we observe that the performance is not that satisfactory in the experiments. This phenomenon is reasonable since direct unsupervised hashing may lead to the risk of losing discriminative information in the original state space. This motivates us to propose HashReward to address this issue.

\section{Preliminaries}
\label{sec:problem-formulation}
In policy learning problems, a Markov Decision Process (MDP) can be represented by a tuple $\langle \mathcal{S}, \mathcal{A}, \mathcal{P}, \gamma, r, T \rangle$, in which $\mathcal{S}$ denotes the set of states, $\mathcal{A}$ denotes the set of actions, $\mathcal{P}: \mathcal{S} \times \mathcal{A} \times \mathcal{S} \to \mathbb{R}$ denotes the transition probability distributions of the state and action pairs, $\gamma \in (0, 1]$ denotes the discount factor, $r: \mathcal{S} \to \mathbb{R}$ denotes the reward function, $S_0: \mathcal{S} \to \mathbb{R}$ denotes the initial state distribution and $T$ denotes the horizon. The objective of RL is to learn a policy $\pi$ to maximize the expected total rewards $\mathbb{E}[\sum_{t = 0} ^ \infty \gamma^t r(s_t, a_t)]$ obtained by $\pi$.

Different from RL, in IL, the learner has no access to $r$. Instead, there are $m$ expert demonstrations $\{\tau_{E, 1}, \tau_{E, 2}, \dots, \tau_{E, m}\}$ available, where $\tau_{E, i}, i\in [m]$ is an expert trajectory (a series of state-action pairs) drawn independently from {\it expert's trajectory distribution} $\mu_{\pi_{\mathrm{E}}}$, induced by the expert's policy $\pi_{\mathrm{E}}$, initial state distribution $S_0$ and the transition probability distribution $\mathcal P$. The goal of the learner is to generate $\pi_{\mathrm{G}}$ such that the induced {\it learner's trajectory distribution} $\mu_{\pi_{\mathrm{G}}}$ matches $\mu_{\pi_{\mathrm{E}}}$.

Instead of directly minimizing the discrepancy between trajectory distributions to solve IL problem, existing adversary-based methods turn to the equivalent goal of minimizing the distance between the learner and expert occupancy measures $d(\rho_{\pi_{\mathrm{G}}}, \rho_{\pi_{\mathrm{E}}})$, where $\rho_\pi: \mathcal{S} \times \mathcal{A} \to \mathbb{R}$ is defined as $\rho_\pi(s, a) = \pi(a|s)\sum_{t = 0} ^ \infty \gamma ^ t Pr(s_t = s|\pi)$. Based on the idea of generative adversarial training, they perform policy learning by solving a min-max optimization problem, i.e. $\min_{\pi_{\mathrm{G}}} \max_D D(\rho_{\pi_{\mathrm{E}}}, \rho_{\pi_{\mathrm{G}}})$, in which $D$ is the discriminator. Among these approaches, GAIL~\citep{DBLP:conf/nips/HoE16} achieves state-of-the-art performance in many task environments. The objective of GAIL is 
\begin{equation}
  \min_{\pi_{\mathrm{G}}} \max_D \mathbb{E}_{\rho \sim \rho_{\pi_{\mathrm{E}}}} [\log D(\rho)] + \mathbb{E}_{\rho \sim \rho_{\pi_{\mathrm{G}}}} [\log(1 - D(\rho))],
\label{eq:gan_loss}
\end{equation}
where the discriminator $D : \mathcal{S} \times \mathcal{A} \to [0, 1]$ has the formulation of a classifier trying to discriminate state-action pairs generated by the learner and the expert. It is proved that by GAIL, the learned $\pi_{\mathrm{G}}$ can minimize the regularized version of Jensen-Shannon divergence, i.e.,
\begin{equation}
\label{eq:gail-purpose}
  \pi_{\mathrm{G}} = \argmin_{\pi \in \Pi} -\mathbb{H}(\pi) + d_{JS}(\rho_\pi, \rho_{\pi_{\mathrm{E}}}),
\end{equation}
where $\Pi$ denotes the policy set, $d_{JS}(\rho_\pi, \rho_{\pi_{\mathrm{E}}})$ is the Jensen-Shannon divergence between $\rho_\pi$ and $\rho_{\pi_{\mathrm{E}}}$, and $\mathbb{H(\pi)}$ is the causal entropy used as a policy regularizer. GAIL solves \citeeq{eq:gan_loss} by alternatively taking a gradient ascent step to train the discriminator $D$ and a minimization step to learn policy $\pi_{\mathrm{G}}$ based on off-the-shelf RL algorithm which utilizes $-\log D(s, a)$ as the pseudo reward function. 

\section{High-Dimensional Imitation Learning by HashReward}
\label{sec:high-dim-imitation}
As learning an effective discriminator plays an essential rule in adversary-based methods, in this section, first we analyze the reason why existing state-of-the-art adversary-based methods fail by proposing a generalization bound about learning a discriminator in IL scenarios. Inspired by the theoretical conclusion, then we propose HashReward to solve the high-dimensional IL problem.
\subsection{Balancing the Discrimination-Rewarding Trade-Off}
\label{subsec:why-gail-fails}
Intuitively, a well-trained discriminator should balance the following two capabilities. On the one hand, the discriminator should be powerful enough, in order to generate negative rewards on trajectories that are significantly different from the expert's demonstrations. On the other hand, the discriminator should not be too strong to discriminate trajectories that are sufficiently similar to the expert's demonstrations. This will ensure that positive rewards are generated on these good trajectories, and encourage the policy to improve in the right direction. 
We identify this problem as the {\it discrimination-rewarding trade-off}.
The reward curves on Atari game \textit{Qbert} in Figure~\ref{fig:curve-pseudo} evince that the deficiency of 
state-of-the-art adversary-based methods (i.e., GAIL) in high-dimensional environments is due to the tendency of learning too powerful discriminators. Intuitively, this is due to the curse of dimensionality: We usually cannot collect sufficient expert's demonstration data to meet the demand of high-dimensional learning, leading to overfitting of discriminator training.
In such a case, dimensionality reduction (DR) alone is not hard to think of, but the key to finding out a solution is how to do the DR. Thus we start from a theoretical analysis motivated by the generalization theory of GAN \citep{DBLP:conf/iclr/Zhang0ZX018} over the trajectory learning.
Let $\hat\mu_{\pi_{\mathrm{E}, m}}$ be expert's empirical trajectory distribution obtained from $m$ expert trajectories $\tau_{E,i}, i\in[m]$, over the trajectory space $\mathcal T$. Generally, we assume that a feature transformation $\phi(\tau_{\mathrm{E}})$, which is a bijective mapping from $\mathcal T$ to another trajectory space $\mathcal T'$ exists. 
We can see that $\phi$ plays a crucial role in the following discussions. For simplicity, we assume that the learner directly minimizes the neural distance \cite{DBLP:conf/icml/Arora0LMZ17} over trajectory distributions {\it under the mapped feature space}, i.e,
\begin{equation}
   \min_{\pi_{\mathrm{G}}\in\mathcal G}[d_{\mathcal D'}(\hat{\mu}_{\pi_{\mathrm{E}, m}}, \mu_{\pi_{\mathrm{G}}})],
\label{eq:theory_opt}
\end{equation}
where 
\begin{equation}
\begin{aligned}
    d_{\mathcal D'}(\hat{\mu}_{\pi_{\mathrm{E}, m}}, \mu_{\pi_{\mathrm{G}}}) = \sup_{D\in\mathcal D'}&\{\mathbb E_{\tau_{\mathrm{E}}\sim \hat\mu_{\pi_{\mathrm{E}, m}}}[D(\phi(\tau_{\mathrm{E}}))] \\
    & - \mathbb E_{\tau_{\mathrm{G}}\sim \mu_{\pi_{\mathrm{G}}}}[D(\phi(\tau_{\mathrm{G}})]\},
\end{aligned}
\label{eq:dropt}
\end{equation}
in which $\mathcal G$ is the policy hypothesis space, and $\mathcal D'$ is the neural network based discriminator hypothesis space under trajectory space $\mathcal T'$. To proceed the analysis, we utilize $\mathcal D$ to denote the hypothesis space under the original trajectory space $\mathcal T$, which is different from $\mathcal D'$ only in input dimension. By solving \citeeq{eq:theory_opt}, we expect to obtain a $\pi_{\mathrm{G}}$ that minimizes the expected neural distance in the original trajectory space, i.e., $d_{\mathcal D}(\mu_{\pi_{\mathrm{E}}}, \mu_{\pi_{\mathrm{G}}}) = \sup_{D\in\mathcal D}\{\mathbb E_{\tau_{\mathrm{E}}\sim \mu_{\pi_{\mathrm{E}}}}[D(\phi(\tau_{\mathrm{E}}))] - \mathbb E_{\tau_{\mathrm{G}}\sim \mu_{\pi_{\mathrm{G}}}}[D(\phi({\tau}_G)]\}$ by minimizing $d_{\mathcal D'}(\hat{\mu}_{\pi_{\mathrm{E}, m}}, \mu_{\pi_{\mathrm{G}}})$, which would guarantee a good $\pi_{\mathrm{G}}$ when $\mathcal D$ is rich enough. In practice, the optimization in \citeeq{eq:theory_opt} may not be exactly solved, thus we utilize $\hat d_{\mathcal D'}, \hat\pi_{\mathrm{G}}$ to denote the resulted neural distance and policy after training. We further introduce the following assumptions.
\begin{assumption}
\label{assumption:D}
$\mathcal D'$ is a class of neural networks, whose definition could be found in the supplementary material. Furthermore, $\mathcal{D'}$ is even, i.e., $D \in \mathcal{D'}$ implies $-D \in \mathcal{D'}$. Meanwhile $\forall D \in \mathcal{D'}$, $\left\Vert D \right\Vert_\infty \leq \Delta$, in which $\left\Vert D \right\Vert_\infty = \sup_{\tau \in\mathcal T'}|D(\tau)|$.
\end{assumption}
\begin{assumption}
\label{assumption:eta}
$\hat d_{\mathcal D'}(\hat\mu_{\pi_{\mathrm{E}, m}}, \mu_{\hat\pi_{\mathrm{G}}})\leq \eta.$ 
\end{assumption}
%
Assumption~\ref{assumption:D} is easily satisfied by general neural network models.
Furthermore, if $\hat \pi_{\mathrm{G}}$ is sufficiently trained w.r.t. $\hat d_{\mathcal D'}$, then $\eta$ in Assumption~\ref{assumption:eta} is also small. 
We then have the following sample complexity result, whose proof is included in the supplementary material.

\begin{myThm}
\label{thm:generalization-bound}
Let $\Delta_1 = |d_{\mathcal D}(\mu_{\pi_{\mathrm{E}}}, \mu_{\hat\pi_{\mathrm{G}}}) - d_{\mathcal D'}(\mu_{\pi_{\mathrm{E}}}, \mu_{\hat\pi_{\mathrm{G}}})|$, $\Delta_2 = |\hat d_{\mathcal D'}(\hat\mu_{\pi_{\mathrm{E}, m}}, \mu_{\hat\pi_{\mathrm{G}}}) - d_{\mathcal D'}(\hat\mu_{\pi_{\mathrm{E}, m}}, \mu_{\hat\pi_{\mathrm{G}}})|.$  
Given expert trajectory data $X$ which consists of $m$ trajectories $\tau_{\pi_{\mathrm{E}}} \in\mathcal T$, if $m \geq 3\|\phi(\mathbf{X})\|_{\mathrm{F}}\mathcal R$, then with probability at least $1 - \delta$, we have
\begin{equation}
\label{eq:DR-generalization-bound}
\begin{aligned}
d_{\mathcal D}(\mu_{\pi_{\mathrm{E}}}, \mu_{\hat \pi_{\mathrm{G}}}) &\leq \Delta_1 + \Delta_2 + 6\Delta\sqrt{\frac{\log(2/\delta)}{2m}} \\ &+ \frac{24\left\Vert \phi(\mathbf{X})\right\Vert_\mathrm{F}\mathcal{R}}{m}(1 + \log\frac{m}{3\left\Vert \phi(\mathbf{X})\right\Vert_\mathrm{F}\mathcal{R}}) + \eta, \\ 
\end{aligned}
\end{equation}
where $\mathcal R$ denotes the spectral normalized complexity of $\mathcal D'$ as defined in the supplementary material.
\end{myThm}
The above result reveals the key factors to learn a good policy. First, see the sample complexity terms involving $m$. Under a properly chosen $\mathcal D'$ which ensures small $\mathcal R$, the key for sharpening the bound is to control the Frobenius norm of $\phi(\mathbf{X})$, which can be achieved by learning a good DR version of $\phi$. Furthermore, $\Delta_1$ measures the gap of optimal discriminator between $\mathcal D$ and $\mathcal D'$. This shows that $\phi$ should also be distance-preserving. These two observations show the advantage of learning a hashing function as $\phi$: It is well-known that hashing mappings usually perform DR with sparsity as well as distance-preserving property~\citep{DBLP:conf/vldb/GionisIM99}, thus fit for our needs desirably. We should also pay attention to $\Delta_2$, which measures the quality of discriminator training in $\mathcal D'$. We find this should be stressed not only for learning the discriminator under $\mathcal D'$ mapped {\it after $\phi$}, but also for {\it learning $\phi$ itself}. These observations from Theorem~\ref{thm:generalization-bound} motivate our HashReward approach.


\subsection{HashReward} 
\label{sec:hashreward}
According to \citethm{thm:generalization-bound}, learning $\phi$ in a proper way is essential to our task: A good $\phi$ should reduce the input norm ($\left\Vert \phi(\mathbf{X}) \right\Vert_\mathrm{F}$) meanwhile preserving discrimination properties ($\Delta_1$ and $\Delta_2$). This particularly shows that directly applying off-the-shelf unsupervised DR, like autoencoder or unsupervised hashing, could lead to unsatisfying results for risks of losing discriminative information.
To achieve a good balance, we propose HashReward, which is a novel adversary-based IL approach utilizing supervised hashing for learning effective discriminators to achieve the balance between discrimination and rewarding.
 The key idea lies in using unsupervised reconstructive information to learn the hashing code (reducing $\left\Vert \phi(\mathbf{X}) \right\Vert_\mathrm{F}$ and $\Delta_1$) as well as leading supervised discriminative information into the whole DR part (reducing $\Delta_2$), so that the hashing code could obtain the ability to represent the original high-dimensional states.
  To generate such effective representation, the network structure utilized for the autoencoder and discriminator training is illustrated in Figure~\ref{fig:architecture}. We utilize autoencoder to train 
  the hashing code that maintains reconstructive information of the original pixels. Meanwhile, the action signal is concatenated to the hashing code to formulate the input of the discriminator training. By this way, the supervised discriminative information is directly propagated back to learn hashing codes. The loss function for discriminator training is divided into two parts, i.e, 
\begin{figure*}[t]
    \centering
    \includegraphics[width=0.7\textwidth]{../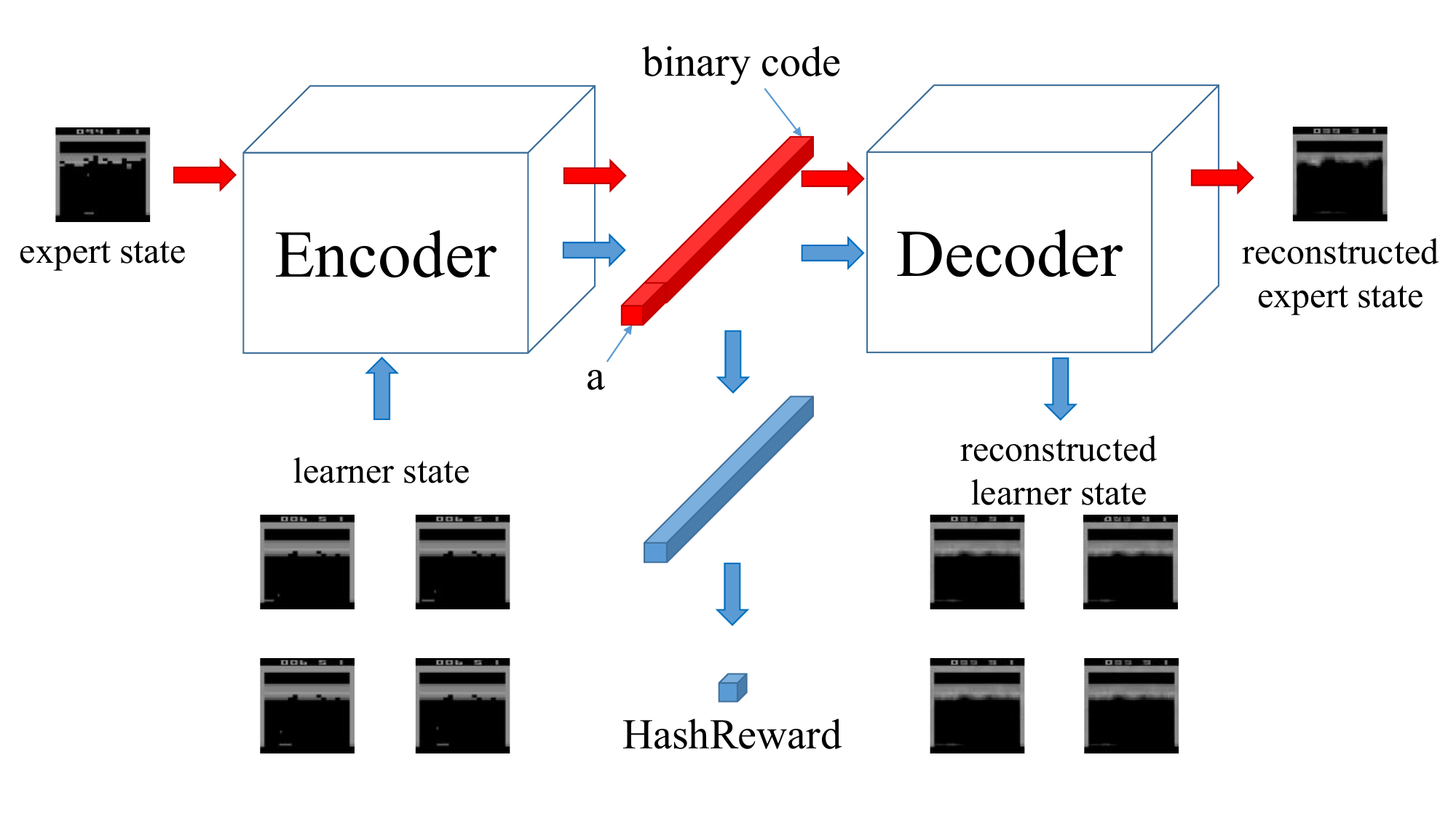}
    \caption{\small Illustration of the HashReward model architecture which contains two modules: the autoencoder module and the discriminator module. The red solid block represents the concatenation of the hashing code layer and the action signal $a$. The blue solid block represents hidden dense layers of the discriminator module.}
    \label{fig:architecture}
\end{figure*}
\begin{equation}
\label{eq:loss}
\begin{split}
  L = L_H + L_D,
\end{split}
\end{equation}
where $L_H$ denotes the hashing training loss, which propagates error for training the autoencoder and hashing code layer, and $L_D$ denotes the discriminator training loss, which is utilized for training the discriminator layers as well as enhancing the supervision of DR part. $L_D$ is similar to the inner maximization in \citeeq{eq:gan_loss}, except that the input state $s$ is replaced by the binary hashing code $b(s)$. Inspired by \citet{DBLP:conf/cvpr/Liu0SC16}, we define the hashing loss $L_H$ as 
\begin{algorithm}[!t]
    \caption{HashReward}
    \label{alg}
    \begin{algorithmic}[1]
        \Require Expert demonstrations $\tau_{\mathrm{E}} \sim \mu_{\pi_{\mathrm{E}}}$; Initialized learner's policy $\pi_{\mathrm{G},0}$.
        \State Pretrain autoencoder with samples from expert demonstrations and the random policy.
        \For {iteration $t = 1, 2, \cdots, T$}
          \State Utilize $\pi_{\mathrm{G},t-1}$ to generate learner's trajectories, i.e. $\tau_{\mathrm{G}} \sim \mu_{\pi_{\mathrm{G}}}$.
          \State Sample a mini-batch of state-action pairs $\{(s, a)\}_t$ from both $\tau_{\mathrm{G}}$ and $\tau_{\mathrm{E}}$.
          \State Update HashReward network by \citeeq{eq:loss} using $\{(s, a)\}_t$, then generate rewards $\hat r$ for all state-action pairs in $\{(s, a)\}_t$.
          \State $\pi_{\mathrm{G}, t-1} \rightarrow \pi_{\mathrm{G}, t}$ using $\hat r$ by RL update.
        \EndFor
    \end{algorithmic}
\end{algorithm}
\begin{equation}
\begin{aligned}
  L_H(  \{s_i  , y_i\}, \{s_j, y_j\}) 
   &= \left\Vert s_i-s'_i\right\Vert_2 ^ 2 + \left\Vert s_j-s'_j\right\Vert_2 ^ 2 \\
   &+  \lambda \big(\| 1 - \left\vert b(s_i)\right\vert \|_2^2 + \| 1 - \left\vert b(s_j)\right\vert \|_2^2\big) \\
   & + \frac{1}{2}\mathbb{I}(y_{ij}) \left\Vert  b(s_i) - b(s_j) \right\Vert_2 ^ 2
   \\ &+ \frac{1}{2}(1 - \mathbb{I}(y_{ij})) \max(2l - \left\Vert b(s_i) - b(s_j) \right\Vert_2 ^ 2, 0), 
\end{aligned}
\label{eq:loss_suphash}
\end{equation}
in which $l$ denotes the length of the hashing code and $\mathbb{I}(y_{ij})$ is the indicator function which takes 1 if $y_i$ equals to $y_j$, and 0 otherwise. In \citeeq{eq:loss_suphash}, $(s_i, y_i), (s_j, y_j)$ denote a pair of state-label instance. For one state-label instance $(s, y)$, we utilize $s$ to denote a state sampled from the learner's policy or a state from the trajectory generated by the expert. Furthermore, we utilize $y$ to indicate where $s$ is sampled, such that $y = 1$ if $s$ is sampled from the demonstration and $y = 0$ otherwise. The first two terms of $L_H$ represent the reconstruction error, making the reconstructed states $s'$ similar to the original states $s$. The next two terms (regularization terms weighted by $\lambda$) are used to enforce $b(s)$ to get close to binary values in $\{-1, 1\}$, where $b(s)$ is the logit output of the hashing layer. The last two terms in $L_H$ are essential for introducing the supervision into hashing code training. From these terms, the unbinarized hashing codes $b(s_i)$ and $b(s_j)$ of two states $s_i, s_j$ will get similar only when they have the same labels. By this way, the discriminative information is effectively propagated for the learning hashing representations. Overall, the output of HashReward network is $D(s, a) \in (0, 1)$, and we utilize $-logD(s, a)$ as the pseudo reward for the agent. This process in all experiments can be compared to the network architecture in the supplementary material.

\begin{table*}[!t]
\centering 
\footnotesize
\caption{The performance of each method on Atari after 10M timesteps. Boldface numbers indicate the best results. The state space is 84 $\times$ 84 $\times$ 4.}
\vspace{-2mm}
\label{tab:atari_results}
\resizebox{1\textwidth}{!}{
\begin{tabular}{c|c|ccccccccc}
\toprule
               & \textbf{Expert Reward} & \textbf{GAIL} & \textbf{VAIL} & \textbf{GIRIL}              & \textbf{GAIL-AE} & \textbf{GAIL-AE-Up} & \textbf{GAIL-UH} & \textbf{GAIL-UH-Up} & \textbf{HashReward-AE}         & \textbf{HashReward}           \\
\midrule
BeamRider      & 2139.20 $\pm$ 41.60                     & 854.47$\pm$220.63              & 615.63$\pm$258.67              & \textbf{2973.27$\pm$224.00} & 638.04$\pm$95.95                  & 2182.59$\pm$1424.69                  & 1412.91$\pm$230.91                & 2089.55$\pm$1232.42                  & 596.00$\pm$8.14                                 & 1613.68$\pm$203.83                             \\
Breakout       & 144.35 $\pm$ 29.27                      & 10.48$\pm$1.70                 & 24.32$\pm$2.60                 & 61.53$\pm$17.98                              & 17.32$\pm$6.83                    & 28.70$\pm$0.22                       & 1.04$\pm$0.52                     & 49.85$\pm$9.78                       & 32.10$\pm$2.84                                  & \textbf{67.73$\pm$13.77}      \\
Boxing         & 95.70 $\pm$ 2.63                        & 26.78$\pm$3.24                 & 2.47$\pm$1.55                  & -3.64$\pm$1.57                               & 26.05$\pm$19.83                   & -5.47$\pm$23.43                      & 0.59$\pm$1.17                     & -15.36$\pm$15.18                     & -0.72$\pm$1.15                                  & \textbf{84.71$\pm$2.13}       \\
BattleZone     & 23000.00 $\pm$ 2549.51                  & 11863.33$\pm$767.09            & 7566.67$\pm$1503.43            & 9070.00$\pm$2203.47                          & 11030.00$\pm$4734.64              & 9133.33$\pm$5255.45                  & 4670.00$\pm$1432.29               & 11400.00$\pm$3559.52                 & 7043.33$\pm$1728.01                             & \textbf{15623.33$\pm$278.61}  \\
ChopperCommand & 3135.00 $\pm$ 145.86                    & 1469.33$\pm$135.22             & 1190.00$\pm$37.50              & 604.67$\pm$52.93                             & 1151.67$\pm$17.46                 & 1148.00$\pm$74.69                    & 1144.33$\pm$446.19                & 995.33$\pm$362.83                    & 924.00$\pm$107.34                               & \textbf{1522.67$\pm$78.36}    \\
CrazyClimber   & 95245.00 $\pm$ 2477.39                  & 35451.33$\pm$1002.73           & 41170.67$\pm$5024.62           & 5020.00$\pm$599.03                           & 10321.00$\pm$1539.70              & 9590.00$\pm$5689.36                  & 4049.00$\pm$907.23                & 3159.33$\pm$2228.60                  & 44766.67$\pm$21591.68                           & \textbf{63076.00$\pm$1841.86} \\
Enduro         & 469.85 $\pm$ 18.21                      & 26.53$\pm$26.35                & 119.87$\pm$10.06               & 0.00$\pm$0.00                                & 70.02$\pm$70.66                   & 0.04$\pm$0.05                        & 0.00$\pm$0.00                     & 0.00$\pm$0.00                        & 209.07$\pm$19.05                                & \textbf{219.88$\pm$72.06}     \\
Kangaroo       & 4175.00 $\pm$ 94.21                     & 1695.67$\pm$87.96              & 1482.00$\pm$427.65             & 32.00$\pm$2.83                               & 935.00$\pm$63.90                  & 542.00$\pm$44.50                     & 80.00$\pm$94.36                   & 26.33$\pm$5.31                       & 1352.00$\pm$249.99                              & \textbf{1925.67$\pm$145.11}   \\
MsPacman       & 3163.00 $\pm$ 160.88                    & 298.00$\pm$22.56               & 1316.87$\pm$182.96             & 121.23$\pm$75.75                             & 711.43$\pm$9.19                   & 674.67$\pm$3.07                      & 655.00$\pm$41.36                  & 701.33$\pm$16.21                     & 726.93$\pm$42.11                                & \textbf{1463.07$\pm$52.68}    \\
Pong           & 21.00 $\pm$ 0.00                        & -18.40$\pm$0.29                & -18.78$\pm$0.18                & -20.14$\pm$0.07                              & -14.57$\pm$5.65                   & -20.90$\pm$0.15                      & -17.89$\pm$0.79                   & -17.59$\pm$0.26                      & -17.97$\pm$0.94                                 & \textbf{-5.25$\pm$4.79}       \\
Qbert          & 4750.00 $\pm$ 50.00                     & 3634.42$\pm$388.09             & 3260.62$\pm$209.12             & 1050.33$\pm$197.75                           & 1126.25$\pm$416.21                & 1303.62$\pm$33.38                    & 360.33$\pm$111.00                 & 287.17$\pm$67.09                     & 4000.50$\pm$327.93                              & \textbf{4553.58$\pm$155.14}   \\
Seaquest       & 1835.00 $\pm$ 23.56                     & 761.13$\pm$38.89               & 826.93$\pm$40.99               & 628.93$\pm$52.75                             & 1124.00$\pm$400.68                & 1151.33$\pm$628.84                   & 689.93$\pm$5.04                   & 664.80$\pm$10.47                     & 1339.80$\pm$188.92                              & \textbf{1400.73$\pm$67.41}    \\
SpaceInvaders  & 743.50 $\pm$ 26.03                      & 341.67$\pm$23.38               & 346.70$\pm$27.42               & \textbf{550.10$\pm$12.95}                             & 302.23$\pm$38.18                  & 318.75$\pm$36.39                     & 309.83$\pm$74.85                  & 512.32$\pm$49.89                     & 496.33$\pm$67.44                                & 546.48$\pm$42.09     \\
UpNDown        & 34200.50 $\pm$ 2083.35                  & 21086.17$\pm$848.61            & 31637.43$\pm$2301.67           & 24399.07$\pm$9663.36                         & 20252.53$\pm$10582.64             & 43972.33$\pm$1105.54                 & 4227.27$\pm$1264.39               & 8715.40$\pm$4807.56                  & \textbf{64648.43$\pm$17684.03} & 36989.70$\pm$12739.30                          \\
Zaxxon         & 11190.00 $\pm$ 490.82                   & 8130.33$\pm$626.00             & 8473.00$\pm$1507.70            & 2644.33$\pm$63.73                            & 8577.67$\pm$852.58                & 9062.00$\pm$1336.34                  & 1614.67$\pm$64.63                 & 1546.67$\pm$28.77                    & 9005.00$\pm$1404.70                             & \textbf{10068.33$\pm$410.69}                            \\
\bottomrule
\end{tabular}}
\end{table*}

\begin{table*}[!t]
\centering 
\footnotesize
\caption{The performance of each method on MuJoCo after 1M timesteps. The state space is 84 $\times$ 84 $\times$ 1.}
\vspace{-2mm}
\label{tab:mujoco_results}
\resizebox{1\textwidth}{!}{
\begin{tabular}{c|c|ccccccccc}
\toprule
                &  \textbf{Expert Reward}              &  \textbf{GAIL}                                        &  \textbf{VAIL}                     &  \textbf{GIRIL} &  \textbf{GAIL-AE}                   &  \textbf{GAIL-AE-Up}                                        &  \textbf{GAIL-UH}                 &  \textbf{GAIL-UH-Up}               &  \textbf{HashReward-AE}                                          &  \textbf{HashReward}              \\
\midrule
Humanoid        & 1029.15 $\pm$ 53.95    &    360.13$\pm$9.53    &  392.69$\pm$9.87      &  207.27$\pm$20.07     &  \textbf{435.95$\pm$7.52}      &  422.90$\pm$28.37     &  424.11$\pm$22.36     &  433.74$\pm$9.76      &  364.61$\pm$1.98      &  370.92$\pm$3.65      \\
HalfCheetah     & 1189.01 $\pm$ 140.28     &  -1198.28$\pm$344.95  &  -138.48$\pm$36.00    &  -280.27$\pm$159.74   &  -488.56$\pm$232.77   &  -706.58$\pm$339.33   &  -402.19$\pm$49.28    &  -420.65$\pm$54.38    &  -149.73$\pm$253.05   &  \textbf{-78.92$\pm$168.05}    \\
Hopper          & 2304.87 $\pm$ 287.45       &  496.31$\pm$294.86    &  2210.22$\pm$5.88     &  975.13$\pm$17.02     &  753.02$\pm$60.52     &  1149.54$\pm$309.80   &  968.16$\pm$16.20     &  943.70$\pm$149.26    &  2116.94$\pm$35.24    &  \textbf{2216.83$\pm$76.57}    \\
HumanoidStandup & 110722.99 $\pm$ 6360.51 &  67946.69$\pm$3690.98 &  77412.49$\pm$2948.07 &  81217.79$\pm$1013.14 &  79006.35$\pm$2857.10 &  75210.63$\pm$2109.55 &  68248.25$\pm$3586.59 &  70330.63$\pm$2901.27 &  82718.25$\pm$2706.36 &  \textbf{83423.17$\pm$1878.71} \\
Reacher         & -10.00 $\pm$ 2.83   &  -28.99$\pm$5.42      &  -74.98$\pm$13.82     &  \textbf{-13.53$\pm$1.31}      &  -18.79$\pm$1.01      &  -20.23$\pm$0.80      &  -15.54$\pm$0.36      &  -15.55$\pm$0.33      &  -100.10$\pm$25.82    &  -39.35$\pm$12.24                            \\
\bottomrule
\end{tabular}}
\end{table*}

\begin{figure*}[!t]
    \centering
    \subfigure[Breakout]{
        \includegraphics[ width=0.19\textwidth]{../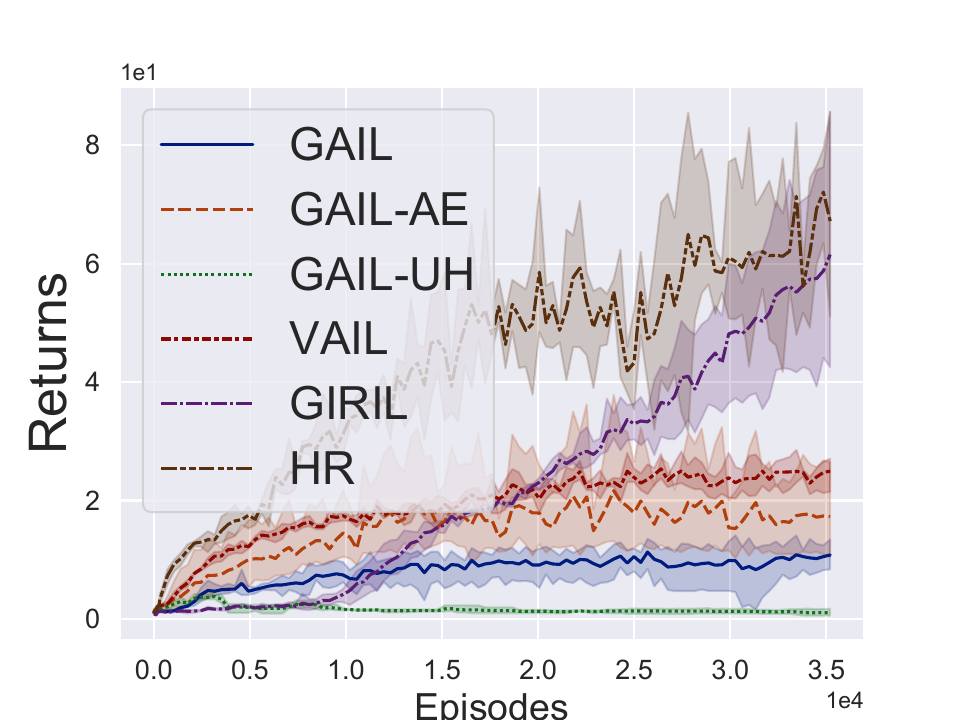}}
    \subfigure[BeamRider]{ 
        \includegraphics[ width=0.19\textwidth]{../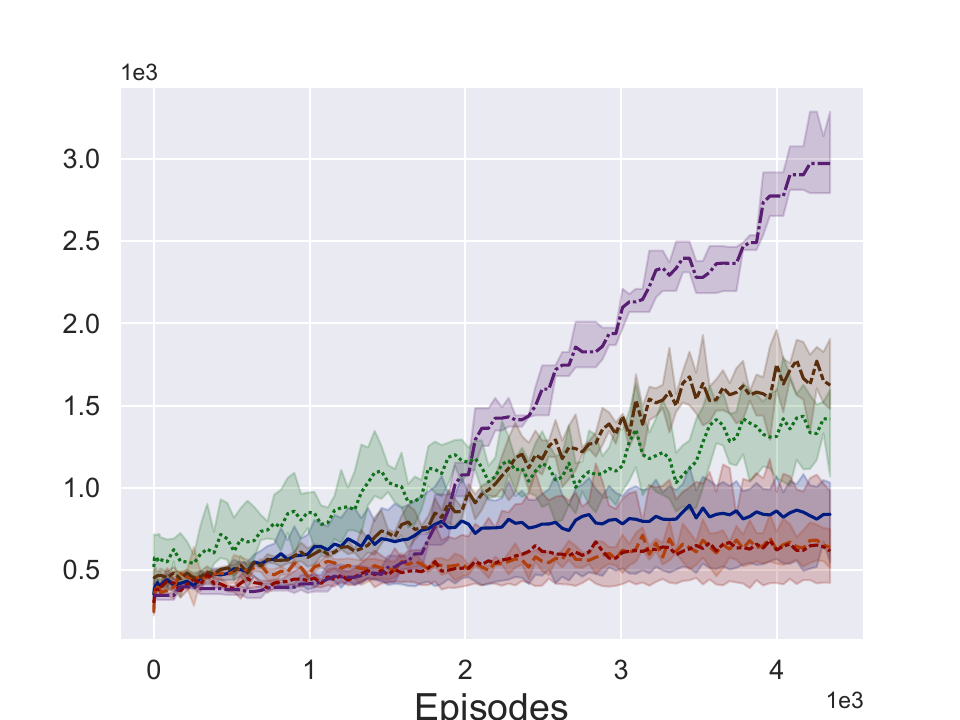}}
    \subfigure[Boxing]{ 
        \includegraphics[ width=0.19\textwidth]{../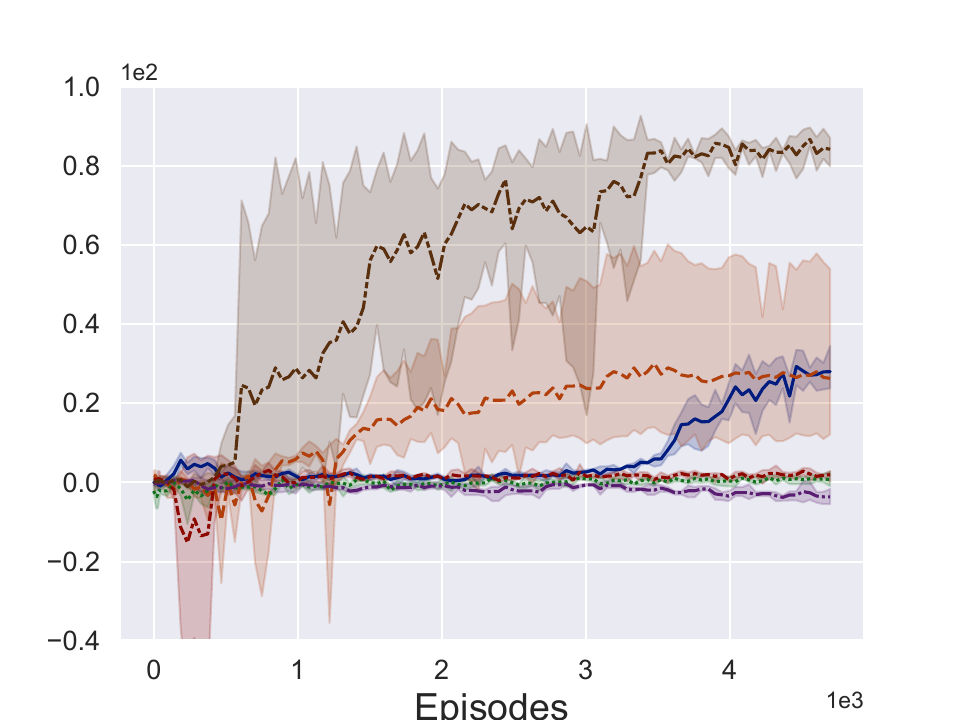}}
    \subfigure[BattleZone]{ 
        \includegraphics[ width=0.19\textwidth]{../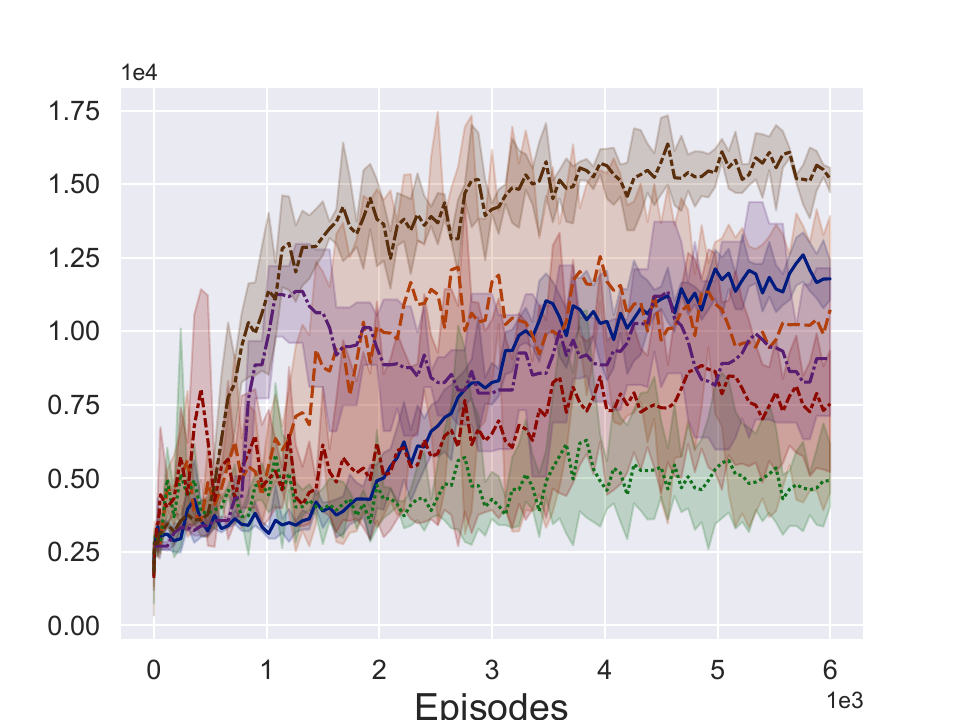}}
    \subfigure[ChopperCommand]{
        \includegraphics[ width=0.19\textwidth]{../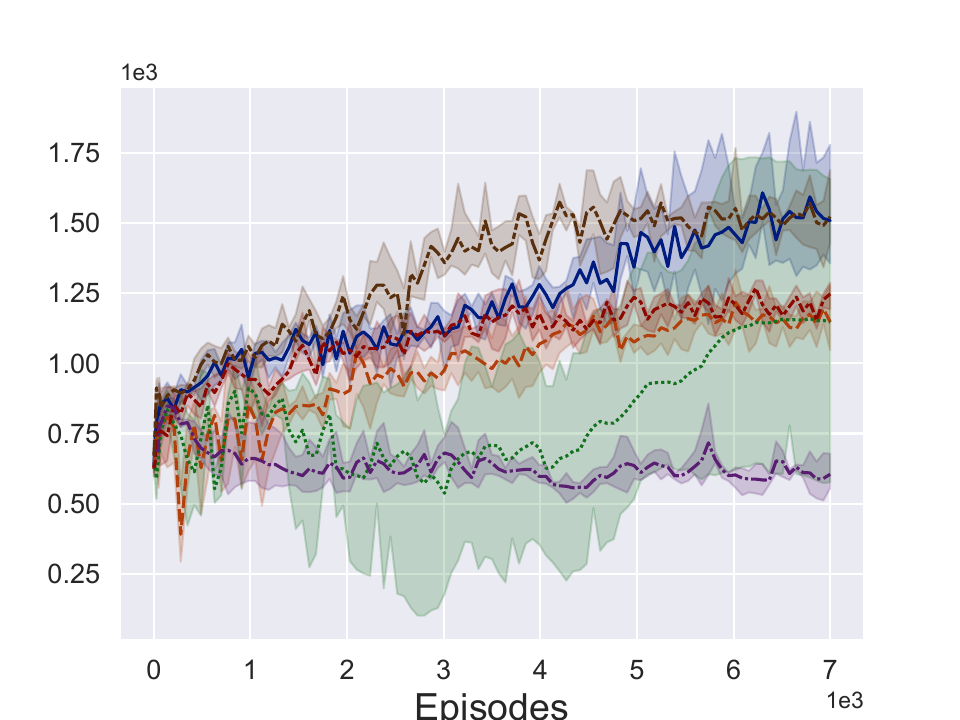}}
    \subfigure[CrazyClimber]{
        \includegraphics[ width=0.19\textwidth]{../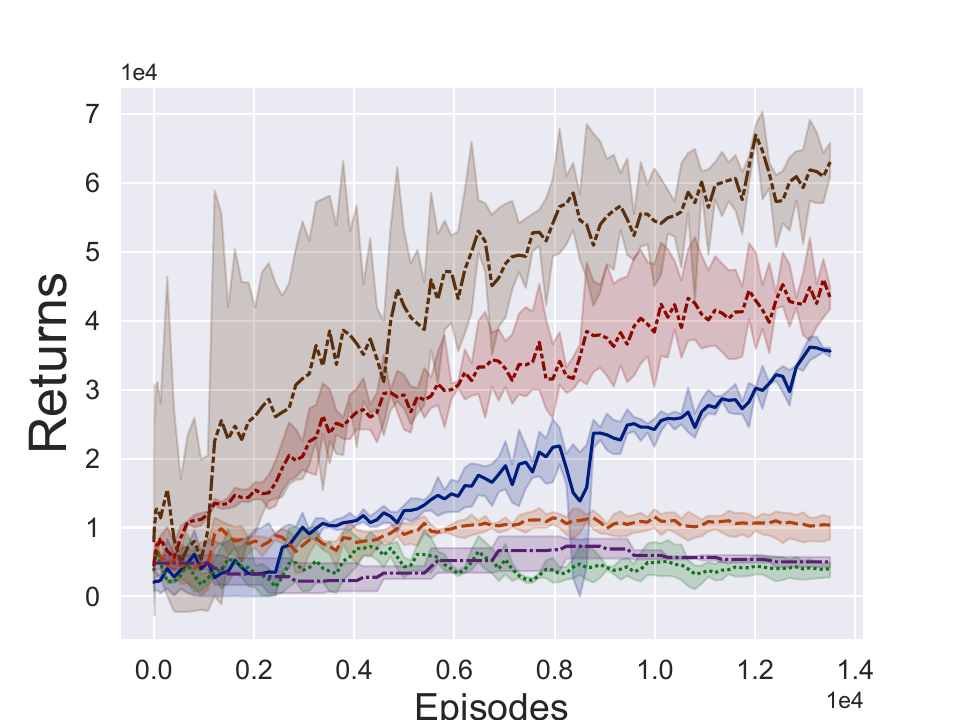}}
    \subfigure[Enduro]{
        \includegraphics[ width=0.19\textwidth]{../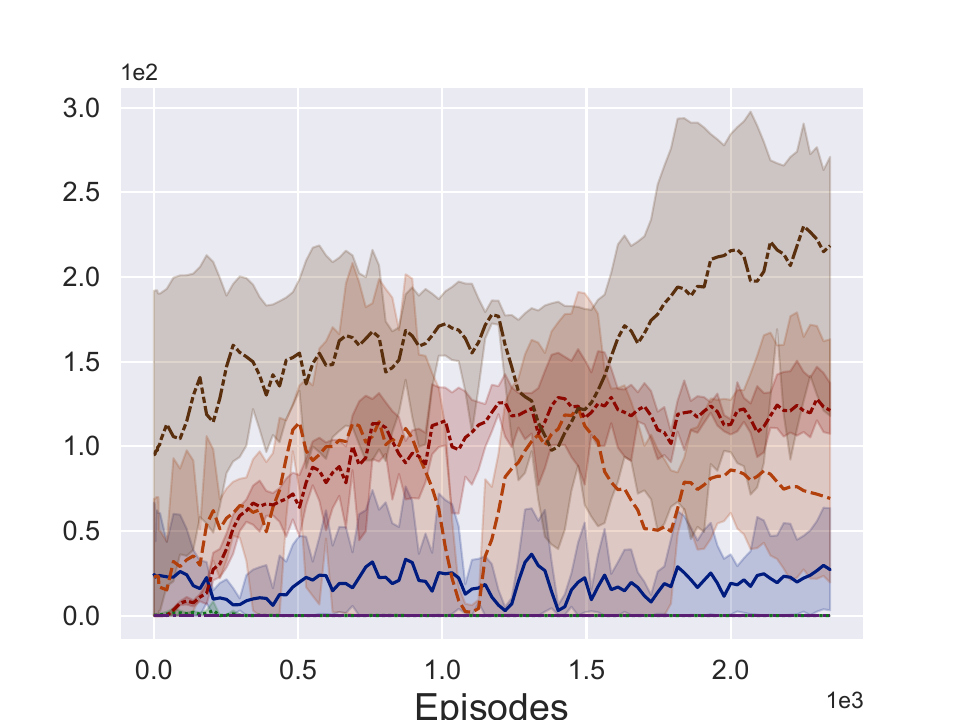}}
    \subfigure[Kangaroo]{
        \includegraphics[ width=0.19\textwidth]{../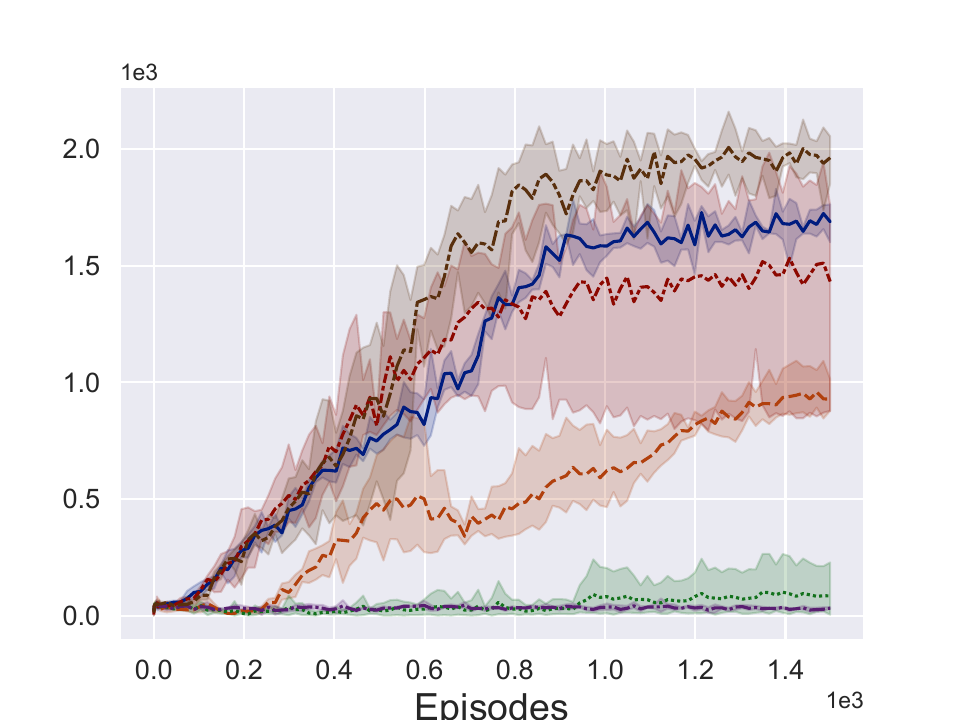}}
    \subfigure[MsPacman]{
        \includegraphics[ width=0.19\textwidth]{../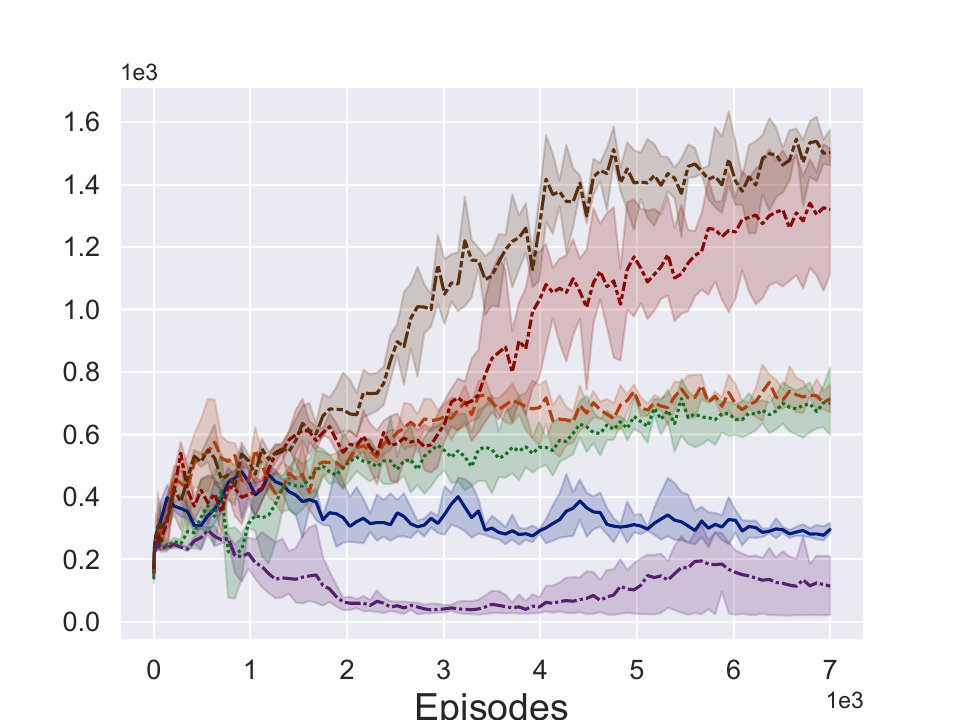}}
    \subfigure[Pong]{
        \includegraphics[ width=0.19\textwidth]{../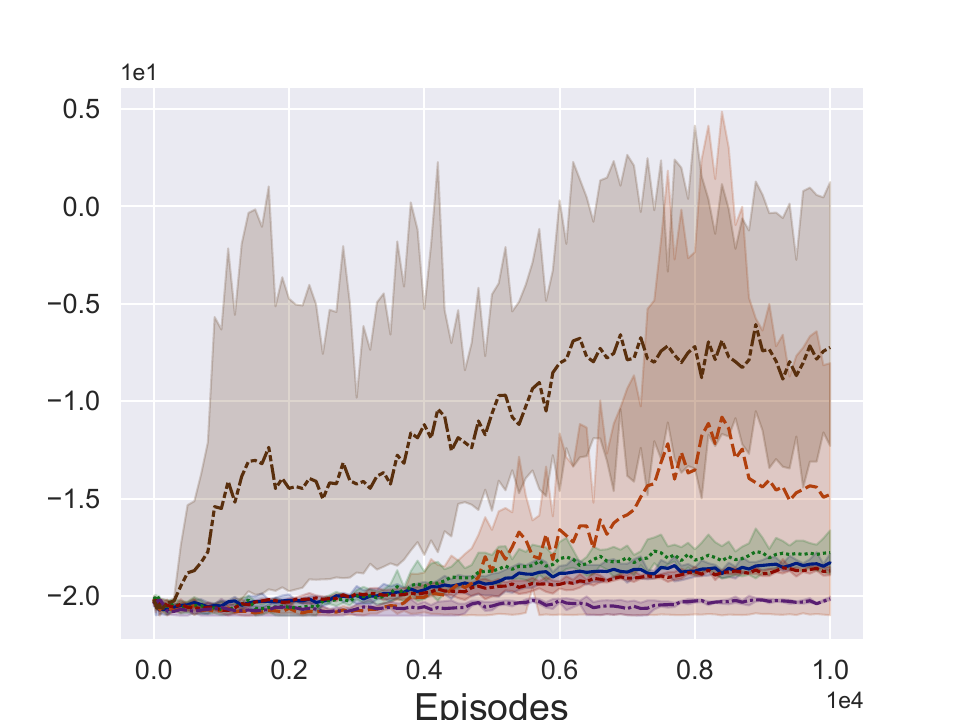}}
    \subfigure[Qbert]{
        \includegraphics[ width=0.19\textwidth]{../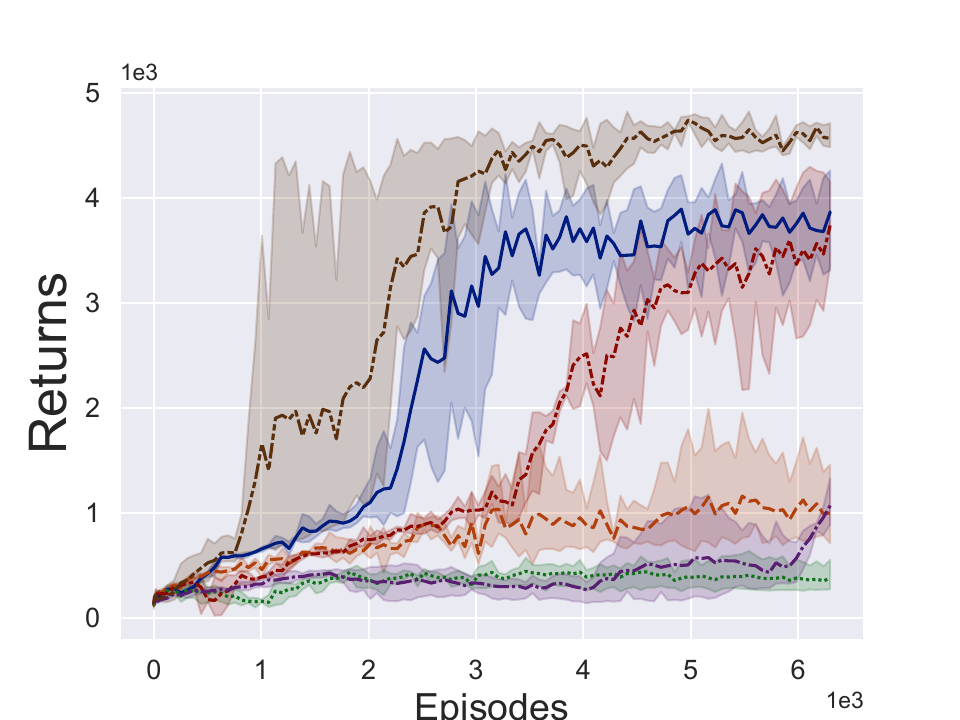}}
    \subfigure[Seaquest]{
        \includegraphics[ width=0.19\textwidth]{../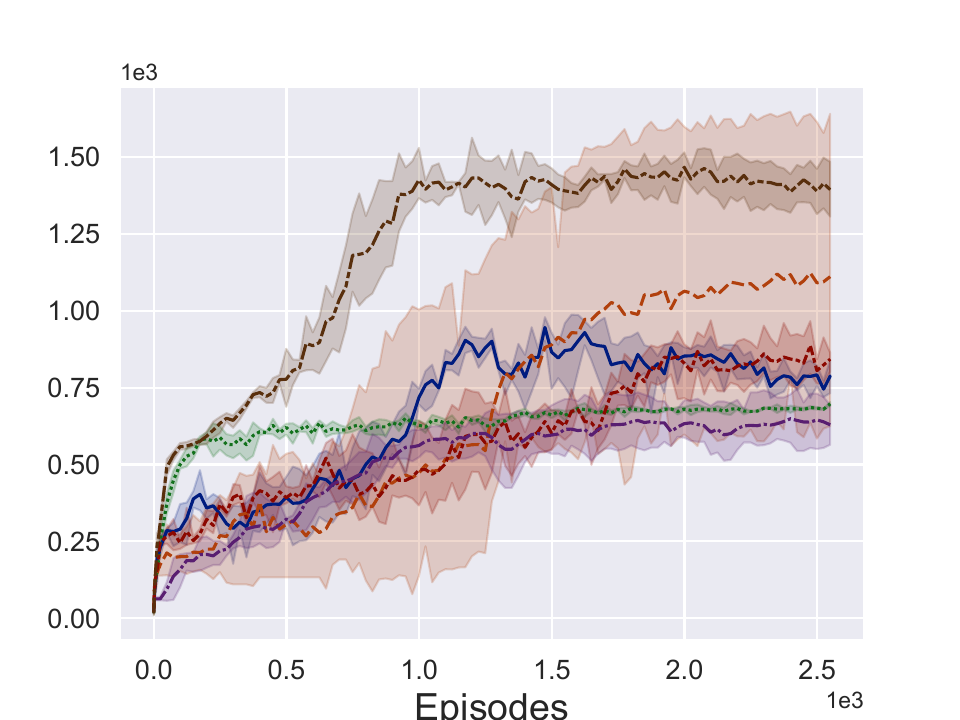}}
    \subfigure[SpaceInvaders]{
        \includegraphics[ width=0.19\textwidth]{../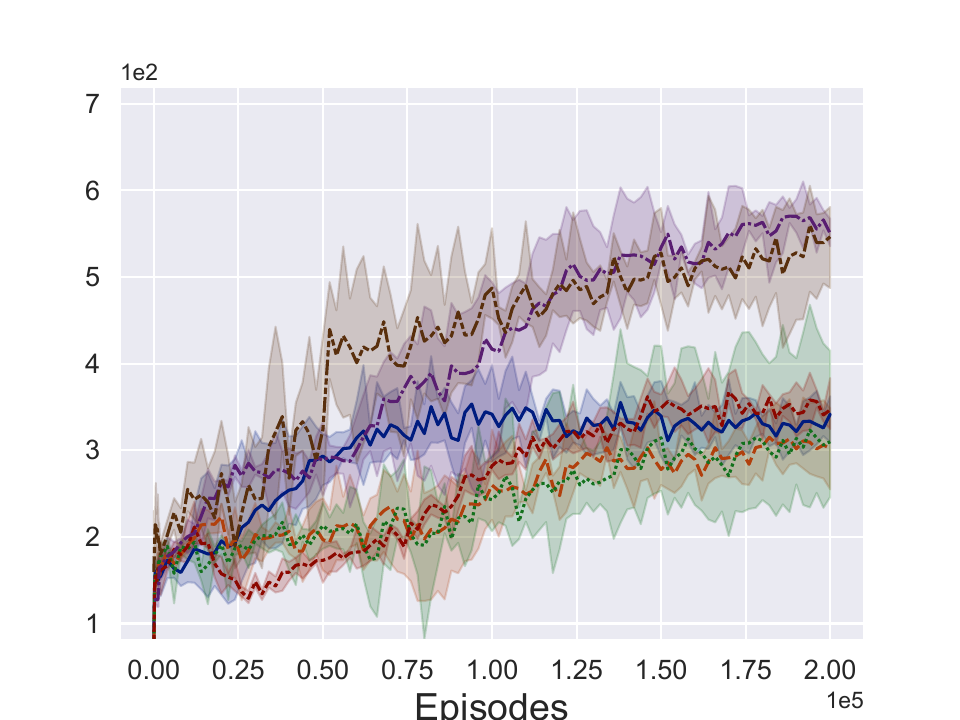}}
    \subfigure[UpNDown]{
        \includegraphics[ width=0.19\textwidth]{../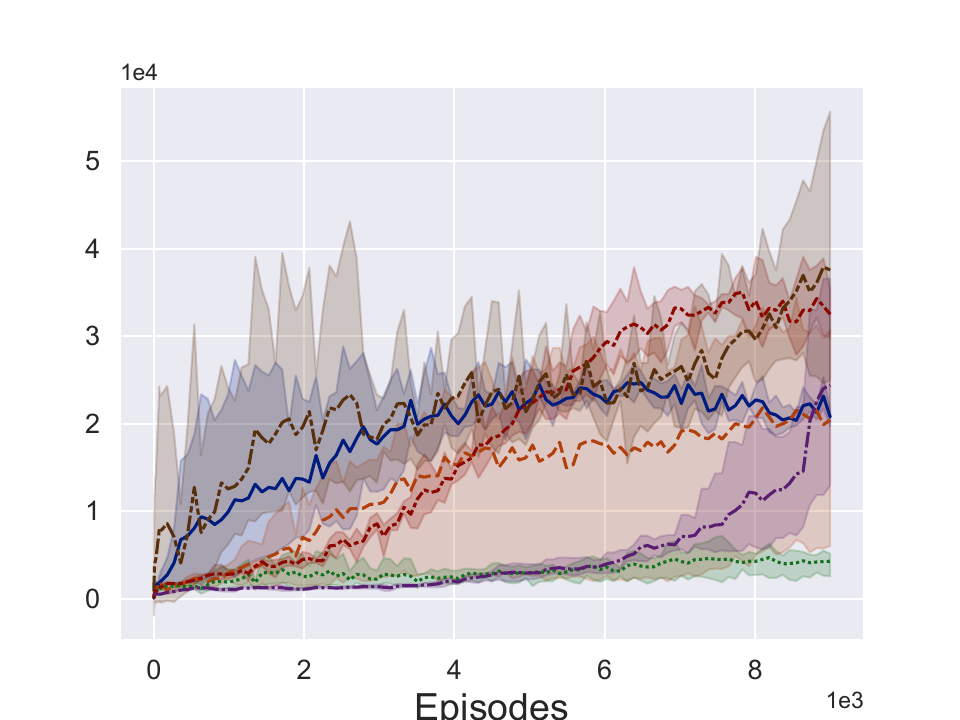}}
    \subfigure[Zaxxon]{
        \includegraphics[ width=0.19\textwidth]{../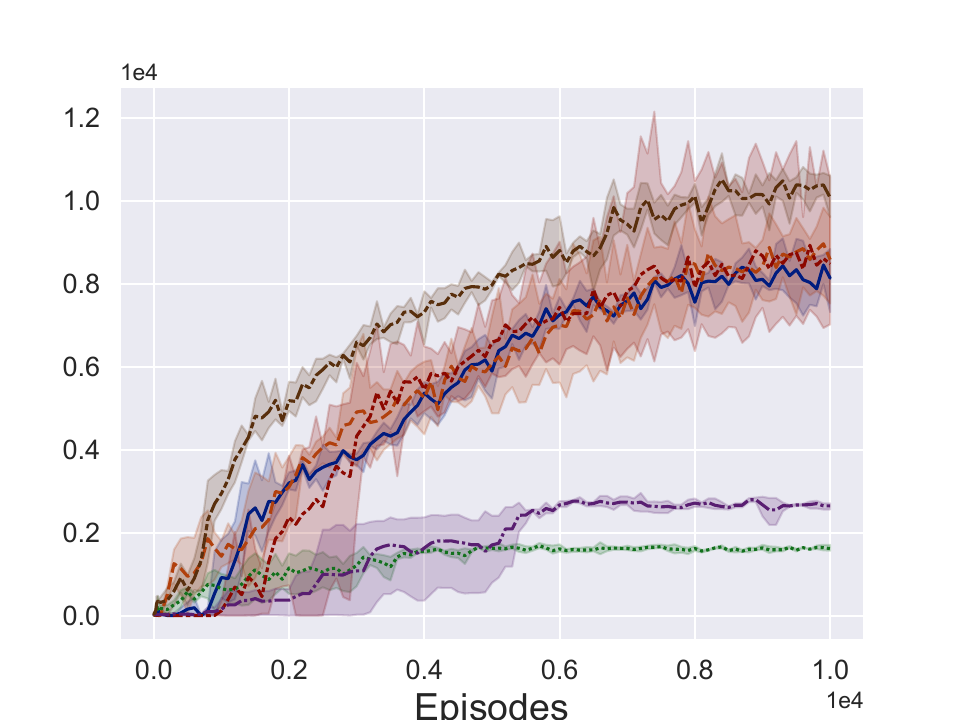}}
    \vspace{-2mm}
    \caption{Learning curves of on Atari after 10M timesteps, where shaded regions indicate the standard deviation. The state space is $84 \times 84 \times 4$.}
    \label{fig:atari-res}
\end{figure*}

\begin{figure*}[!t]
    \centering
    \subfigure[Humanoid]{ 
        \includegraphics[ width=0.19\textwidth]{../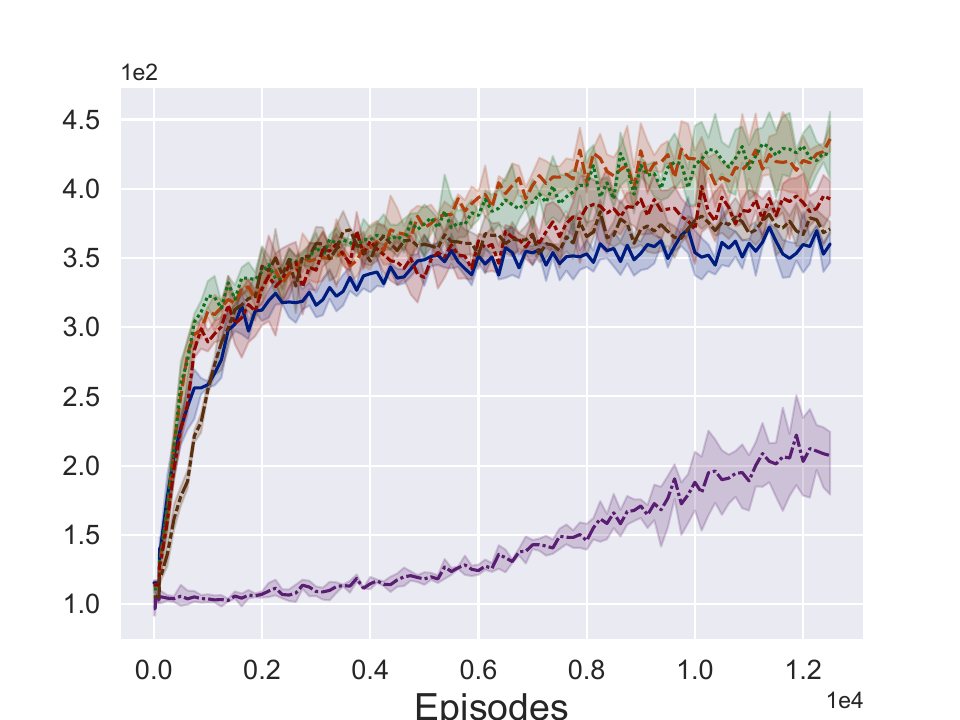}}
    \subfigure[HalfCheetah]{
        \includegraphics[ width=0.19\textwidth]{../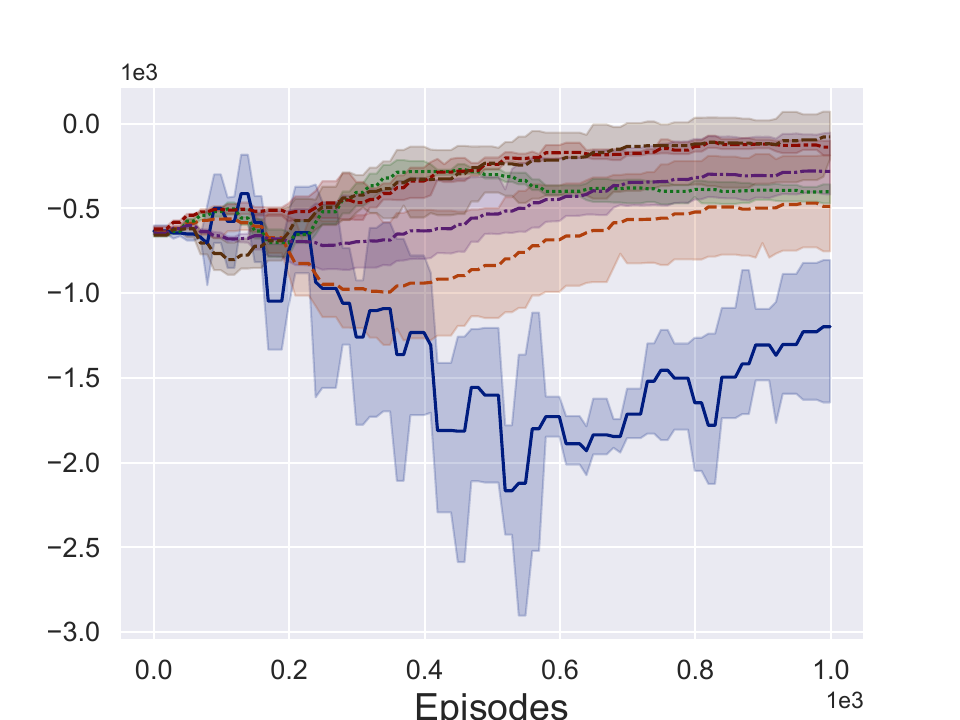}}
    \subfigure[Hopper]{ 
        \includegraphics[ width=0.19\textwidth]{../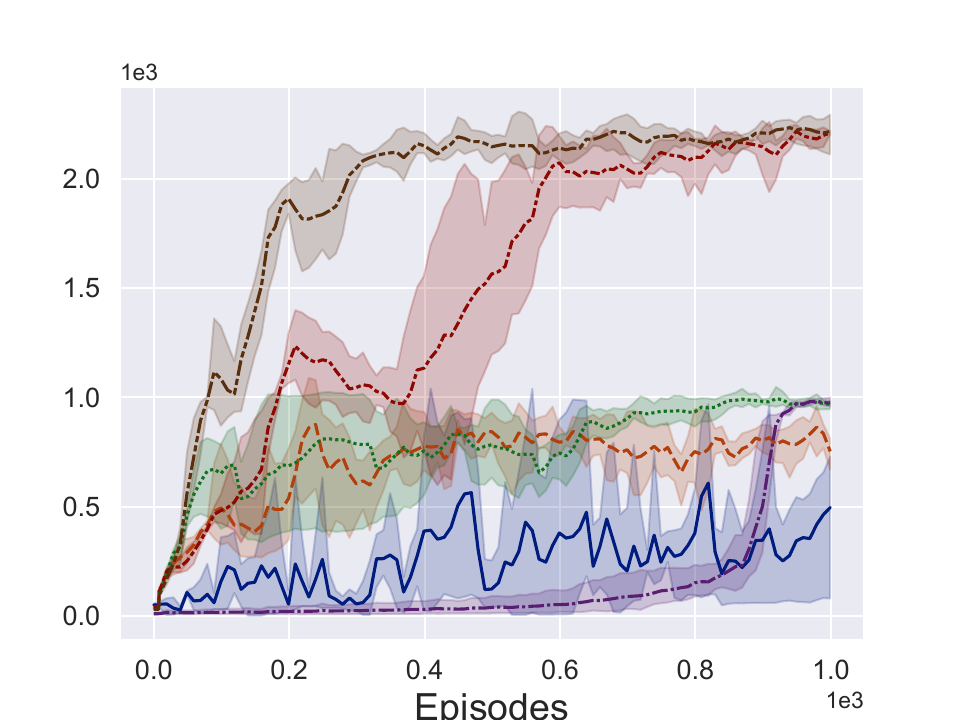}}
    \subfigure[HumanoidStandup]{ 
        \includegraphics[ width=0.19\textwidth]{../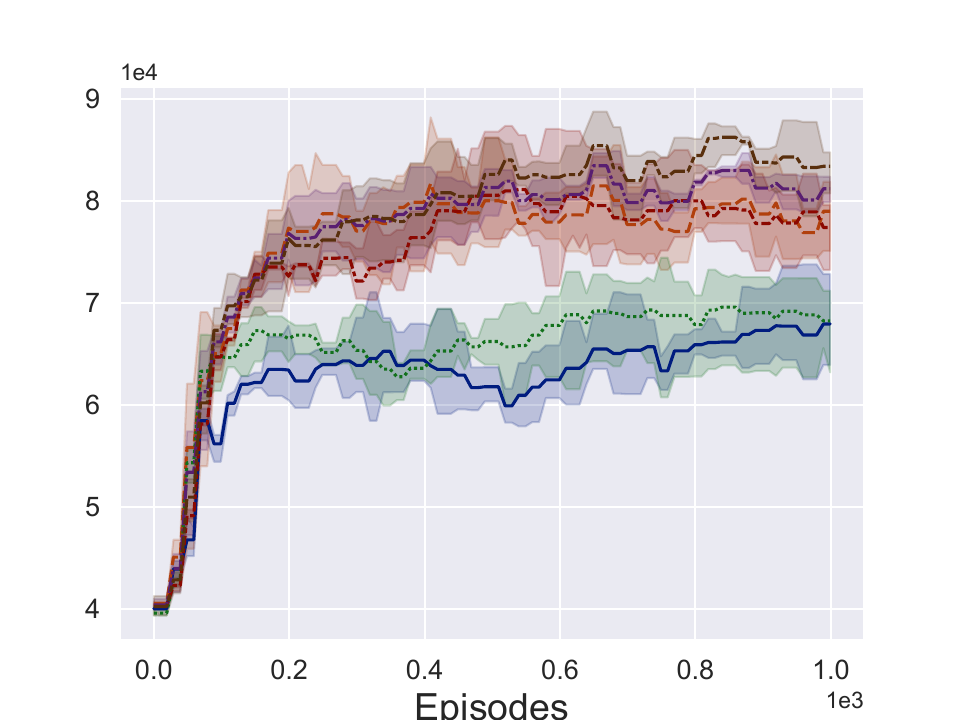}}
    \subfigure[Reacher]{
        \includegraphics[ width=0.19\textwidth]{../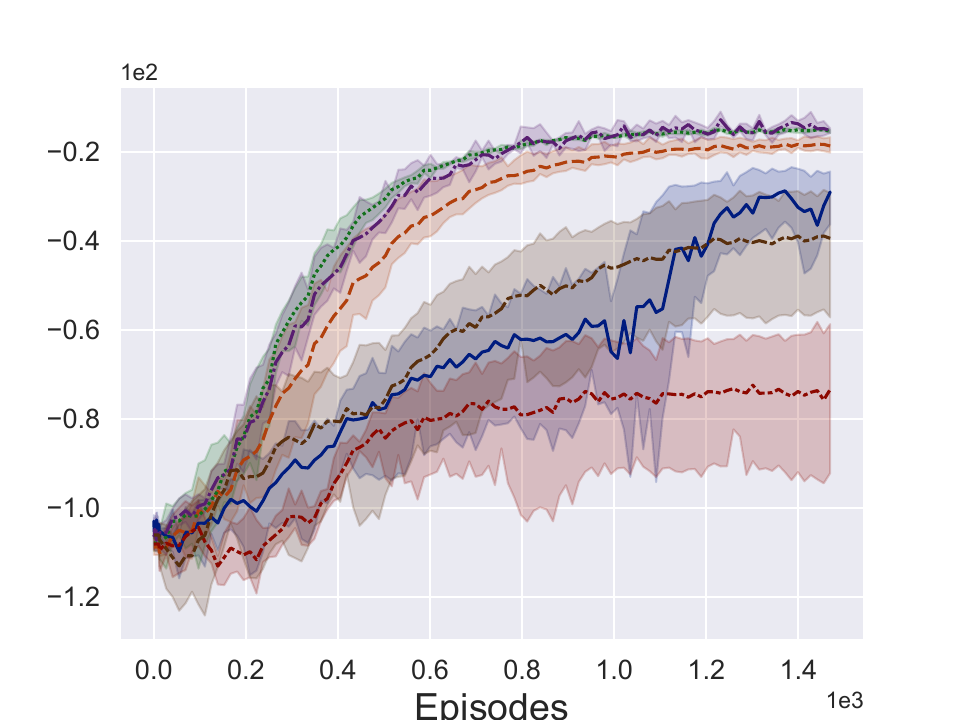}}
    \vspace{-2mm}
    \caption{Learning curves of on MuJoCo after 1M timesteps, where shaded regions indicate the standard deviation. The state space is $84 \times 84 \times 1$.}
    \vspace{-2mm}
    \label{fig:mujoco-res}
\end{figure*}

It can be seen that by utilizing \citeeq{eq:loss_suphash}, the separability of the hashing codes, which are the inputs to the discriminative layers, can be reliably preserved even when the dimension of hashing layer is much smaller than the original input dimension. 
Instead of fixing the parameters of the hashing code layer in the training process, 
the HashReward model and the policy are updated alternatively during learning. By this way, the training of the hashing code layer and the discriminator module are coupled together.
In order to achieve faster convergence, a pretraining stage with samples from expert demonstrations and the random policy for the autoencoder module is included in learning. The learning procedure of HashReward is illustrated in Algorithm~\ref{alg}. 

\section{Experiment}
\label{sec:experiment-results}
\subsection{Experimental Setup}
\label{subsec:experimental-setting}
\textbf{Environment.} We choose 15 games in Arcade Learning Environment \citep{DBLP:journals/jair/BellemareNVB13} and 5 simulators in MuJoCo~\cite{DBLP:conf/iros/TodorovET12}. The experiment is implemented in OpenAI Gym platform~\citep{DBLP:journals/corr/BrockmanCPSSTZ16}, which contains Atari 2600 video games with high-dimensional observation space (raw pixels), and pixels of each state in MuJoCo are obtained by a camera. The input states for the learner are set as low-dim continuous control inputs in MuJoCo.
We train converged DQN-based agents as experts in Atari, and DDPG-based~\cite{DBLP:journals/corr/LillicrapHPHETS15} agents in MuJoCo. 20 expert trajectories are collected for each game, also 3 trials with different random seeds are conducted for each environment. All experiments are conducted on server clusters with NVIDIA Tesla K80 GPUs.


\textbf{Contenders.} There are five basic contenders in the experiment, i.e., GAIL~\citep{DBLP:conf/nips/HoE16}, VAIL~\citep{DBLP:conf/iclr/PengKTAL19}, GIRIL~\citep{yu2020intrinsic}, GAIL with autoencoder (GAIL-AE) which utilizes only the first two autoencoder loss terms in \citeeq{eq:loss_suphash}, and GAIL with unsupervised hashing (GAIL-UH) which utilizes only the first four unsupervised hashing loss terms in \citeeq{eq:loss_suphash}. The codes of GAIL-AE and VAIL are real numbers, while those of GAIL-UH and HashReward belong to $\{-1, 1\}$. We initialize autoencoder pretraining for 40M frames of updates for GAIL-AE, GAIL-UH, GIRIL, and HashReward in Atari (1M in MuJoCo). To find out whether keeping autoencoder stable during training will increase the performance of GAIL-AE and GAIL-UH, we conduct experiments of GAIL-AE and GAIL-UH with updating autoencoder during training as GAIL-AE-Up and GAIL-UH-Up. Besides, to show the necessity of hashing for HashReward, we remove the third and fourth terms in Equation~\eqref{eq:loss_suphash} as HashReward-AE. The basic RL algorithm is PPO \citep{DBLP:journals/corr/SchulmanWDRK17}, and the reward signals of all methods are scaled into $[0, 1]$ to enhance the performance of RL part. We set all hyper-parameters and network architectures of the policy part the same to \citep{baselines}. Also, the hyper-parameters of DR and discriminator for all methods are the same: The DR and discriminator updates using Adam with a decayed learning rate of 3e-4; the batch size is 256; $\lambda$ is 0.01. We implement VAIL and GIRIL with the recommended hyper-parameters in their paper. The ratio of update frequency between the learner and discriminator is 3: 1.
\vspace{-2mm}

\begin{figure*}
    \centering
    \subfigure[Expert]{
    \begin{minipage}[t]{0.09\linewidth}
        \includegraphics[ width=1\textwidth]{../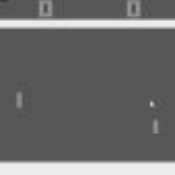}
        \includegraphics[ width=1\textwidth]{../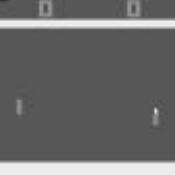}
        \includegraphics[ width=1\textwidth]{../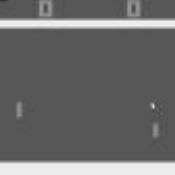}
        \includegraphics[ width=1\textwidth]{../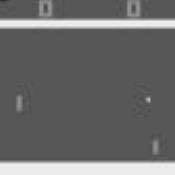}
    \end{minipage}}
    \subfigure[GAIL]{
    \begin{minipage}[t]{0.09\linewidth}
        \includegraphics[ width=1\textwidth]{../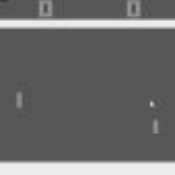}
        \includegraphics[ width=1\textwidth]{../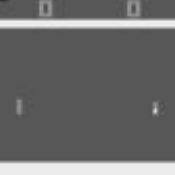}
        \includegraphics[ width=1\textwidth]{../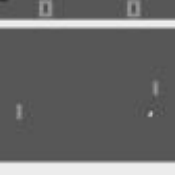}
        \includegraphics[ width=1\textwidth]{../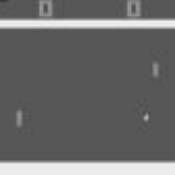}
    \end{minipage}}
    \subfigure[VAIL]{
    \begin{minipage}[t]{0.09\linewidth}
        \includegraphics[ width=1\textwidth]{../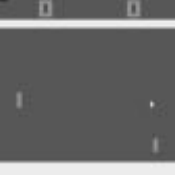}
        \includegraphics[ width=1\textwidth]{../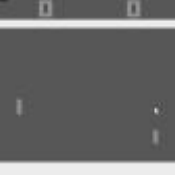}
        \includegraphics[ width=1\textwidth]{../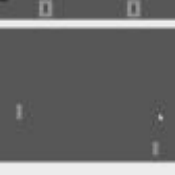}
        \includegraphics[ width=1\textwidth]{../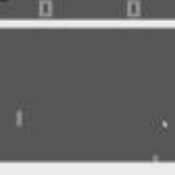}
    \end{minipage}}
    \subfigure[GIRIL]{
    \begin{minipage}[t]{0.09\linewidth}
        \includegraphics[ width=1\textwidth]{../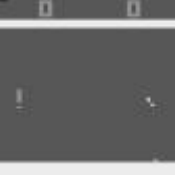}
        \includegraphics[ width=1\textwidth]{../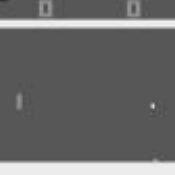}
        \includegraphics[ width=1\textwidth]{../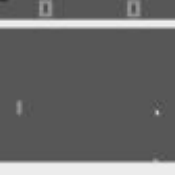}
        \includegraphics[ width=1\textwidth]{../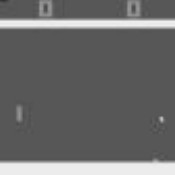}
    \end{minipage}}
    \subfigure[AE]{
    \begin{minipage}[t]{0.09\linewidth}
        \includegraphics[ width=1\textwidth]{../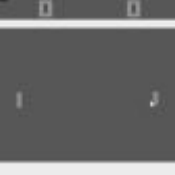}
        \includegraphics[ width=1\textwidth]{../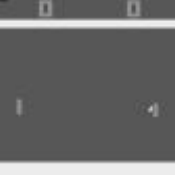}
        \includegraphics[ width=1\textwidth]{../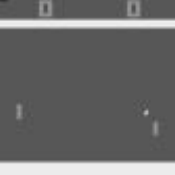}
        \includegraphics[ width=1\textwidth]{../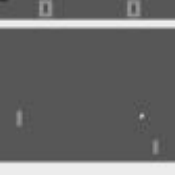}
    \end{minipage}}
    \subfigure[AE-Up]{
    \begin{minipage}[t]{0.09\linewidth}
        \includegraphics[ width=1\textwidth]{../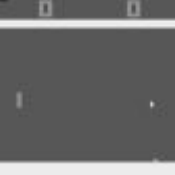}
        \includegraphics[ width=1\textwidth]{../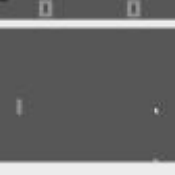}
        \includegraphics[ width=1\textwidth]{../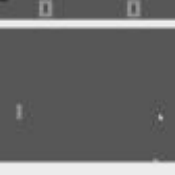}
        \includegraphics[ width=1\textwidth]{../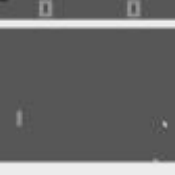}
    \end{minipage}}
    \subfigure[UH]{
    \begin{minipage}[t]{0.09\linewidth}
        \includegraphics[ width=1\textwidth]{../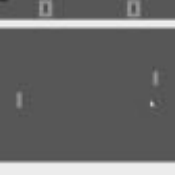}
        \includegraphics[ width=1\textwidth]{../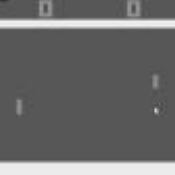}
        \includegraphics[ width=1\textwidth]{../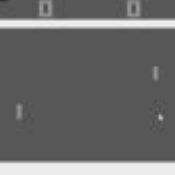}
        \includegraphics[ width=1\textwidth]{../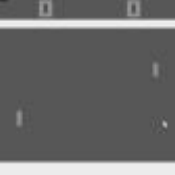}
    \end{minipage}}
    \subfigure[UH-Up]{
    \begin{minipage}[t]{0.09\linewidth}
        \includegraphics[ width=1\textwidth]{../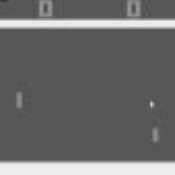}
        \includegraphics[ width=1\textwidth]{../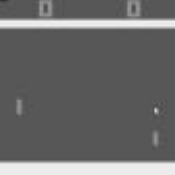}
        \includegraphics[ width=1\textwidth]{../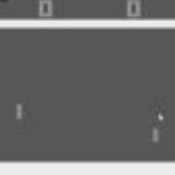}
        \includegraphics[ width=1\textwidth]{../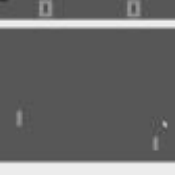}
    \end{minipage}}
    \subfigure[HR-AE]{
    \begin{minipage}[t]{0.09\linewidth}
        \includegraphics[ width=1\textwidth]{../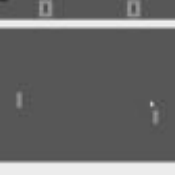}
        \includegraphics[ width=1\textwidth]{../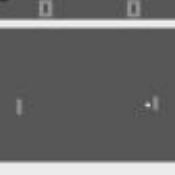}
        \includegraphics[ width=1\textwidth]{../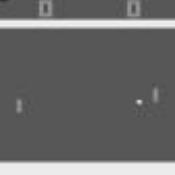}
        \includegraphics[ width=1\textwidth]{../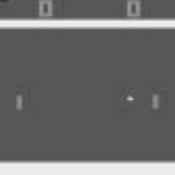}
    \end{minipage}}
    \subfigure[HR]{
    \begin{minipage}[t]{0.09\linewidth}
        \includegraphics[ width=1\textwidth]{../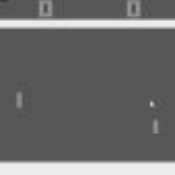}
        \includegraphics[ width=1\textwidth]{../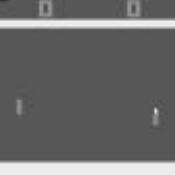}
        \includegraphics[ width=1\textwidth]{../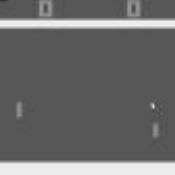}
        \includegraphics[ width=1\textwidth]{../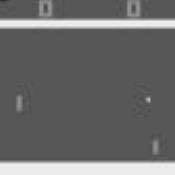}
    \end{minipage}}
    \caption{The sequence generated by each approach on \textit{Pong}, with the comparison of expert demonstrations (the first column), where `AE', `AE-Up', `UH', `UH-Up', `HR-AE' and `HR' indicate GAIL-AE, GAIL-AE-Up, GAIL-UH, GAIL-UH-Up, HashReward-AE and HashReward respectively. The timestamp for each row of images and seed for each environment are the same.}
    \label{fig:pong-raw-images}
\end{figure*}

\begin{figure*}[!t]
    \centering
    \subfigure{ 
        \includegraphics[ width=0.19\textwidth]{../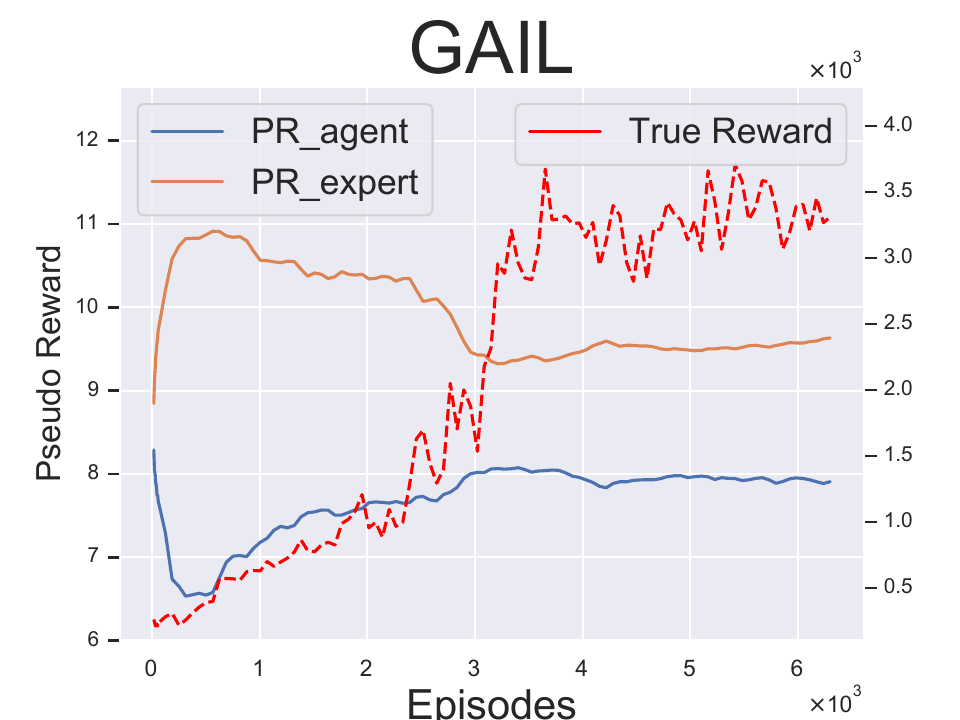}}
    \subfigure{
        \includegraphics[ width=0.19\textwidth]{../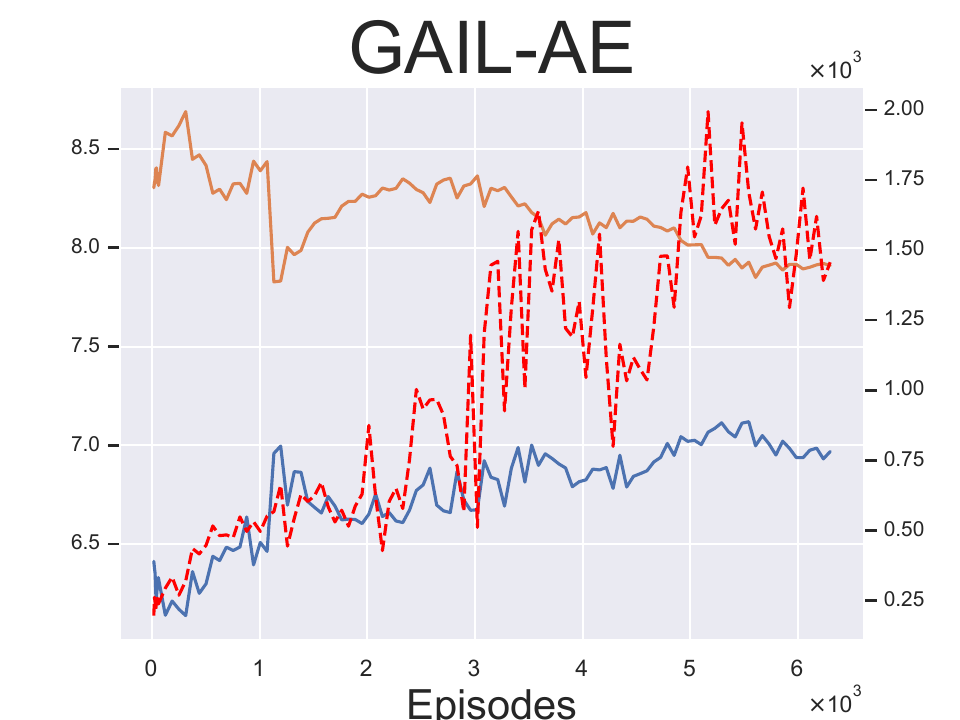}}
    \subfigure{ 
        \includegraphics[ width=0.19\textwidth]{../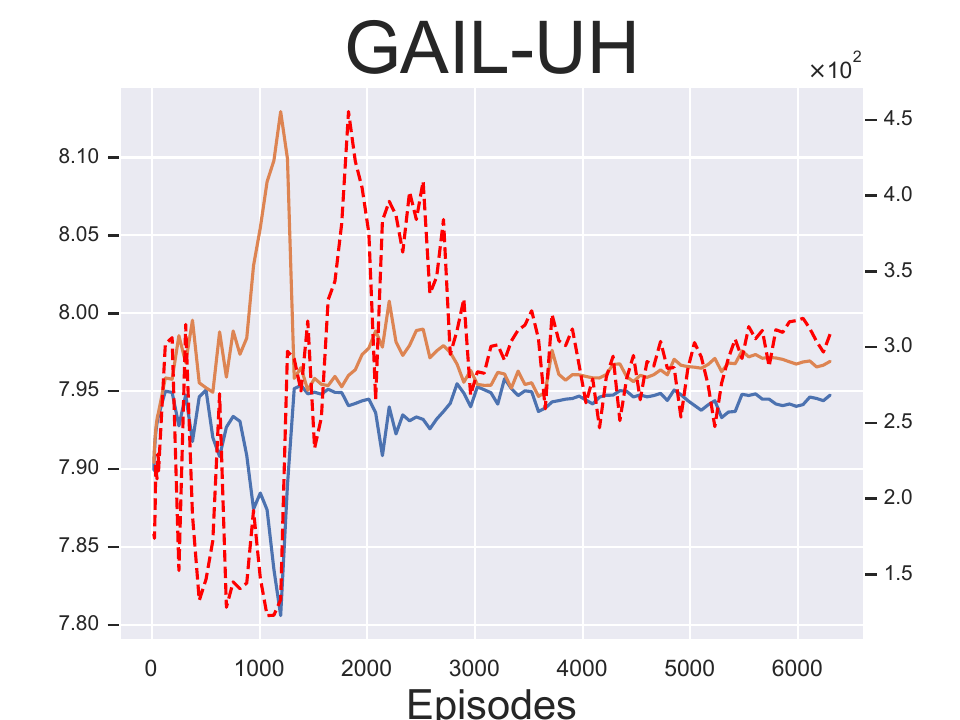}}
    \subfigure{
        \includegraphics[ width=0.19\textwidth]{../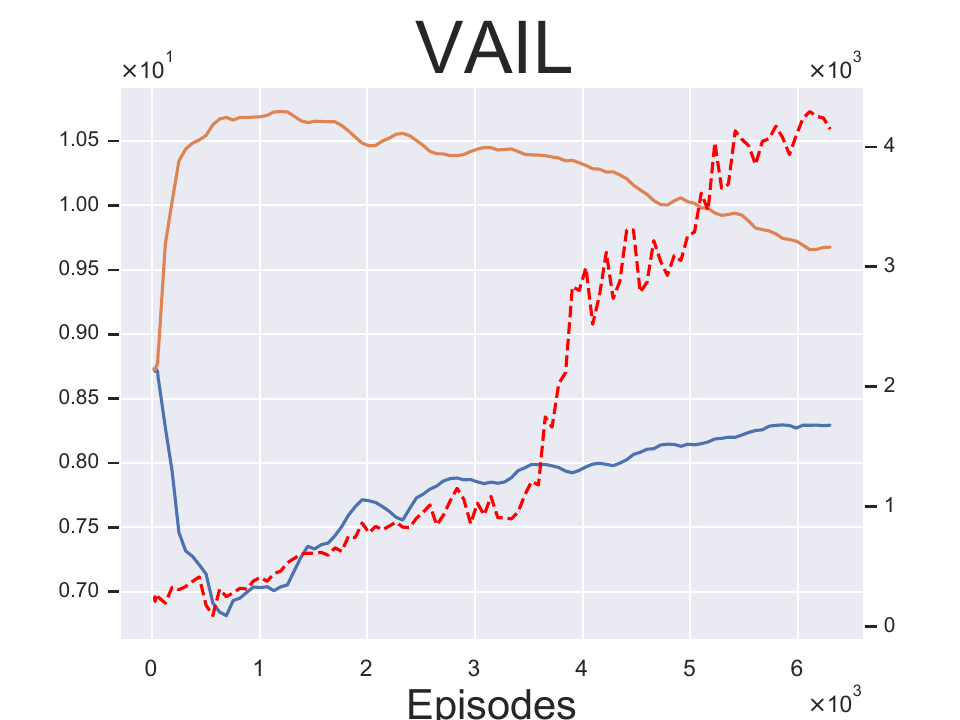}}
    \subfigure{
        \includegraphics[ width=0.19\textwidth]{../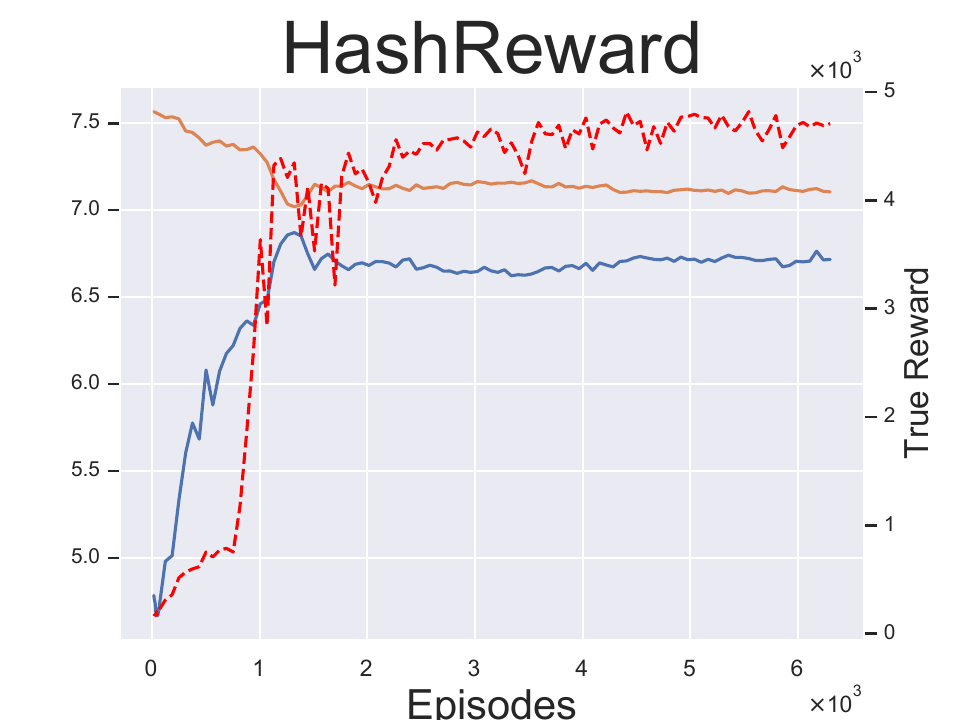}}
    \caption{Pseudo reward generated by discriminator and true reward curves of five basic approaches on \textit{Qbert}. `PR\_agent' and `PR\_expert' indicate pseudo reward for the agent and expert samples respectively. The blue curve denotes pseudo reward provided by the discriminator, and the red one denotes the true reward.}
    \label{fig:curve-pseudo}
\end{figure*}

\begin{figure*}[!t]
    \centering
    \subfigure[0 Step]{ 
        \includegraphics[ width=0.23\textwidth]{../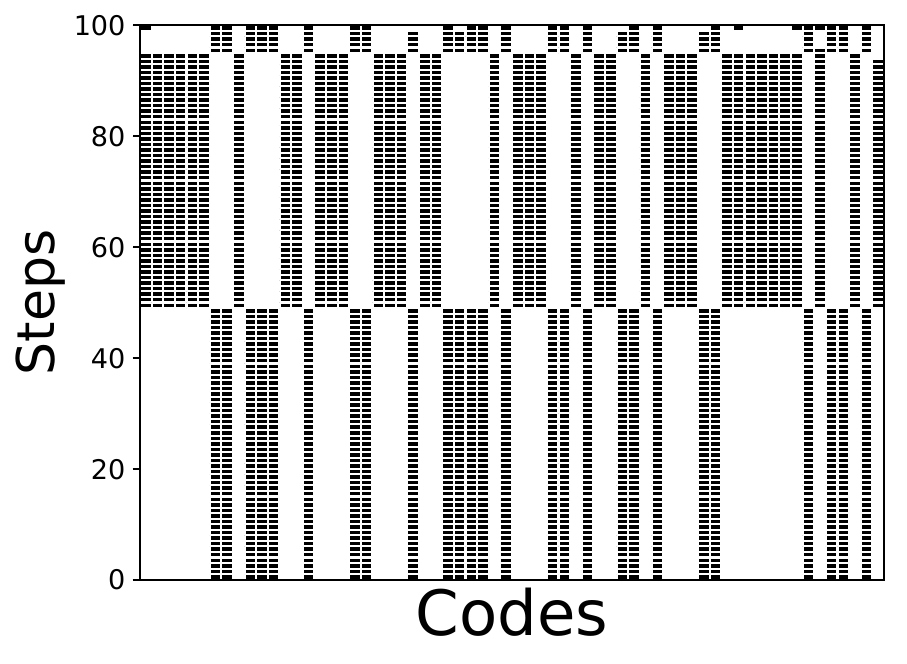}}
    \subfigure[1e6 Steps]{
        \includegraphics[ width=0.23\textwidth]{../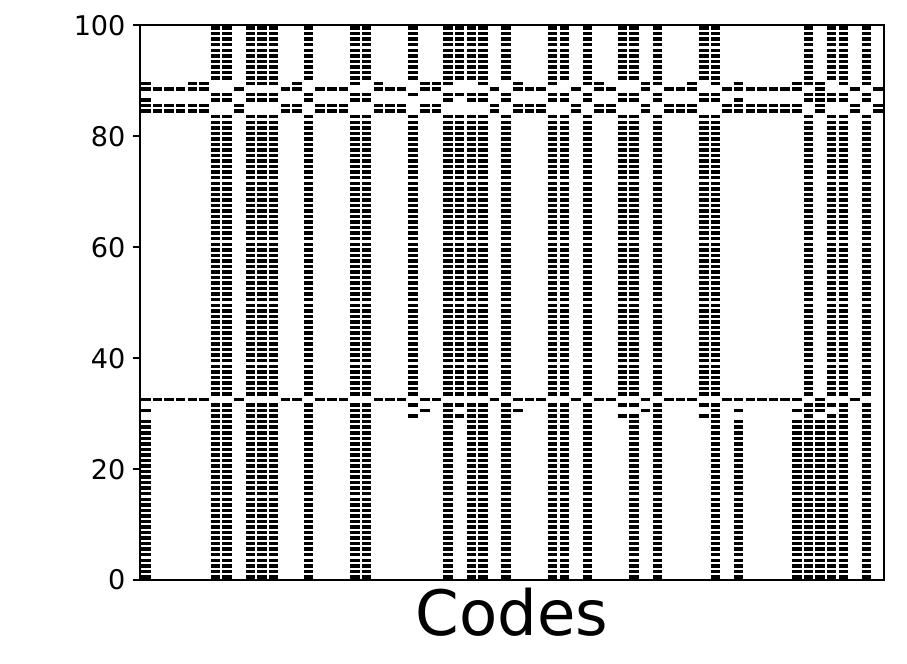}}
    \subfigure[1e7 Steps]{
        \includegraphics[ width=0.23\textwidth]{../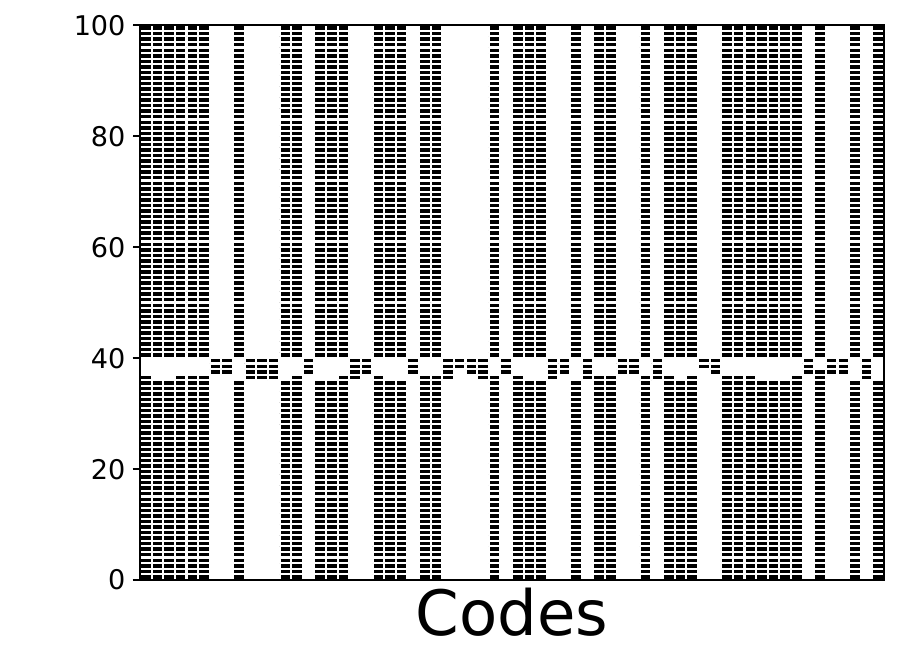}}
    \subfigure[Expert]{ 
        \includegraphics[ width=0.23\textwidth]{../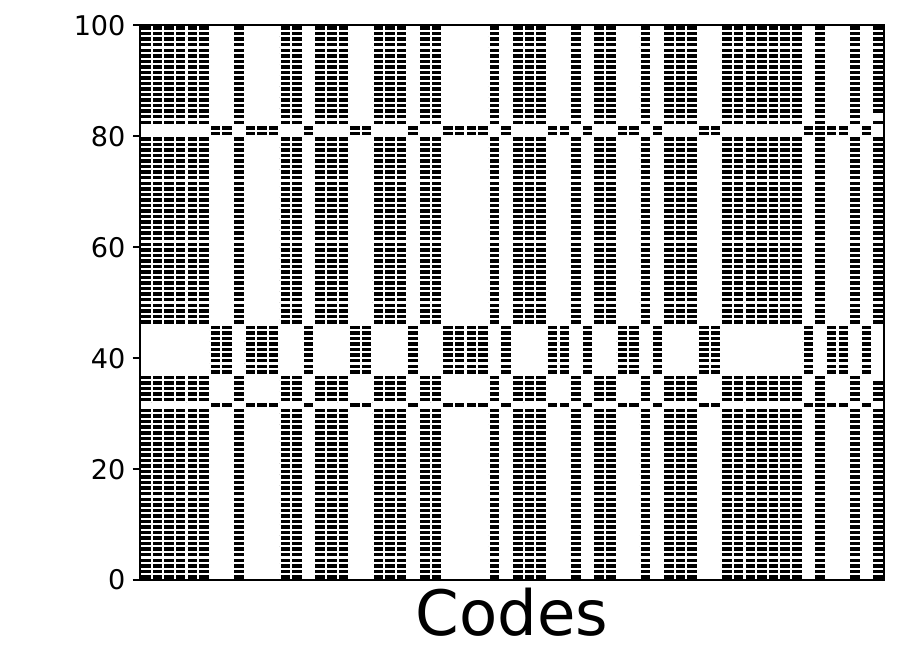}}
    \caption{The 64-bit HashReward codes for samples from different policy steps as well as expert on \textit{Qbert} game. The samples are gathered from the same continual period within an episode with a fixed random seed.}
    \vspace{-2mm}
    \label{fig:embeddings}
\end{figure*}

\begin{figure*}[t]
    \centering
    \includegraphics[width=0.78\textwidth]{../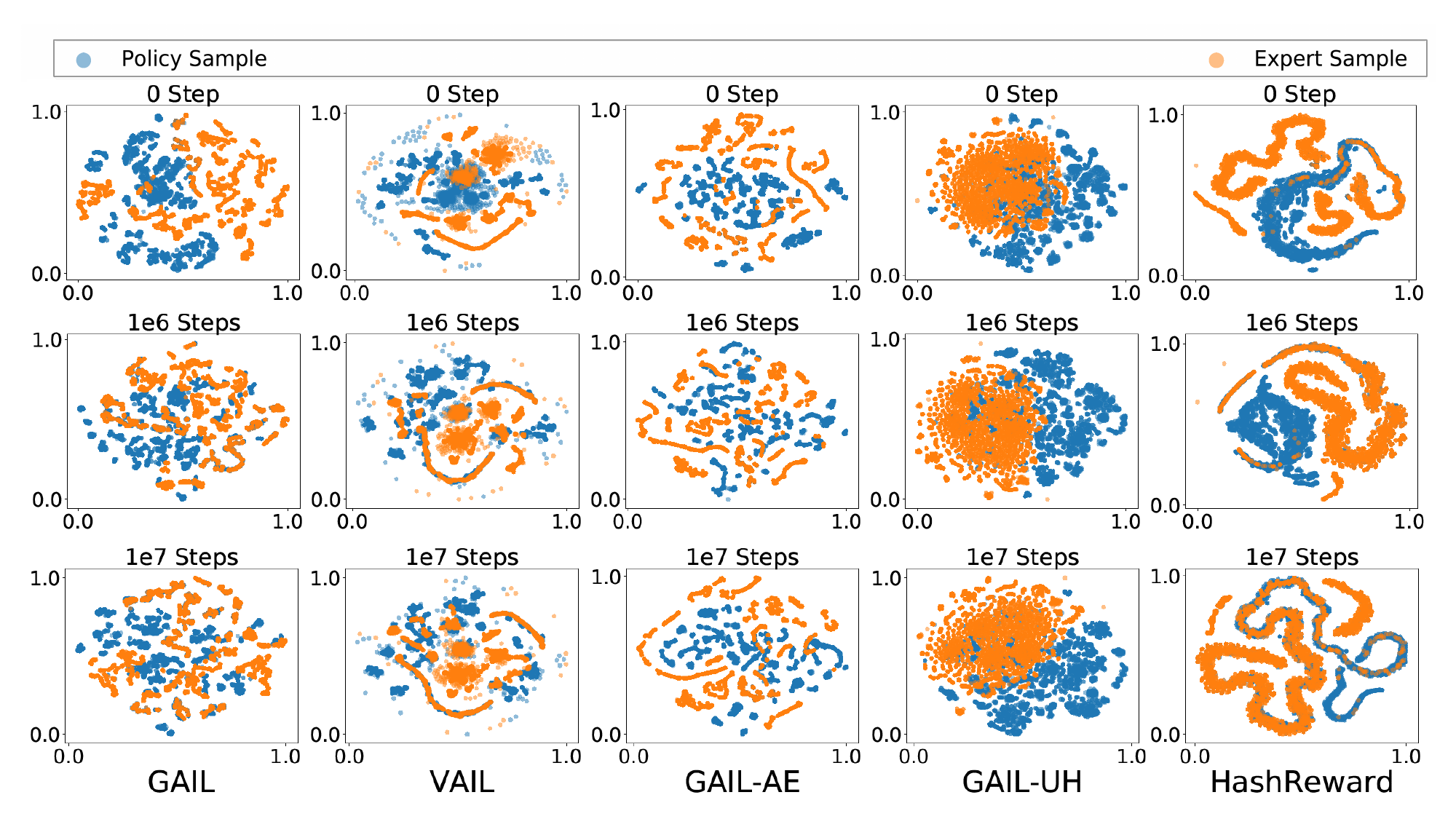}
    \caption{t-SNE Visualizations of policy samples (blue points) and expert samples (orange points) on {\it Qbert}.}
    \label{fig:qbert-tsne}
\end{figure*}
\subsection{Results}
\label{subsec:empirical-results}
Experimental results in Atari are reported in Table~\ref{tab:atari_results} and Figure~\ref{fig:atari-res}, and that of the major methods in MuJoCo are shown in Table~\ref{tab:mujoco_results} and Figure~\ref{fig:mujoco-res}. More results including the performances of other contenders are reported in the supplementary material. 

We can observe that VAIL outperforms GAIL on most environments, and achieves near-expert performance on \textit{UpNDown} and \textit{Hopper}, but still fails compared with HashReward. GIRIL achieves the best performance in {\it BeamRider}, {\it SpaceInvaders} and {\it Reacher}, but its behavior is also unsatisfactory in some environments. For variants of GAIL, GAIL-AE outperform GAIL on several environments, but fail on most compared with VAIL and HashReward. Meanwhile, GAIL-UH only achieve satisfactory performance on \textit{BeamRider}, \textit{Humanoid} and \textit{Reacher}. 
It also evinces that without supervision, the change of feature could confuse the discriminator training. HashReward-AE outperforms GAIL on most games, while remains a gap compared with HashReward. This demonstrates that hashing is necessary for HashReward even with supervision. As expected, HashReward outperform its contenders significantly and achieve the best performance on most of environments (12/15 in Atari, 3/5 in MuJoCo), meanwhile gains expert-level reward on \textit{Boxing}, \textit{Qbert}, \textit{SpaceInvaders}, \textit{UpNDown}, \textit{Zaxxon} and \textit{Hopper}.

It can be concluded that GAIL could not be improved with simple unsupervised DR; also the usage of supervision only from discriminator (regular term of loss in VAIL) to enhance DR part is not enough; besides, another potential reason for not that satisfactory results of VAIL on Atari is that its discriminator does not utilize action signal, which is of a great importance signal for these games~\citep{DBLP:journals/corr/abs-1810-10593}; meanwhile, although GIRIL achieves decent performance on some environments, it does not solve DR problems directly, which may limit its performance; and HashReward provides a powerful approach for tackling high-dimensional IL problems, which can encourage the learner to generate expert comparable policies in challenging IL tasks.

\subsection{Is HashReward Meaningful?}
\label{subsec:is-hashreward-meaningful}
\textbf{Comparisons on \textit{Pong}.}
To find out whether HashReward has dug out the expert policy closely, we report the comparisons of a sequence of expert demonstrations and the sequence generated by each method during the same period of time on \textit{Pong}, shown in Figure~\ref{fig:pong-raw-images}. We can observe that the expert hits the ball with the short side of the bar, which is the `kill-shot' and hard to imitate. For contenders, they try to learn the `kill-shot' but hit the ball by the long side of the bar instead of the short side, except for GAIL-AE-Up, GAIL-UH, GAIL-UH-Up, GIRIL, and VAIL which miss the ball. While the sequence of HashReward is the same as that of demonstrations, which shows that HashReward has successfully learned the `kill-shot'.

\textbf{Pseudo Reward Curves.} In order to understand why HashReward outperforms its contenders and whether HashReward tackles the discrimination-rewarding trade-off, first we analyze the true reward and pseudo reward (generated by reward function) curves of five basic methods for a single training process on \textit{Qbert}, illustrated in Figure~\ref{fig:curve-pseudo}.
For GAIL and GAIL-AE, the discriminator seems to excessively focus on discriminating between the learner's and expert's samples, such that the pseudo reward does not rise even when the true reward has increased, which verifies our perspective on why GAIL (and with unsupervised DR) fails in such tasks.
For GAIL-UH, the discriminator fails to discriminate between learner's and expert's samples, as the pseudo reward curve for learner almost overlaps with that for the expert in the whole training process. This indicates that including supervised information in hashing code training is indeed essential. For VAIL, the change of the pseudo reward for agent and expert is similar to that of GAIL, which reveals that the supervision of VAIL in DR is not enough. 
 For HashReward, we can observe that the change of HashReward ideally reflects that of the true reward. Furthermore, the over-discrimination of pseudo reward is successfully avoided by HashReward. More results from the rest environments and other methods can be found in the supplementary material, which reflects similar phenomena. 


\textbf{64-bit Embeddings.} Besides, to verify whether HashReward generates meaningful codes, we train a 64-bit embedding version of HashReward model on \textit{Qbert}. Afterward we input samples generated from policies of different training stages (after 0 steps, 1e6 steps, and 1e7 steps) to the learned HashReward model to compare their codes. To collect samples, we run each policy for an episode with a fixed random seed and collect 100 samples within the same interval of time-steps. 
The codes are demonstrated in Figure~\ref{fig:embeddings}.
At 0 step and 1e6 step, the codes are clearly different from that of the expert, showing that the policies are not sufficiently trained.
On the contrary, at 1e7 step, as the learner has dug out the latent expert policy, their codes are close to each other. This shows that HashReward learns a meaningful embedding space to reflect the quality of imitation.

\textbf{t-SNE for Embeddings.} Furthermore, Figure~\ref{fig:qbert-tsne} illustrates t-SNE~\citep{vanDerMaaten2008} visualization of the embedding layer outputs from five of the comparison methods under \textit{Qbert}, and each figure contains 5000 policy and expert samples respectively. We show how embeddings of expert demonstrations, as well as policy samples, evolve for the same learning stages as above.
 We can observe that for GAIL and GAIL-AE, the embeddings for individual samples remain isolated throughout training, thus are easy to be fully discriminated. By using information bottleneck loss, VAIL partially solves this problem by generating some local clustered structures, which however still easy to be distinguished by the discriminator. For GAIL-UH, even though the embeddings seem to be globally clustered, we can observe that the policy and expert sample embeddings keep overlapping during the whole training process. This means that it fails to keep high-dimensional discriminative information when performing DR. For HashReward, we can observe that both expert and policy embeddings are globally clustered. Meanwhile, the overlaps between the two embedding sets increase along with the learning process, revealing the improvement of policy learning. This verifies HashReward could desirably dig out the hidden manifold structure of input space, which could lead to successful learning.


The above results reveal the close relationship between making the discriminator provide ground-truth consistent reward signals and properly dealing with the discrimination-rewarding trade-off. Meanwhile, they also indicate that effective supervision is essential for learning a proper discriminator, which is the key leading to the superior performance of HashReward in high-dim IL problems. 
\section{Conclusions}
\label{sec:conclusion}
In this paper, we tackle the challenging problem of IL in high-dimensional environments, under which even state-of-the-art IL algorithms fail. Based on theoretical and empirical studies, we identify that such failure results from their improper treatment of tackling the discrimination-rewarding trade-off. Through this finding, we propose a novel high-dimensional IL method named HashReward, which utilizes supervised hashing to learn an effective discriminator, encouraging learners to dig out latent expert policies from demonstrations by providing efficient and stable reward signals. Experiments under both Atari and MuJoCo environments verify the effectiveness of HashReward, which outperforms state-of-the-art contenders with significant gaps. We expect HashReward can also provide inspirations in designing other hashing strategies containing effective semantic information, for solving other challenging IL problems, e.g., exploration-demanding games like \textit{MontezumaRevenge}. We will explore more on these possibilities. 

\textbf{Acknowledgment: }This research was supported by the NSFC (61673201, 61921006).  The authors would like to thank Yang Yu, Jieping Ye, and the anonymous reviewers for their insightful comments and suggestions.



\bibliographystyle{ACM-Reference-Format} 
\balance
\bibliography{hash}

\newpage
\appendix

\section{Details of Theoretical Results}
\label{sec:generalization-bound}
We provide the full version of Theorem 1 below. First, the notion of spectral normalized complexity is given in the following definition.
\begin{definition}[\citep{DBLP:conf/nips/BartlettFT17}]
  Let fixed activation functions $(\sigma_1, \dots, \sigma_N)$ and reference matrices $(M_1, \dots, M_N)$ be given, where $\sigma_i$ is $\rho_i$-Lipschitz and $\sigma_i(0) = 0$. Let spectral norm bounding parameters $(s_1, \dots, s_N)$ and matrix $(2,1)$-norm bounding parameters $(b_1, \dots, b_N)$ be given. Let 
 \begin{equation}
  \label{eq:reward-function}
    D_\mathcal{W}(x) = \sigma_N(W_N\sigma_{N - 1}(W_{N - 1} \dots \sigma_1(W_1 x) \dots))
  \end{equation}
be the neural network associated with weight matrices $(W_1, W_2, \dots, W_N)$. Let $\mathcal D$ denote the discriminator set consisting of all choices of neural network $D_\mathcal{W}$:
\begin{equation}
\label{eq:reward-function-set}
\begin{aligned}
  \mathcal{D}_\mathcal{W} \coloneqq &\{D_\mathcal{W}: \mathcal{W} = (W_1, \cdots, W_N) | \left\Vert W_i\right\Vert_\sigma \leq s_i, \\ &\left\Vert W_i ^ T - M_i ^ T \right\Vert_{2,1} \leq b_i\},\quad i\in[N],
\end{aligned}
\end{equation}
where $\left\Vert W\right\Vert_\sigma$ and $\left\Vert W\right\Vert_{2,1}$ are the matrix spectral norm and $(2,1)$-norm of $W$. 
Moreover, assume that each matrix in $(W_1, \dots, W_N)$ has dimension at most $M$ along each axis, then the {\it spectral normalized complexity} $\mathcal R$ is defined as
\begin{equation}
\label{eq:spectral-normalized-complexity}
\mathcal{R} = \sqrt{\log(2M ^ 2)}\prod_{i = 1} ^ N s_i\rho_i(\sum_{i = 1} ^ N (\frac{b_i}{s_i}) ^ {2/3}) ^ {3/2}.
\end{equation}
  \label{def:sn_complexity}
\end{definition}
We restate Theorem 1 below.
\begin{myThm}
Assume that assumptions in the main paper hold.
Let $\Delta_1 = |d_{\mathcal D}(\mu_{\pi_E}, \mu_{\hat\pi_G}) - d_{\mathcal D'}(\mu_{\pi_E}, \mu_{\hat\pi_G})|$, $\Delta_2 = |\hat d_{\mathcal D'}(\hat\mu_{\pi_E, m}, \mu_{\hat\pi_G}) - d_{\mathcal D'}(\hat\mu_{\pi_E, m}, \mu_{\hat\pi_G})|.$  
Given expert trajectory data $X$ which consists of $m$ trajectories $\tau_{\pi_E} \in\mathcal T$, if $m \geq 3\|\phi(X)\|_F\mathcal R$, then with probability at least $1 - \delta$, we have
\begin{equation}
\begin{aligned}
d_{\mathcal D}(\mu_{\pi_E}, \mu_{\hat\pi_G}) &\leq \Delta_1 + \Delta_2 + 6\Delta\sqrt{\frac{\log(2/\delta)}{2m}} \\ &+ \frac{24\left\Vert \phi(X)\right\Vert_F\mathcal{R}}{m}(1 + \log\frac{m}{3\left\Vert \phi(X)\right\Vert_F\mathcal{R}}) + \eta. \\ 
\end{aligned}
\end{equation}
\end{myThm}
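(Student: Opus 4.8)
The plan is to reduce the target quantity $d_{\mathcal D}(\mu_{\pi_E}, \mu_{\hat\pi_G})$ to a single uniform-deviation term through a short chain of triangle inequalities that pays for $\Delta_1$, $\Delta_2$, and $\eta$ essentially for free from the definitions and Assumption~\ref{assumption:eta}. Concretely, I would first write $d_{\mathcal D}(\mu_{\pi_E},\mu_{\hat\pi_G}) \le d_{\mathcal D'}(\mu_{\pi_E},\mu_{\hat\pi_G}) + \Delta_1$ directly from the definition of $\Delta_1$, then insert the empirical expert distribution to obtain $d_{\mathcal D'}(\mu_{\pi_E},\mu_{\hat\pi_G}) \le d_{\mathcal D'}(\hat\mu_{\pi_E,m},\mu_{\hat\pi_G}) + G$, where $G := |d_{\mathcal D'}(\mu_{\pi_E},\mu_{\hat\pi_G}) - d_{\mathcal D'}(\hat\mu_{\pi_E,m},\mu_{\hat\pi_G})|$ is the generalization gap, and finally replace $d_{\mathcal D'}(\hat\mu_{\pi_E,m},\mu_{\hat\pi_G})$ by $\hat d_{\mathcal D'}(\hat\mu_{\pi_E,m},\mu_{\hat\pi_G}) + \Delta_2 \le \eta + \Delta_2$ using the definition of $\Delta_2$ and Assumption~\ref{assumption:eta}. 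This recovers every term of \eqref{eq:DR-generalization-bound} except the two data-dependent terms, which must both emerge from bounding $G$.

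For the gap $G$, the crucial observation is that the learner term $\mu_{\hat\pi_G}$ is the same on both sides, so by $|\sup f - \sup g| \le \sup|f-g|$ we get $G \le \sup_{D\in\mathcal D'}\big|\mathbb E_{\tau\sim\mu_{\pi_E}}[D(\phi(\tau))] - \tfrac1m\sum_{i} D(\phi(\tau_{E,i}))\big|$; the evenness of $\mathcal D'$ from Assumption~\ref{assumption:D} then lets me drop the absolute value and treat $G$ as a one-sided supremum over the composed class $\mathcal D'\circ\phi$. I would next apply the standard data-dependent Rademacher generalization bound (symmetrization plus two bounded-difference/McDiarmid steps combined by a union bound, using $\|D\|_\infty\le\Delta$ so each output lies in $[-\Delta,\Delta]$): with probability at least $1-\delta$, $G \le 2\hat{\mathcal R}_X(\mathcal D'\circ\phi) + 6\Delta\sqrt{\log(2/\delta)/(2m)}$, where $\hat{\mathcal R}_X$ is the empirical Rademacher complexity. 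This supplies the $6\Delta\sqrt{\log(2/\delta)/(2m)}$ term, with the factor $3$, the range factor $2\Delta$, and the $\log(2/\delta)$ appearing exactly as stated.

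It then remains to control $\hat{\mathcal R}_X(\mathcal D'\circ\phi)$. Since $\phi$ is fixed, covering numbers of $\mathcal D'\circ\phi$ on $X$ coincide with those of $\mathcal D'$ on the transformed data $\phi(X)$, so I would invoke the spectral-norm covering-number bound of Bartlett--Foster--Telgarsky, $\log\mathcal N(\mathcal D', \epsilon, \|\cdot\|_2) \le \|\phi(X)\|_F^2\mathcal R^2/\epsilon^2$, and feed it into Dudley's entropy integral. Optimizing the lower truncation at $\alpha = 3\|\phi(X)\|_F\mathcal R/\sqrt m$ — admissible precisely because the hypothesis $m\ge 3\|\phi(X)\|_F\mathcal R$ keeps $\alpha$ below the integral's upper limit $\sqrt m$ and forces the resulting logarithm to be nonnegative — collapses the bound to $2\hat{\mathcal R}_X(\mathcal D'\circ\phi) \le \frac{24\|\phi(X)\|_F\mathcal R}{m}\big(1 + \log\frac{m}{3\|\phi(X)\|_F\mathcal R}\big)$, matching the final data-dependent term. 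Chaining this with the decomposition above yields \eqref{eq:DR-generalization-bound}.

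I expect the Dudley-integral step to be the main obstacle: recovering the exact constants $24$ and $3$ and the clean $(1+\log(\cdot))$ form hinges on the precise normalization of the empirical Rademacher complexity and on the truncation $\alpha$ being tied to the sample-size condition $m\ge 3\|\phi(X)\|_F\mathcal R$, whereas the triangle-inequality decomposition and the symmetrization/McDiarmid step are routine once the evenness of $\mathcal D'$ is used to linearize the supremum.
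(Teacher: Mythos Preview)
Your proposal is correct and follows essentially the same route as the paper: the same telescoping decomposition into $\Delta_1+\Delta_2+\eta$ plus the empirical-vs-population gap, the same use of evenness of $\mathcal D'$ to reduce that gap to a one-sided supremum over the expert term only, and then McDiarmid plus the spectral Rademacher bound to produce the two data-dependent terms. The only cosmetic difference is that the paper simply \emph{cites} the inequality $\hat{\mathcal R}_m(\mathcal D')\le \tfrac{24\|\phi(X)\|_F\mathcal R}{m}\bigl(1+\log\tfrac{m}{3\|\phi(X)\|_F\mathcal R}\bigr)$ from \cite{DBLP:conf/nips/BartlettFT17} as a black box rather than re-deriving it through Dudley's entropy integral, so the constant-matching work you flag as the main obstacle is avoided there by quotation.
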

\begin{proof}
\label{proof:thm1}
We have 
\begin{equation}
\begin{aligned}
 d_{\mathcal D}(\mu_{\pi_E}, \mu_{\hat\pi_G}) \leq & d_{\mathcal D}(\mu_{\pi_E}, \mu_{\hat\pi_G}) - d_{\mathcal D'}(\mu_{\pi_E}, \mu_{\hat\pi_G}) \\
 & + d_{\mathcal D'}(\mu_{\pi_E}, \mu_{\hat\pi_G}) - d_{\mathcal D'}(\hat\mu_{\pi_E, m}, \mu_{\hat\pi_G}) \\
& + d_{\mathcal D'}(\hat\mu_{\pi_E, m}, \mu_{\hat\pi_G}) 
- \hat d_{\mathcal D'}(\hat\mu_{\pi_E, m}, \mu_{\hat\pi_G}) +  \eta \\ 
\leq & d_{\mathcal D'}(\mu_{\pi_E}, \mu_{\hat\pi_G}) - d_{\mathcal D'}(\hat\mu_{\pi_E, m}, \mu_{\hat\pi_G}) + \Delta_1 + \Delta_2 + \eta.
\end{aligned}
\label{eq:proof1}
\end{equation}
According to \cite{DBLP:conf/nips/BartlettFT17}, the key advantage of introducing spectral normalized complexity $\mathcal R$ is that the empirical Rademacher complexity of $\mathcal D'$ can be bounded by
\begin{equation}
  \label{eq:empirical-Rademacher}
  \hat{\mathcal{R}}_m(\mathcal{D'}) \leq \frac{24\left\Vert \phi(X) \right\Vert_F\mathcal{R}}{m}(1 + \log\frac{m}{3\left\Vert \phi(X) \right\Vert_F\mathcal{R}}).
\end{equation}
Thus Rademacher complexity based generalization bounds for general GANs \cite{DBLP:conf/iclr/Zhang0ZX018} can be utilized to characterize the gap between $d_{\mathcal D'}(\hat\mu_{\pi_E, m}, \mu_{\hat\pi_G})$ and $d_{\mathcal D'}(\mu_{\pi_E}, \mu_{\hat\pi_G})$. We have 
\begin{equation}
\begin{aligned}
  & d_{\mathcal D'}(\hat\mu_{\pi_E, m}, \mu_{\hat\pi_G}) - d_{\mathcal D'}(\mu_{\pi_E}, \mu_{\hat\pi_G}) \\
  & \leq \sup_{D\in\mathcal D'} \big(\qw_{\tau_E\sim \hat\mu_{\pi_{E}, m}}[D(\phi(\tau_E))]-\qw_{\tau_E\sim \mu_{\pi_{E}}}[D(\phi(\tau_E))]\big)  
  \label{eq:proof2}
\end{aligned}
\end{equation}
based on the fact that $\mathcal D'$ is even and the subadditivity of the superior function. Since $D\in\mathcal D'$ takes value in $[-\Delta, \Delta]$, then changing any trajectory in $X$ will change the l.h.s. of \citeeq{eq:proof2} by no more than $2\Delta/m$. By McDiamid's inequality and the definition of the empirical Rademacher complexity, then with probability at least $1-\delta$, we have
\begin{equation}
\begin{aligned}
&\sup_{D\in\mathcal D'} \big(\qw_{\tau_E\sim \hat\mu_{\pi_{E}, m}}[D(\phi(\tau_E))]-\qw_{\tau_E\sim \mu_{\pi_{E}}}[D(\phi(\tau_E))]\big) \\
& \leq \hat{\mathcal R}_m(\mathcal D') + 6\Delta\sqrt{\log(2/\delta)/2m}.
\label{eq:proof3}
\end{aligned}
\end{equation}
Combining Equation \ref{eq:proof1}, \ref{eq:empirical-Rademacher}, \ref{eq:proof2} and \ref{eq:proof3}, we arrive at the final result. 
\end{proof}
%
%
%
\section{Network Architectures}
\begin{figure*}[!t]
    \centering
    \includegraphics[width=0.8\textwidth]{../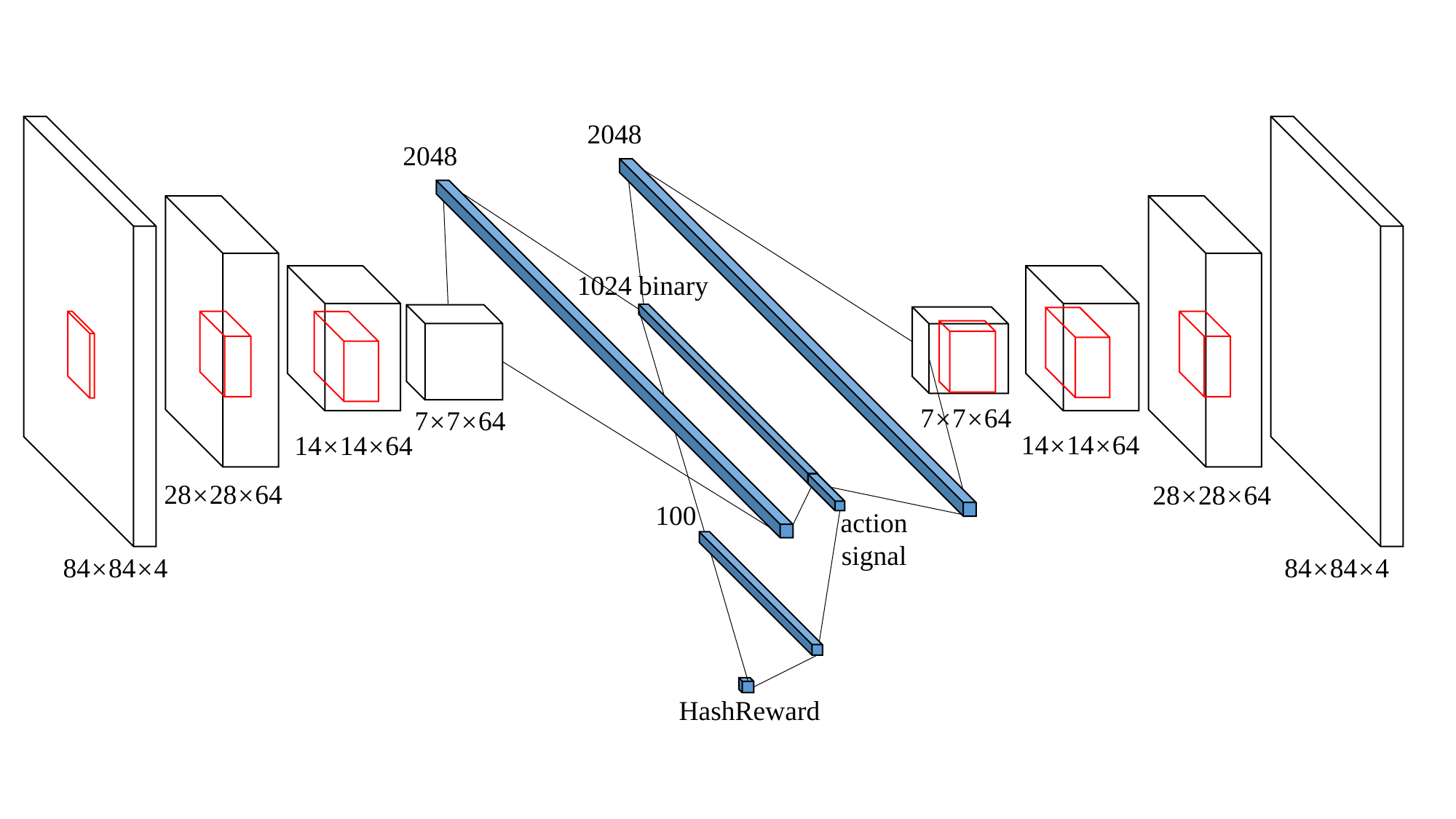}
    \caption{Illustration of the architecture for HashReward in the Atari experiment. The solid blocks represent the dense layers.}
    \label{fig:net-architecture}
\end{figure*}

\begin{figure*}[!t]
    \centering
    \subfigure{ \label{subfig:break_rest3} 
        \includegraphics[ width=0.185\textwidth]{../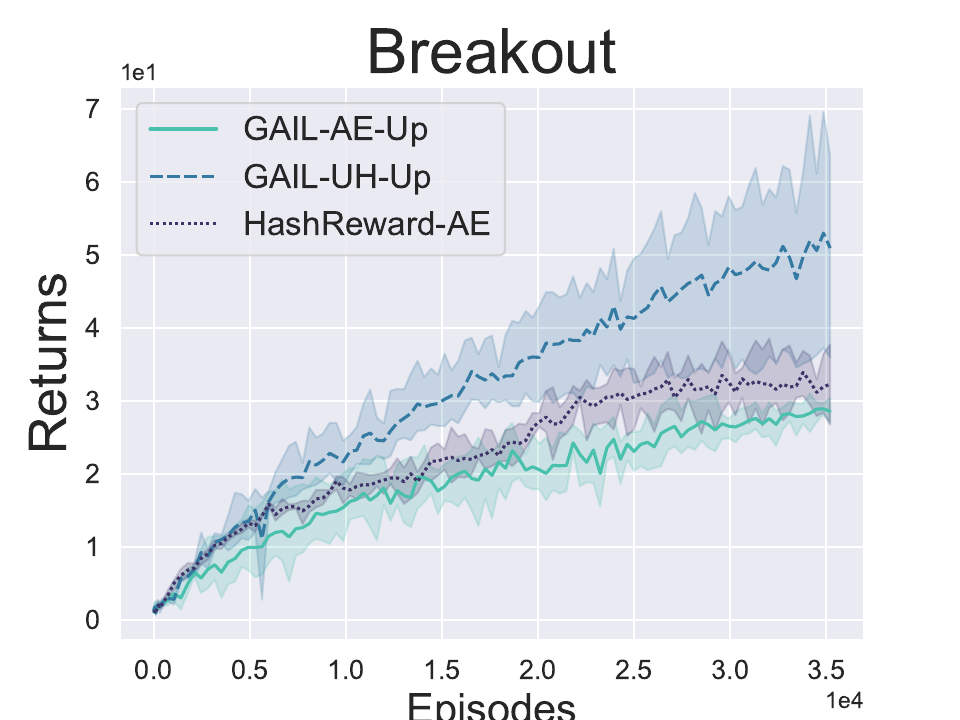}}
    \subfigure{ \label{subfig:beam_rest3} 
        \includegraphics[ width=0.185\textwidth]{../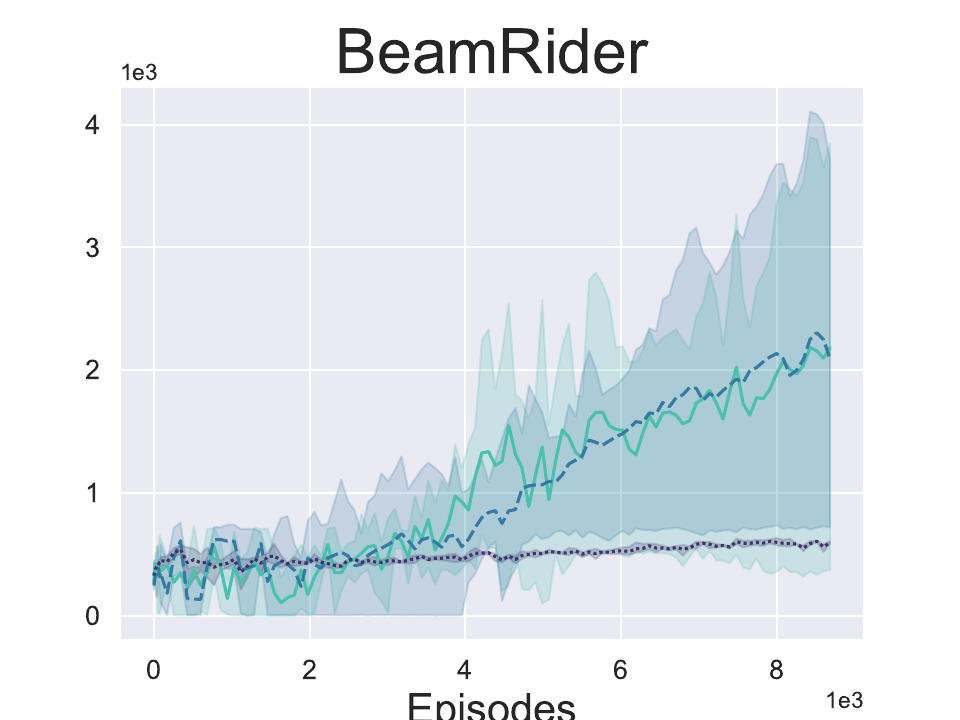}}
    \subfigure{ \label{subfig:boxing_rest3} 
        \includegraphics[ width=0.185\textwidth]{../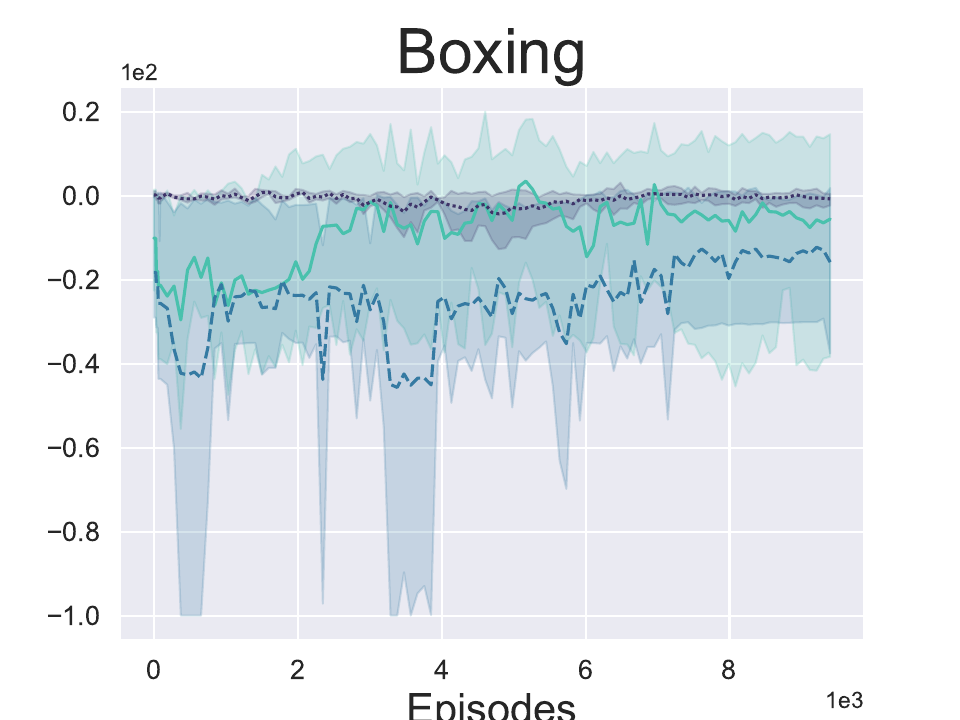}}
    \subfigure{ \label{subfig:battle_rest3} 
        \includegraphics[ width=0.185\textwidth]{../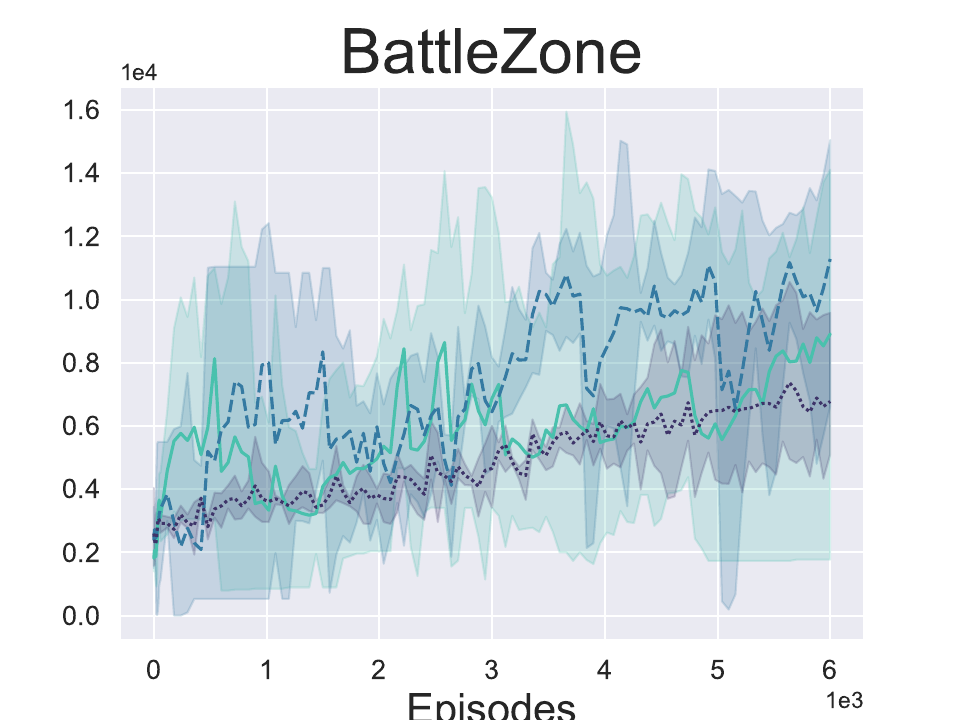}}
    \subfigure{ \label{subfig:ch_rest3} 
        \includegraphics[ width=0.185\textwidth]{../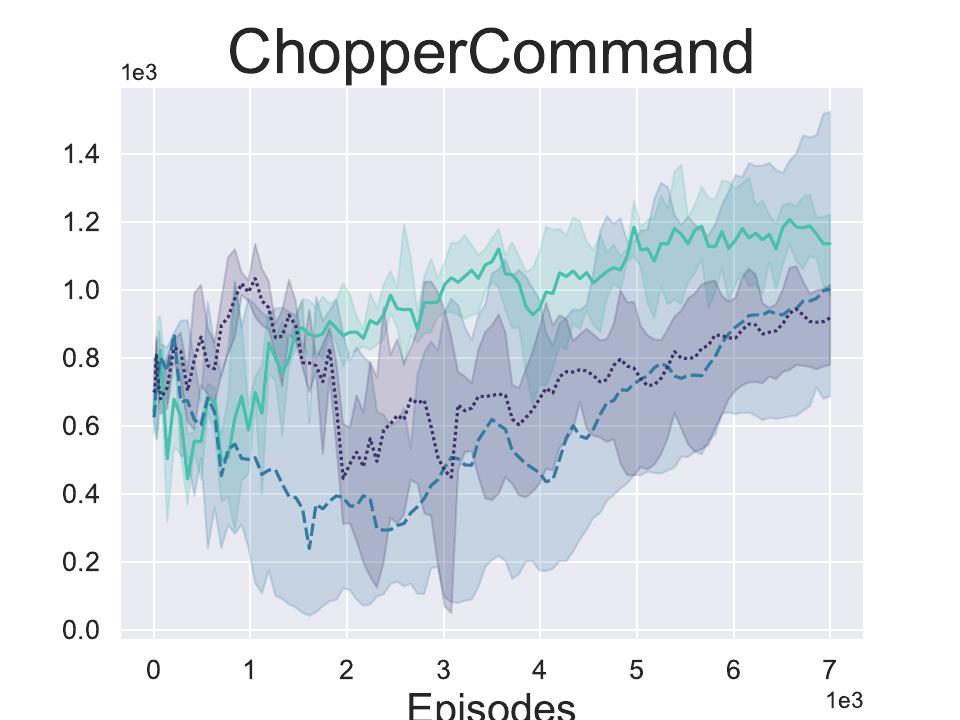}}
    \subfigure{ \label{subfig:crazy_rest3} 
        \includegraphics[ width=0.185\textwidth]{../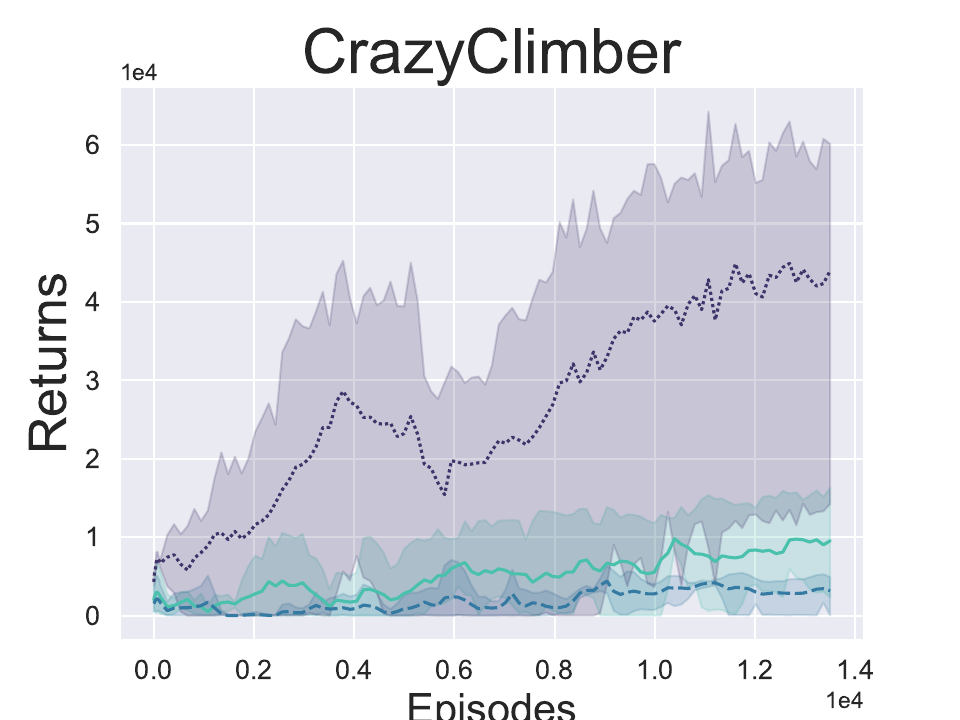}}
    \subfigure{ \label{subfig:enduro_rest3} 
        \includegraphics[ width=0.185\textwidth]{../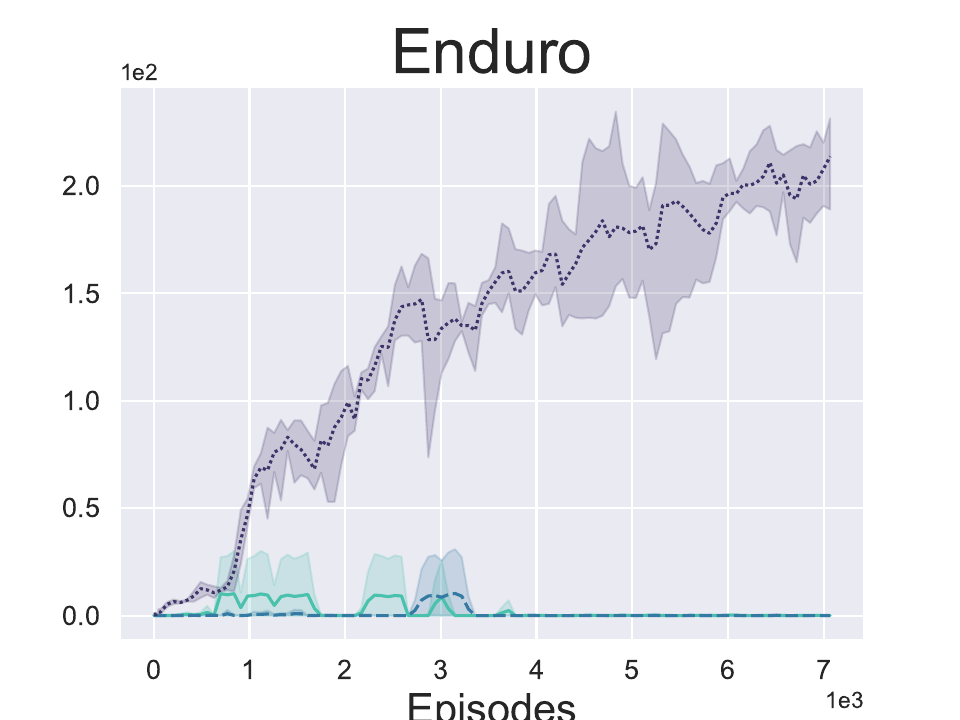}}
    \subfigure{ \label{subfig:kang_rest3} 
        \includegraphics[ width=0.185\textwidth]{../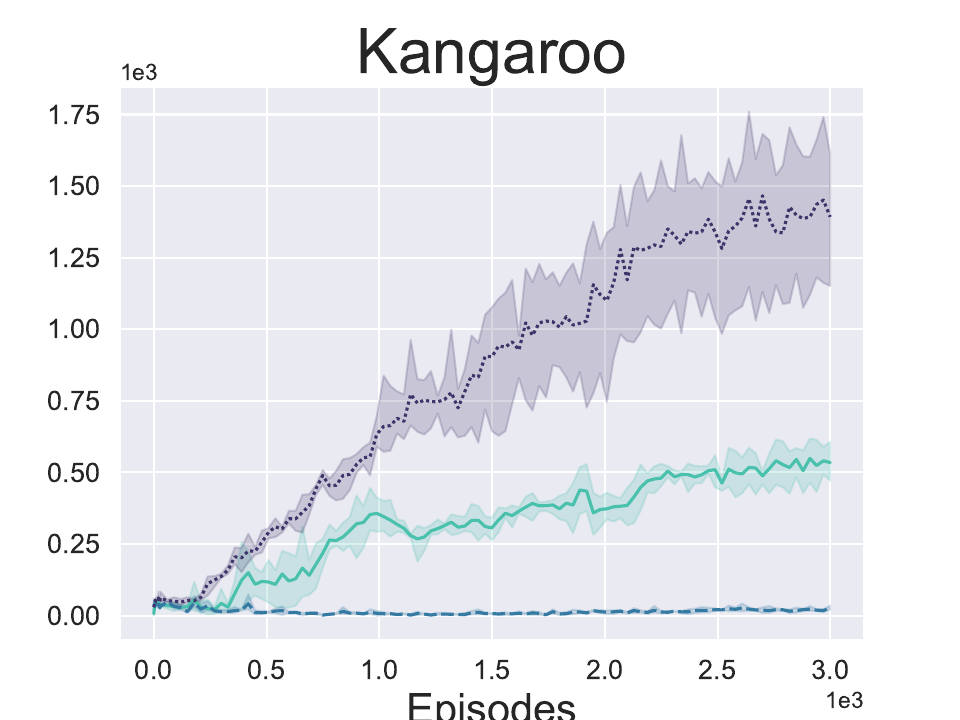}}
    \subfigure{ \label{subfig:mspacman_rest3} 
        \includegraphics[ width=0.185\textwidth]{../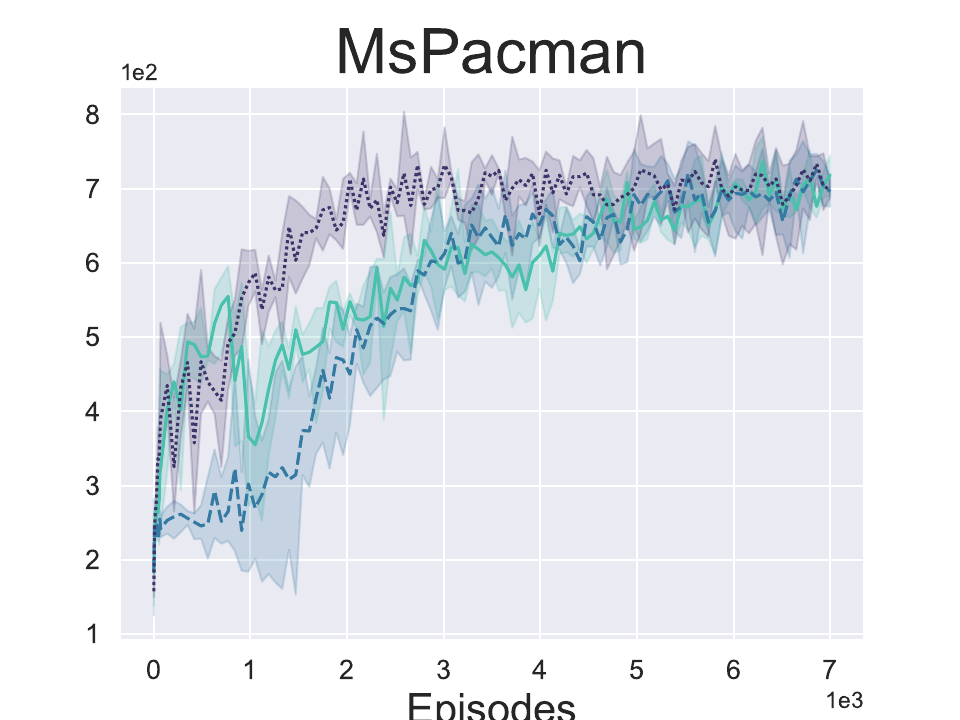}}
    \subfigure{ \label{subfig:pong_rest3} 
        \includegraphics[ width=0.185\textwidth]{../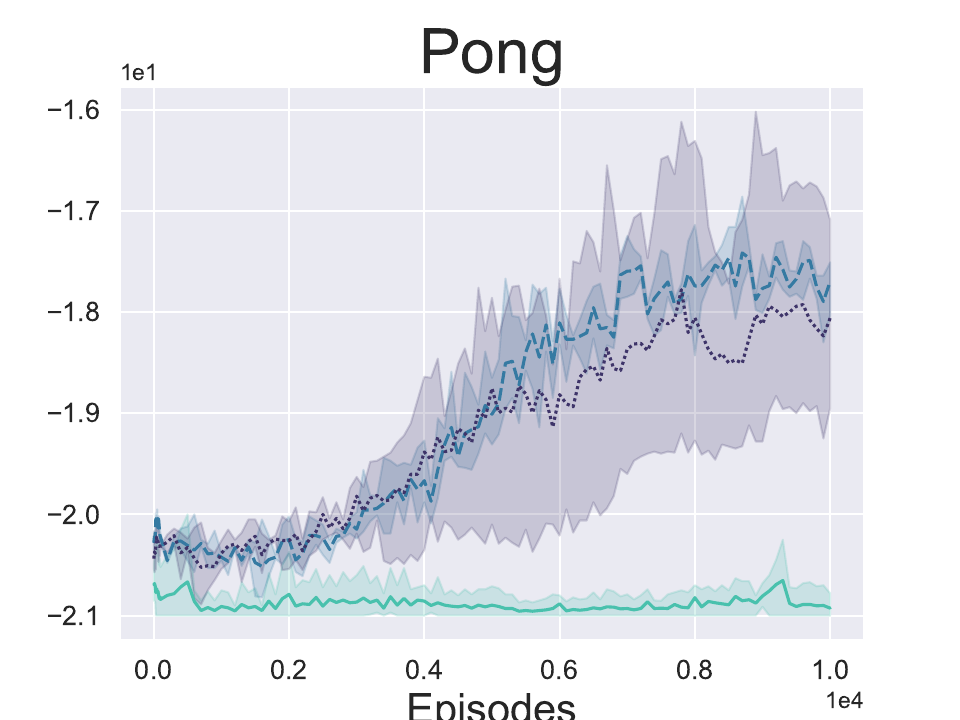}}
    \subfigure{ \label{subfig:qbert_rest3} 
        \includegraphics[ width=0.185\textwidth]{../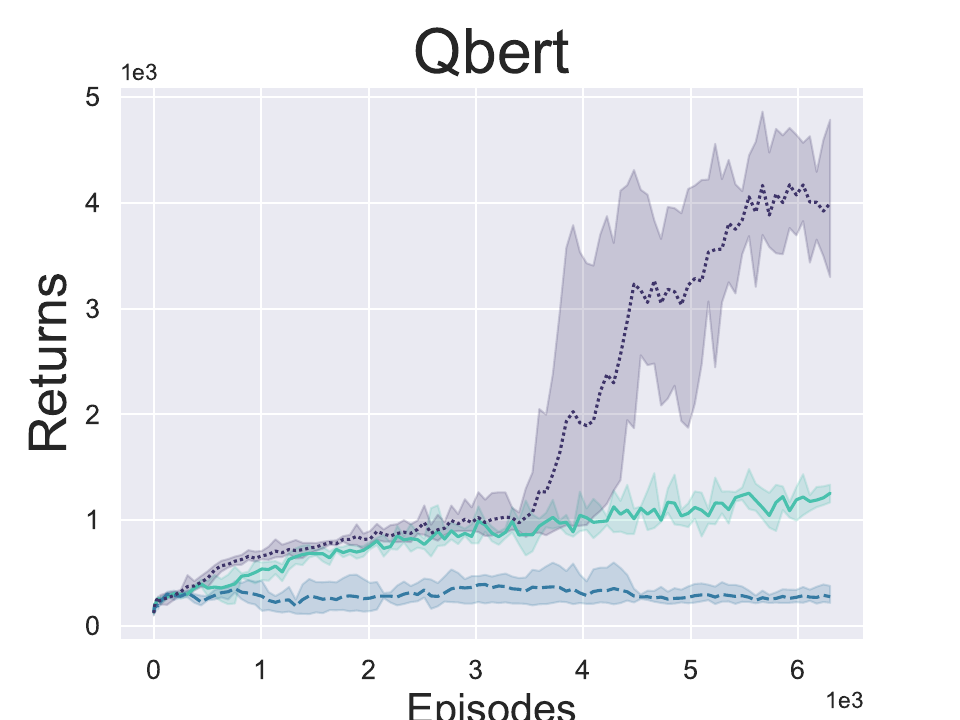}}
    \subfigure{ \label{subfig:sea_rest3} 
        \includegraphics[ width=0.185\textwidth]{../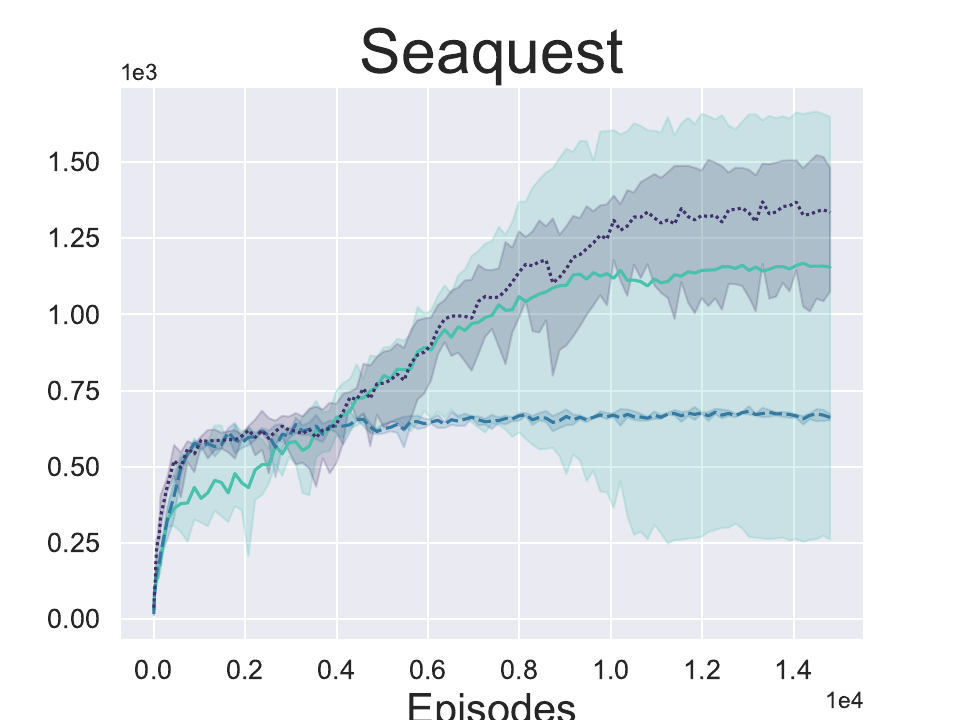}}
    \subfigure{ \label{subfig:space_rest3} 
        \includegraphics[ width=0.185\textwidth]{../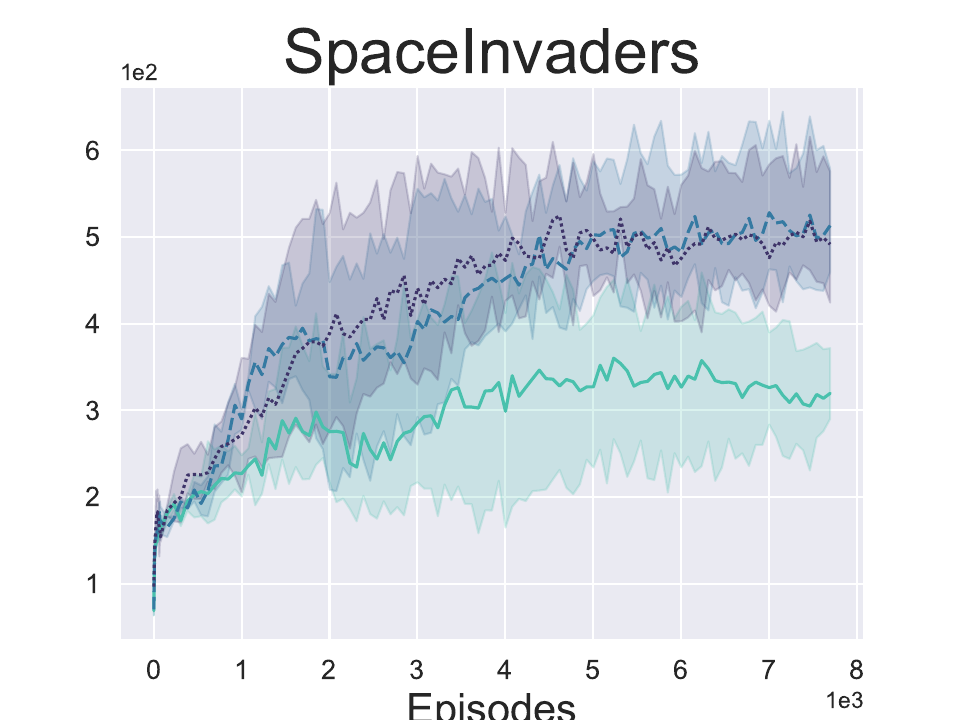}}
    \subfigure{ \label{subfig:up_rest3} 
        \includegraphics[ width=0.185\textwidth]{../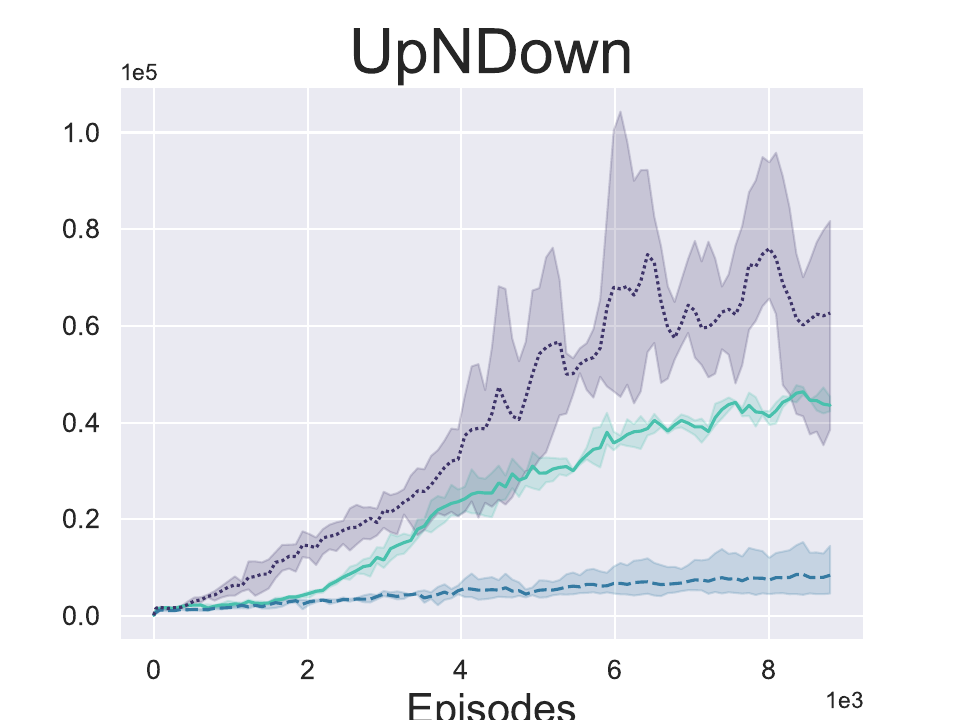}}
    \subfigure{ \label{subfig:za_rest3} 
        \includegraphics[ width=0.185\textwidth]{../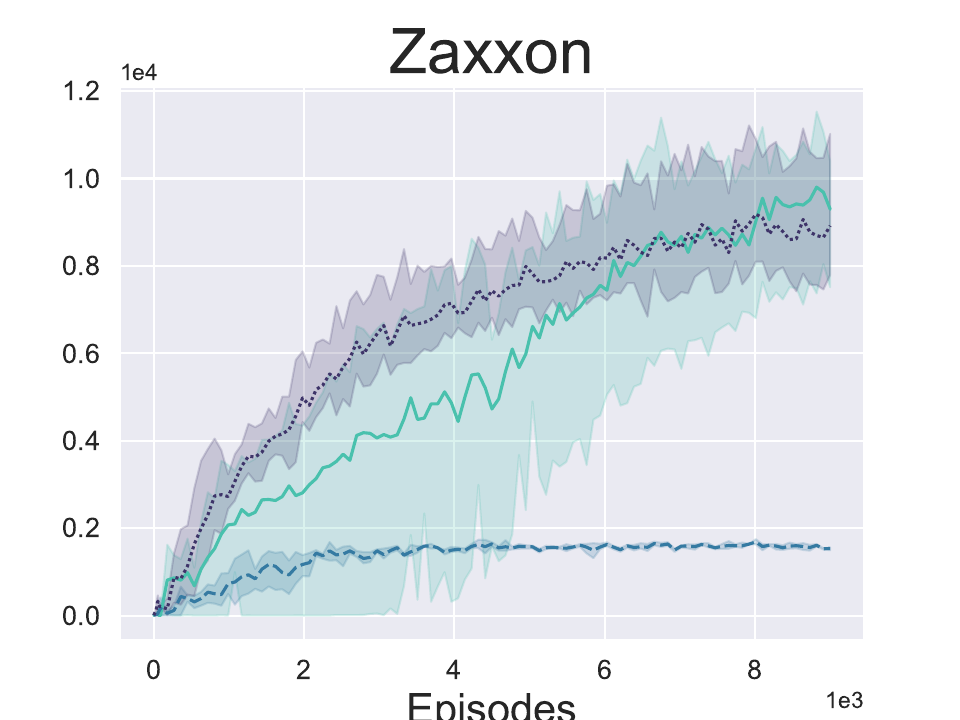}}
    \subfigure{ \label{subfig:human_rest3} 
        \includegraphics[ width=0.185\textwidth]{../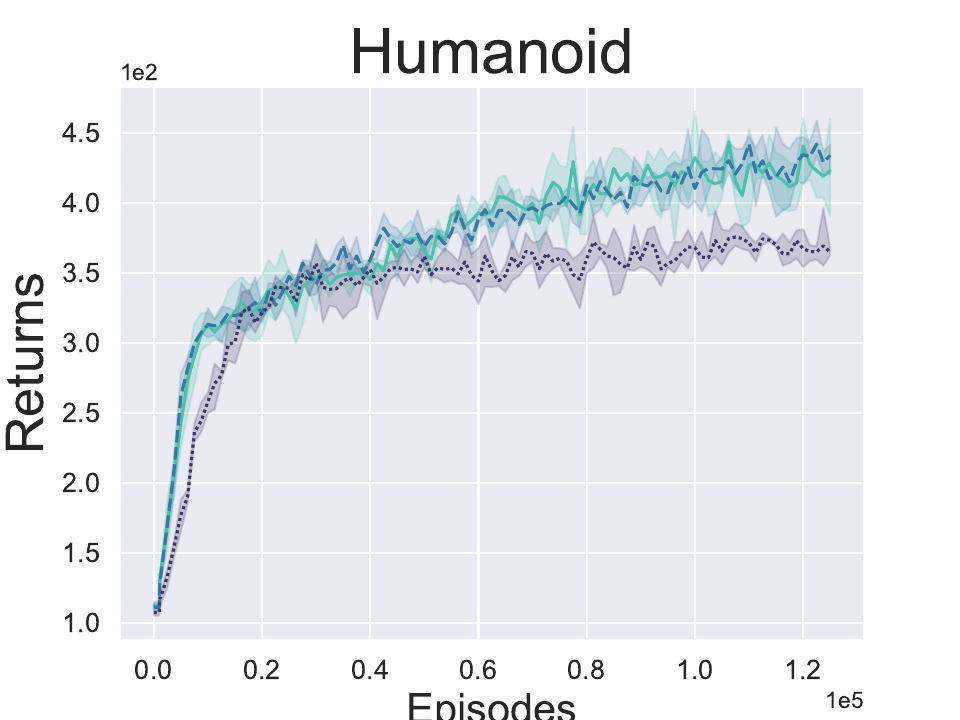}}
    \subfigure{ \label{subfig:half_pixel_rest3} 
        \includegraphics[ width=0.185\textwidth]{../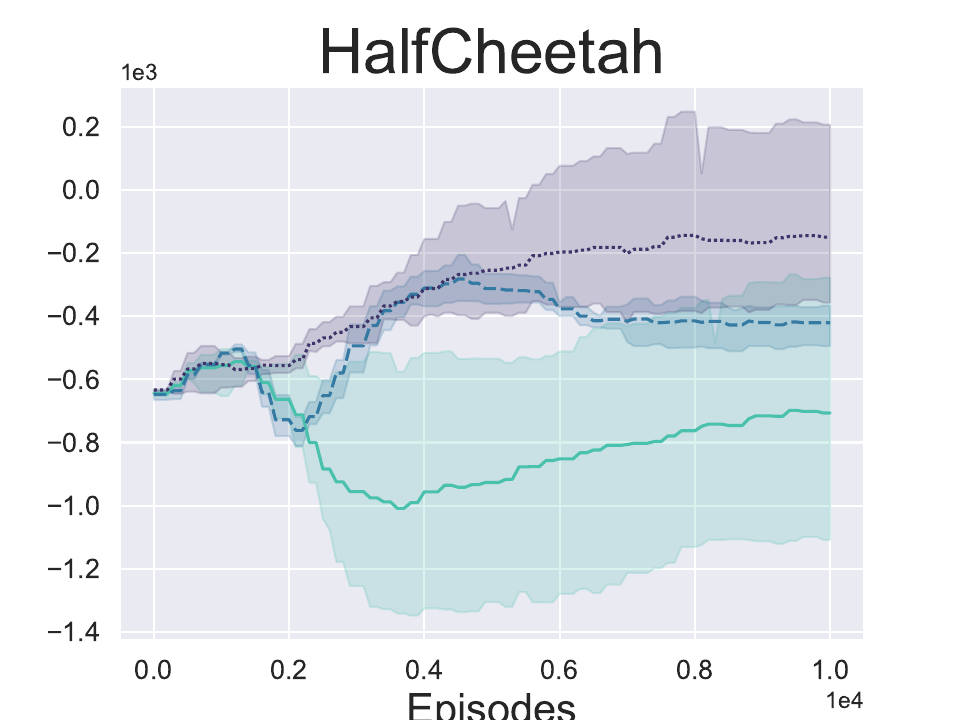}}
    \subfigure{ \label{subfig:hopper_rest3} 
        \includegraphics[ width=0.185\textwidth]{../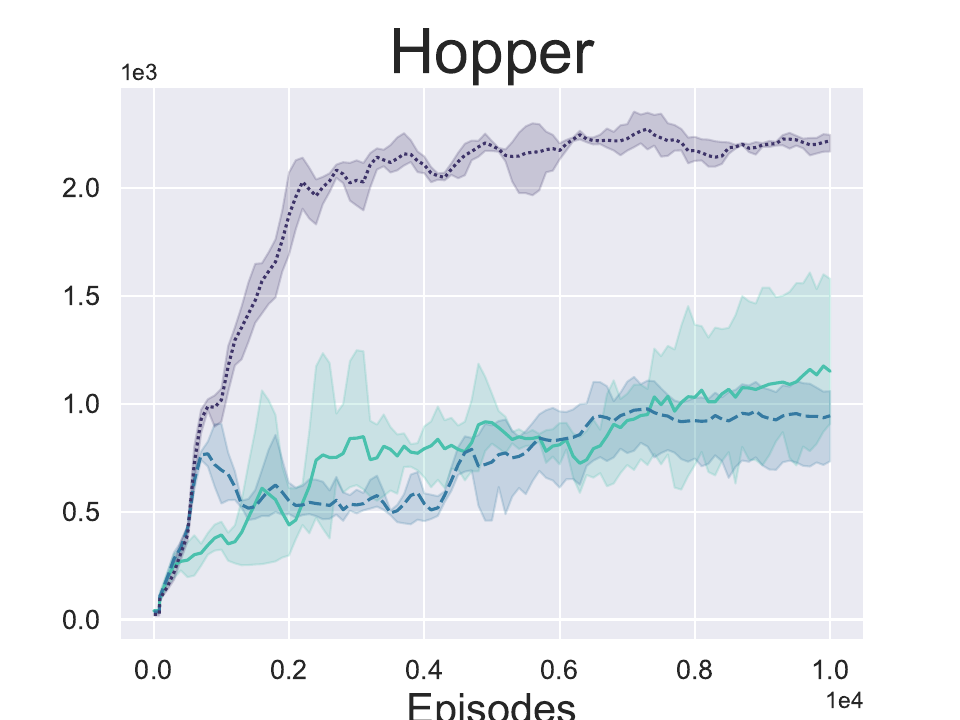}}
    \subfigure{ \label{subfig:humanstand_rest3} 
        \includegraphics[ width=0.185\textwidth]{../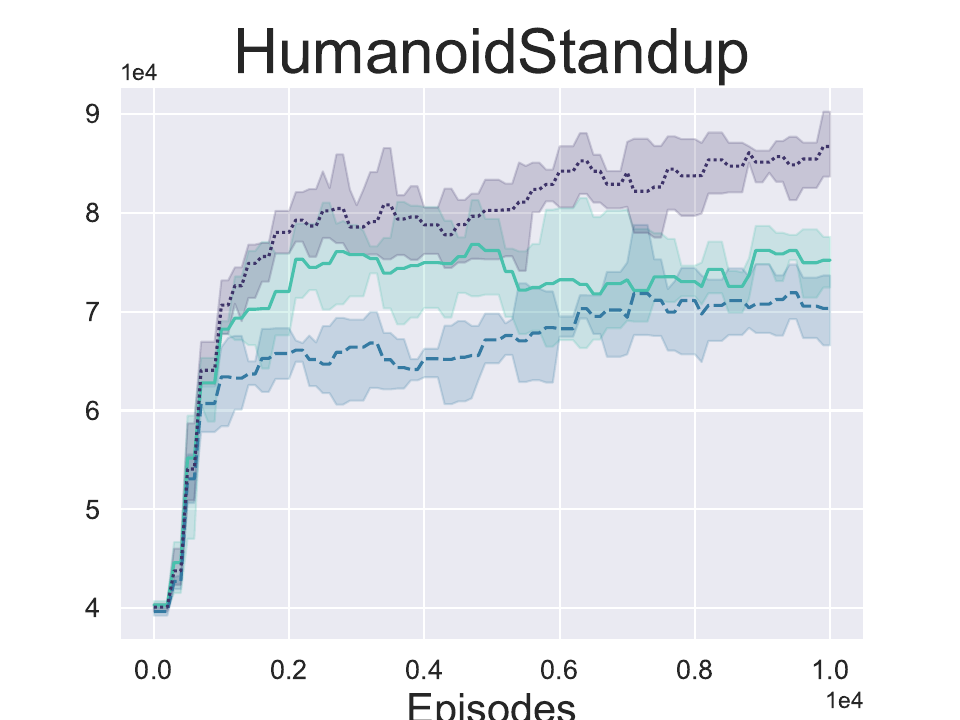}}
    \subfigure{ \label{subfig:reacher_rest3} 
        \includegraphics[ width=0.185\textwidth]{../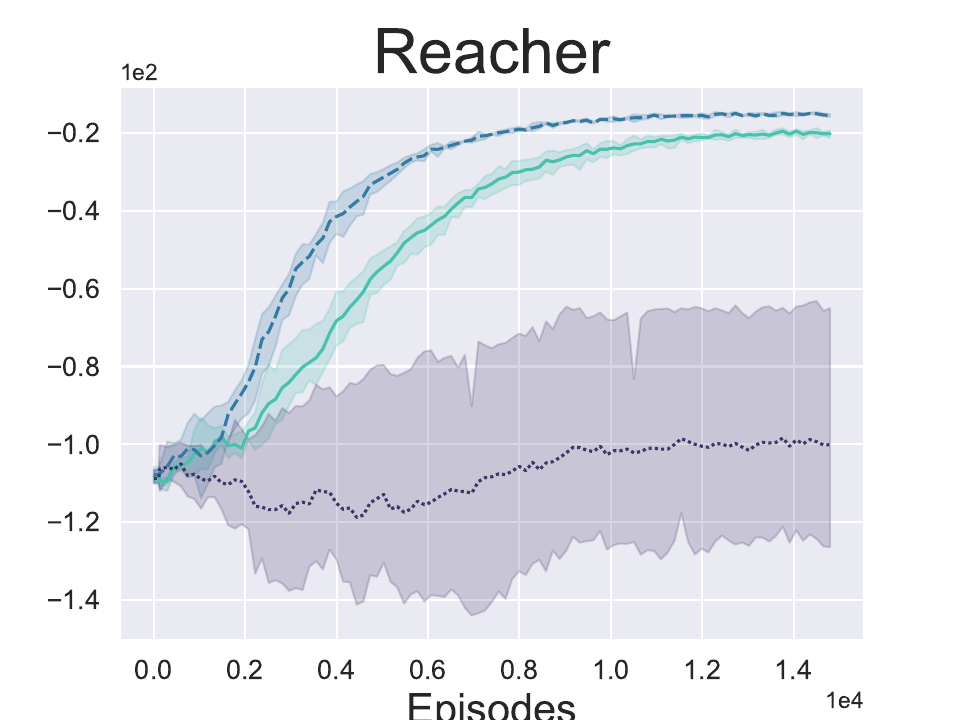}}
    \caption{Reward curves of other three approaches on all environments. 3 trials are conducted for each method on each game, and the shaded region indicates the standard deviation.}
    \label{fig:curve_three}
\end{figure*}
The neural network architecture of HashReward is shown in Figure~\ref{fig:net-architecture}. Note that the output of the hashing code layer is the hashing code $\lfloor b(s) \rceil \in \{-1, 1\}$.

The network architectures utilized for GAIL-AE, GAIL-UH and HashReward are the same. For GAIL and VAIL, the discriminator directly accepts input from original pixels, whose network architecture is the same as in Figure~\ref{fig:net-architecture} without the right hand side decoder part. 

\section{Results in Empirical Experiment}
\label{sec:result}
In this section, the empirical results on all environments as well as the pseudo-true reward curves are reported.
\subsection{Reward Curves}
The reward curves of other three methods (GAIL-AE-Up, GAIL-UH-Up and HashReward-AE) on all environments are reported in Figure~\ref{fig:curve_three}.

\subsection{Pseudo-True Reward Curves}
The comparisons of pseudo reward from all adversary-based approaches on all environments are shown in the figures in the last 4 pages. The blue curves denote pseudo reward provided by reward function. The red ones denote the true reward. The left y-axis indicates the pseudo reward, and the right one indicates the true reward. We can observe consistent phenomena as in the main paper. 


\begin{figure*}[!t]
    \centering
    \subfigure[Breakout]{
    \begin{minipage}[b]{0.185\linewidth}
        \includegraphics[ width=1\textwidth]{../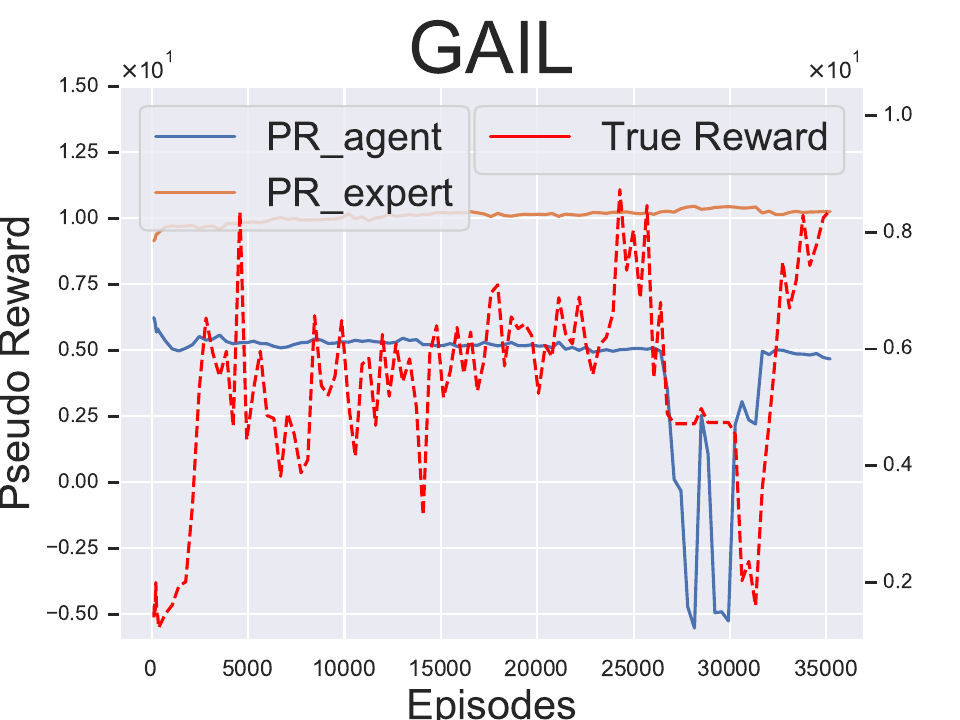}
        \includegraphics[ width=1\textwidth]{../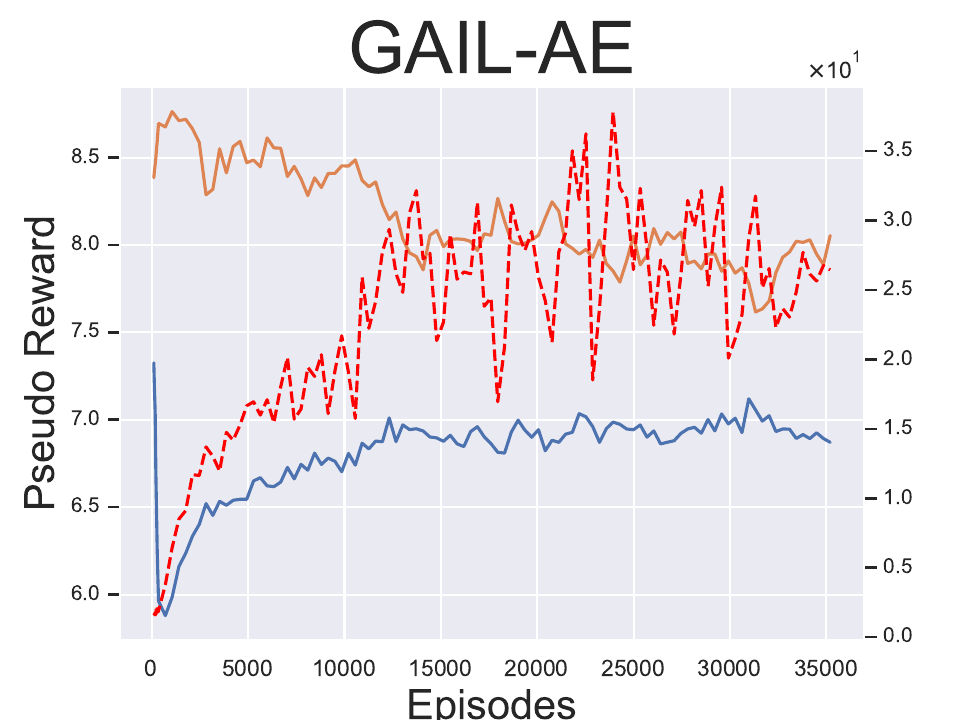}
        \includegraphics[ width=1\textwidth]{../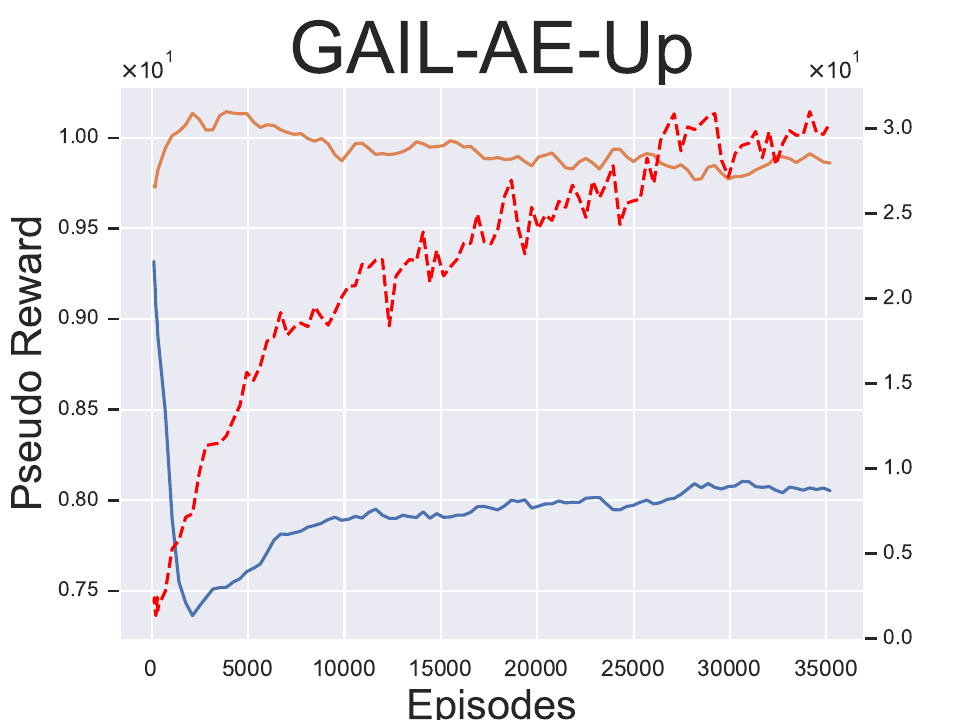}
        \includegraphics[ width=1\textwidth]{../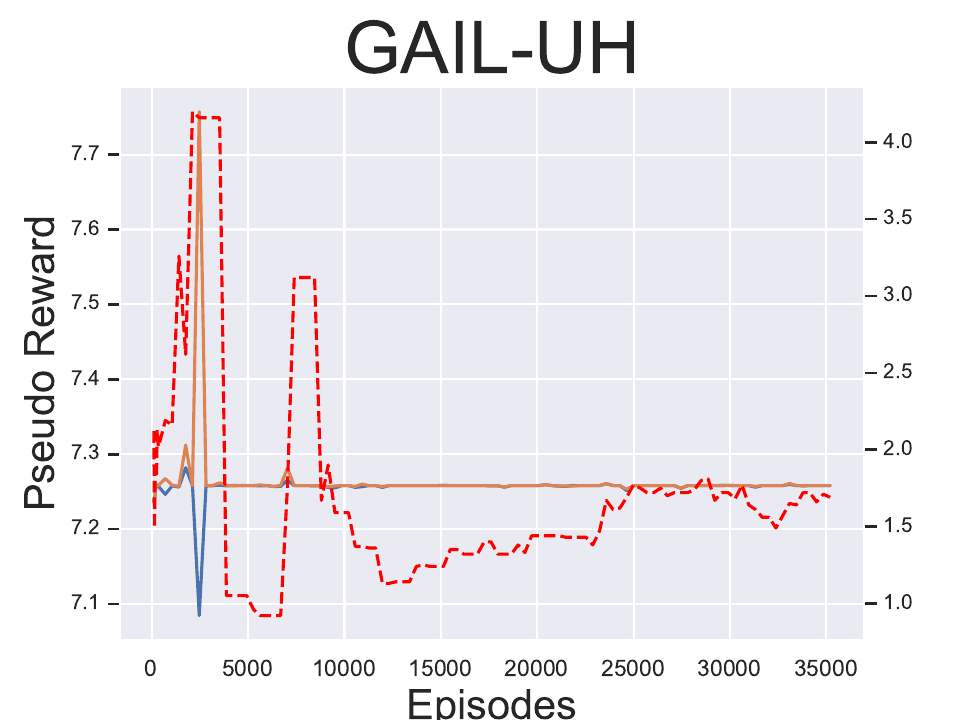}
        \includegraphics[ width=1\textwidth]{../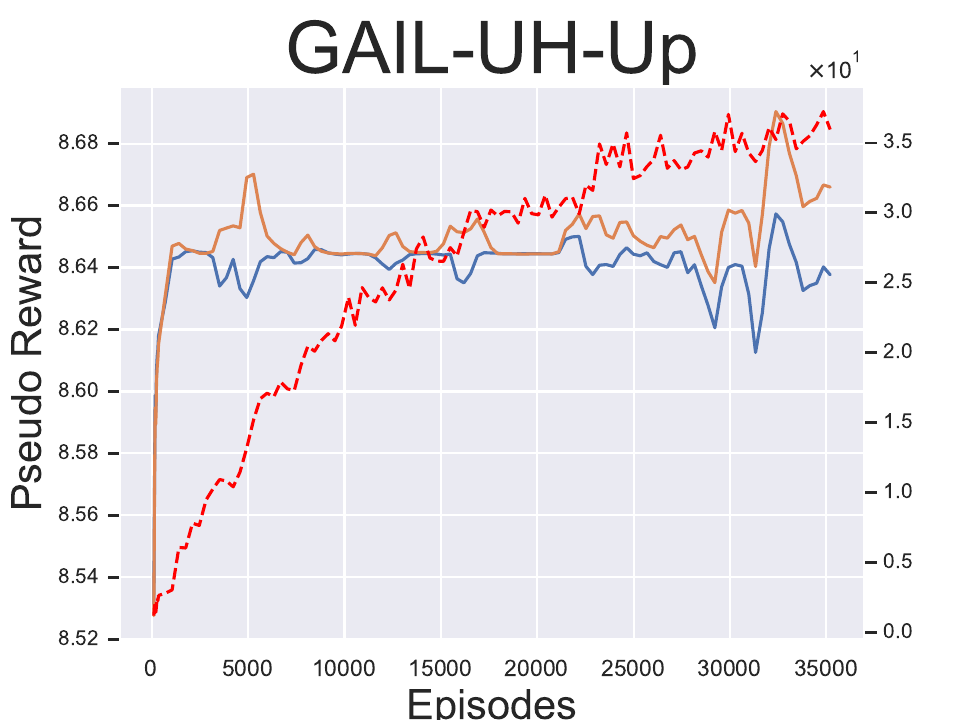}
        \includegraphics[ width=1\textwidth]{../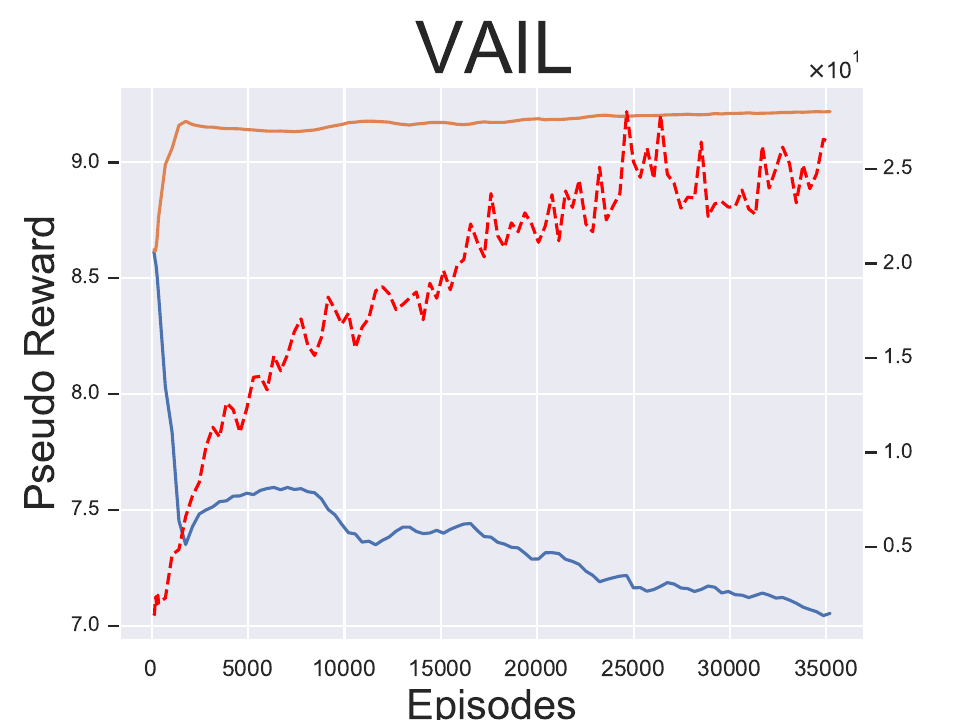}
        \includegraphics[ width=1\textwidth]{../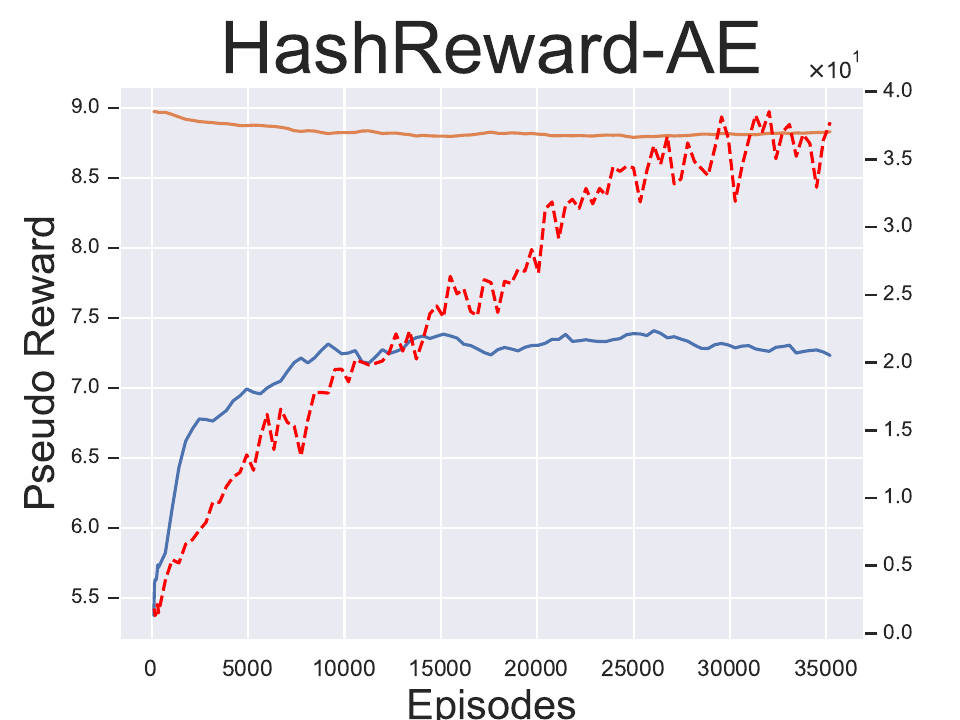}
        \includegraphics[ width=1\textwidth]{../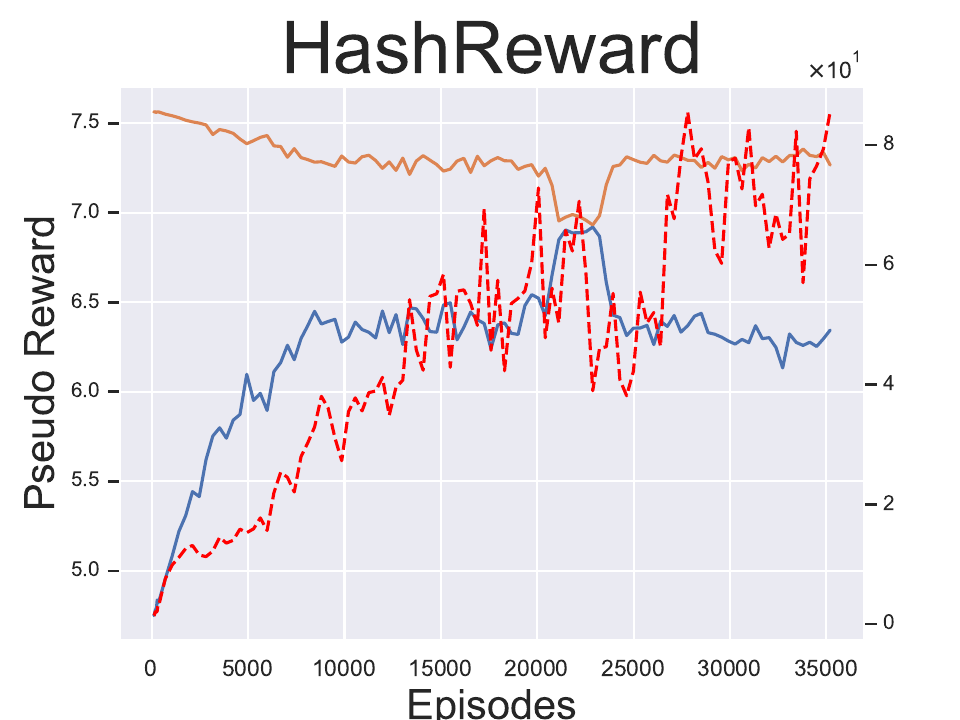}
    \end{minipage}}
    \subfigure[BeamRider]{
    \begin{minipage}[b]{0.185\linewidth}
        \includegraphics[ width=1\textwidth]{../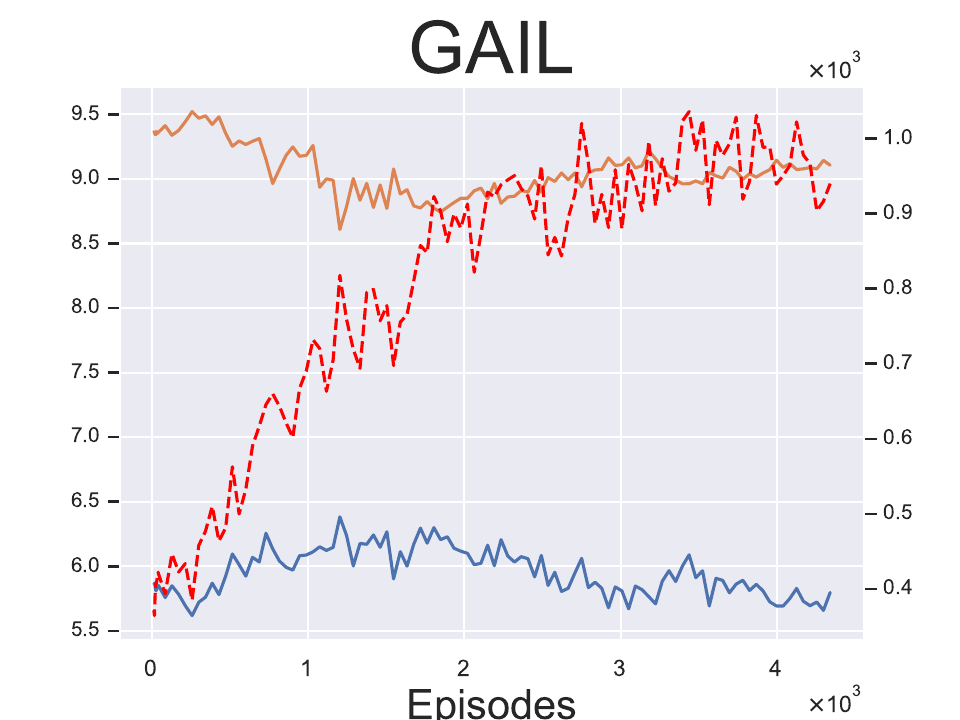}
        \includegraphics[ width=1\textwidth]{../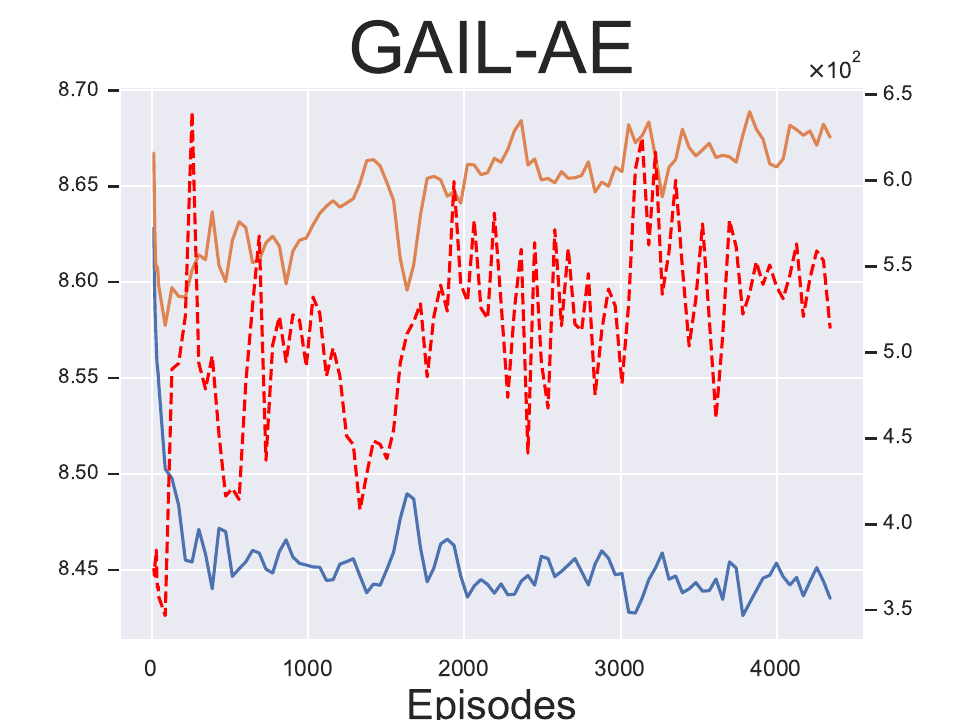}
        \includegraphics[ width=1\textwidth]{../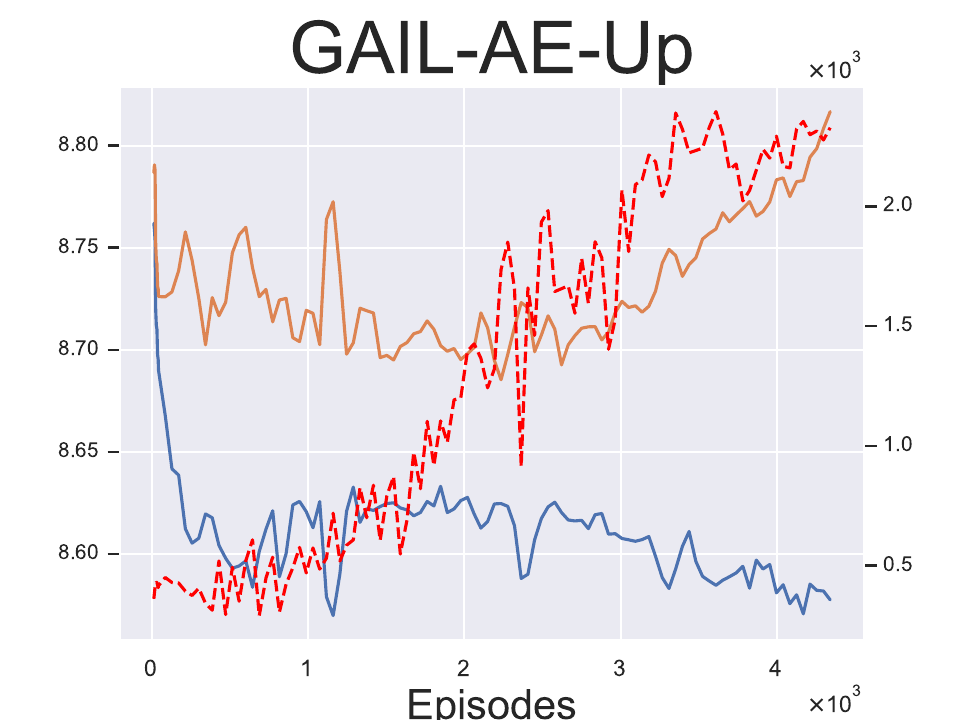}
        \includegraphics[ width=1\textwidth]{../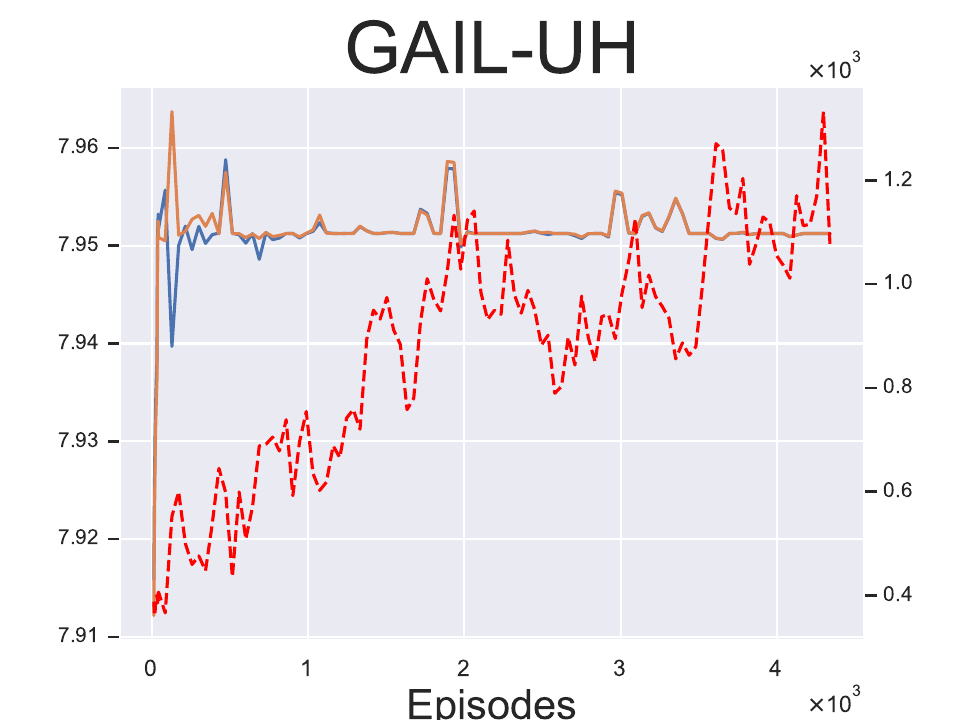}
        \includegraphics[ width=1\textwidth]{../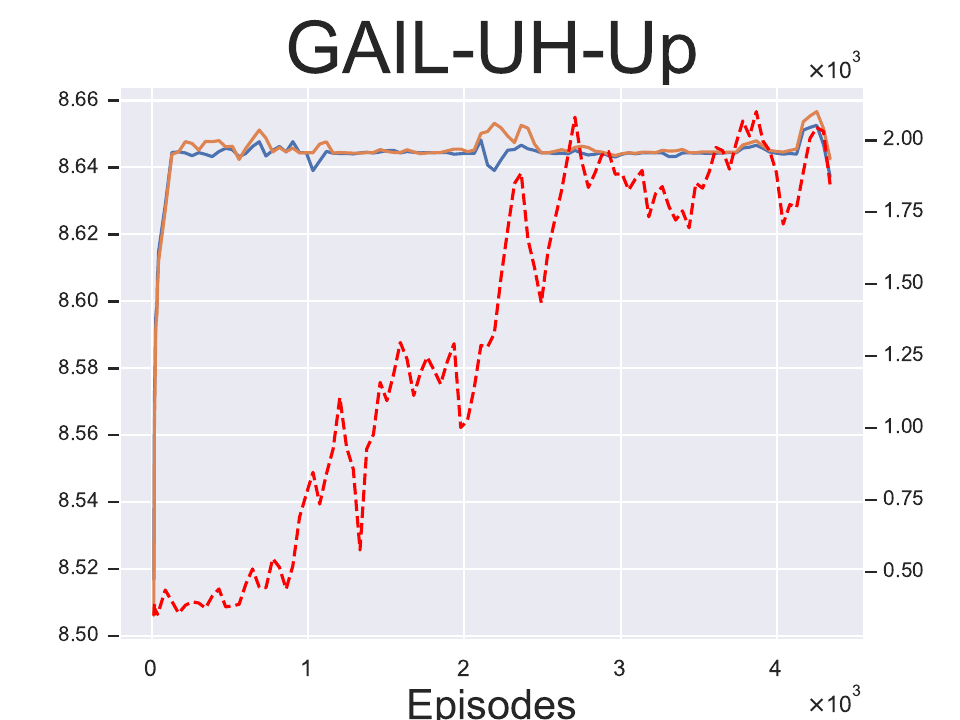}
        \includegraphics[ width=1\textwidth]{../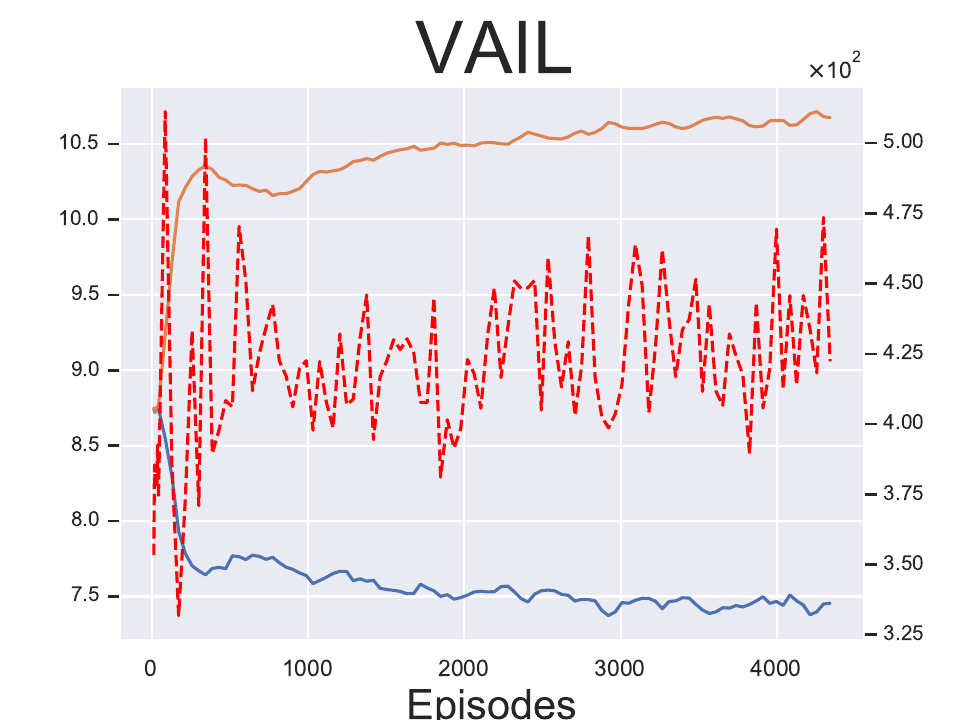}
        \includegraphics[ width=1\textwidth]{../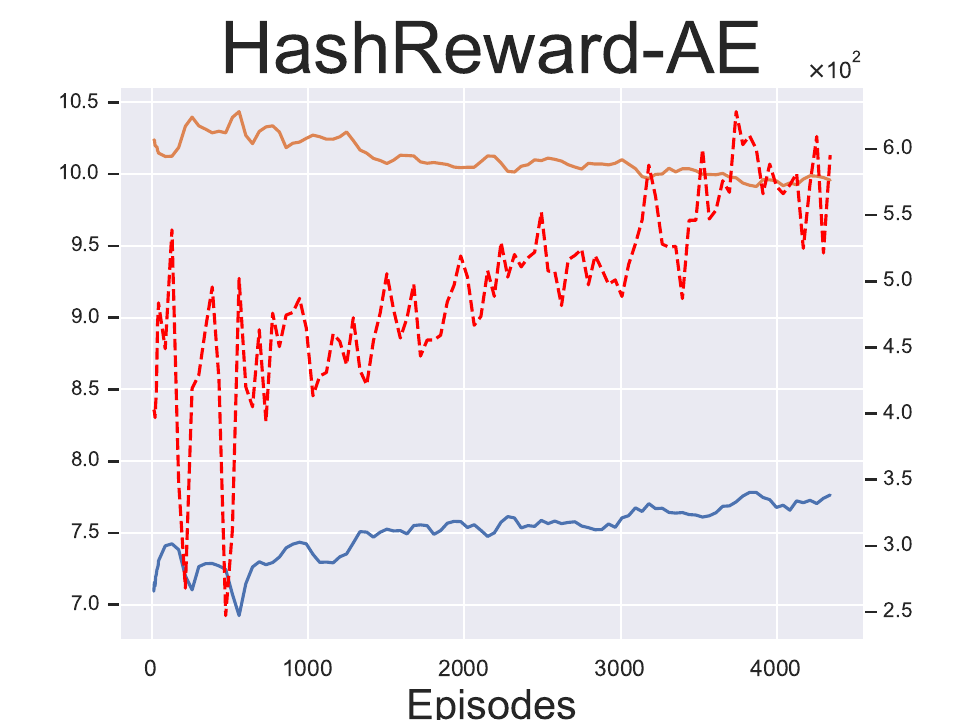}
        \includegraphics[ width=1\textwidth]{../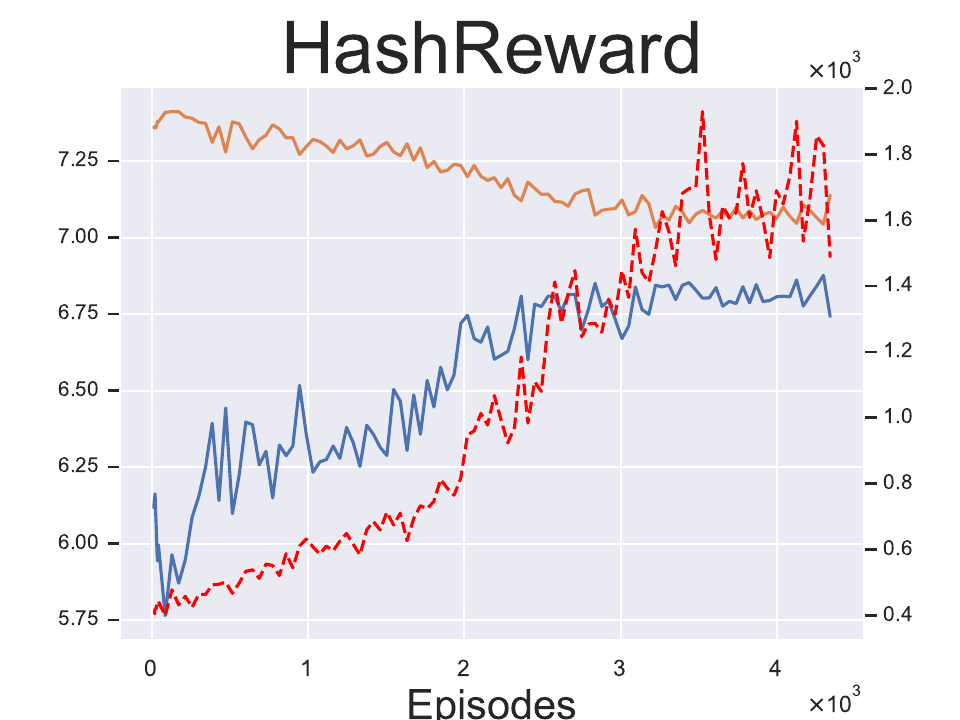}
    \end{minipage}}
    \subfigure[Boxing]{
    \begin{minipage}[b]{0.185\linewidth}
        \includegraphics[ width=1\textwidth]{../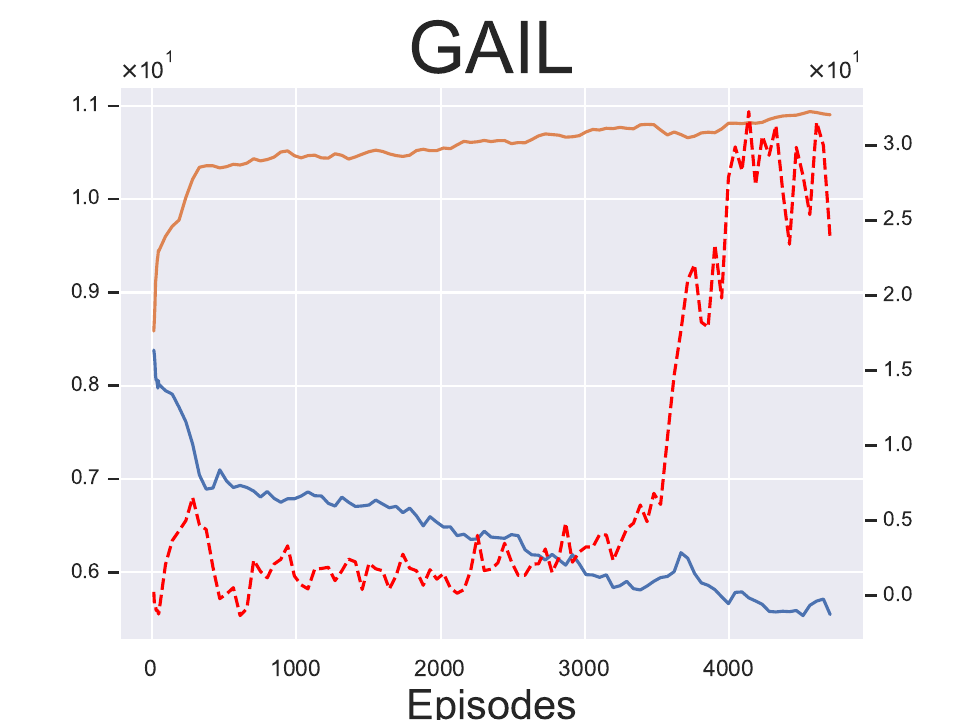}
        \includegraphics[ width=1\textwidth]{../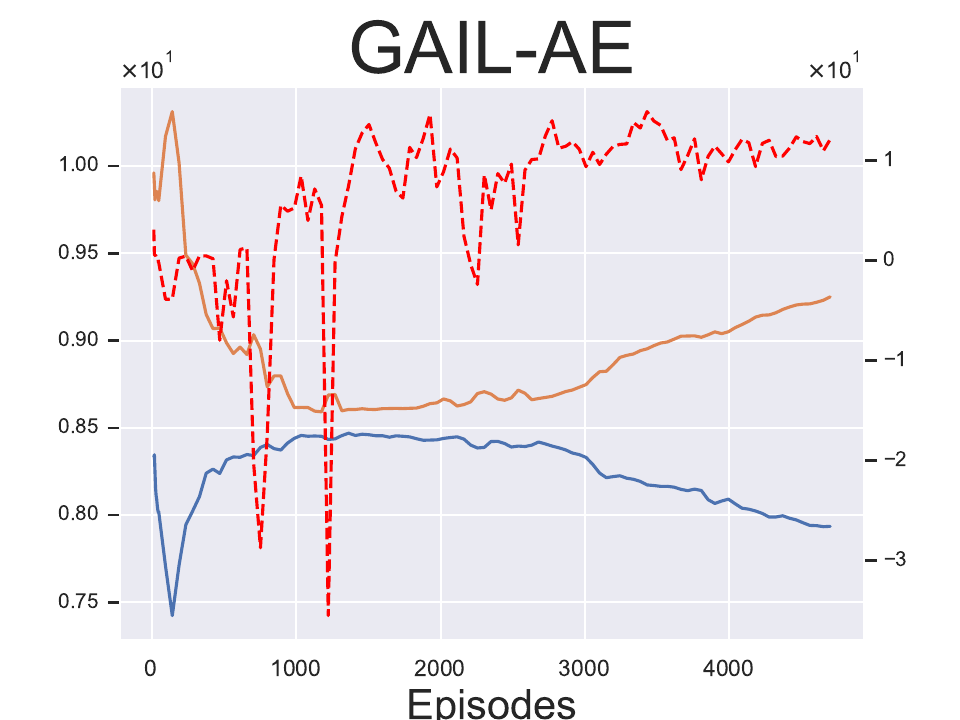}
        \includegraphics[ width=1\textwidth]{../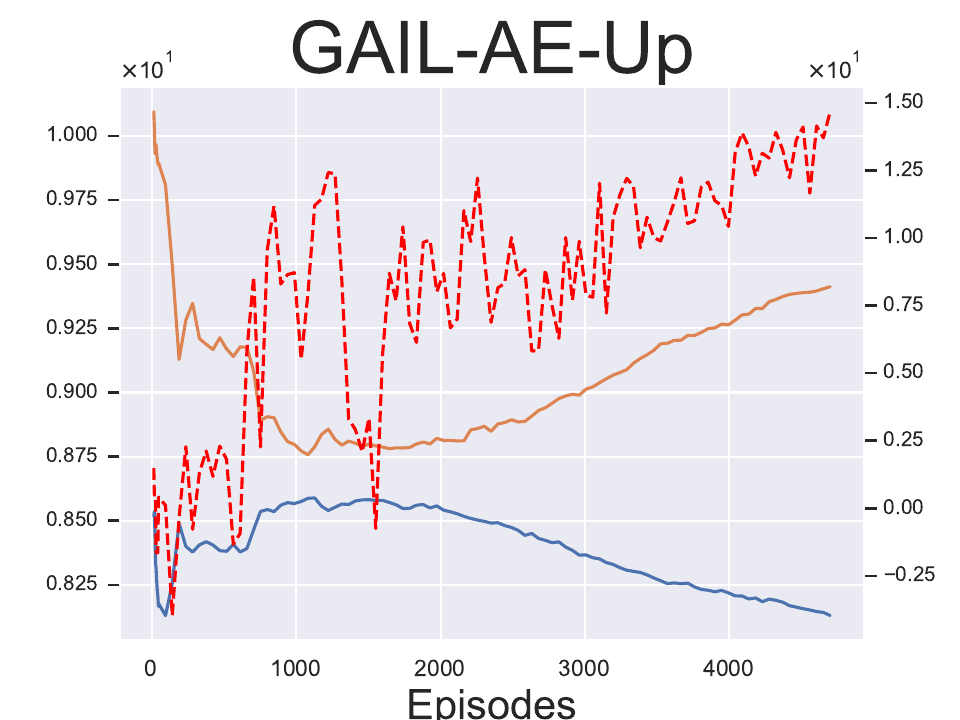}
        \includegraphics[ width=1\textwidth]{../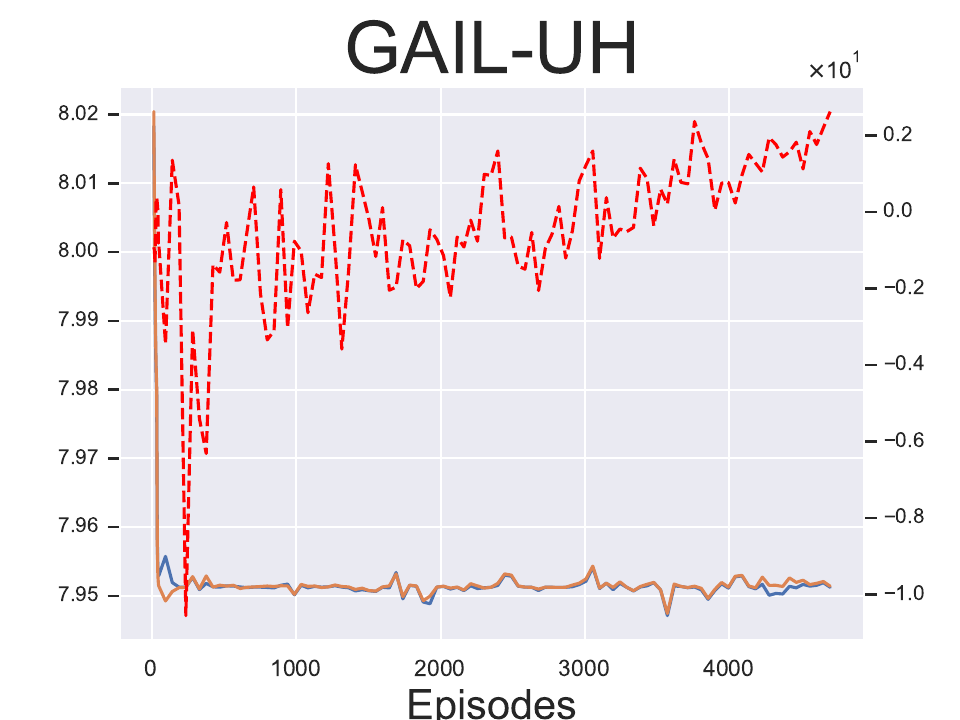}
        \includegraphics[ width=1\textwidth]{../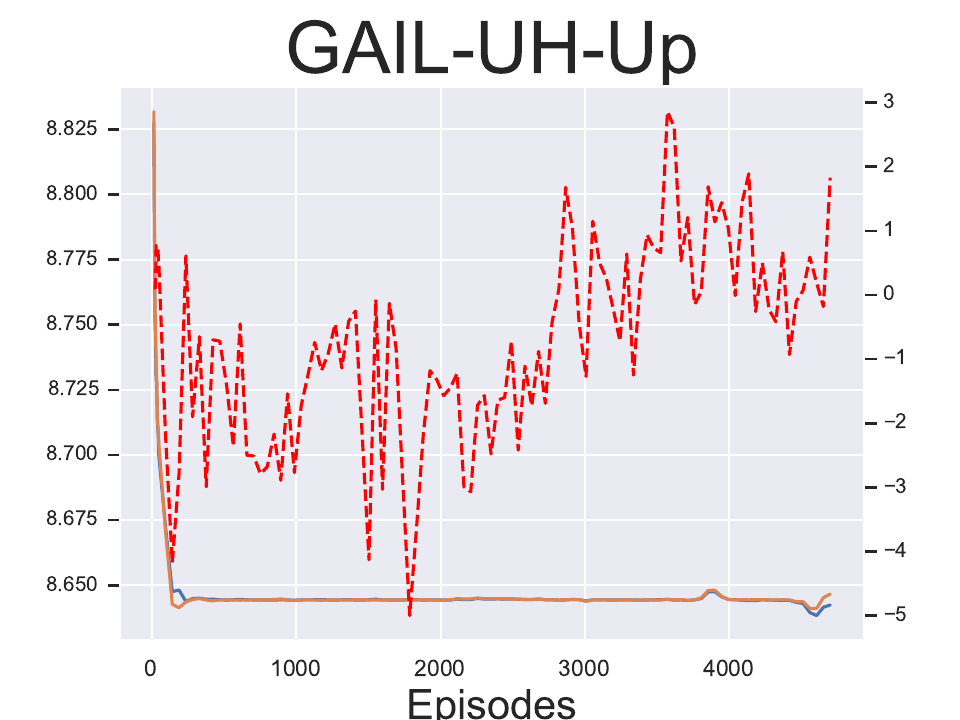}
        \includegraphics[ width=1\textwidth]{../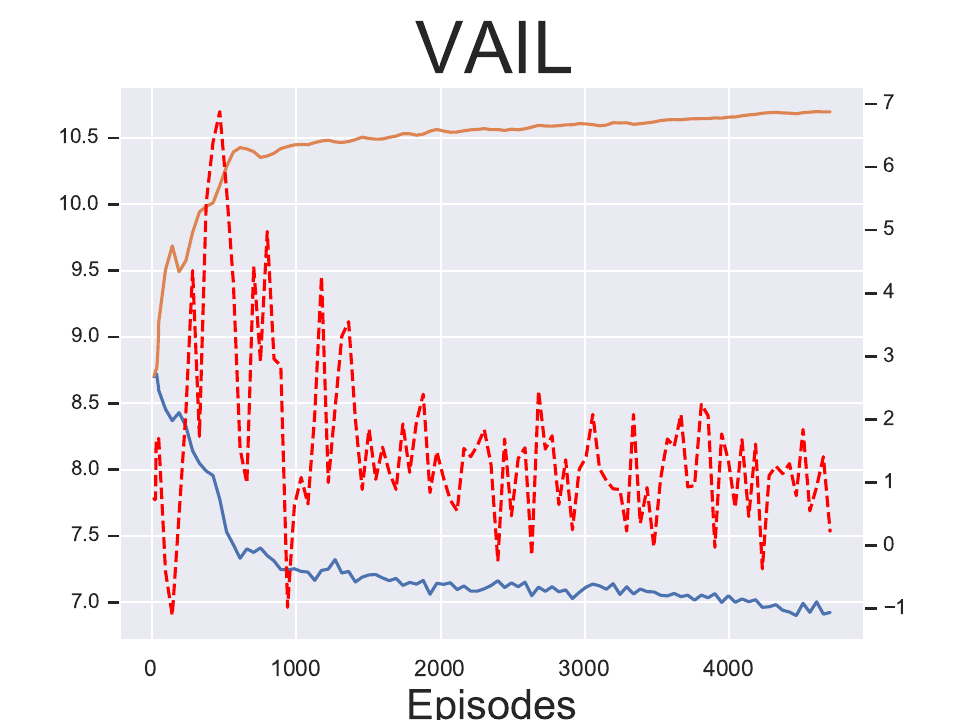}
        \includegraphics[ width=1\textwidth]{../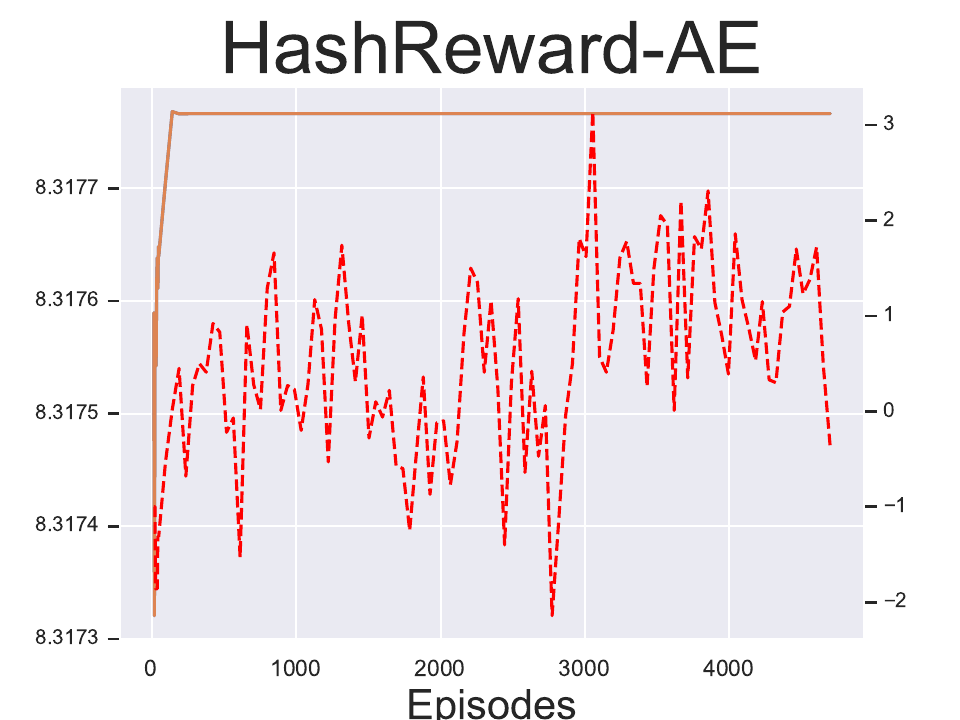}
        \includegraphics[ width=1\textwidth]{../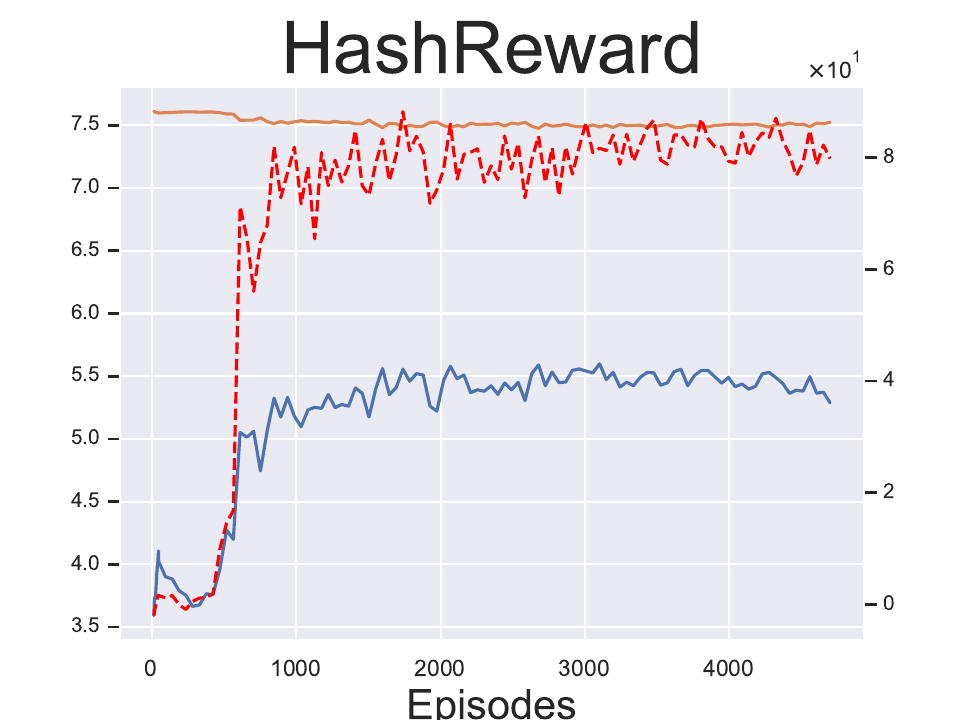}
    \end{minipage}}
    \subfigure[BattleZone]{
    \begin{minipage}[b]{0.185\linewidth}
        \includegraphics[ width=1\textwidth]{../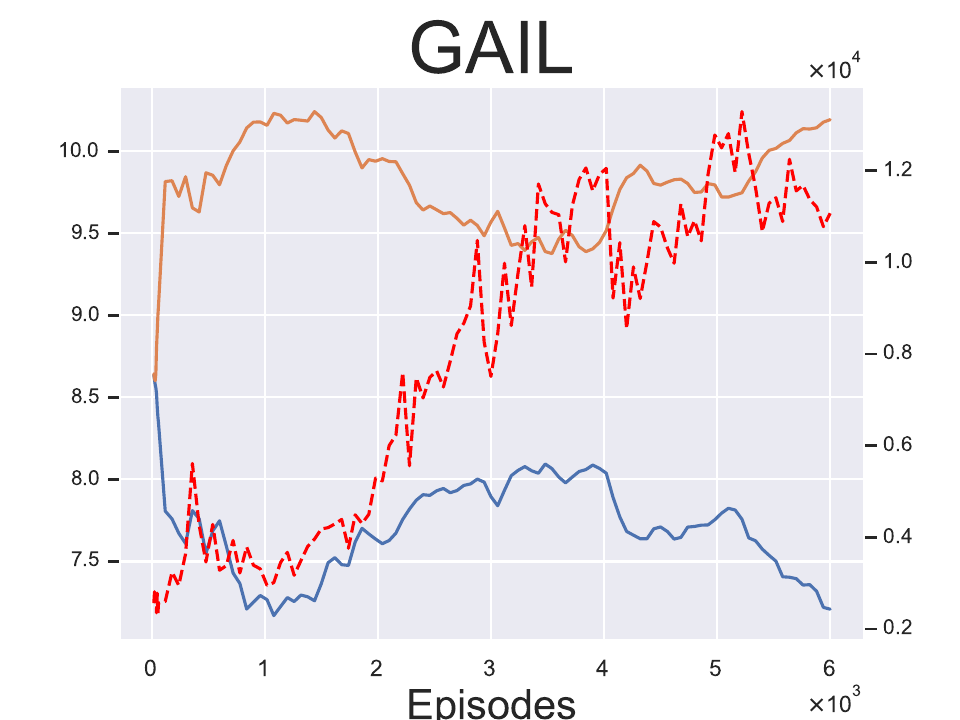}
        \includegraphics[ width=1\textwidth]{../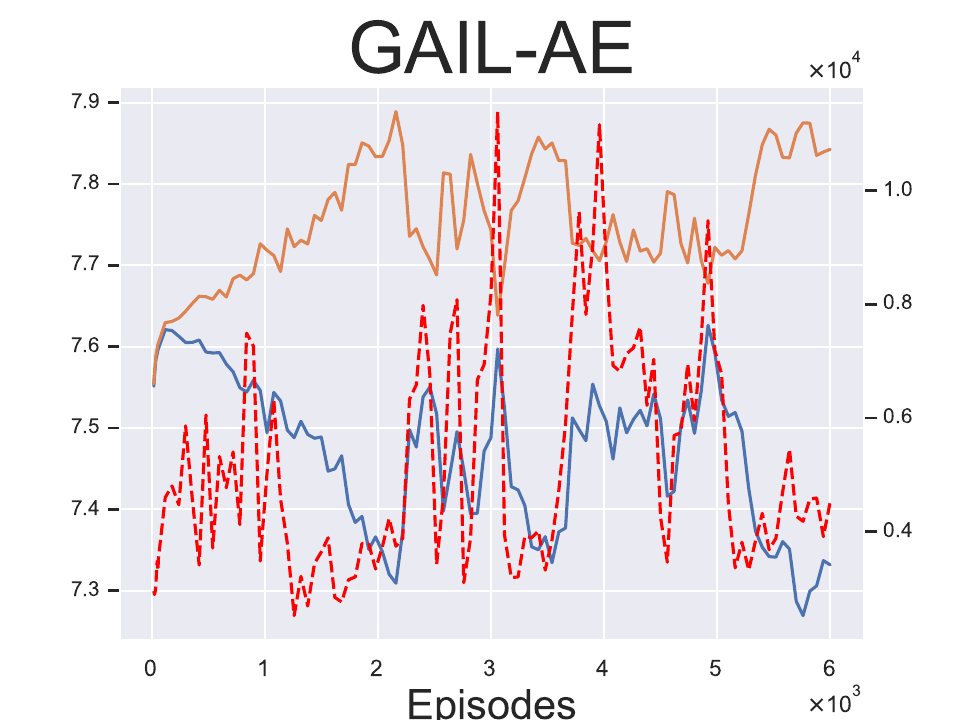}
        \includegraphics[ width=1\textwidth]{../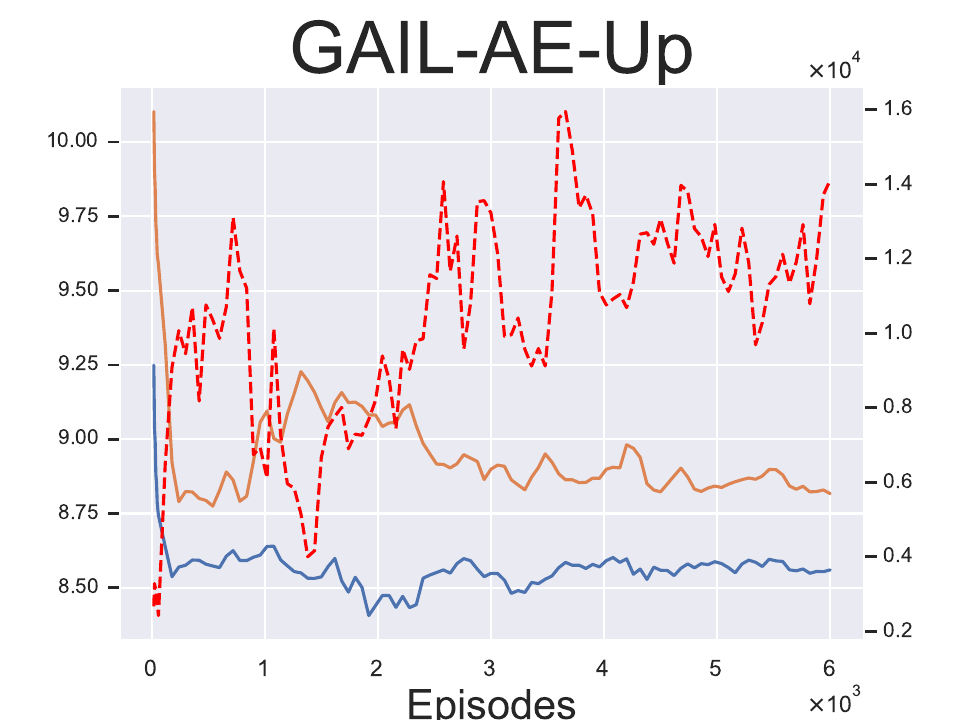}
        \includegraphics[ width=1\textwidth]{../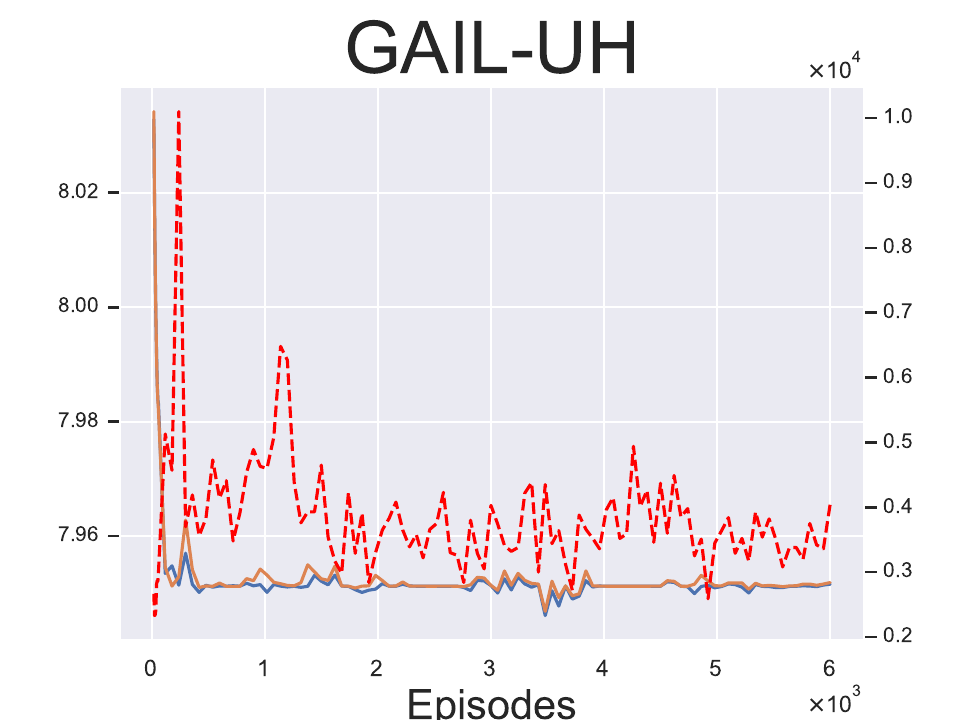}
        \includegraphics[ width=1\textwidth]{../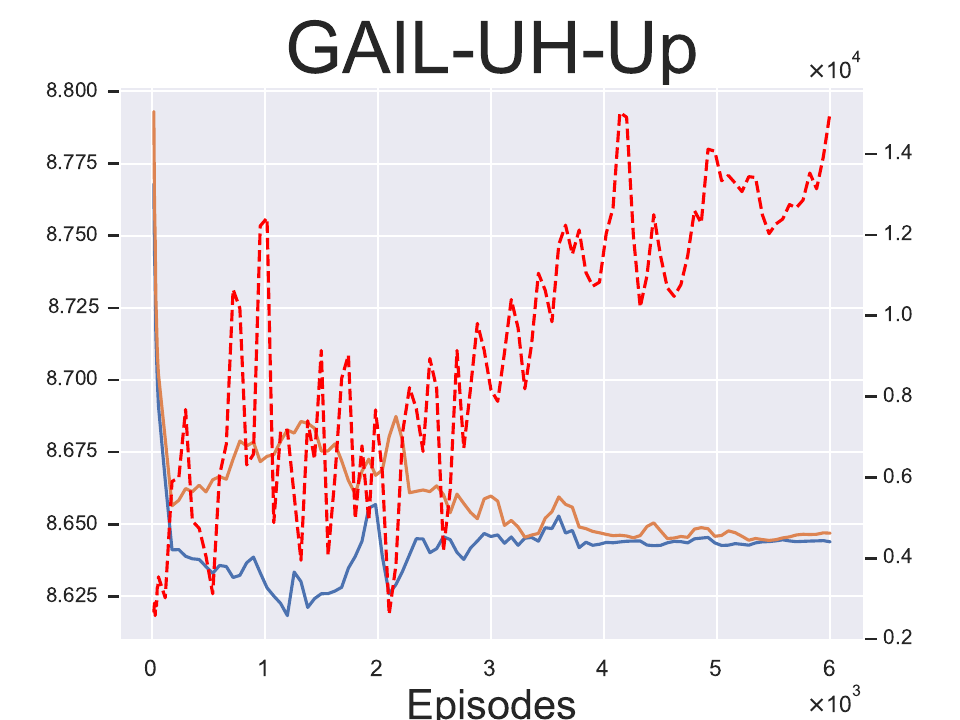}
        \includegraphics[ width=1\textwidth]{../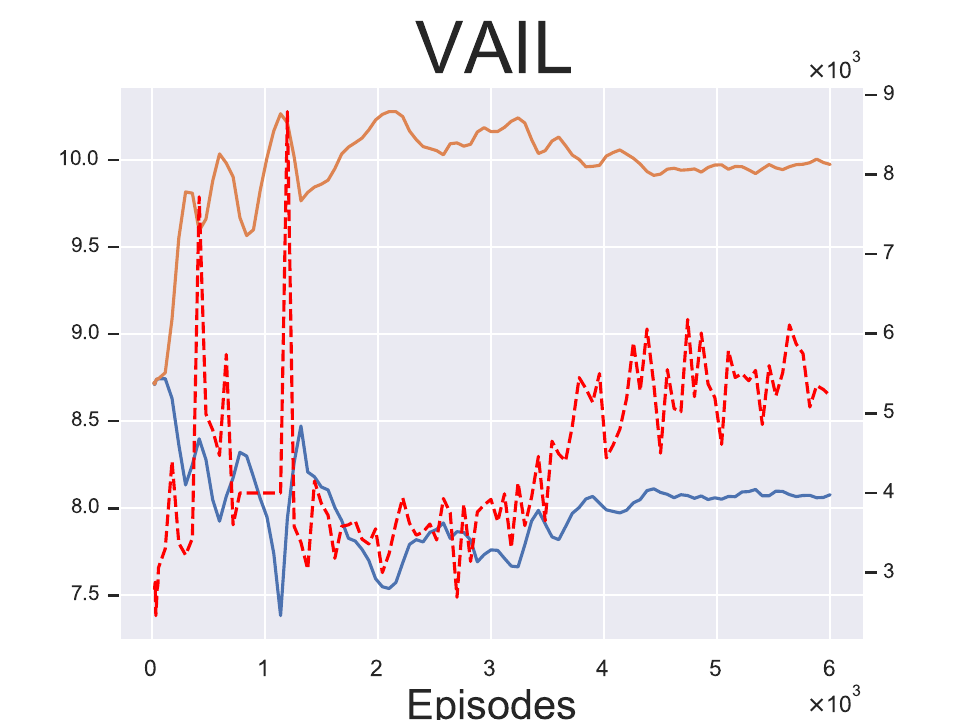}
        \includegraphics[ width=1\textwidth]{../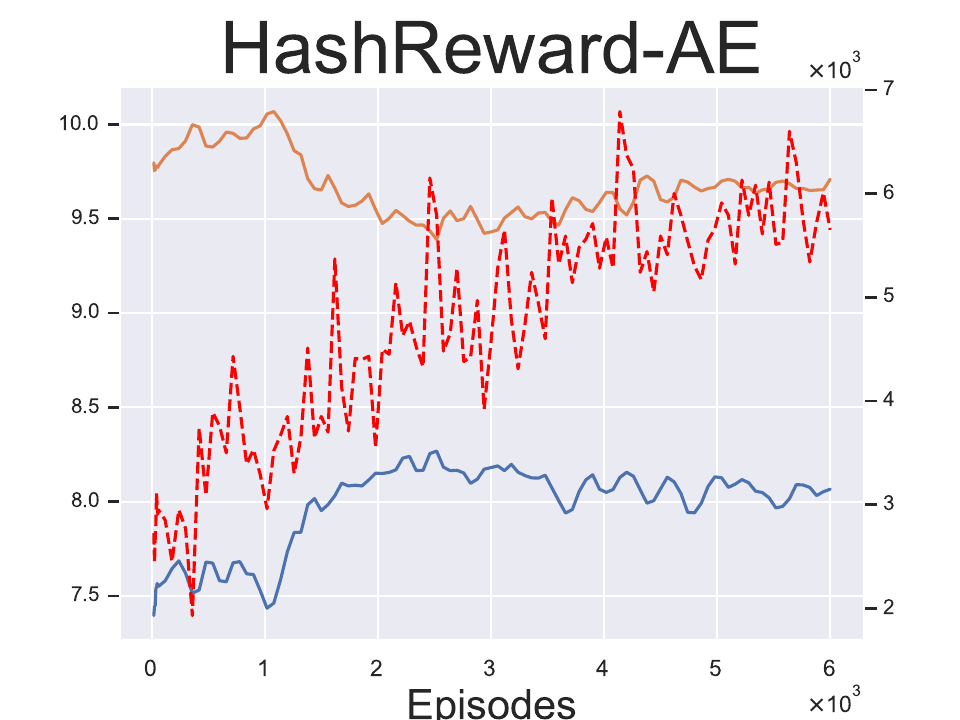}
        \includegraphics[ width=1\textwidth]{../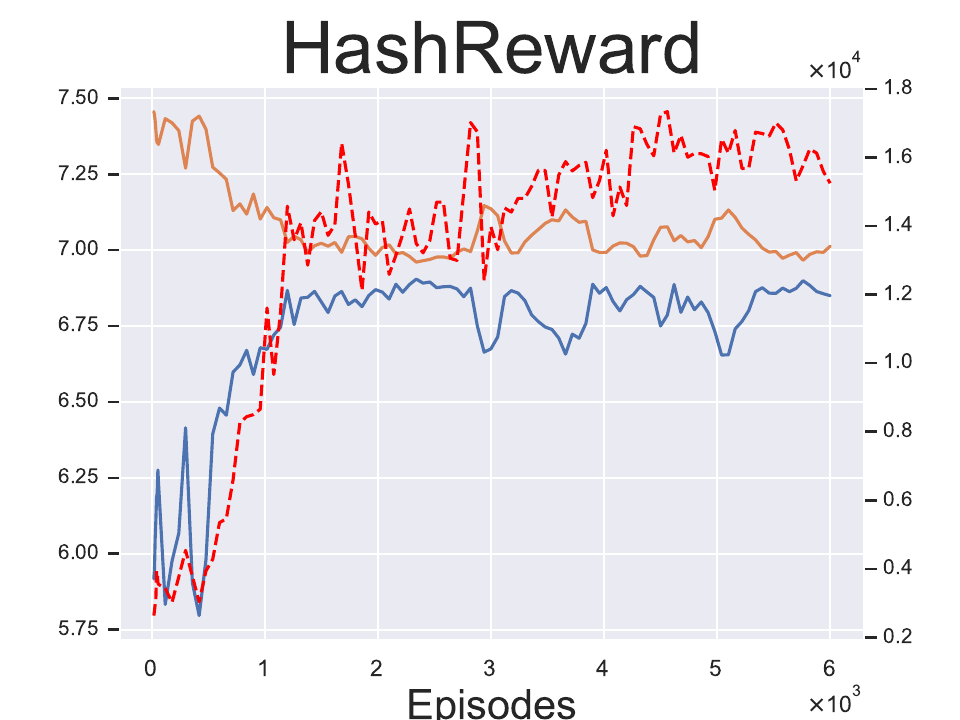}
    \end{minipage}}
    \subfigure[ChopperCommand]{
    \begin{minipage}[b]{0.185\linewidth}
        \includegraphics[ width=1\textwidth]{../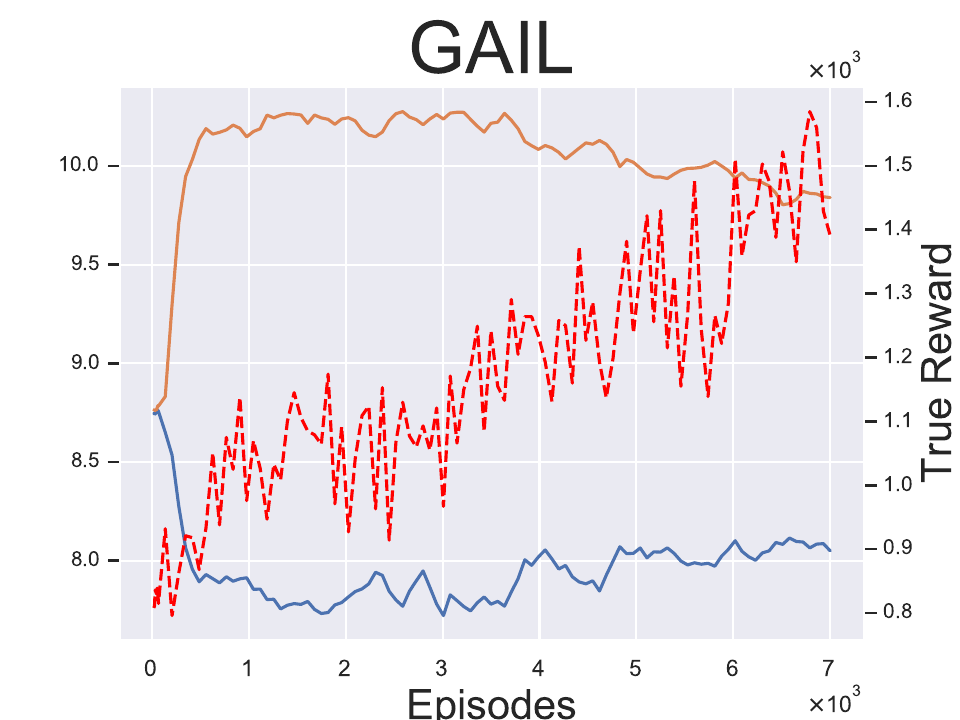}
        \includegraphics[ width=1\textwidth]{../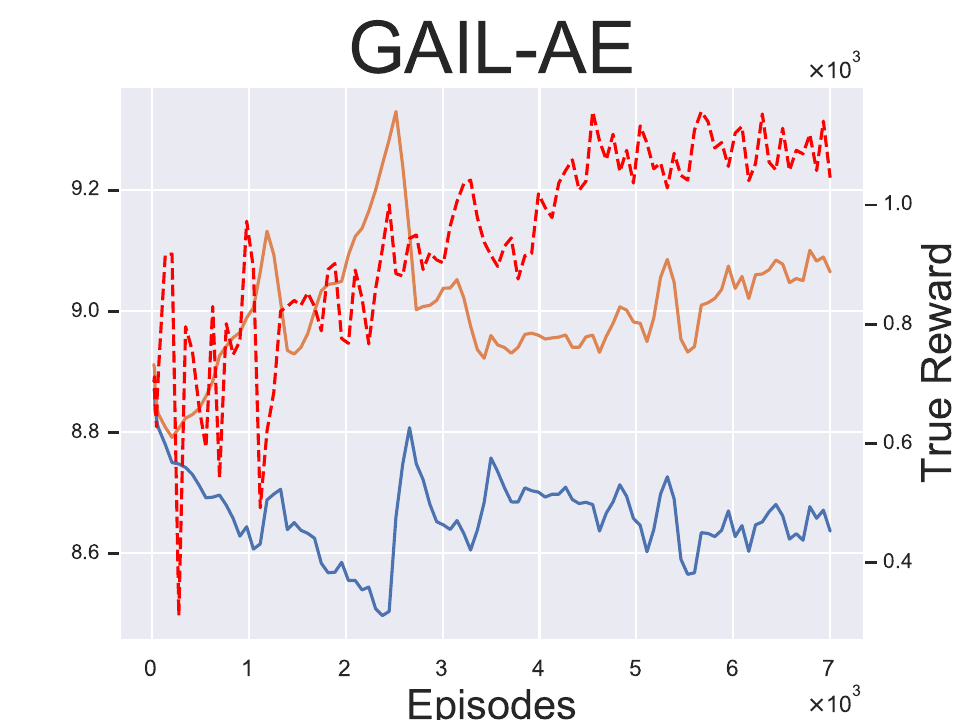}
        \includegraphics[ width=1\textwidth]{../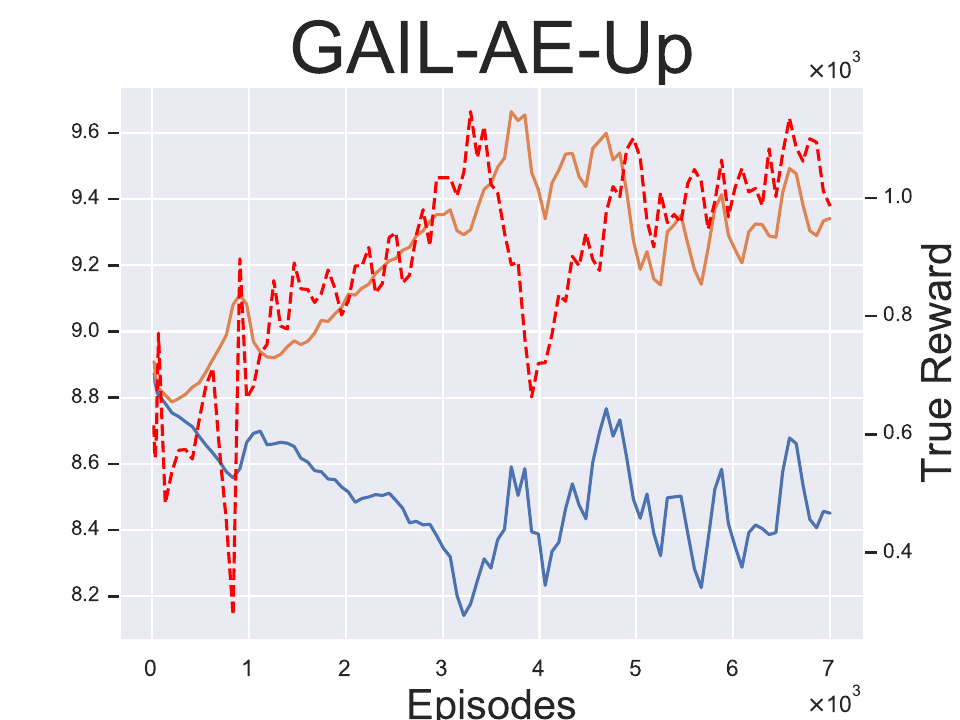}
        \includegraphics[ width=1\textwidth]{../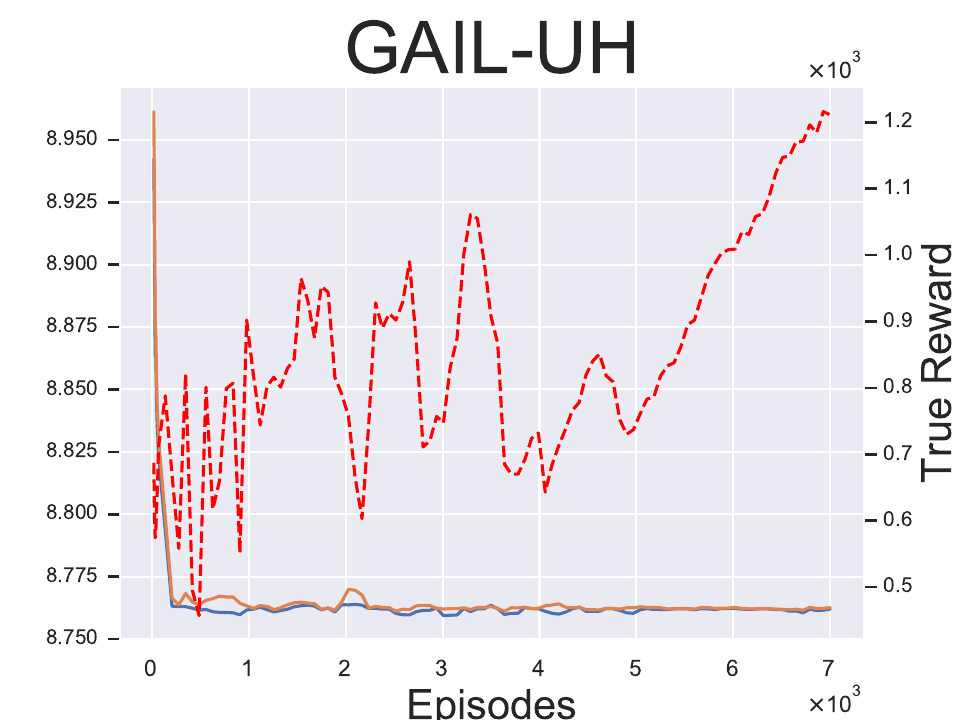}
        \includegraphics[ width=1\textwidth]{../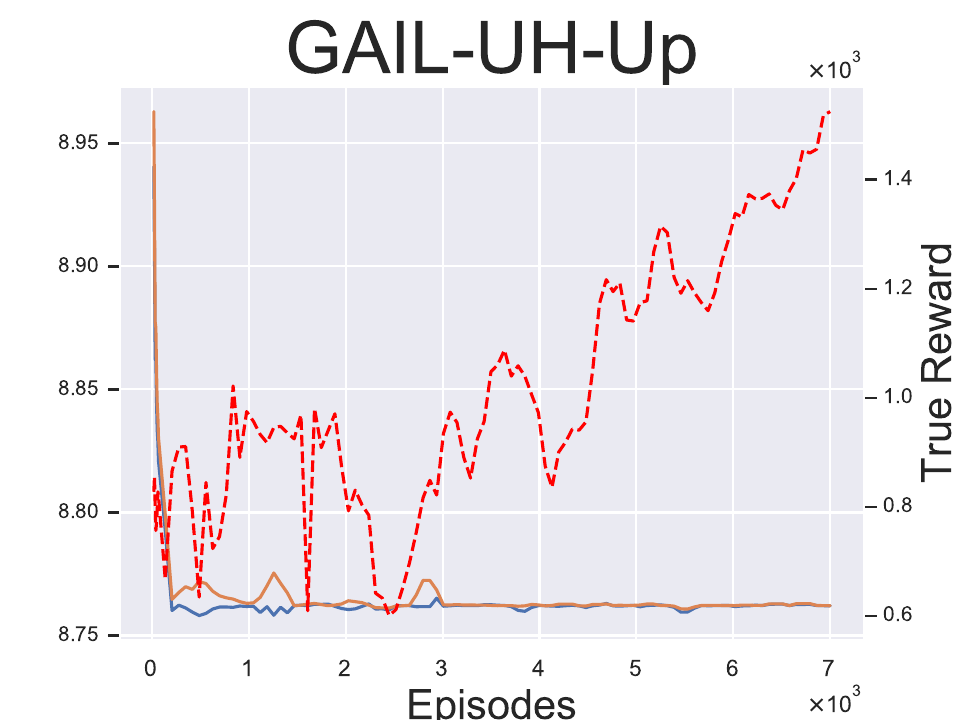}
        \includegraphics[ width=1\textwidth]{../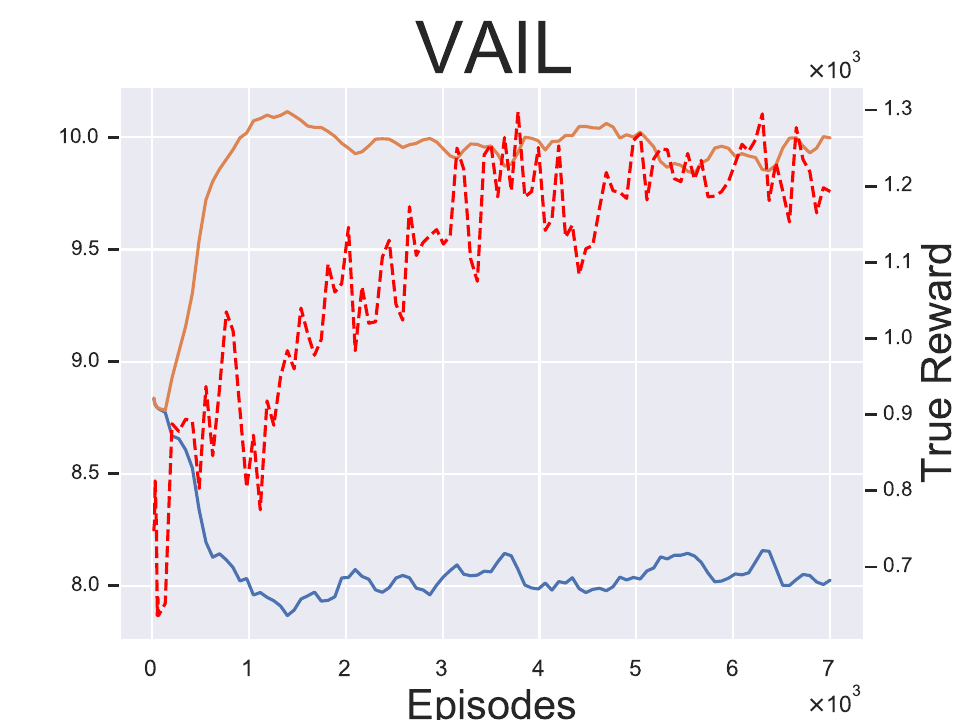}
        \includegraphics[ width=1\textwidth]{../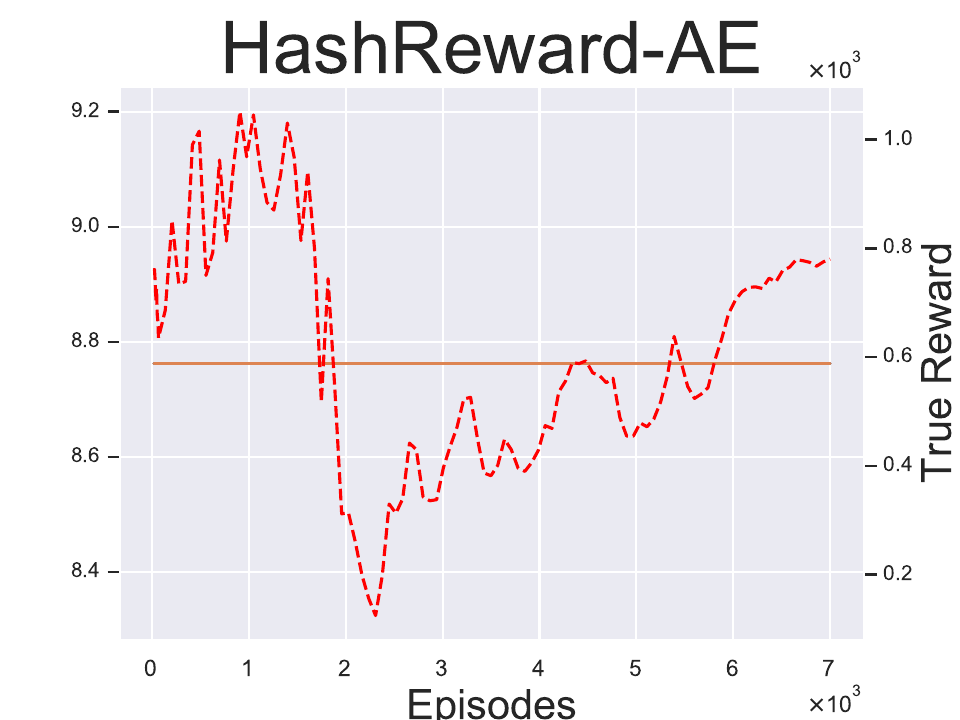}
        \includegraphics[ width=1\textwidth]{../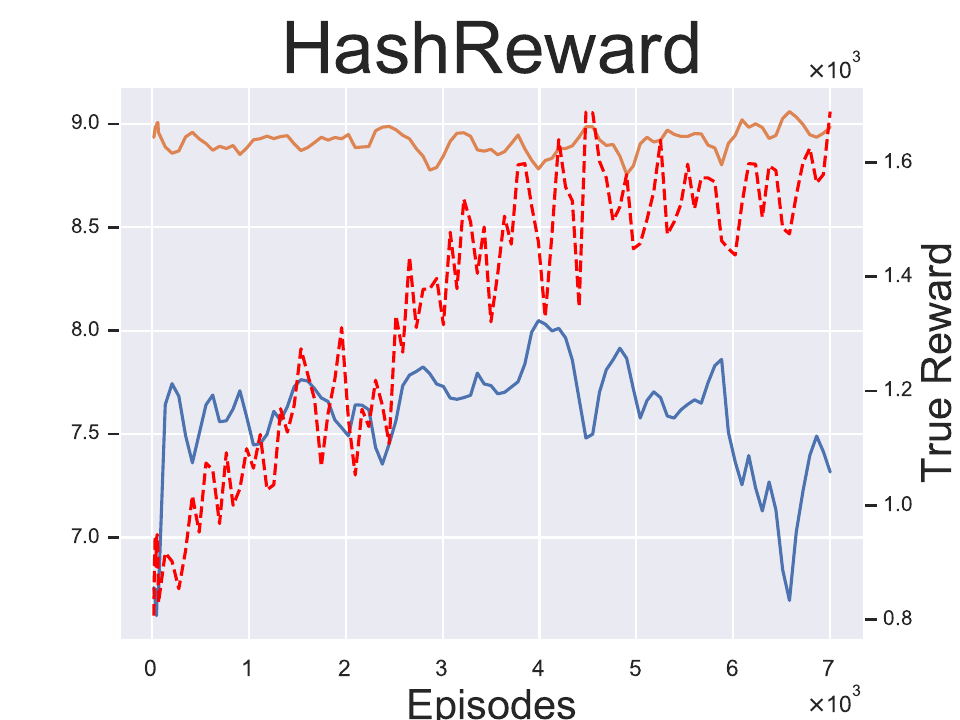}
    \end{minipage}}
\end{figure*}
\begin{figure*}[!t]
    \addtocounter{figure}{1}
    \centering
    \subfigure[CrazyClimber]{
    \begin{minipage}[b]{0.185\linewidth}
        \includegraphics[ width=1\textwidth]{../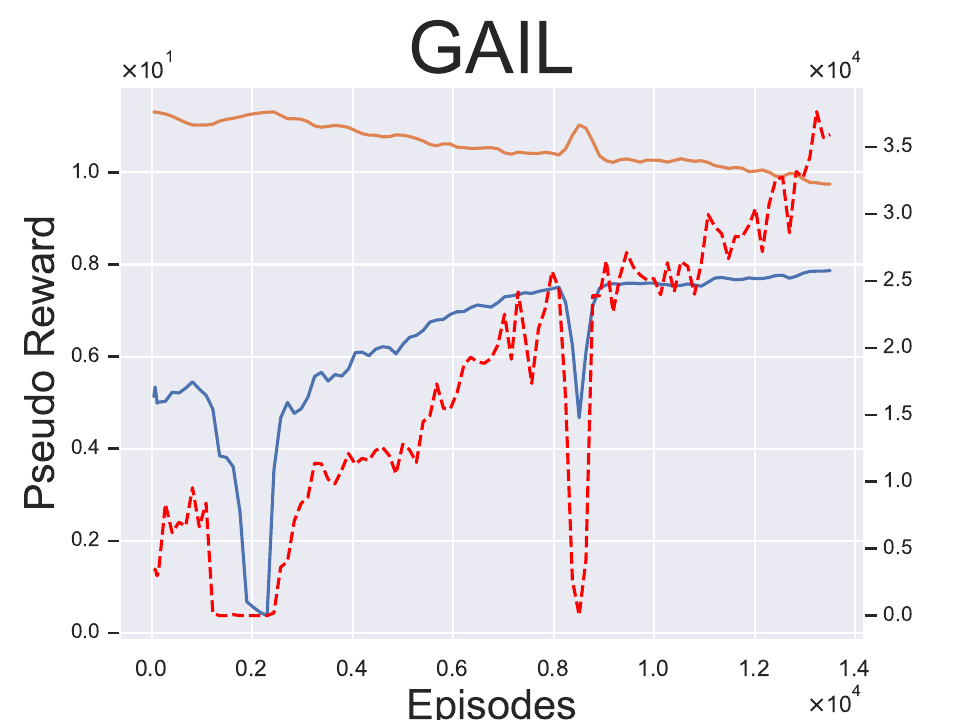}
        \includegraphics[ width=1\textwidth]{../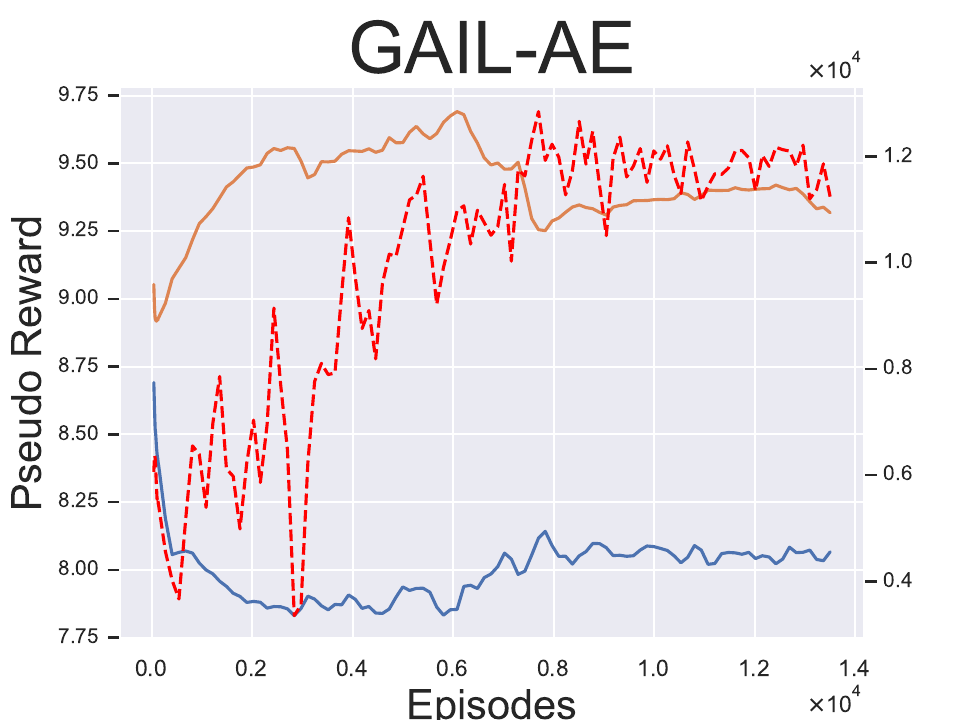}
        \includegraphics[ width=1\textwidth]{../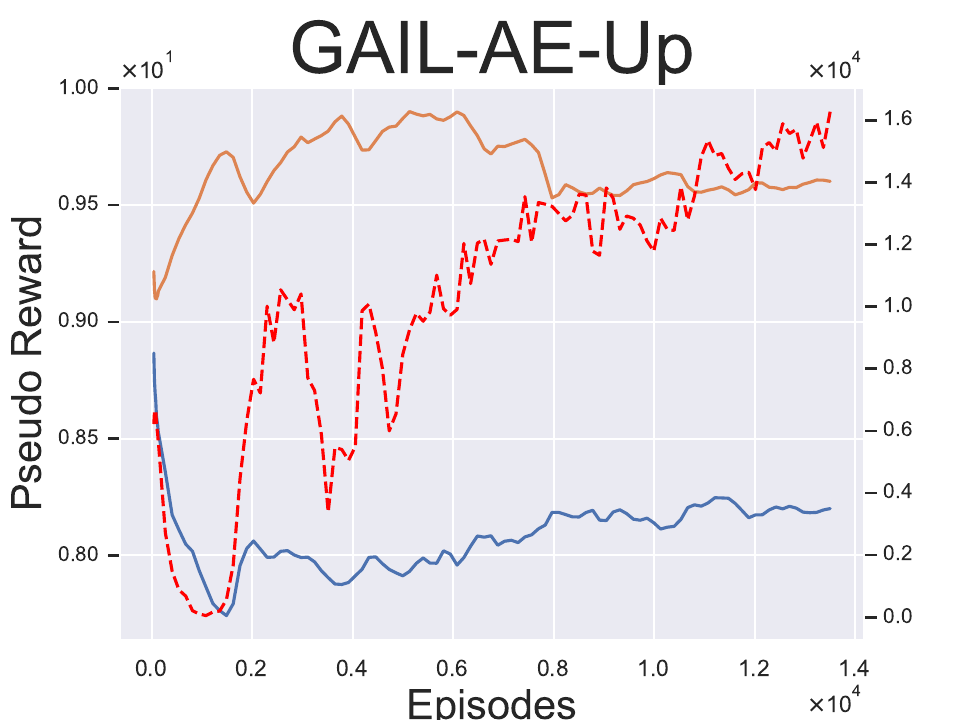}
        \includegraphics[ width=1\textwidth]{../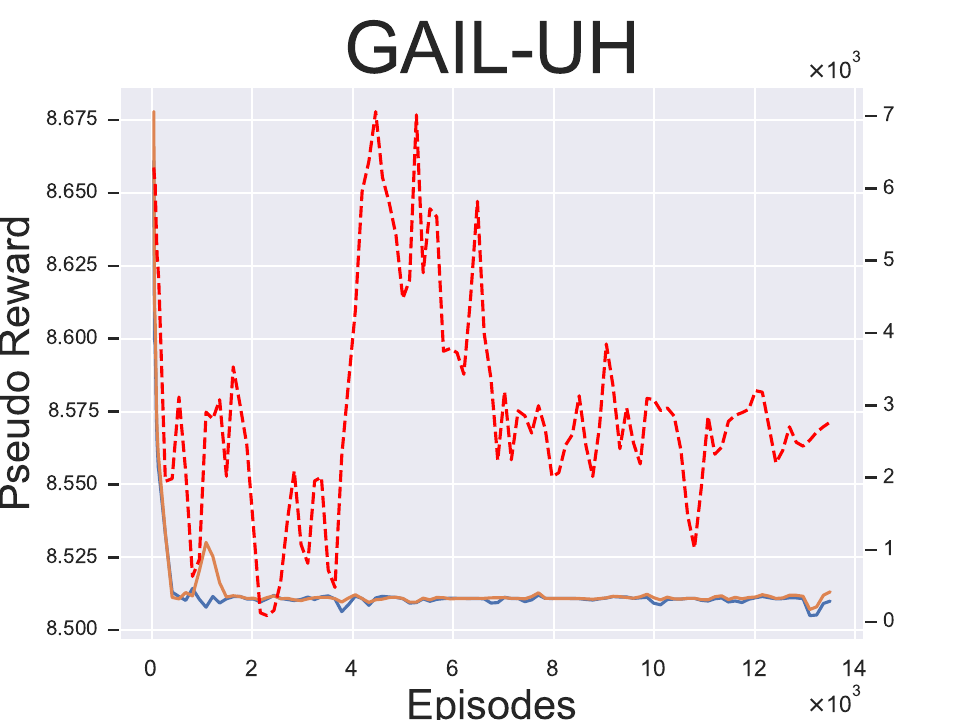}
        \includegraphics[ width=1\textwidth]{../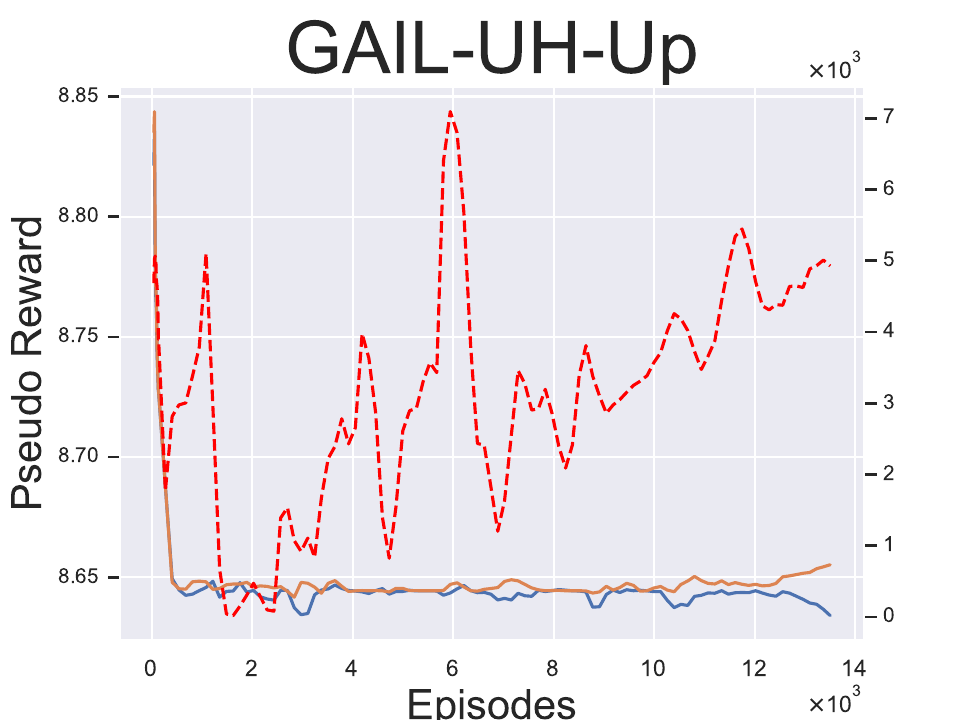}
        \includegraphics[ width=1\textwidth]{../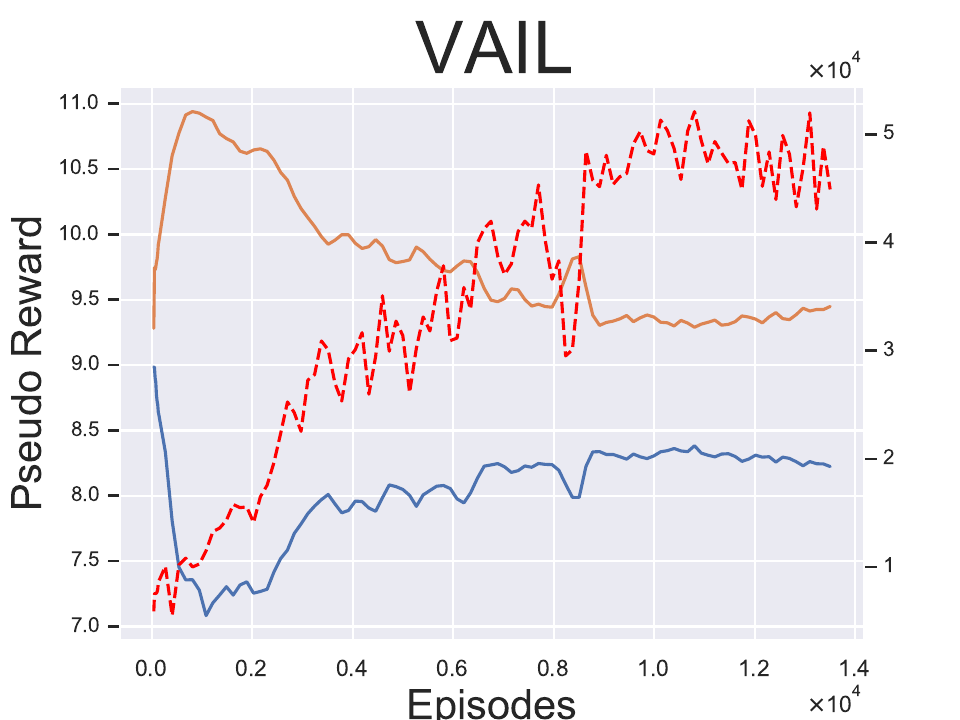}
        \includegraphics[ width=1\textwidth]{../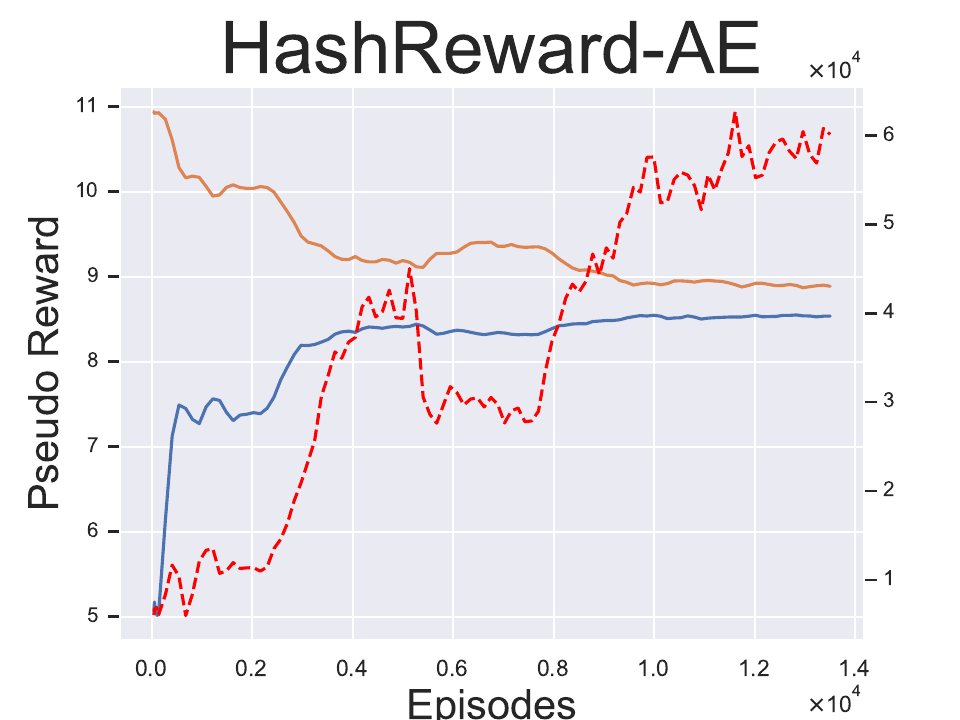}
        \includegraphics[ width=1\textwidth]{../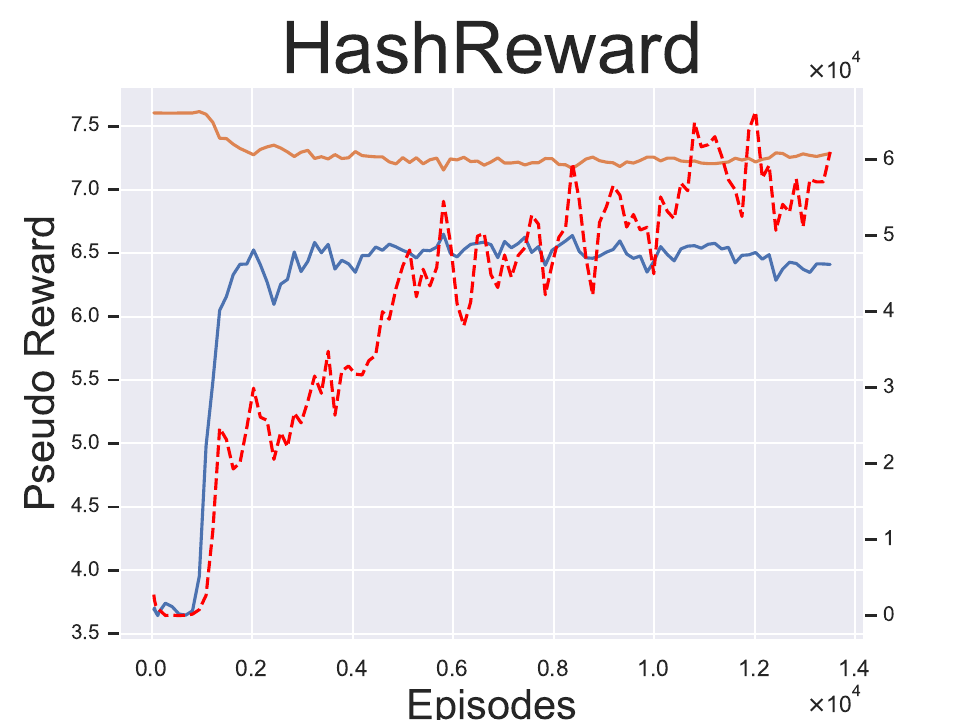}
    \end{minipage}}
    \subfigure[Enduro]{
    \begin{minipage}[b]{0.185\linewidth}
        \includegraphics[ width=1\textwidth]{../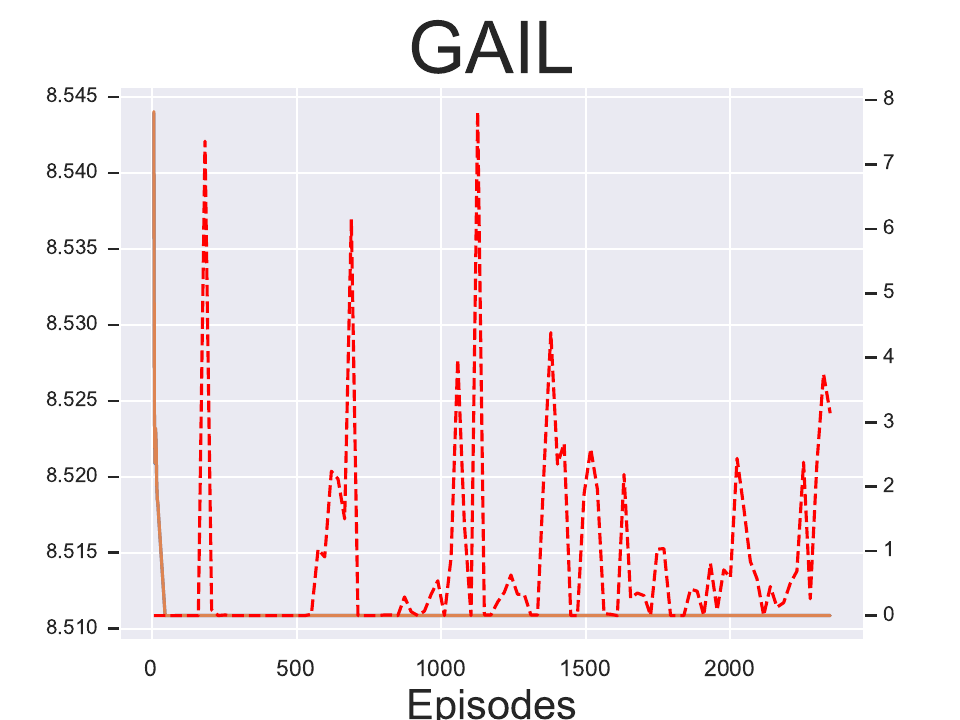}
        \includegraphics[ width=1\textwidth]{../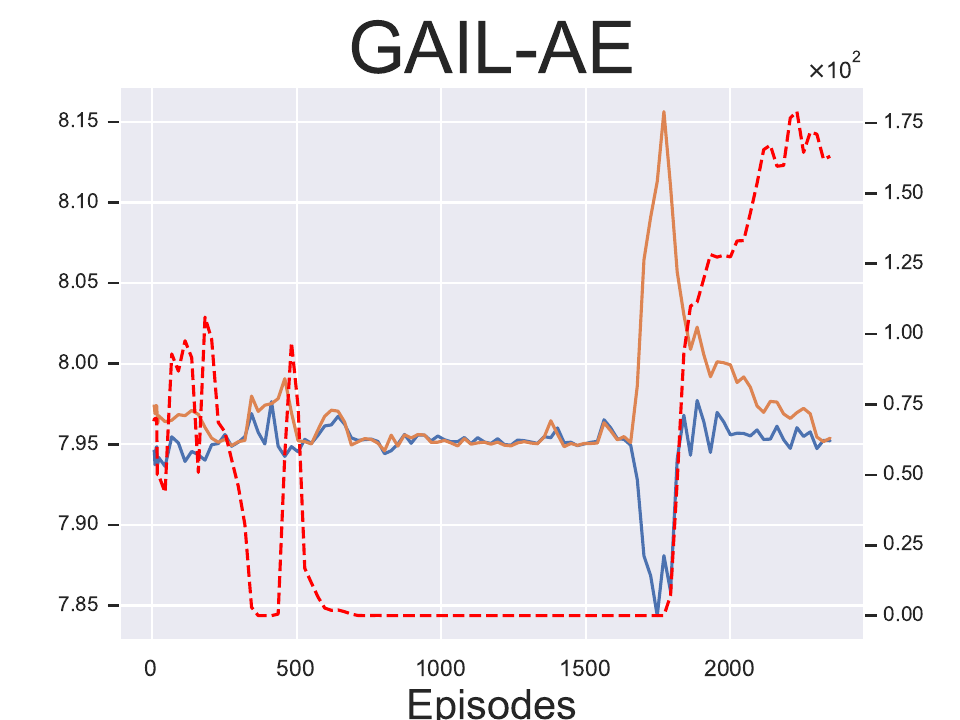}
        \includegraphics[ width=1\textwidth]{../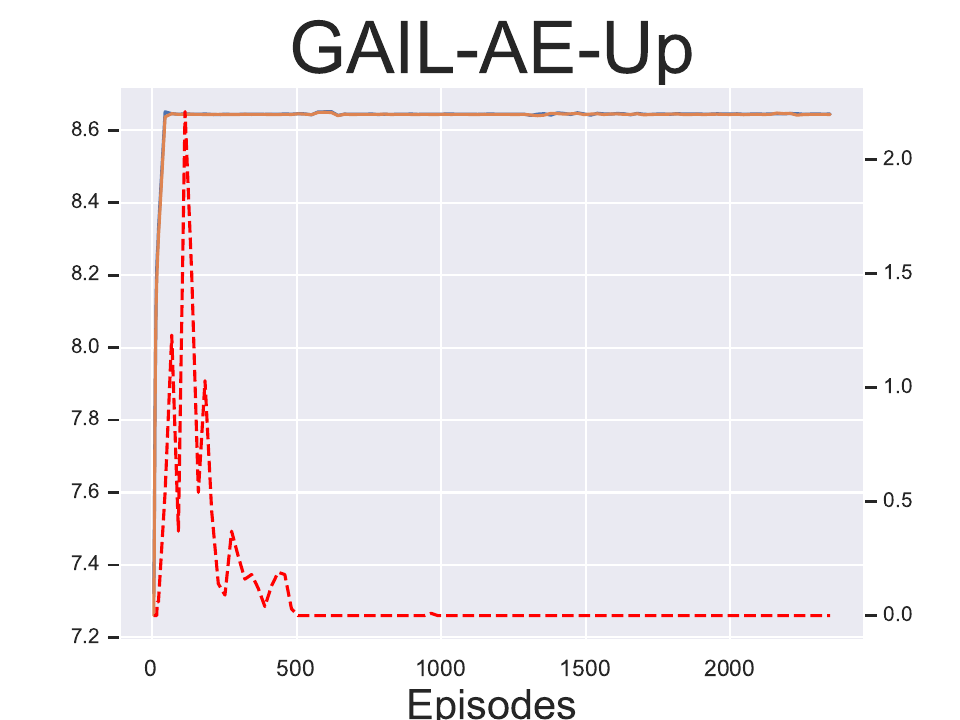}
        \includegraphics[ width=1\textwidth]{../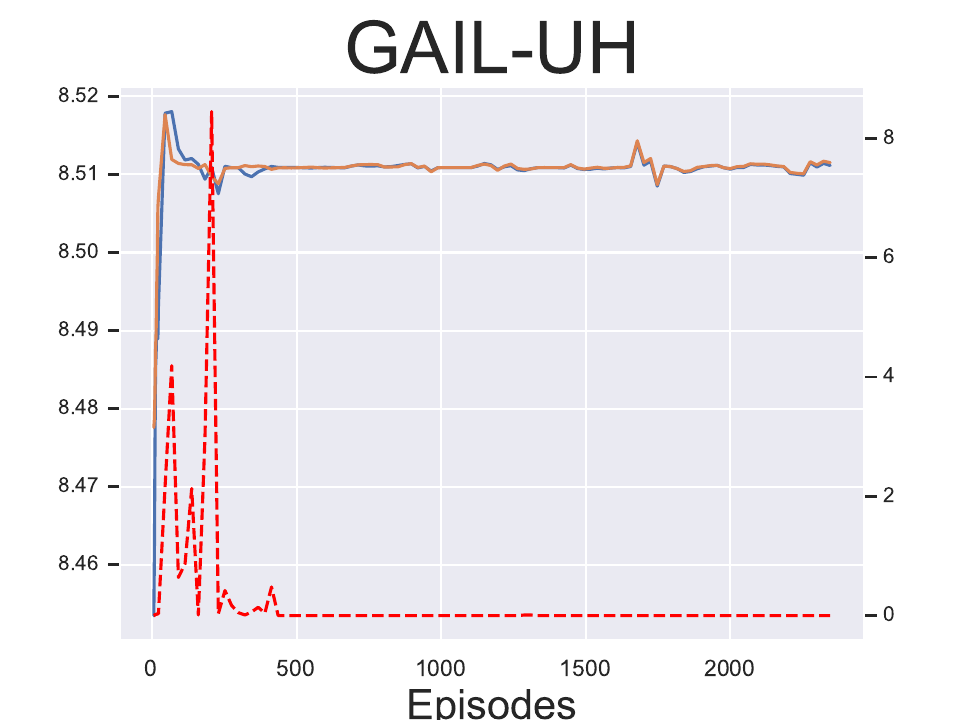}
        \includegraphics[ width=1\textwidth]{../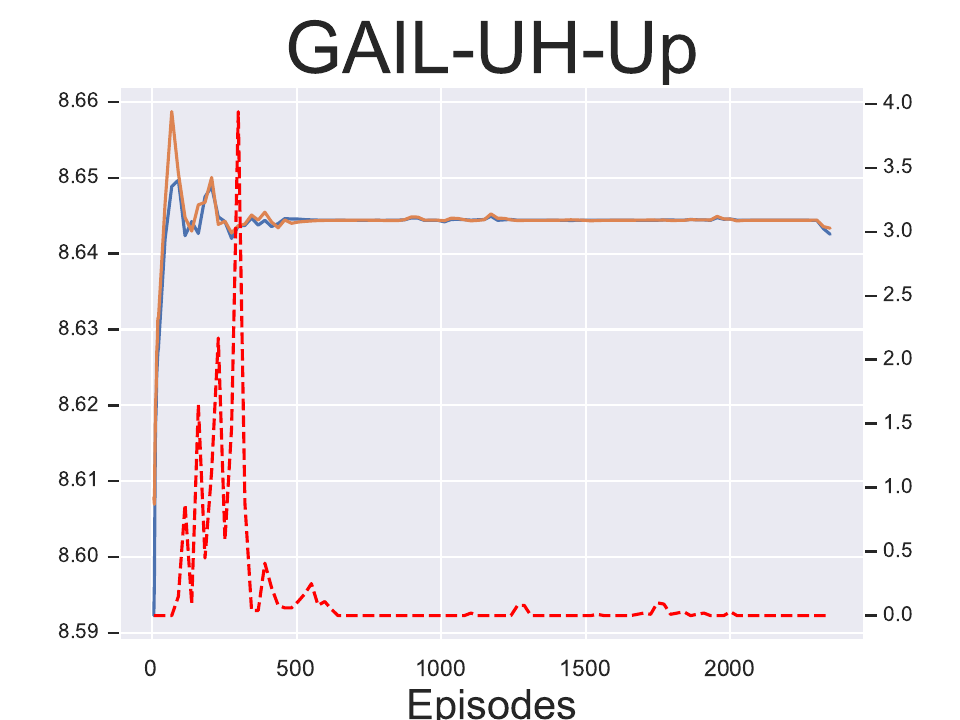}
        \includegraphics[ width=1\textwidth]{../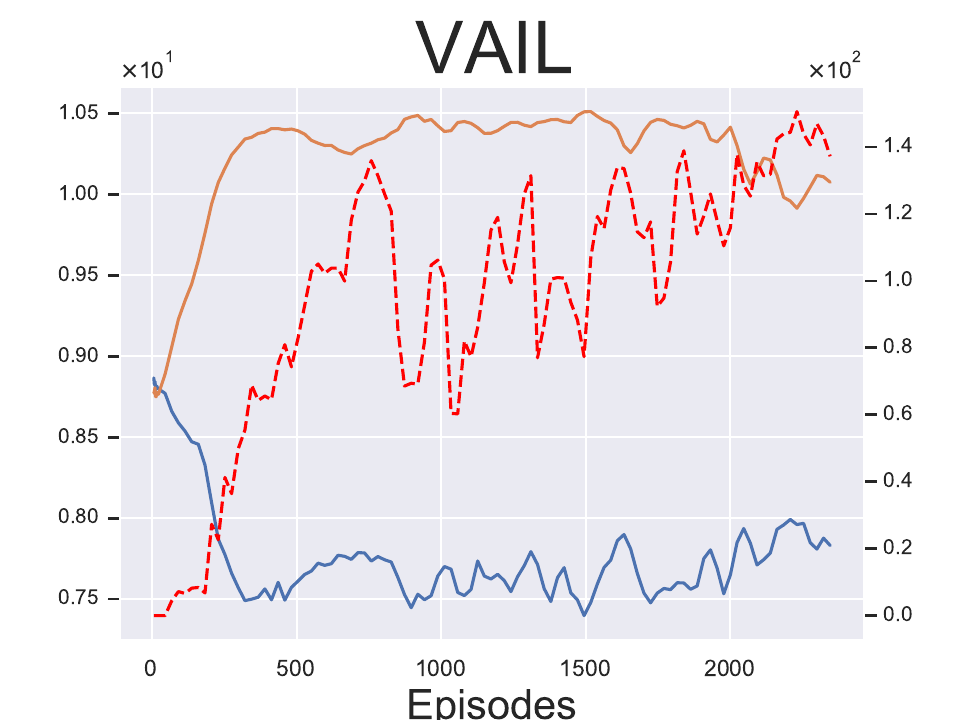}
        \includegraphics[ width=1\textwidth]{../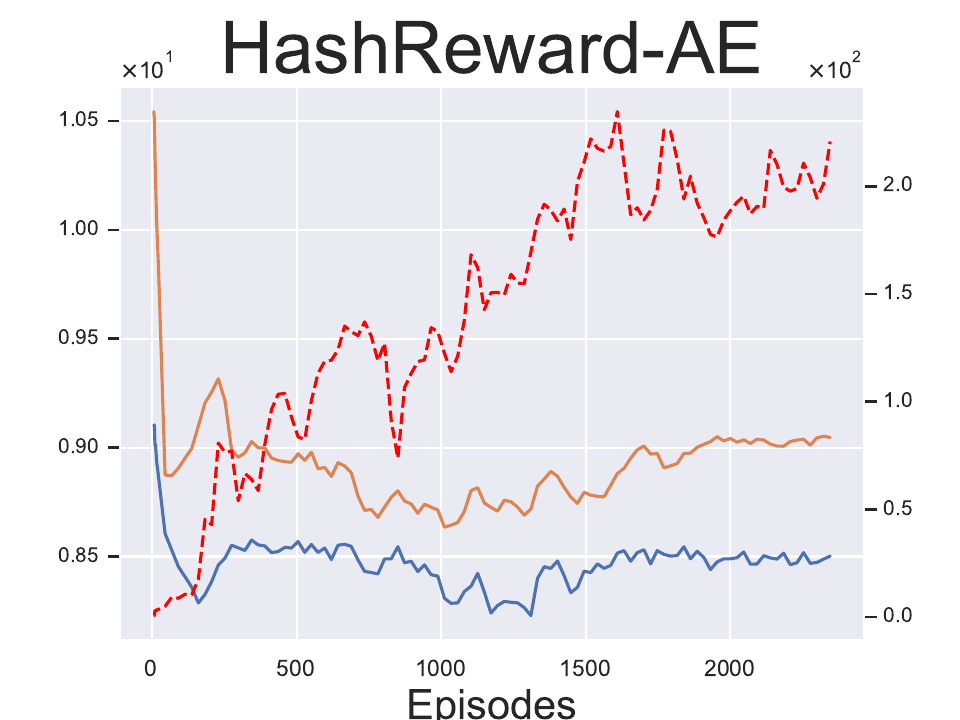}
        \includegraphics[ width=1\textwidth]{../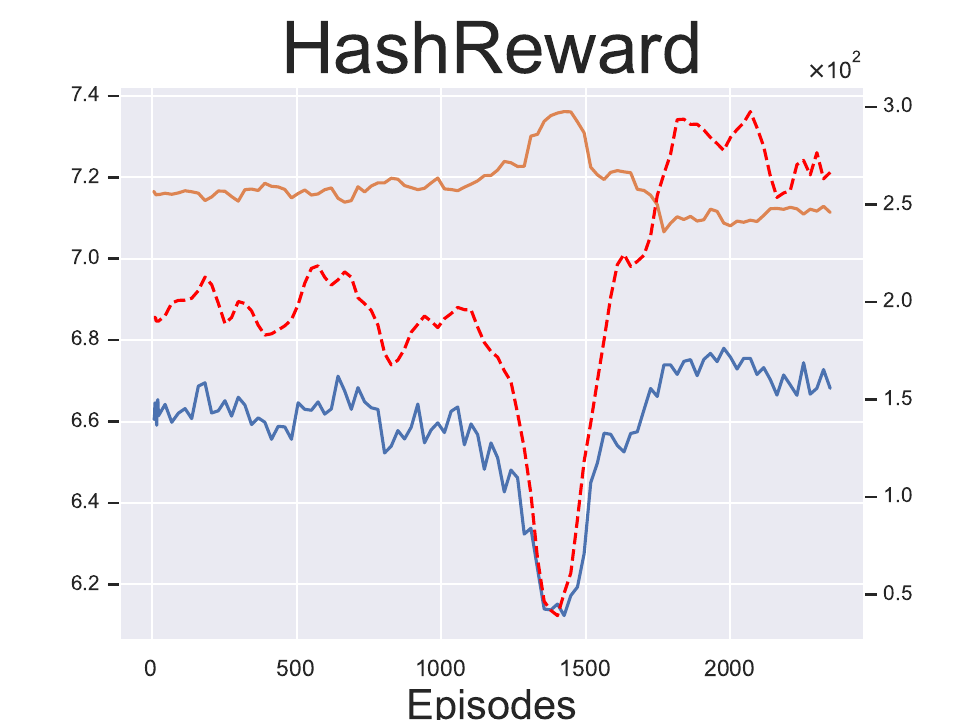}
    \end{minipage}}
    \subfigure[Kangaroo]{
    \begin{minipage}[b]{0.185\linewidth}
        \includegraphics[ width=1\textwidth]{../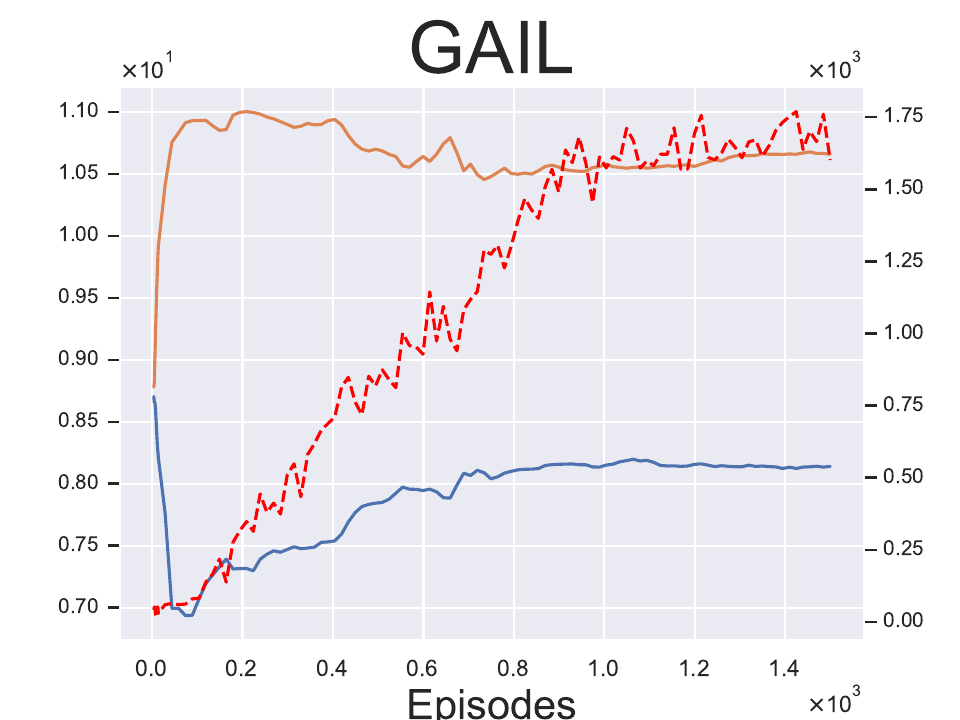}
        \includegraphics[ width=1\textwidth]{../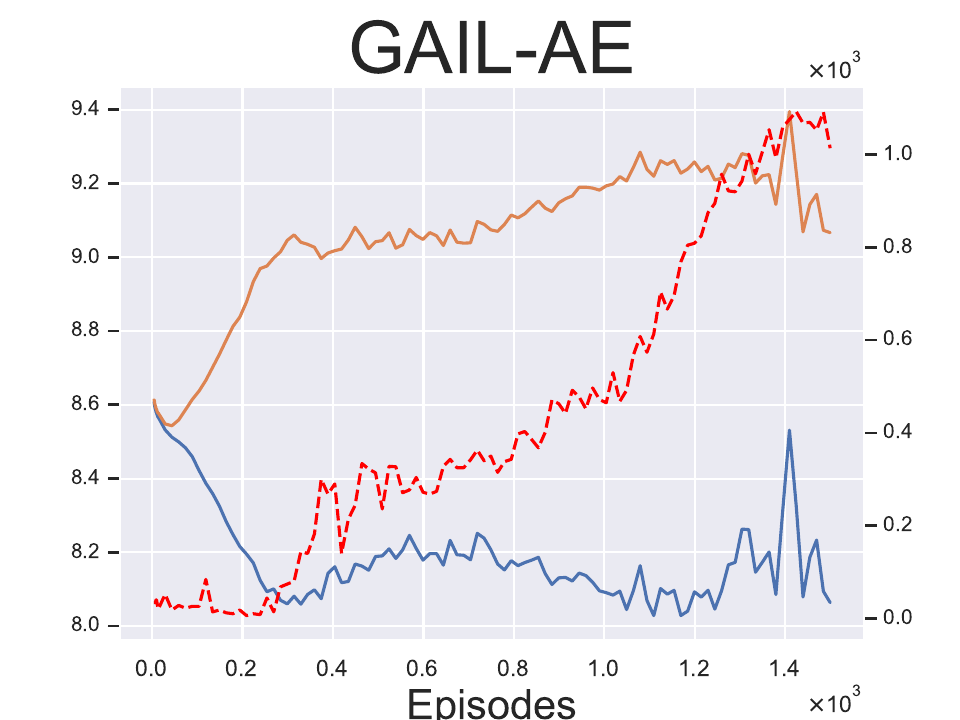}
        \includegraphics[ width=1\textwidth]{../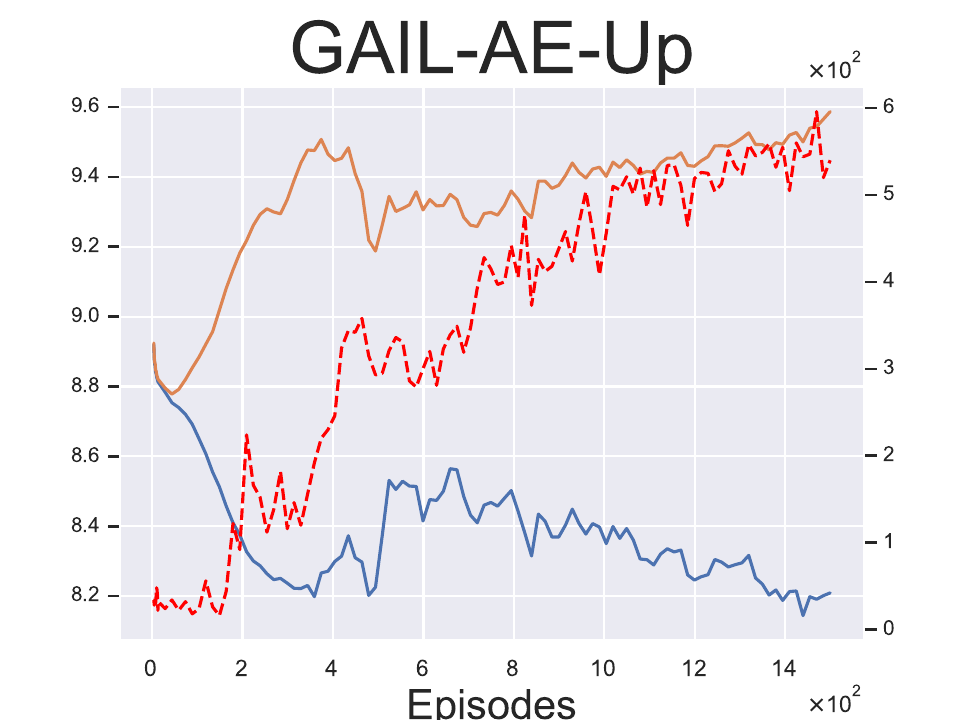}
        \includegraphics[ width=1\textwidth]{../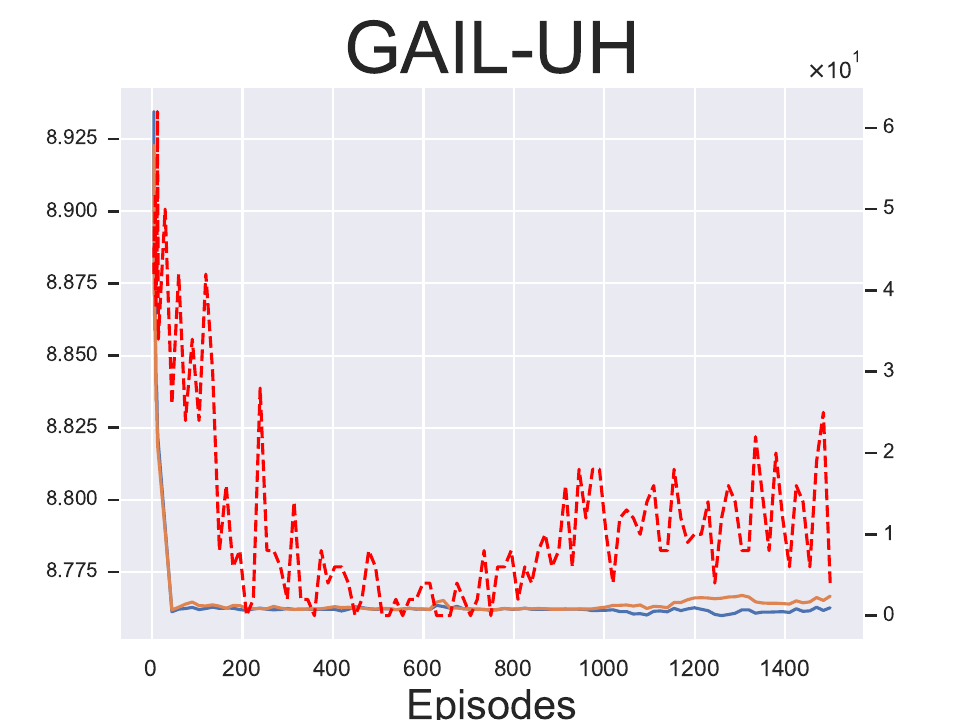}
        \includegraphics[ width=1\textwidth]{../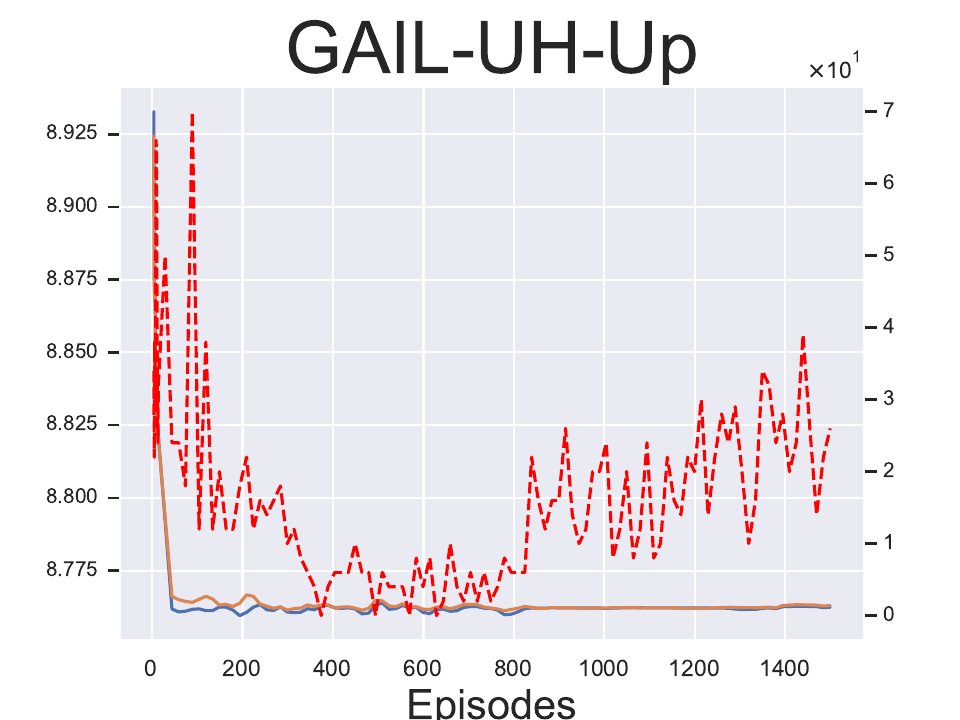}
        \includegraphics[ width=1\textwidth]{../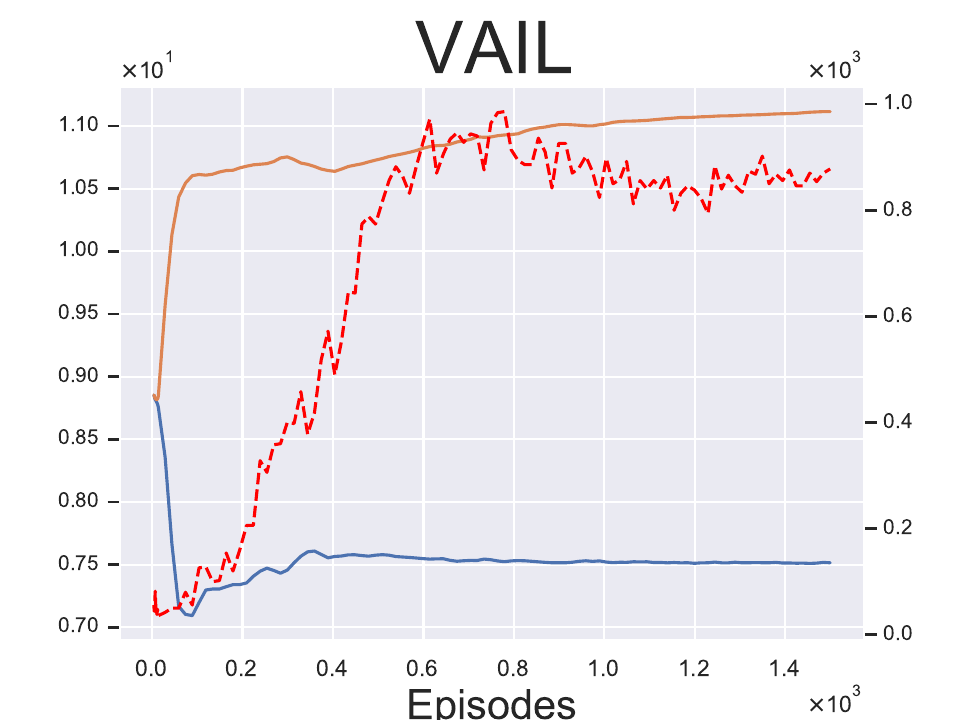}
        \includegraphics[ width=1\textwidth]{../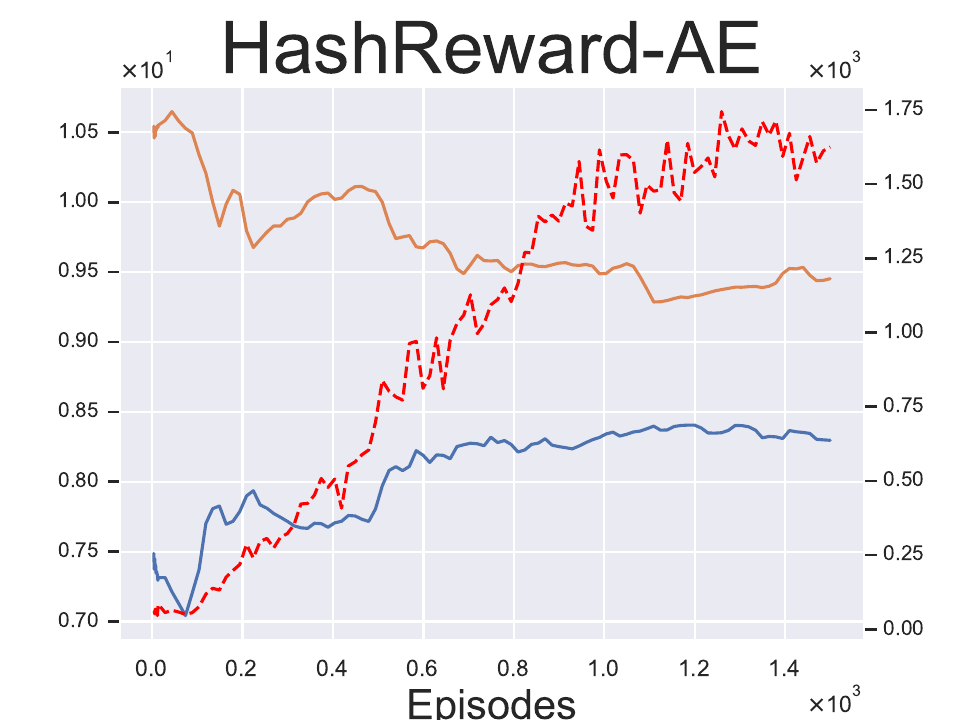}
        \includegraphics[ width=1\textwidth]{../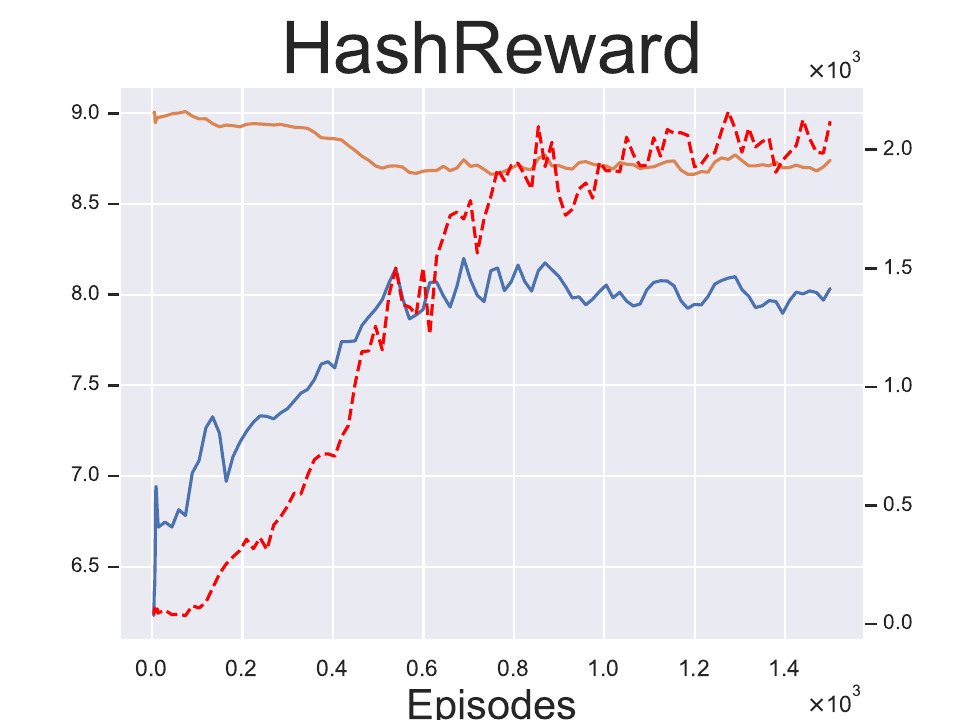}
    \end{minipage}}
    \subfigure[MsPacman]{
    \begin{minipage}[b]{0.185\linewidth}
        \includegraphics[ width=1\textwidth]{../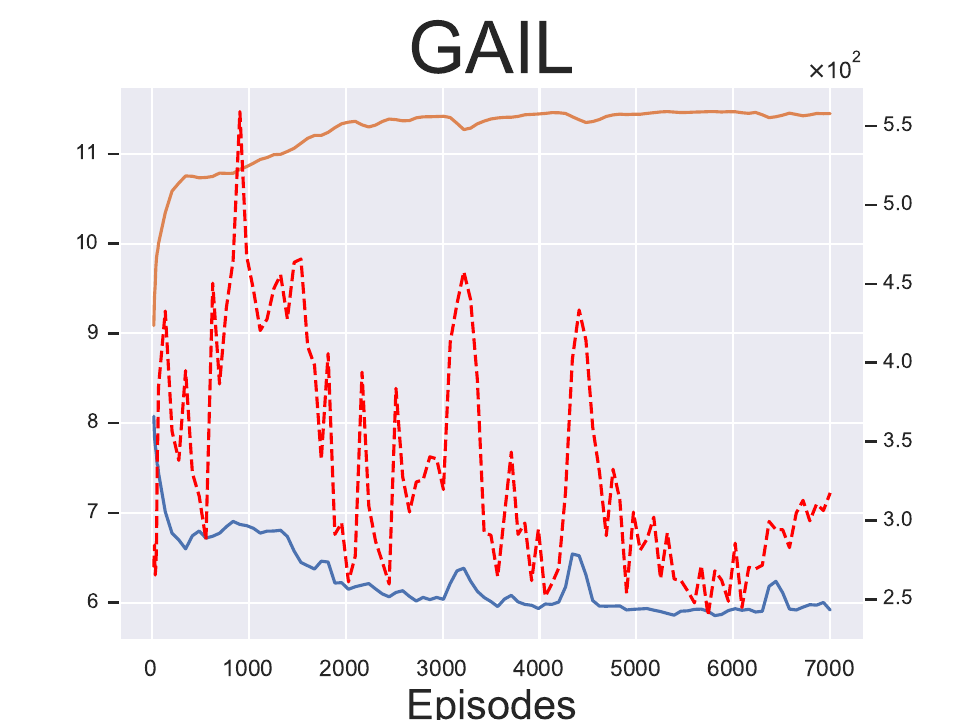}
        \includegraphics[ width=1\textwidth]{../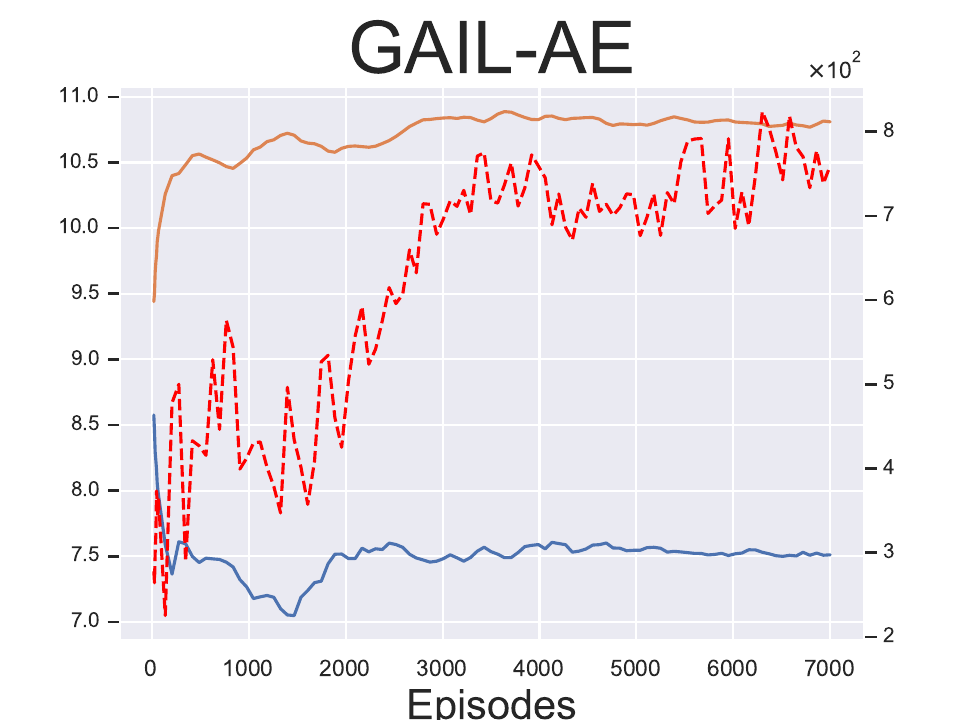}
        \includegraphics[ width=1\textwidth]{../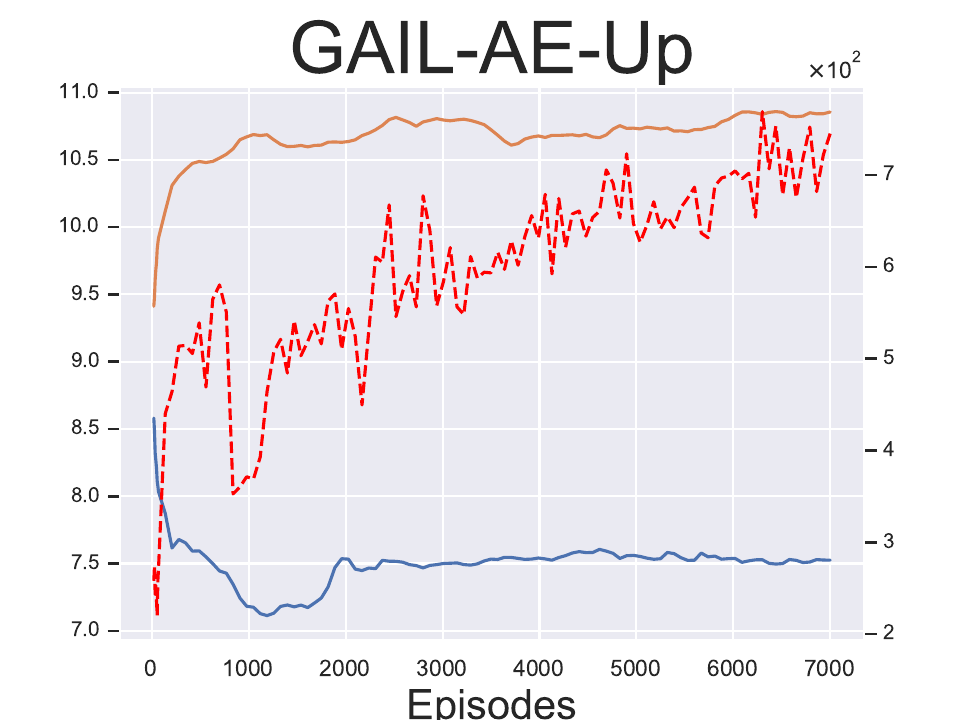}
        \includegraphics[ width=1\textwidth]{../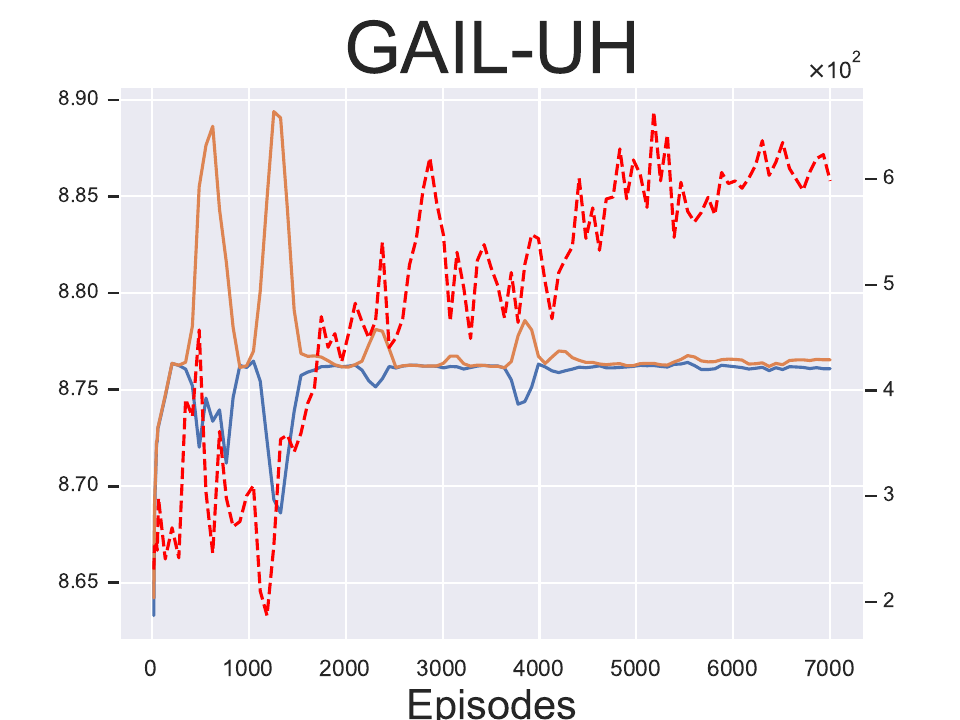}
        \includegraphics[ width=1\textwidth]{../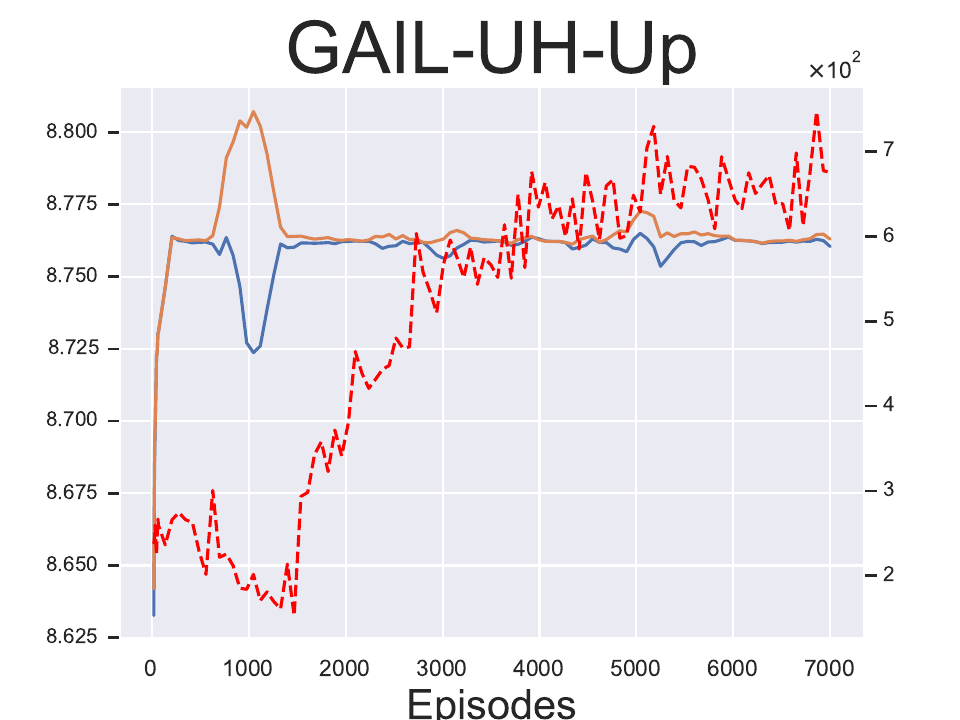}
        \includegraphics[ width=1\textwidth]{../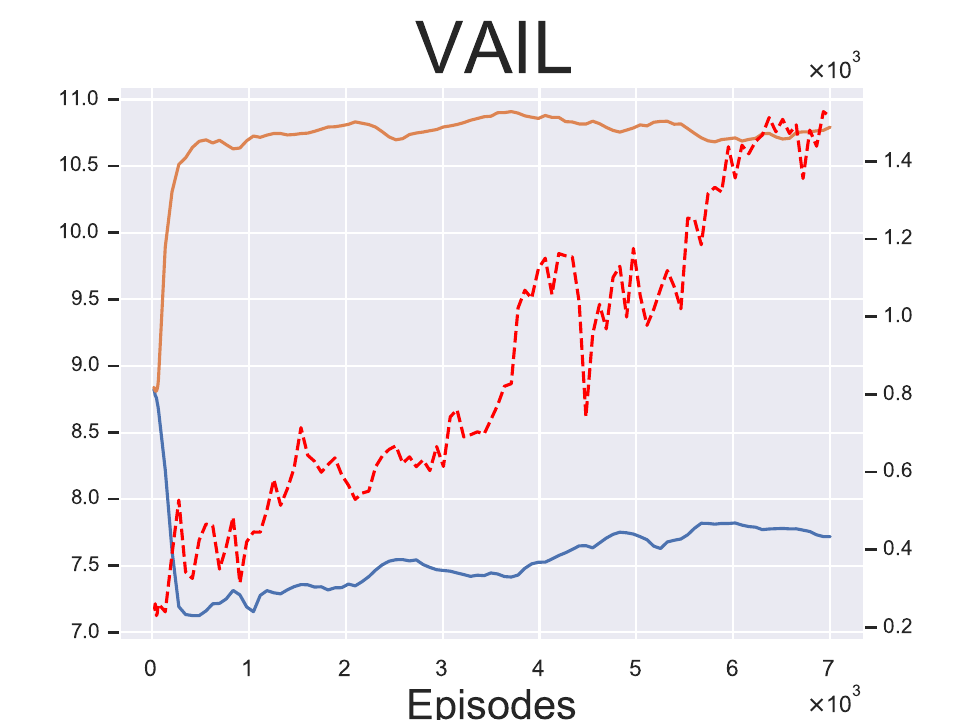}
        \includegraphics[ width=1\textwidth]{../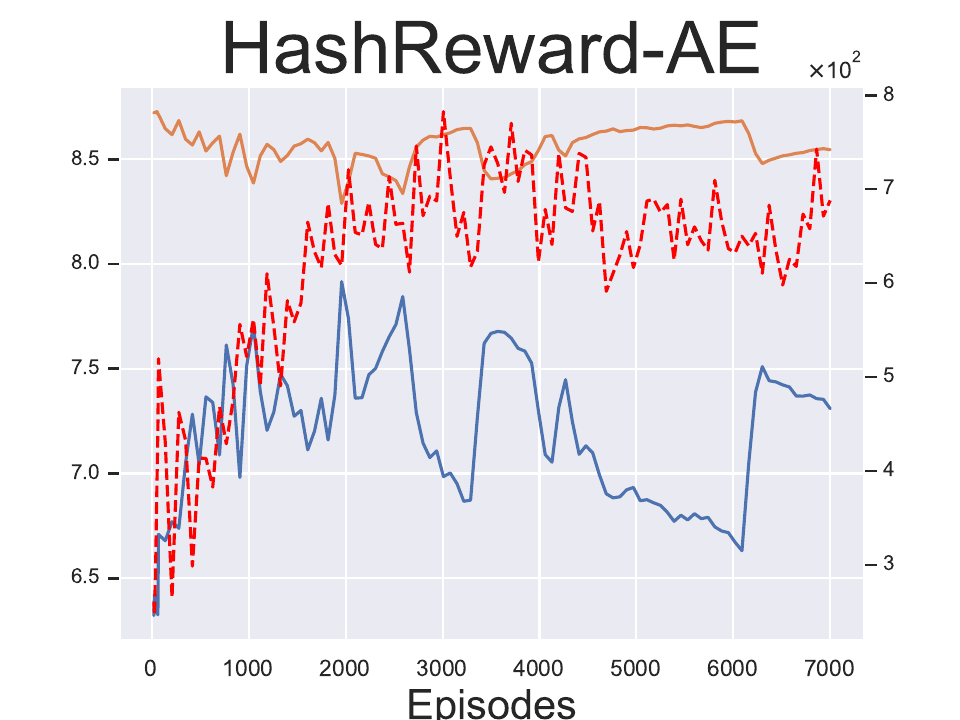}
        \includegraphics[ width=1\textwidth]{../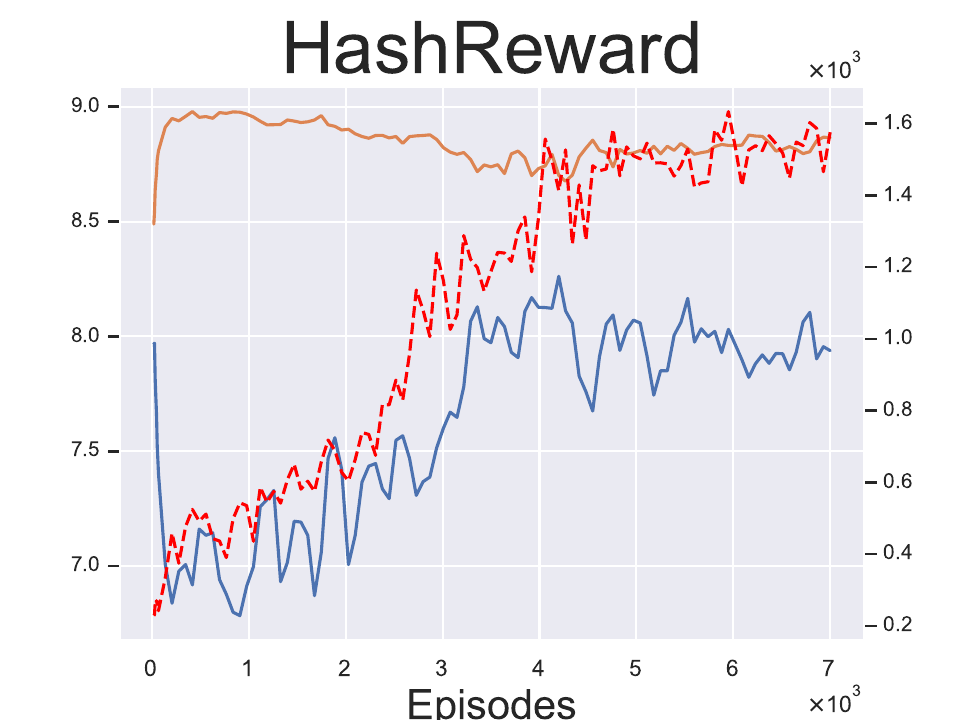}
    \end{minipage}}
    \subfigure[Pong]{
    \begin{minipage}[b]{0.185\linewidth}
        \includegraphics[ width=1\textwidth]{../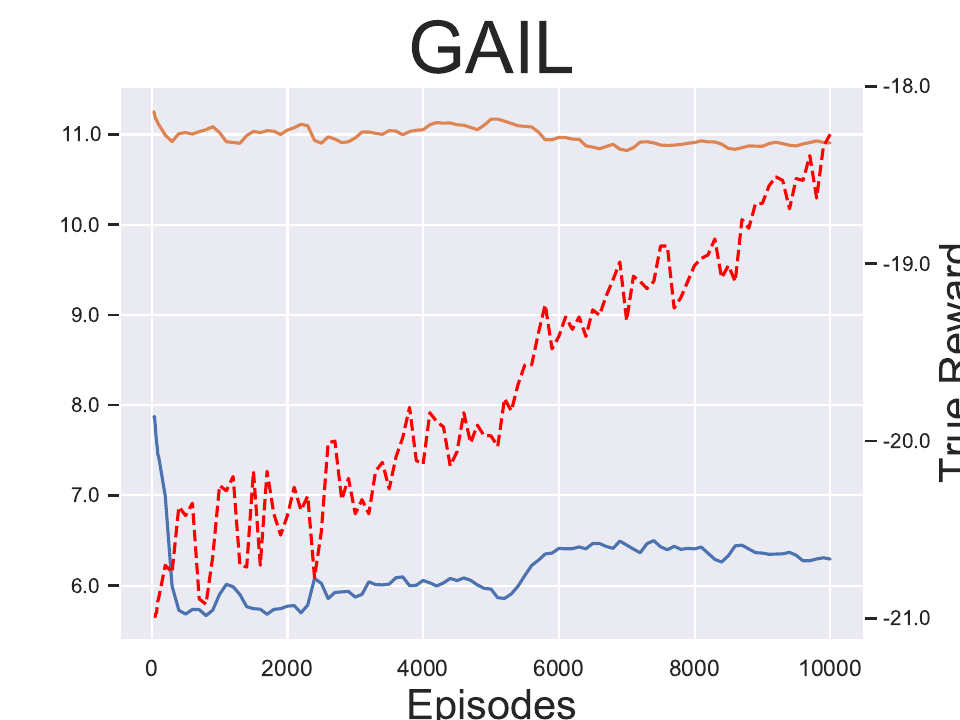}
        \includegraphics[ width=1\textwidth]{../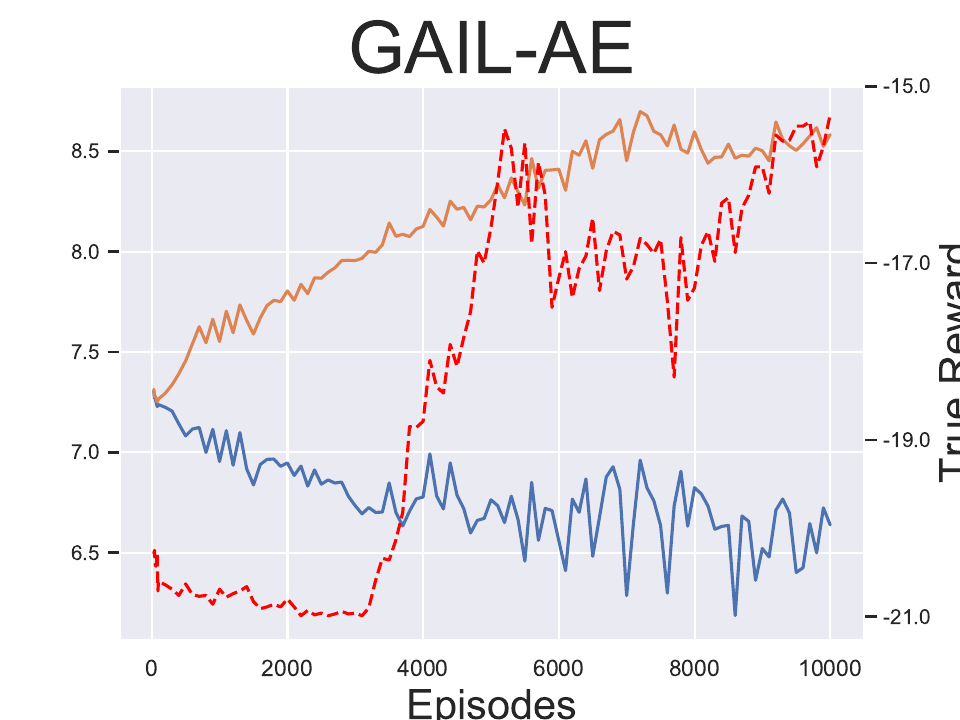}
        \includegraphics[ width=1\textwidth]{../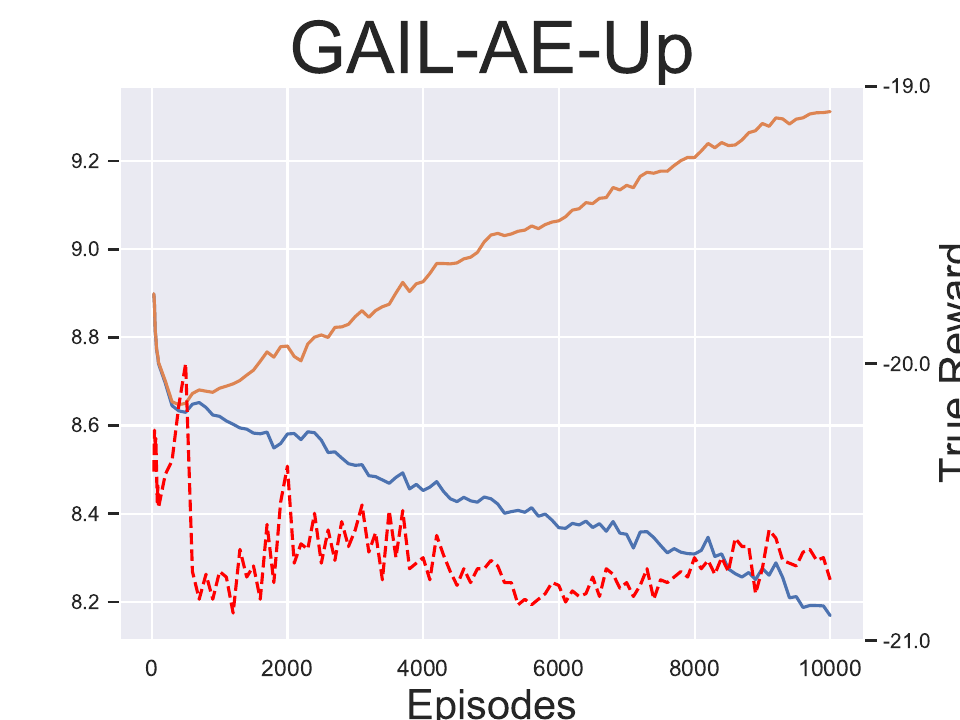}
        \includegraphics[ width=1\textwidth]{../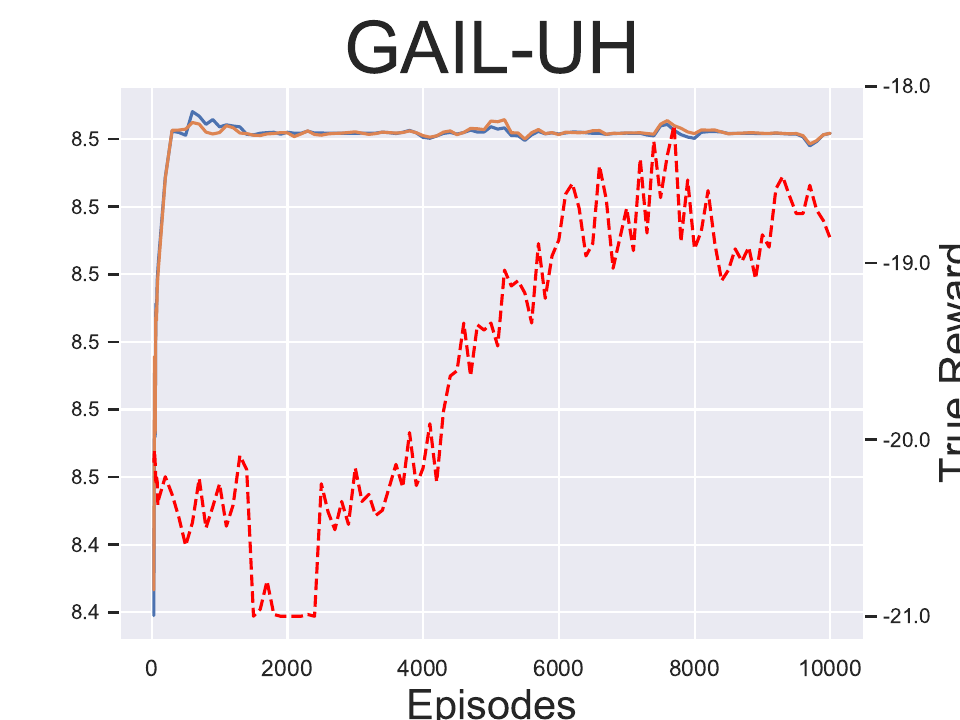}
        \includegraphics[ width=1\textwidth]{../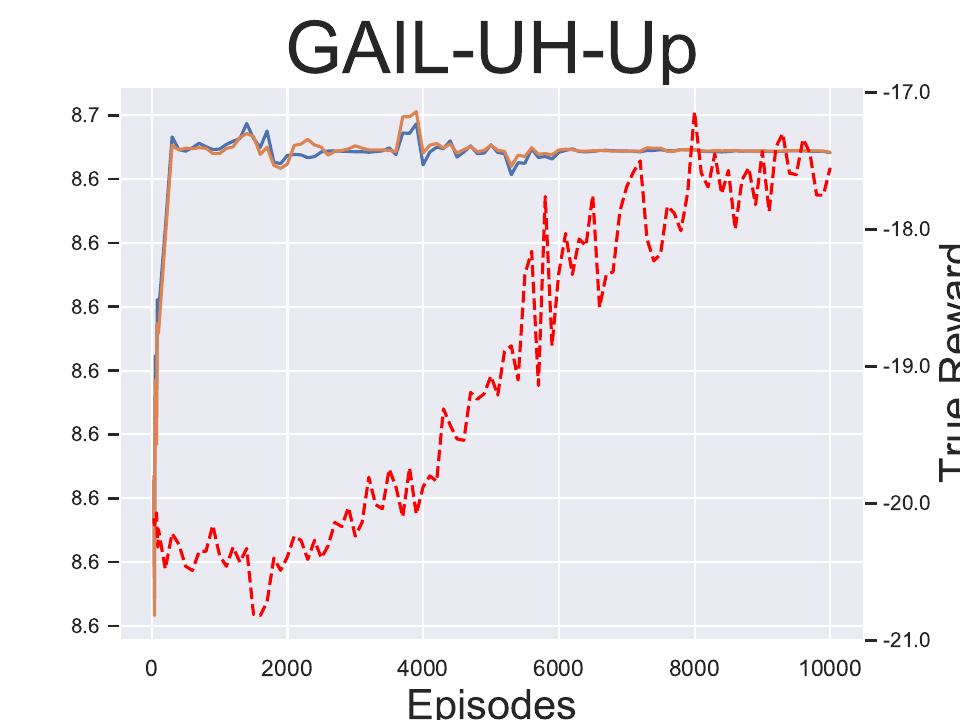}
        \includegraphics[ width=1\textwidth]{../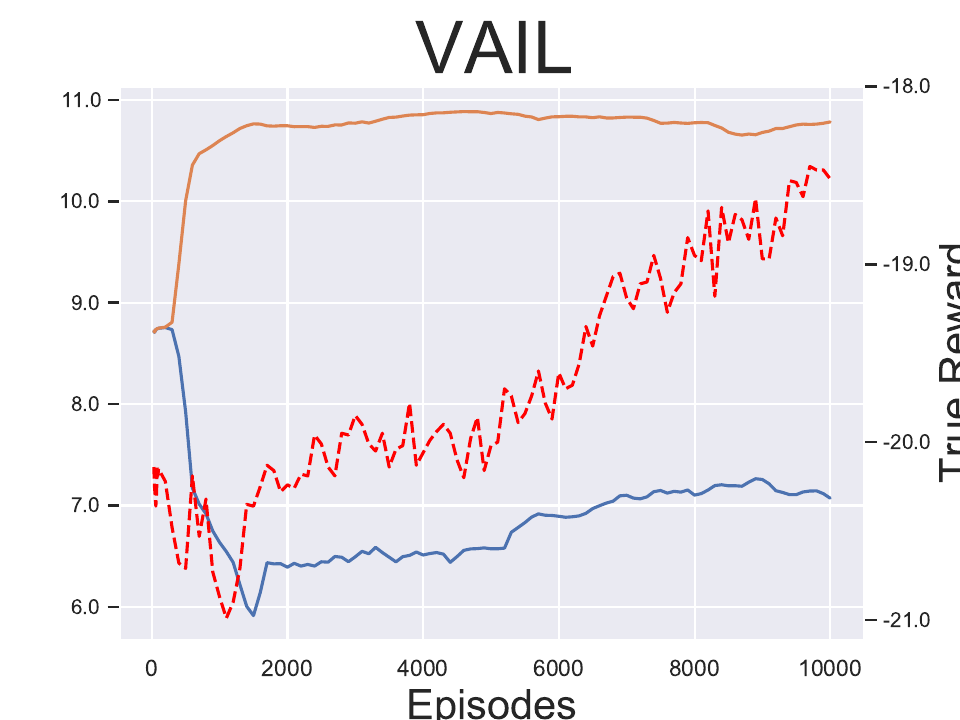}
        \includegraphics[ width=1\textwidth]{../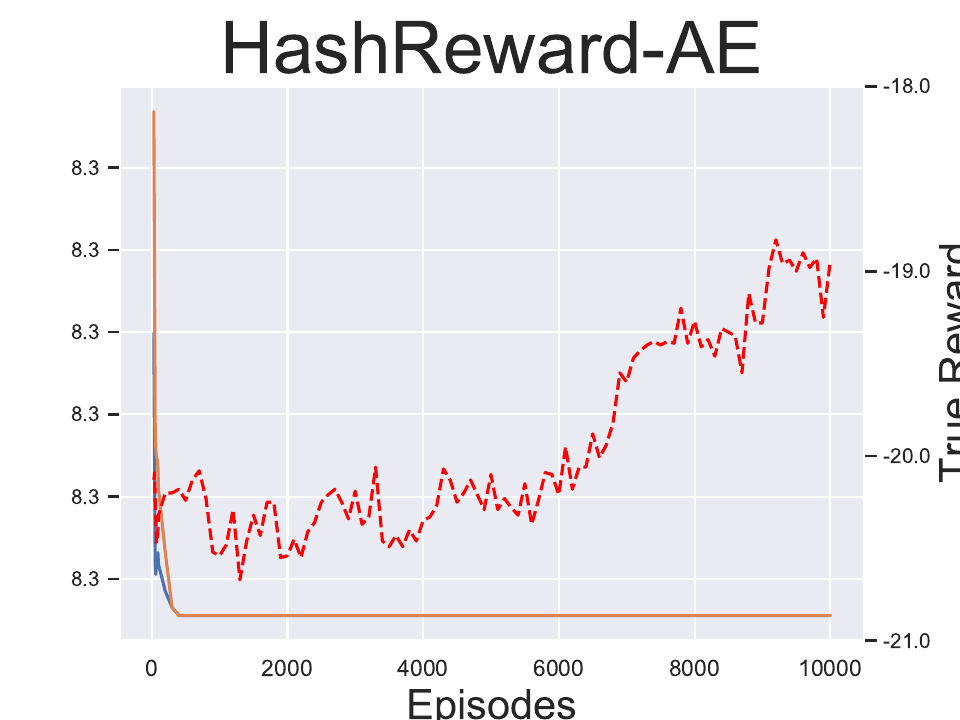}
        \includegraphics[ width=1\textwidth]{../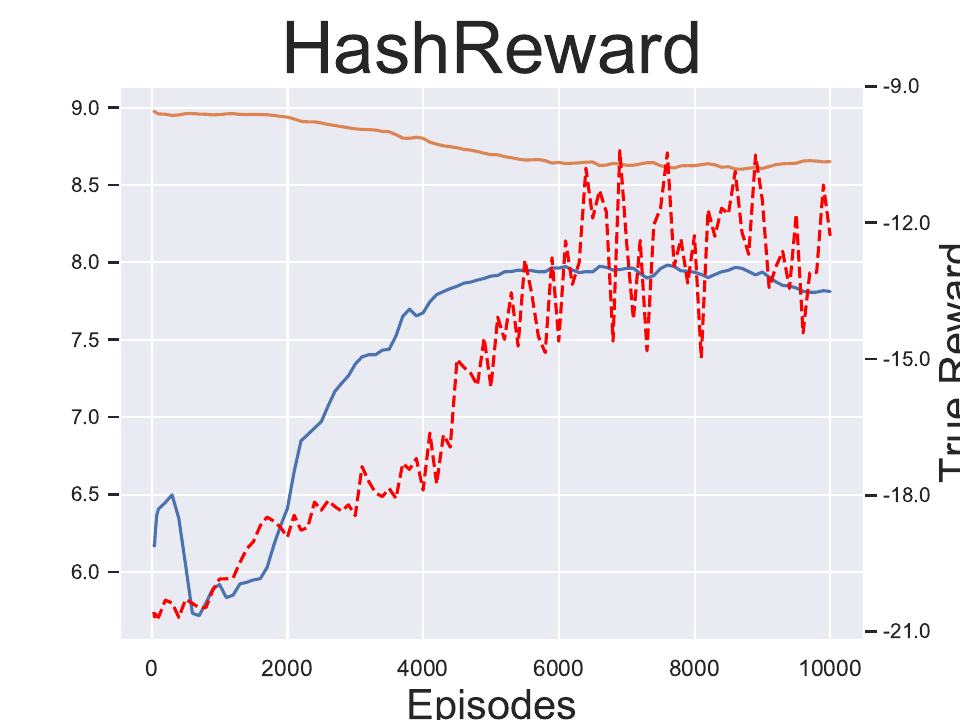}
    \end{minipage}}
\end{figure*}
\begin{figure*}[!t]
    \addtocounter{figure}{1}
    \centering
    \subfigure[Qbert]{
    \begin{minipage}[b]{0.185\linewidth}
        \includegraphics[ width=1\textwidth]{../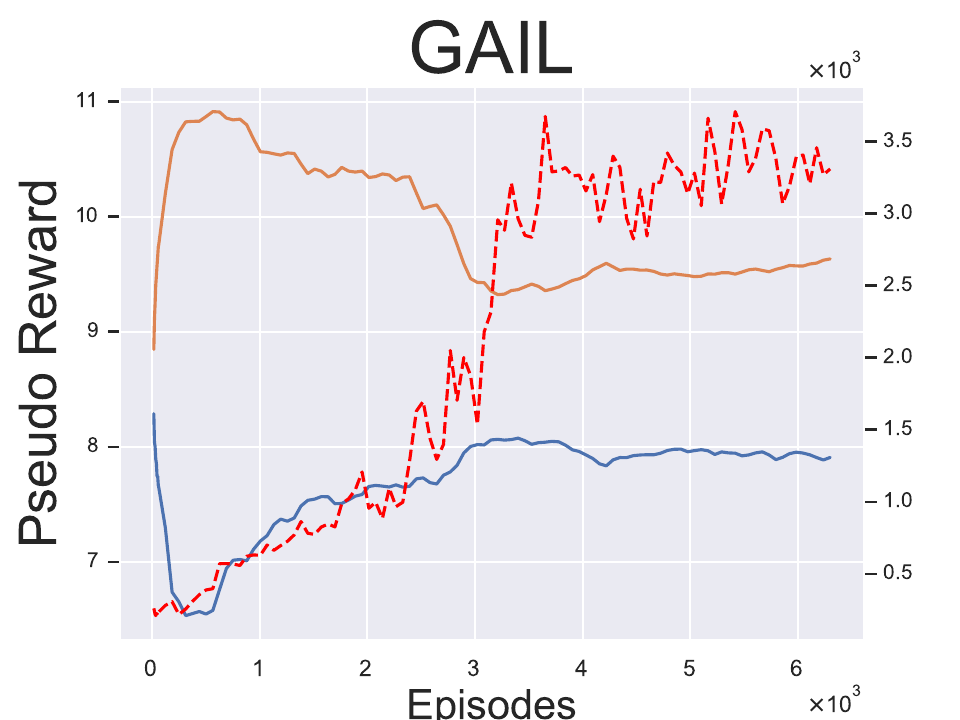}
        \includegraphics[ width=1\textwidth]{../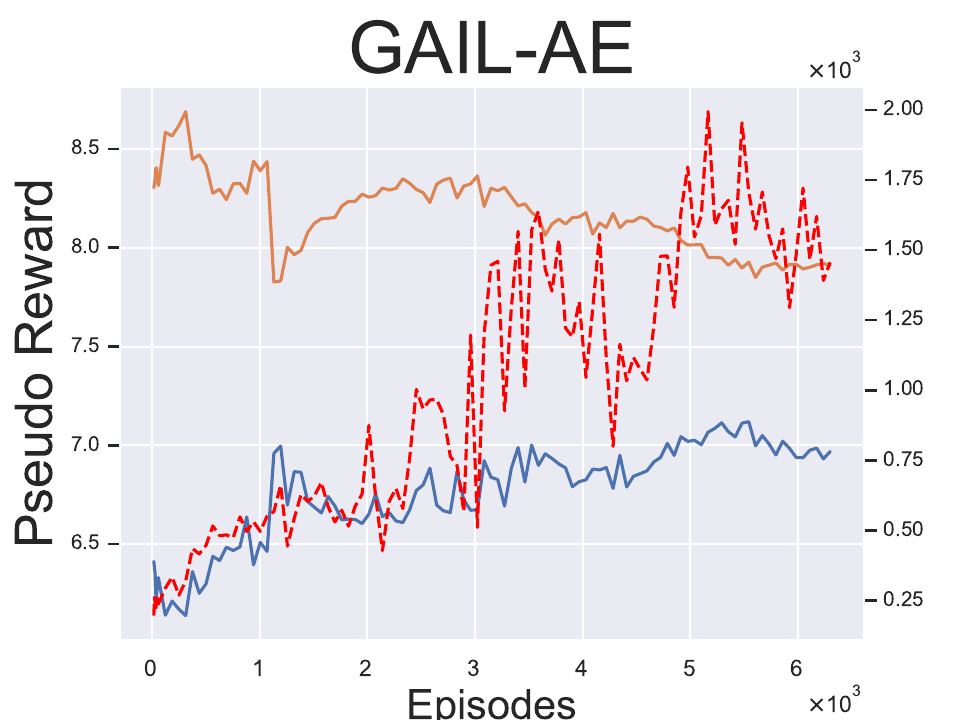}
        \includegraphics[ width=1\textwidth]{../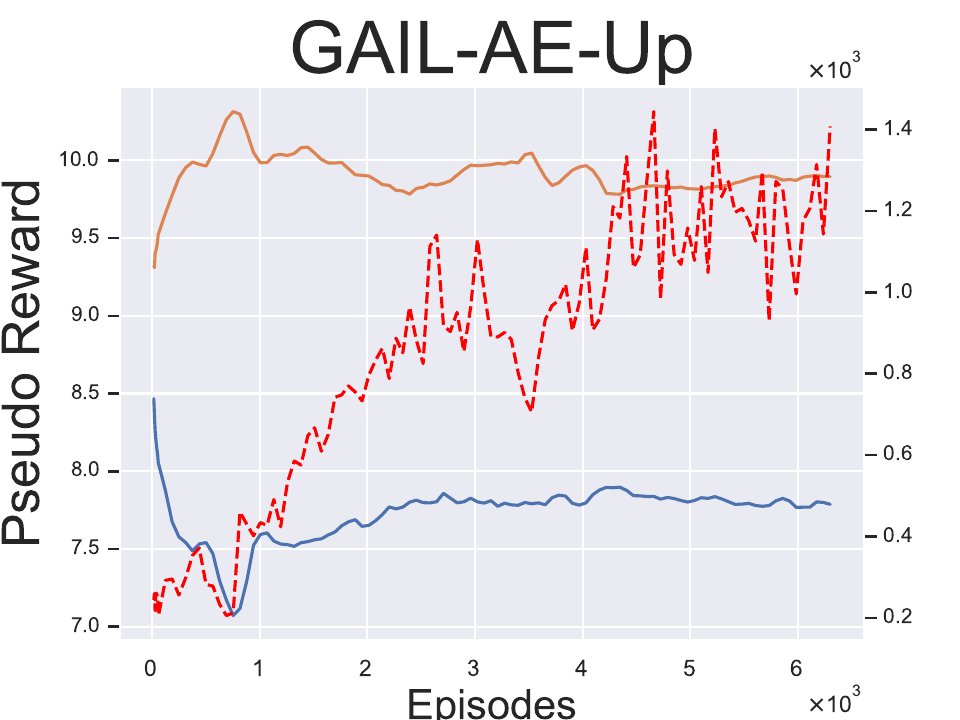}
        \includegraphics[ width=1\textwidth]{../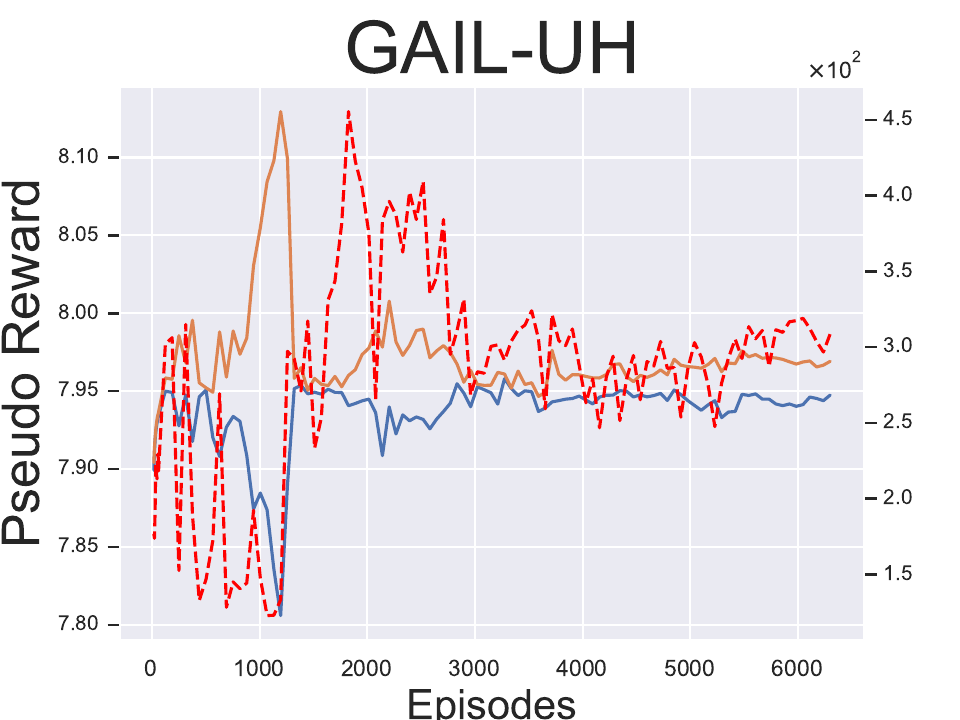}
        \includegraphics[ width=1\textwidth]{../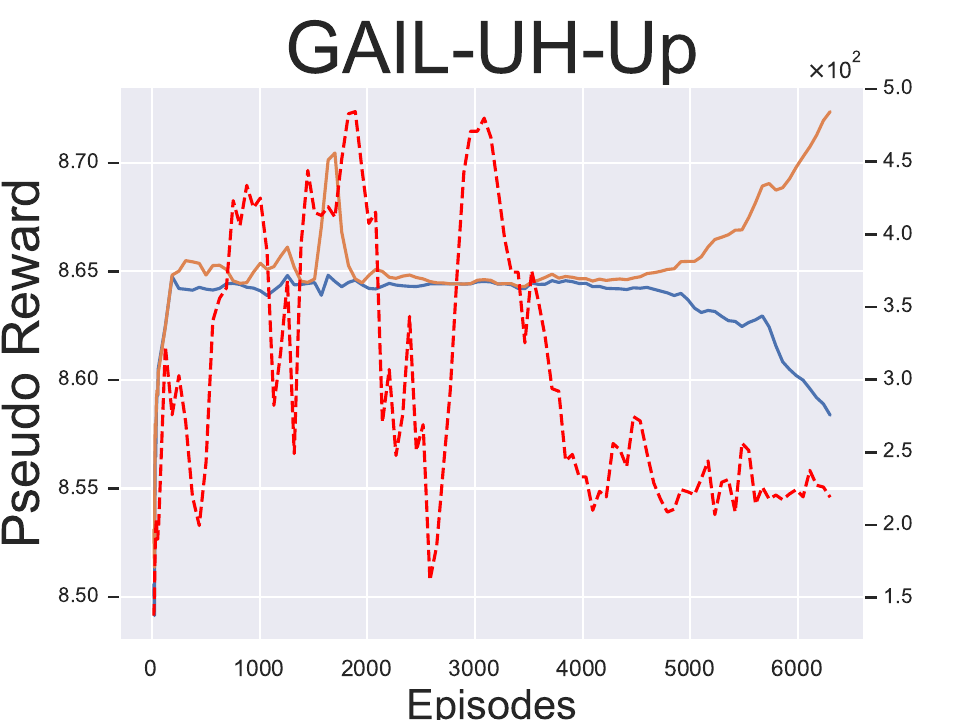}
        \includegraphics[ width=1\textwidth]{../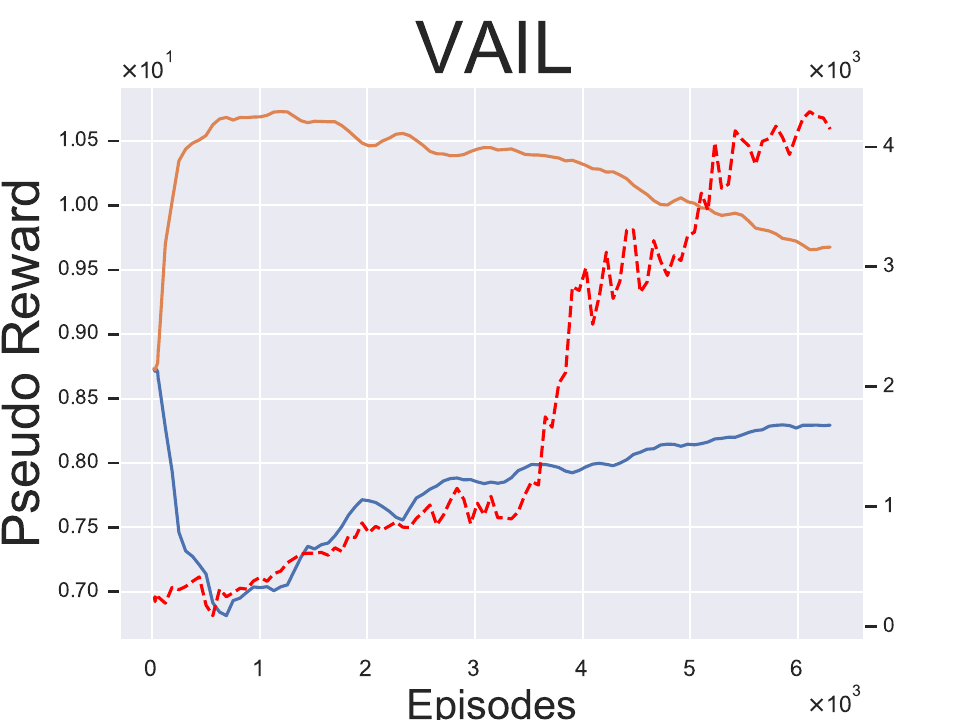}
        \includegraphics[ width=1\textwidth]{../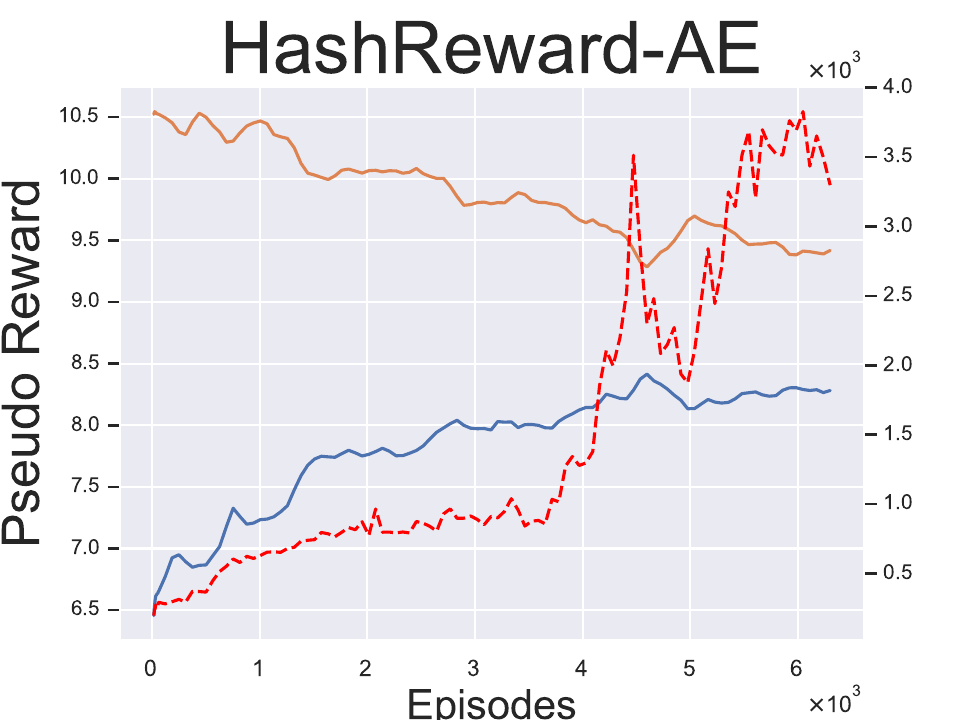}
        \includegraphics[ width=1\textwidth]{../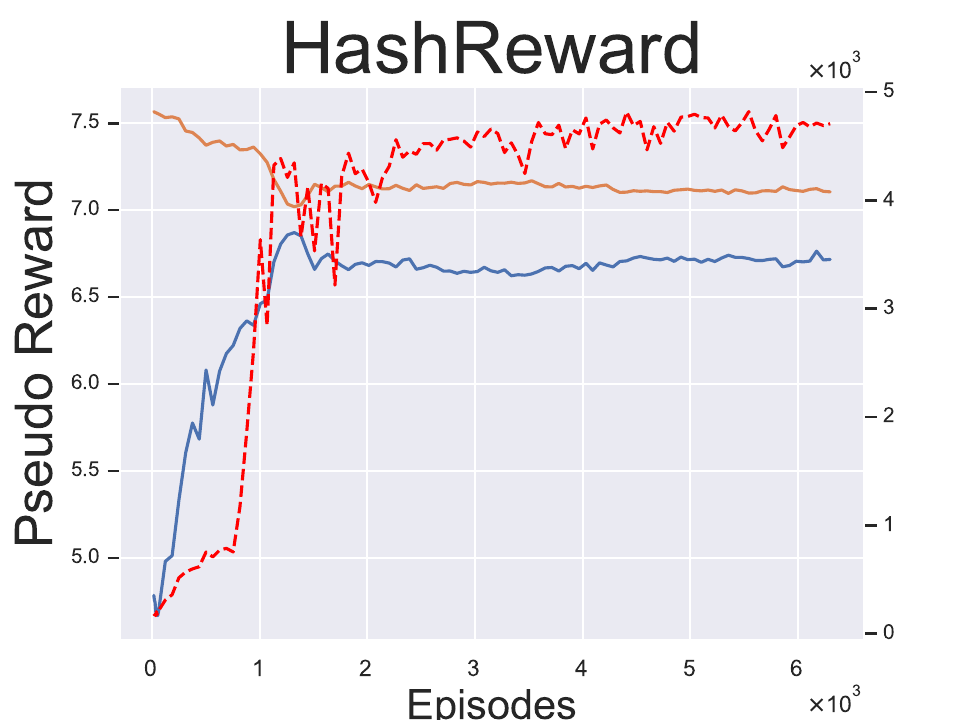}
    \end{minipage}}
    \subfigure[Seaquest]{
    \begin{minipage}[b]{0.185\linewidth}
        \includegraphics[ width=1\textwidth]{../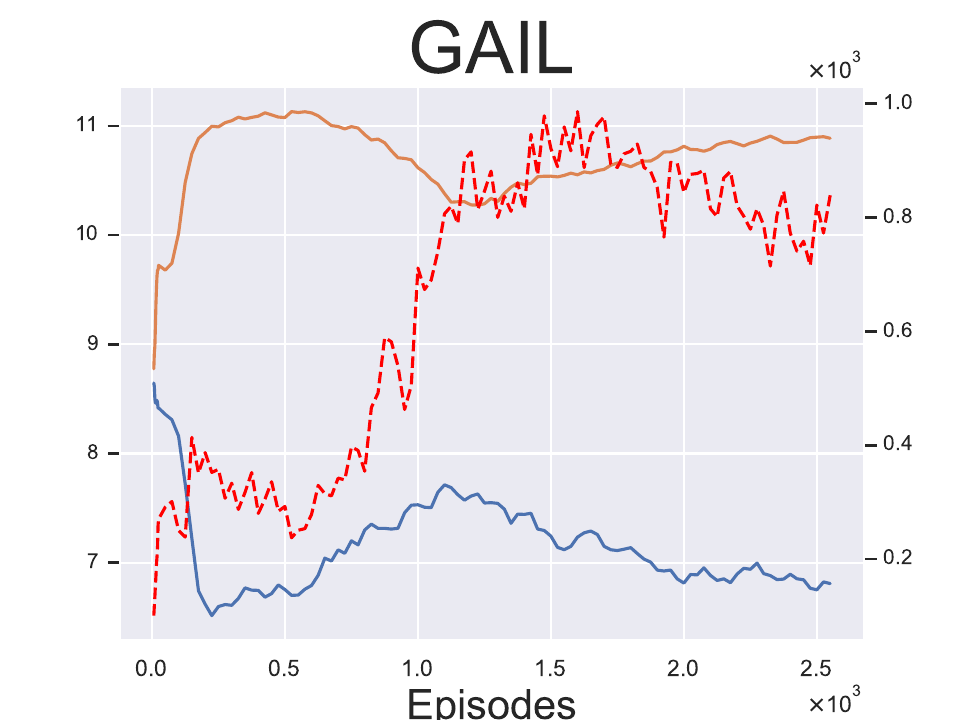}
        \includegraphics[ width=1\textwidth]{../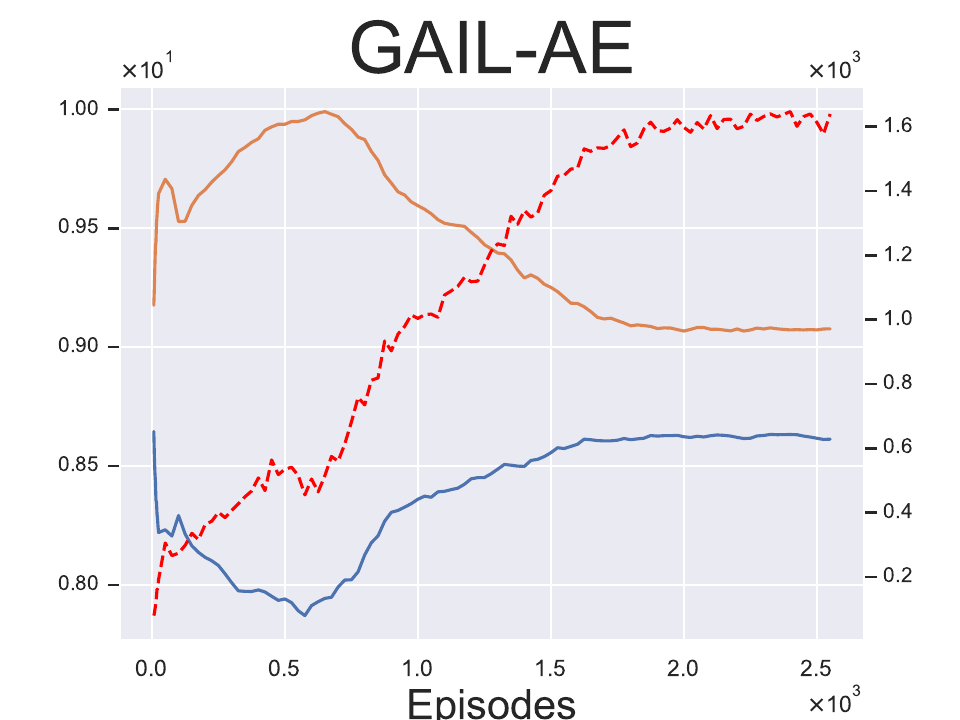}
        \includegraphics[ width=1\textwidth]{../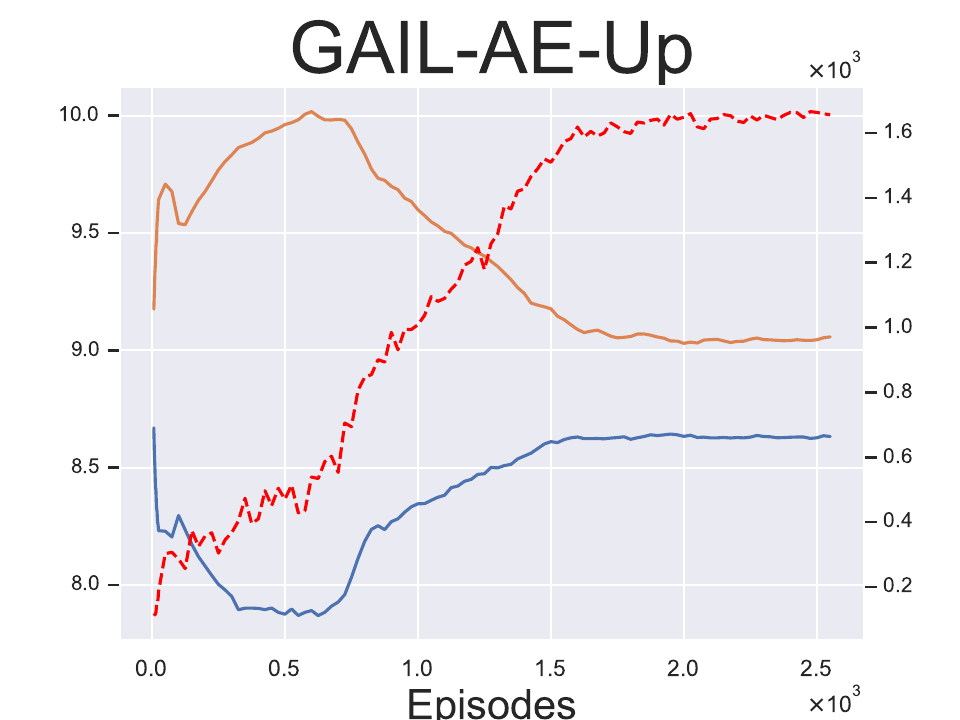}
        \includegraphics[ width=1\textwidth]{../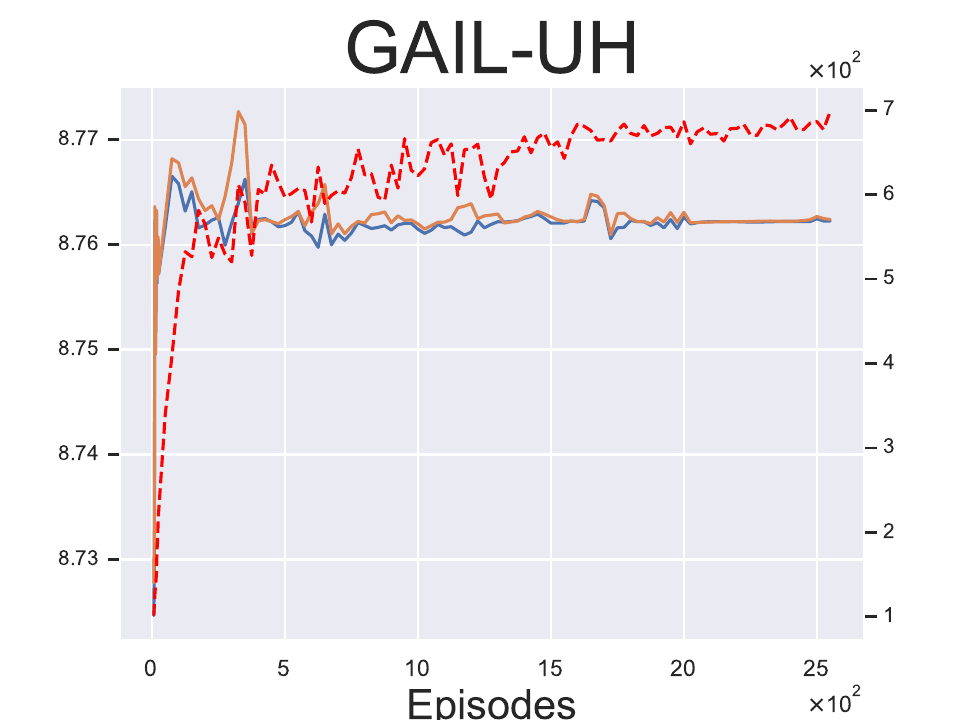}
        \includegraphics[ width=1\textwidth]{../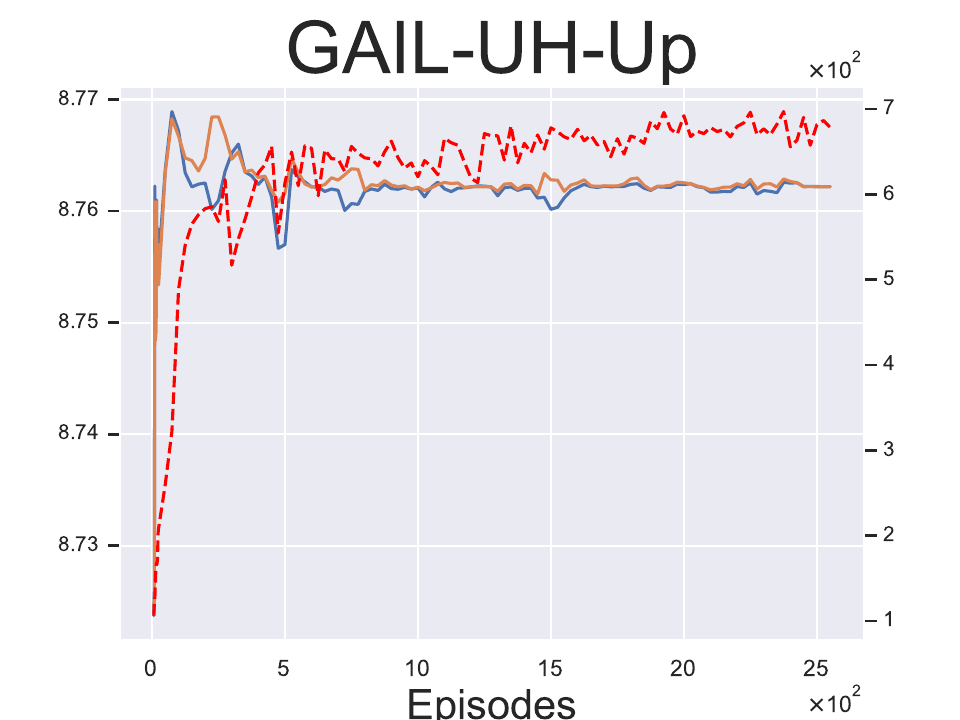}
        \includegraphics[ width=1\textwidth]{../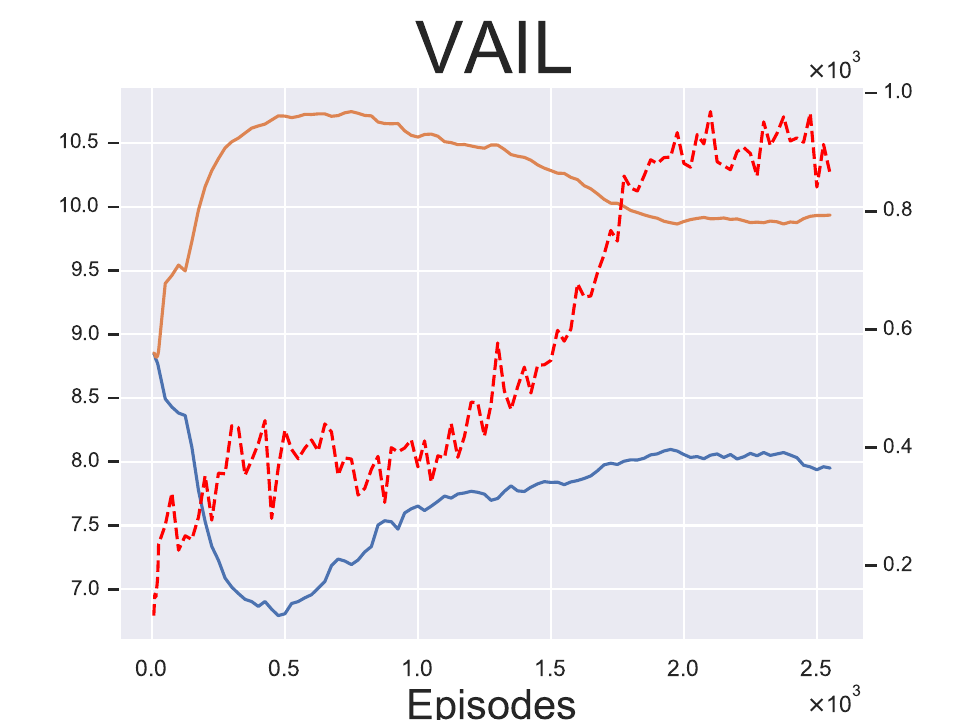}
        \includegraphics[ width=1\textwidth]{../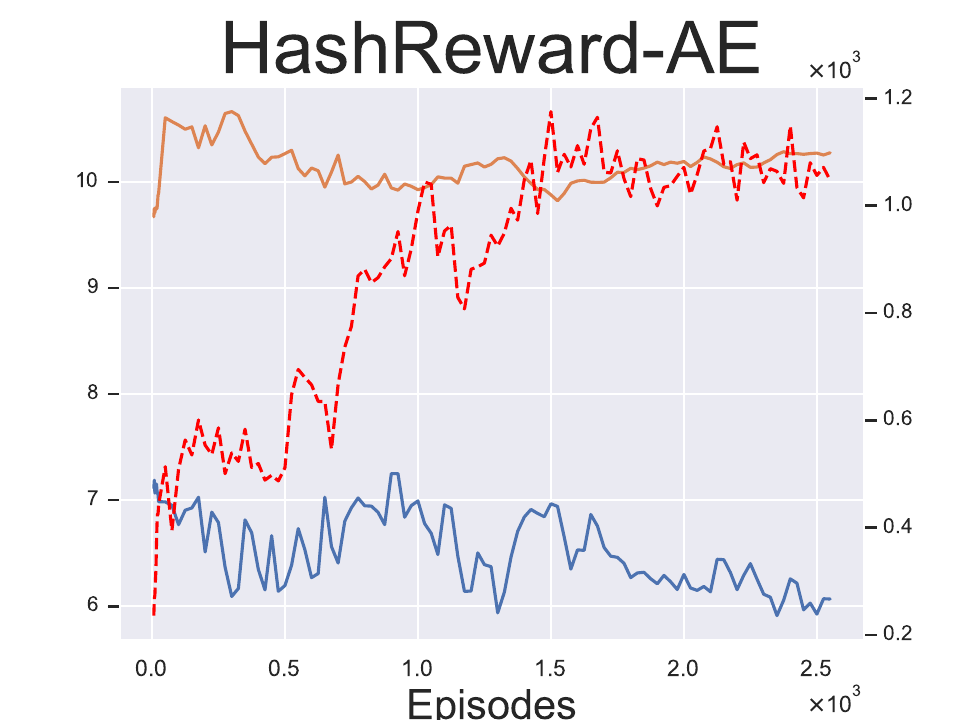}
        \includegraphics[ width=1\textwidth]{../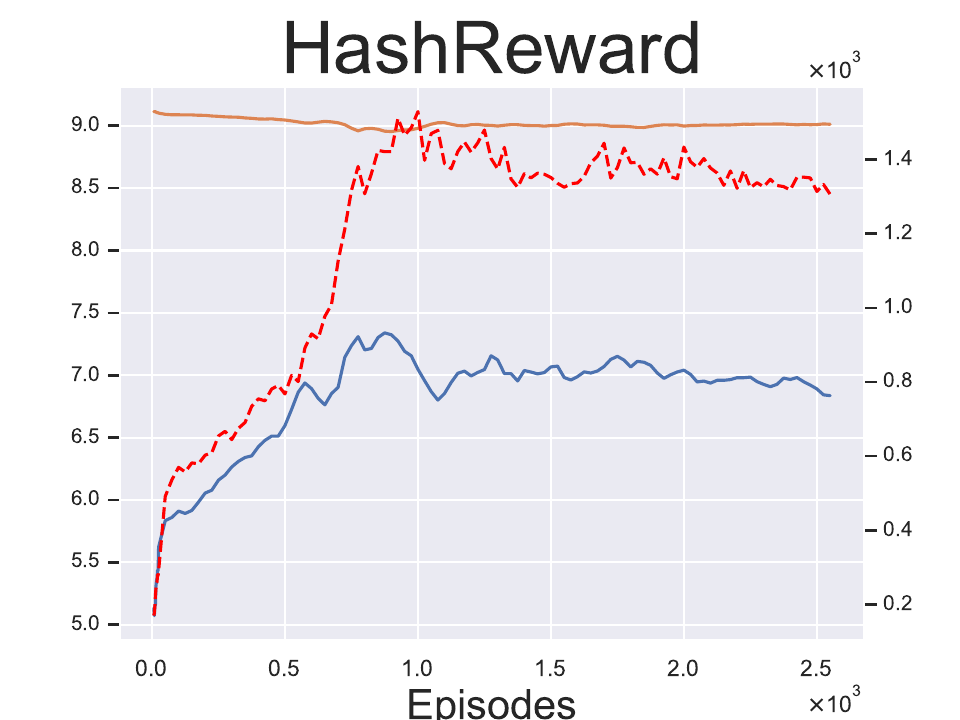}
    \end{minipage}}
    \subfigure[SpaceInvaders]{
    \begin{minipage}[b]{0.185\linewidth}
        \includegraphics[ width=1\textwidth]{../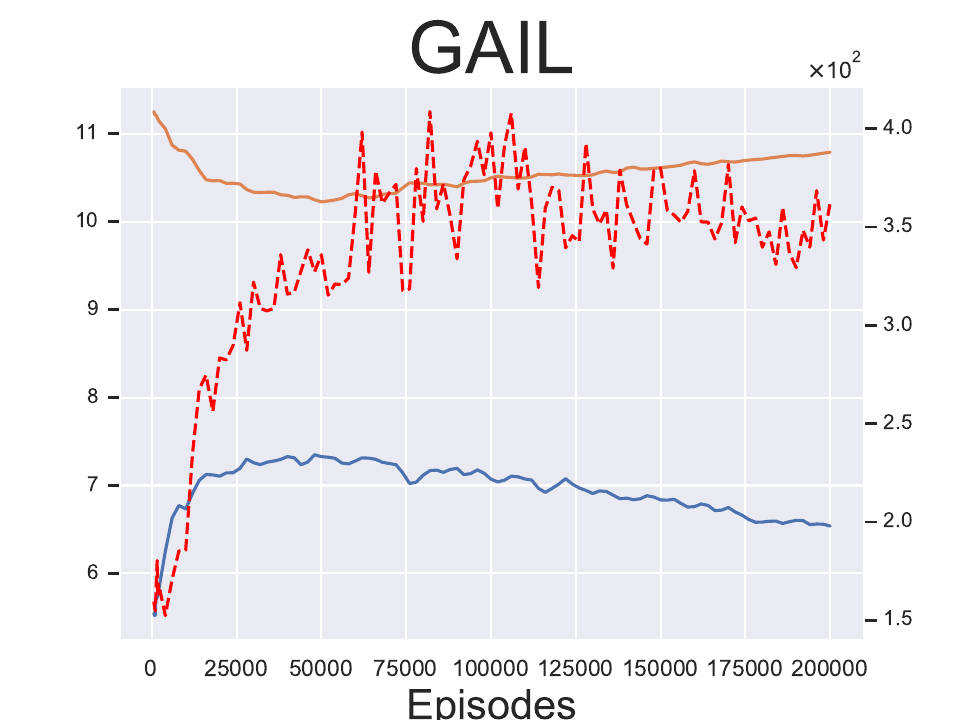}
        \includegraphics[ width=1\textwidth]{../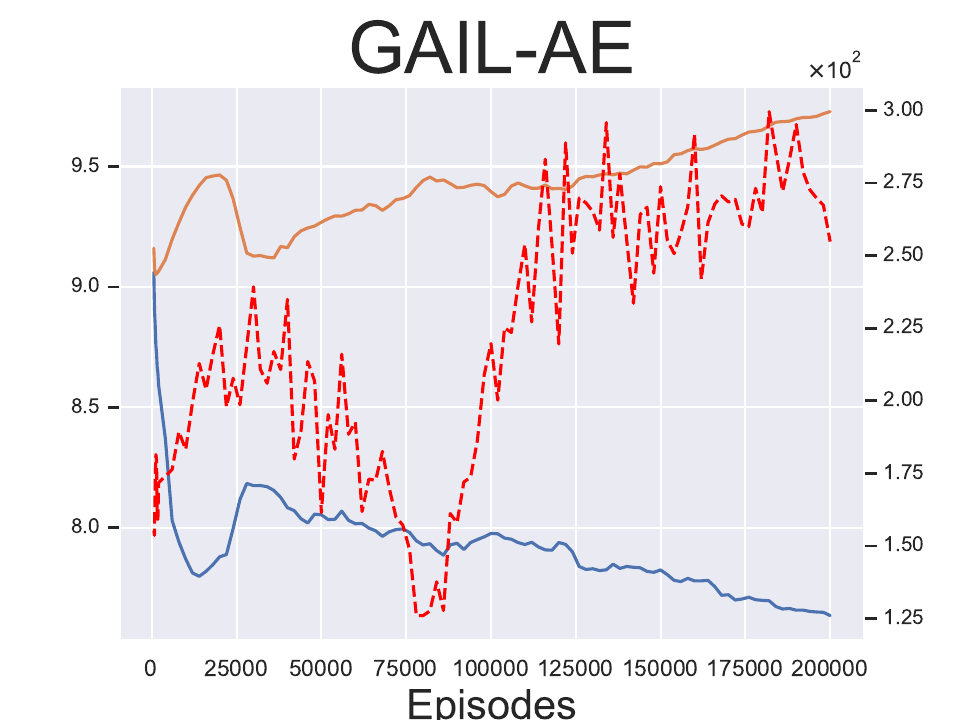}
        \includegraphics[ width=1\textwidth]{../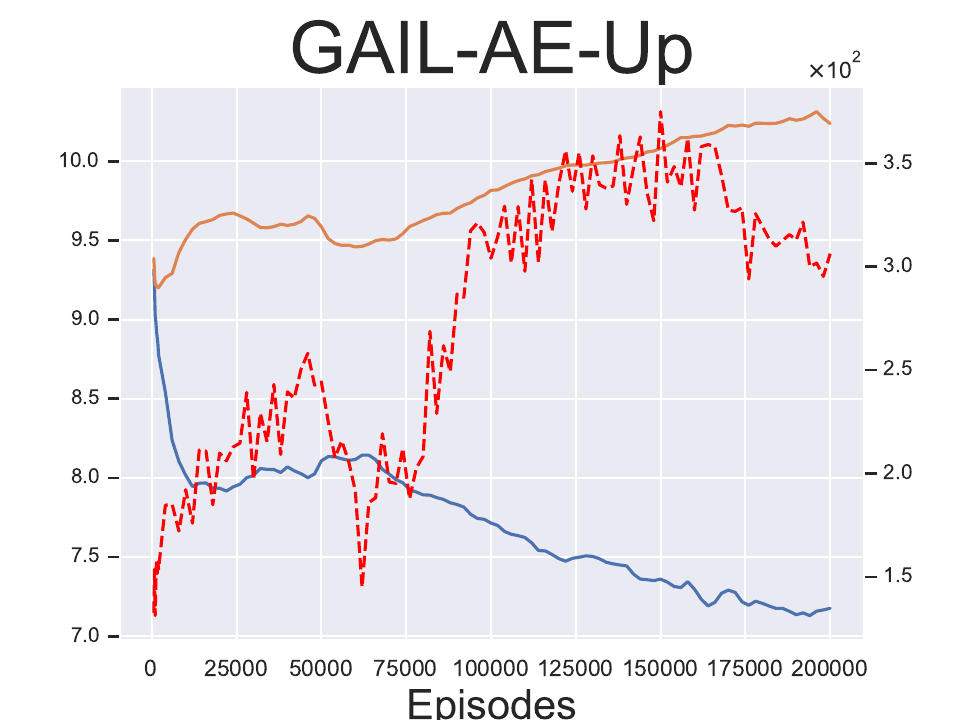}
        \includegraphics[ width=1\textwidth]{../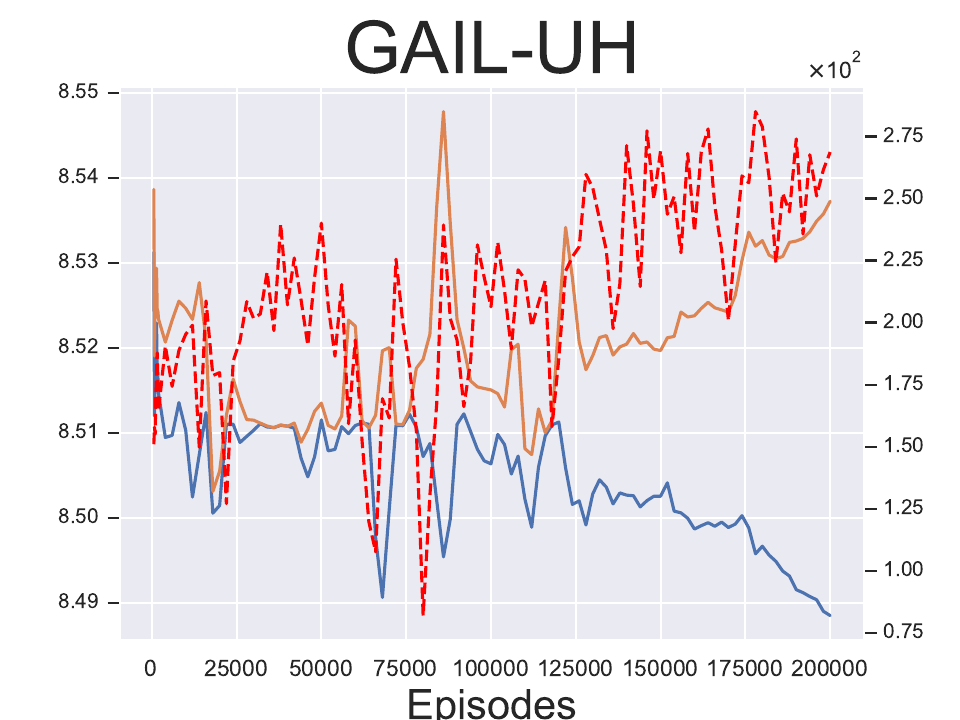}
        \includegraphics[ width=1\textwidth]{../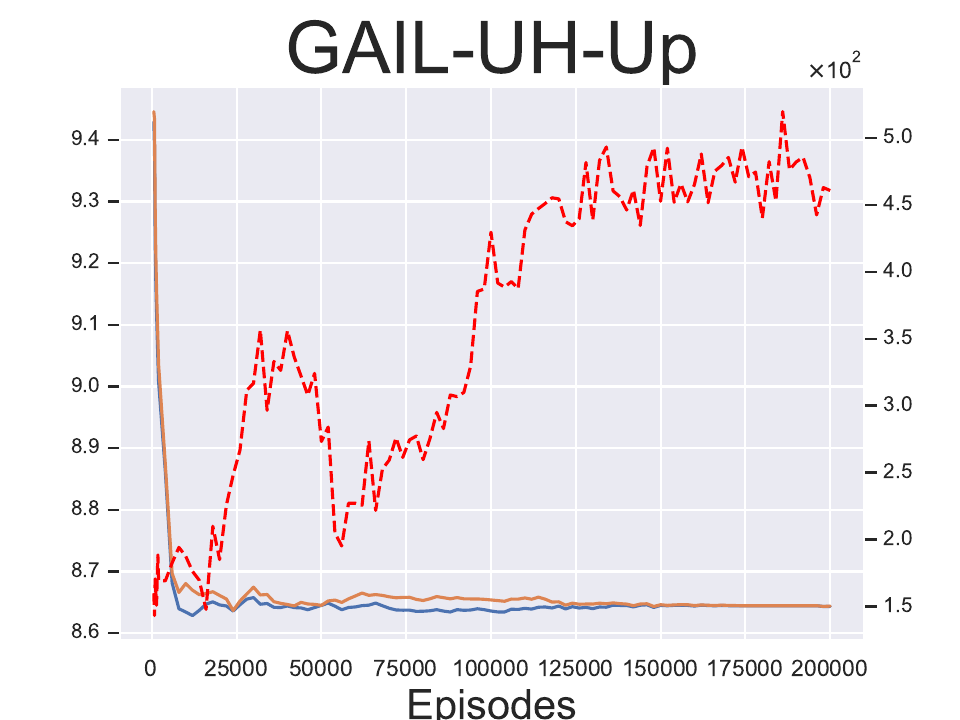}
        \includegraphics[ width=1\textwidth]{../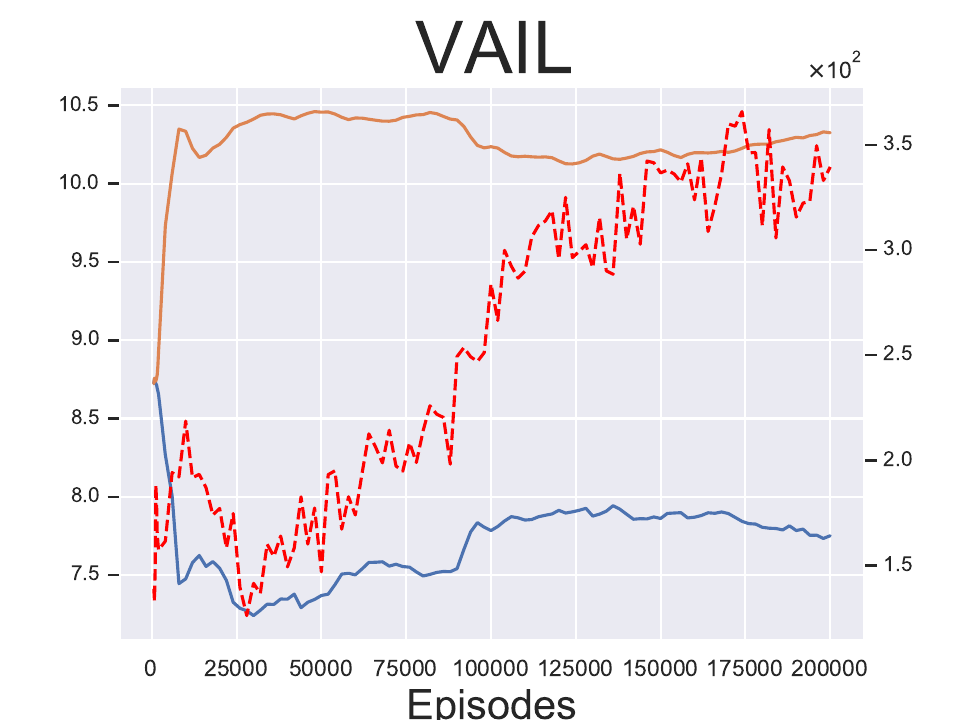}
        \includegraphics[ width=1\textwidth]{../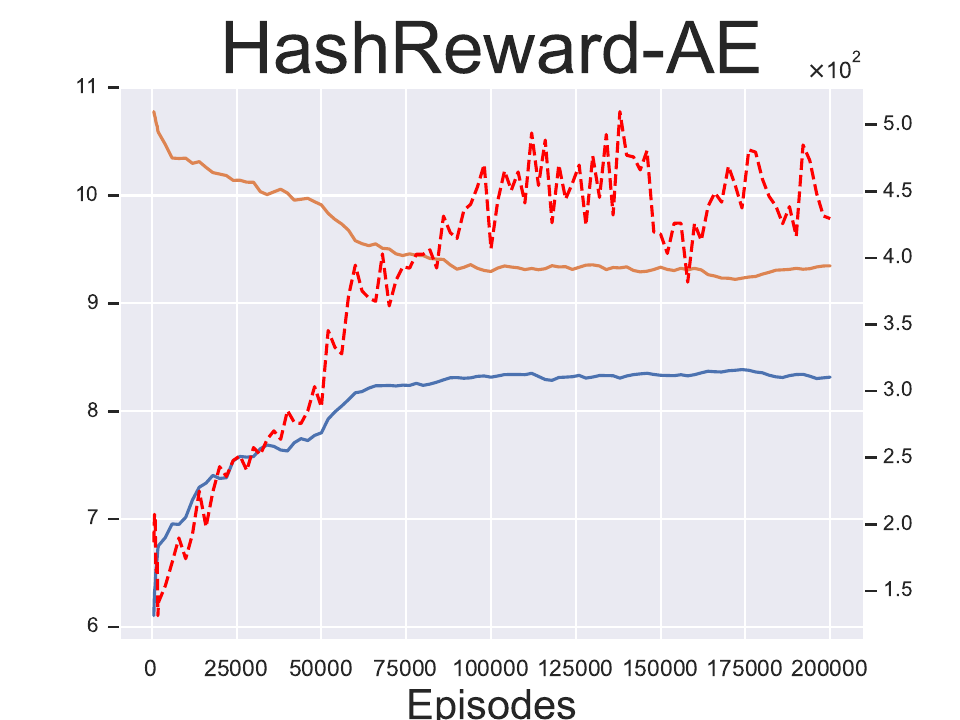}
        \includegraphics[ width=1\textwidth]{../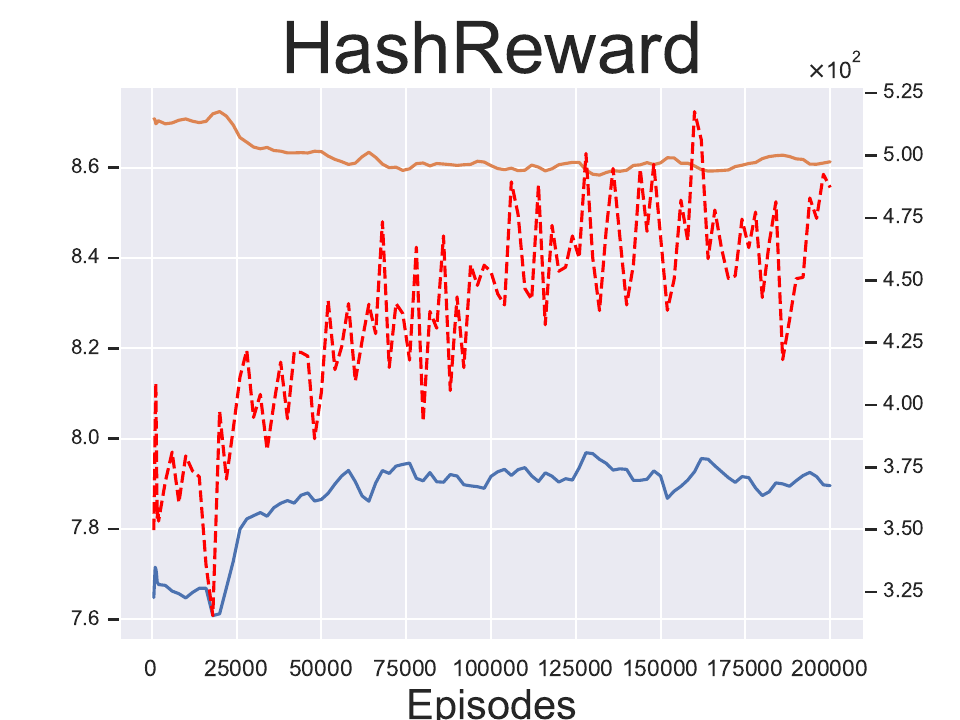}
    \end{minipage}}
    \subfigure[UpNDown]{
    \begin{minipage}[b]{0.185\linewidth}
        \includegraphics[ width=1\textwidth]{../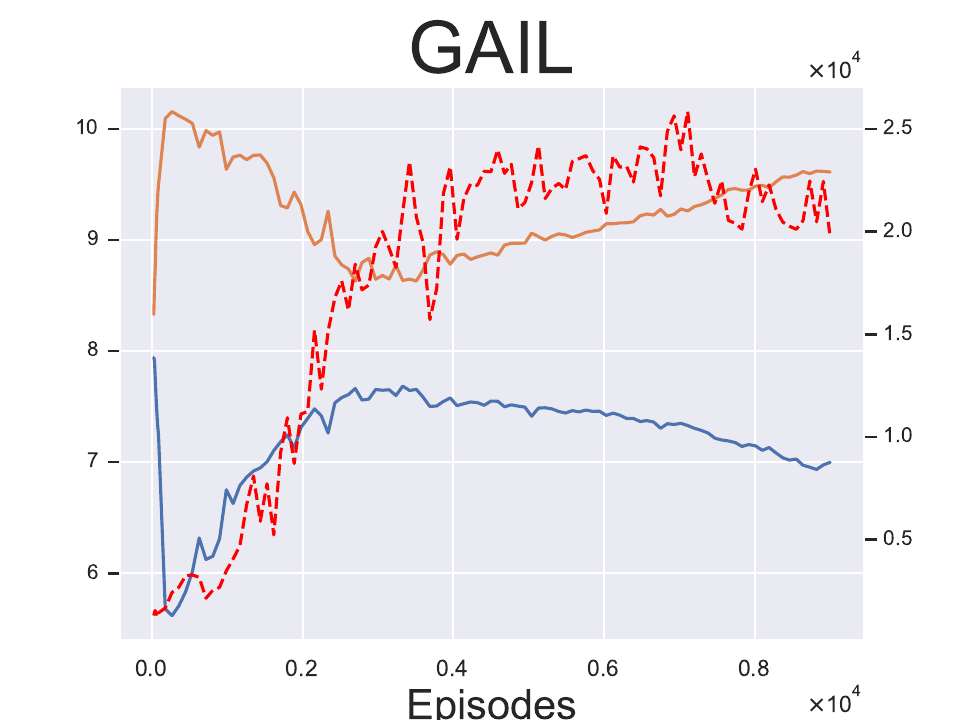}
        \includegraphics[ width=1\textwidth]{../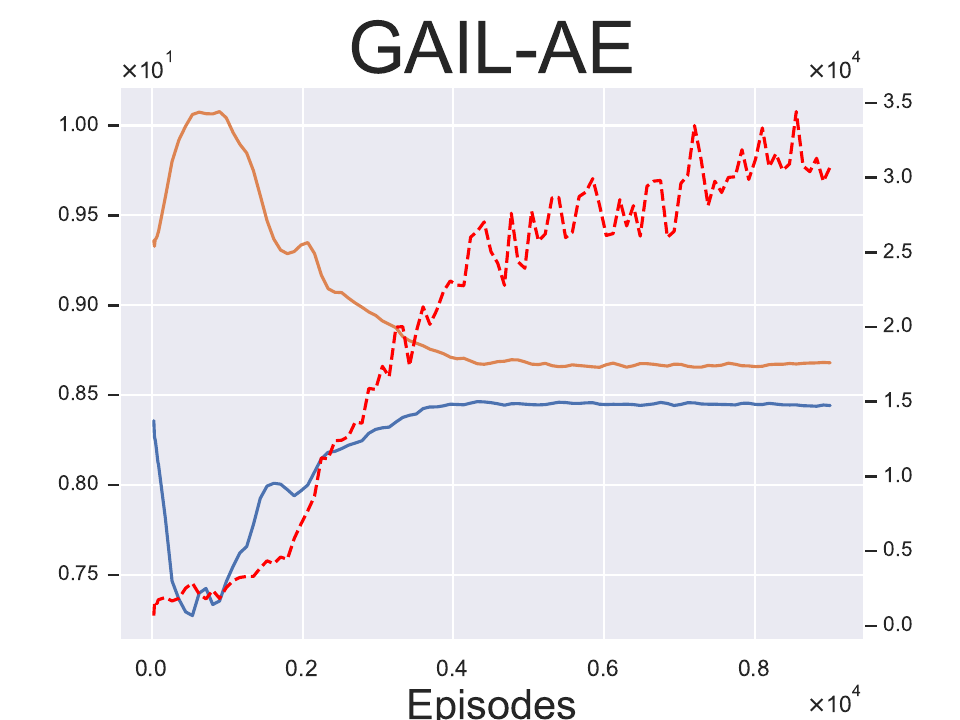}
        \includegraphics[ width=1\textwidth]{../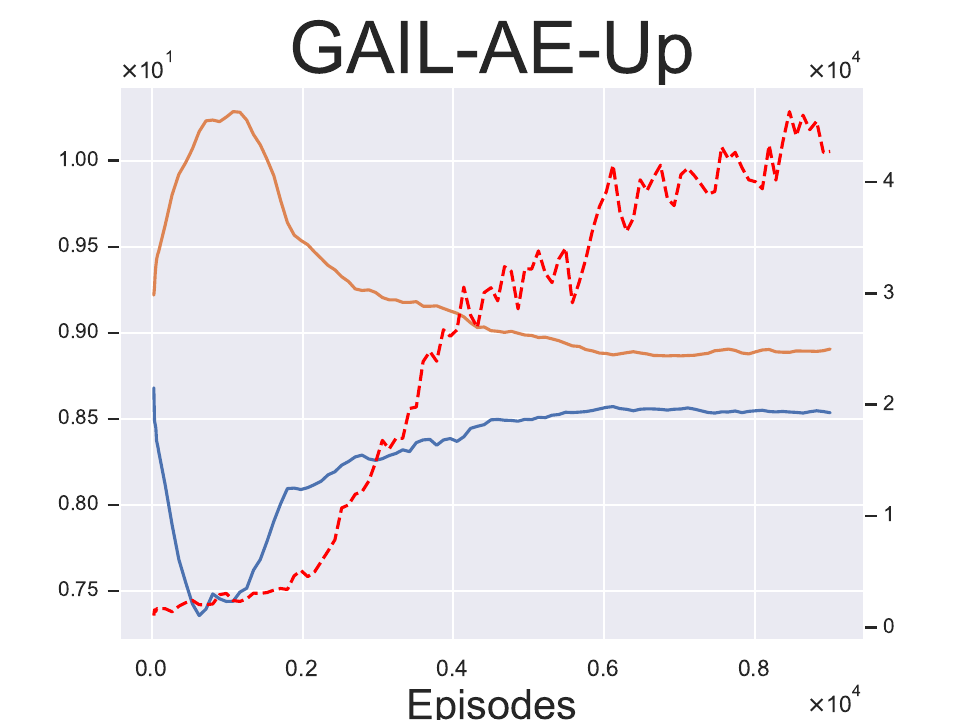}
        \includegraphics[ width=1\textwidth]{../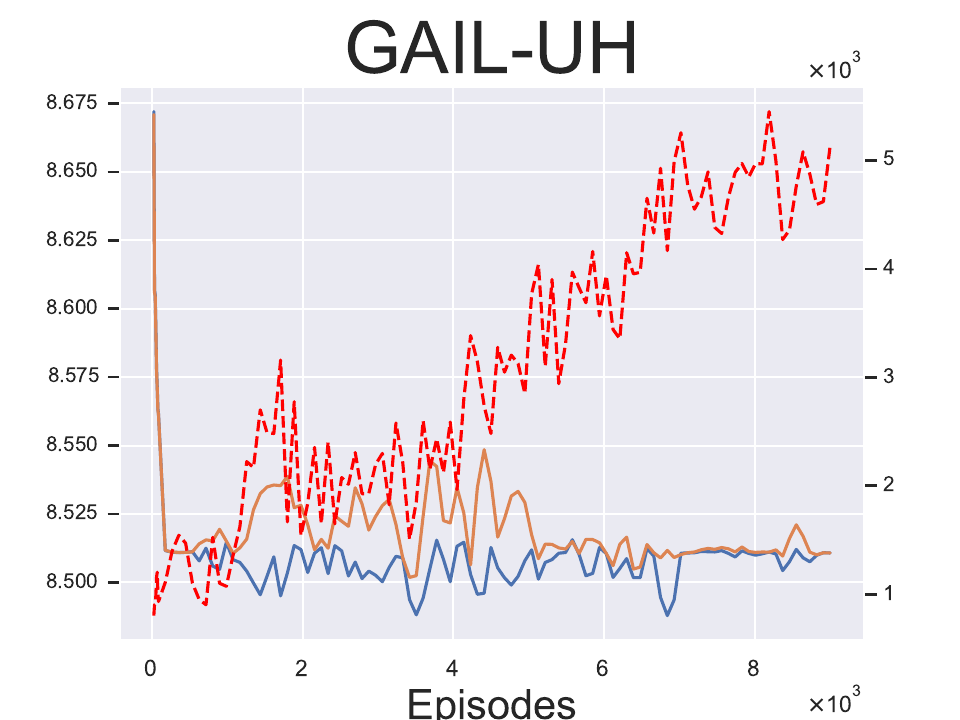}
        \includegraphics[ width=1\textwidth]{../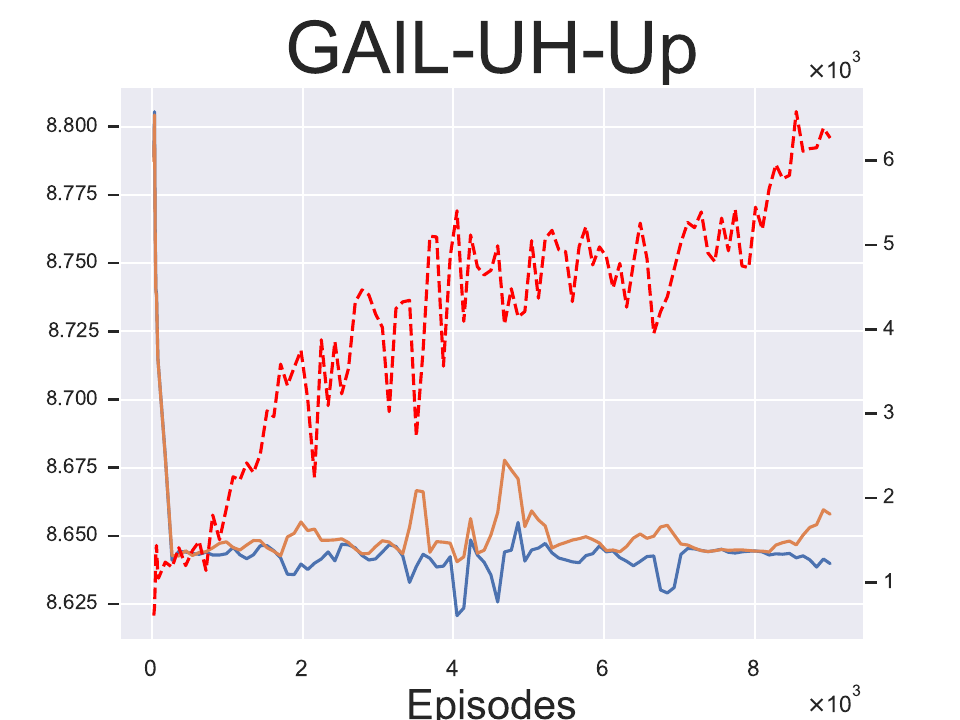}
        \includegraphics[ width=1\textwidth]{../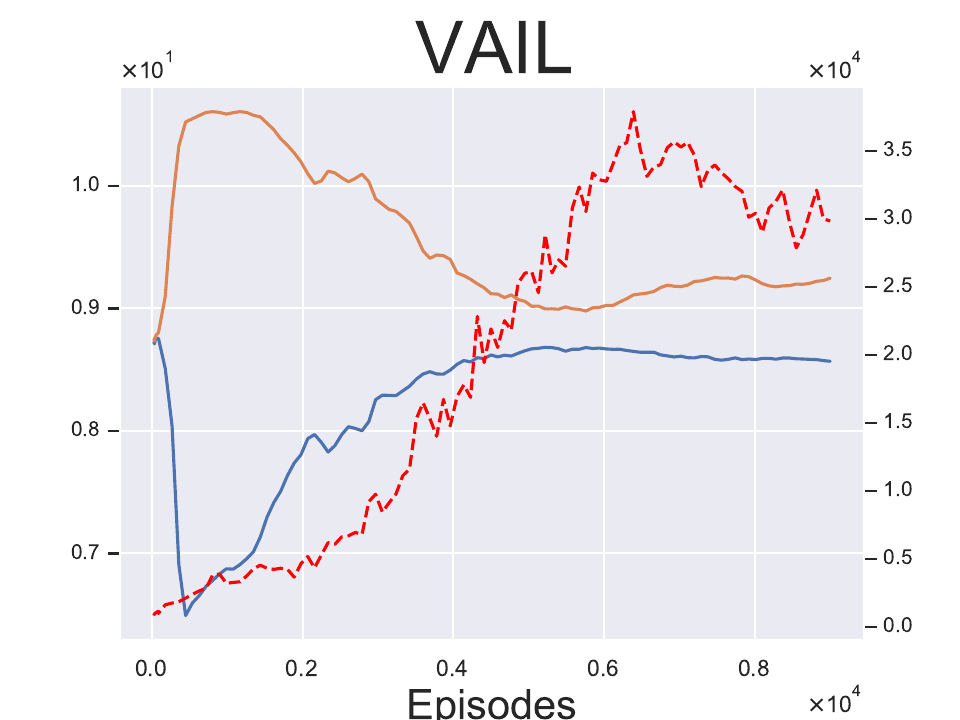}
        \includegraphics[ width=1\textwidth]{../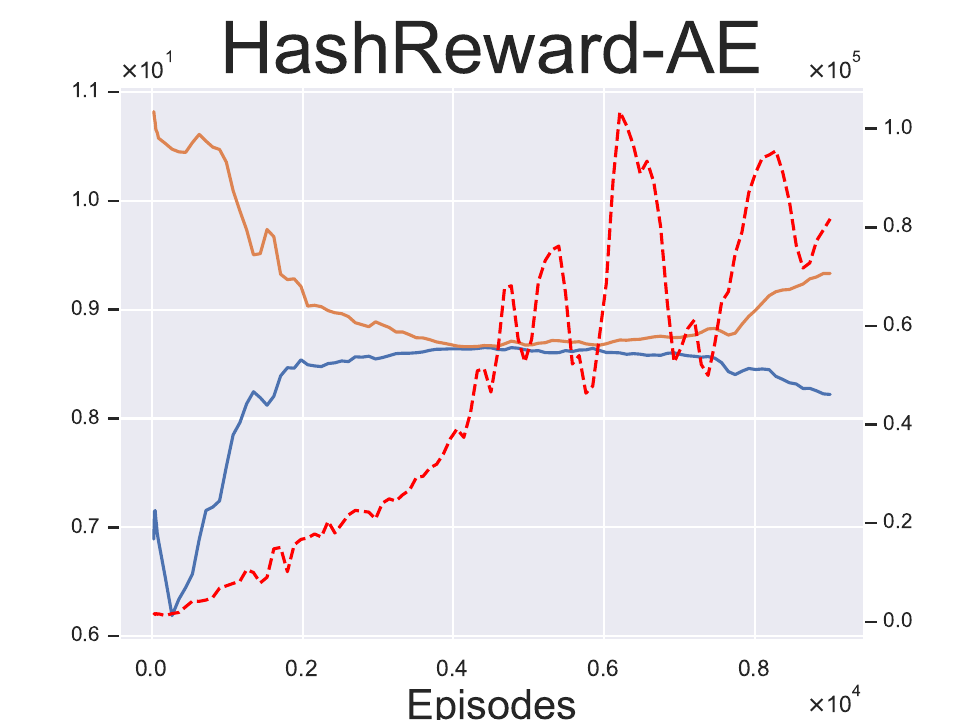}
        \includegraphics[ width=1\textwidth]{../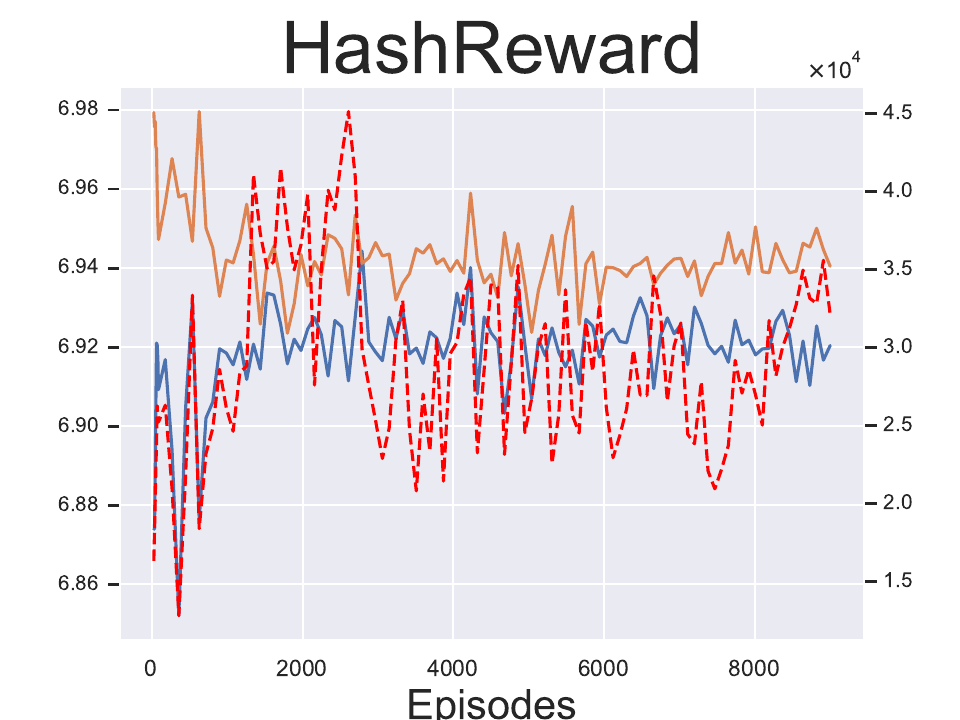}
    \end{minipage}}
    \subfigure[Zaxxon]{
    \begin{minipage}[b]{0.185\linewidth}
        \includegraphics[ width=1\textwidth]{../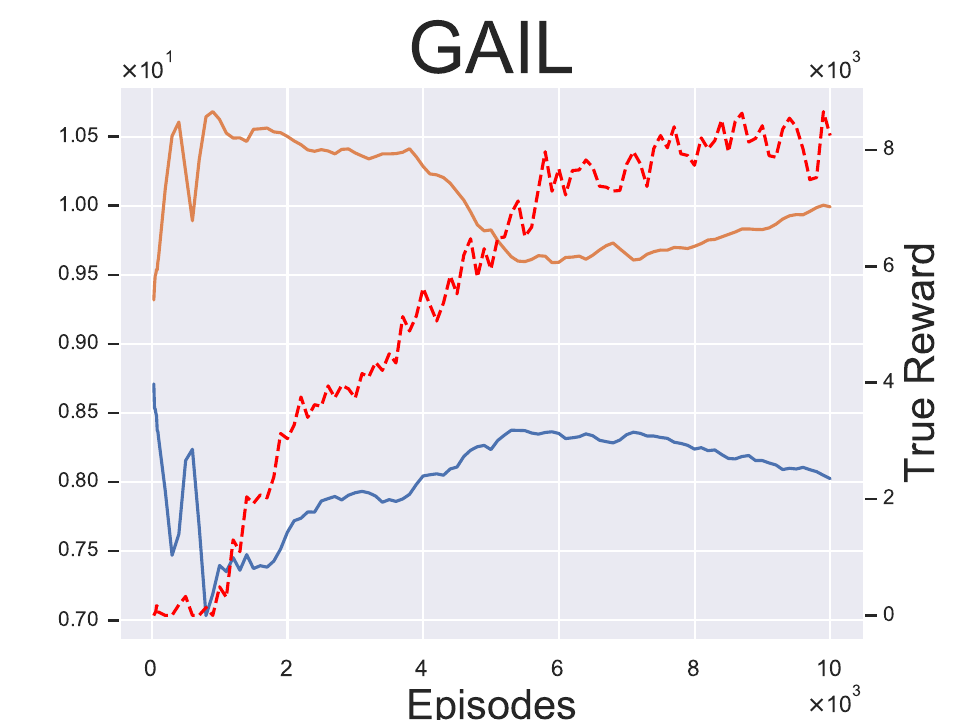}
        \includegraphics[ width=1\textwidth]{../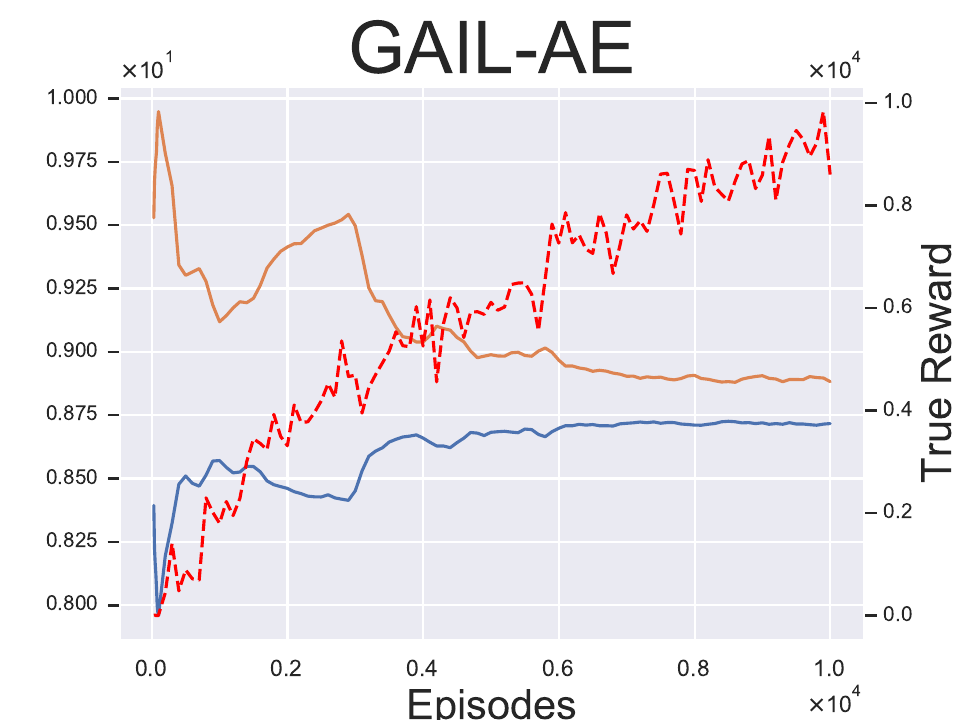}
        \includegraphics[ width=1\textwidth]{../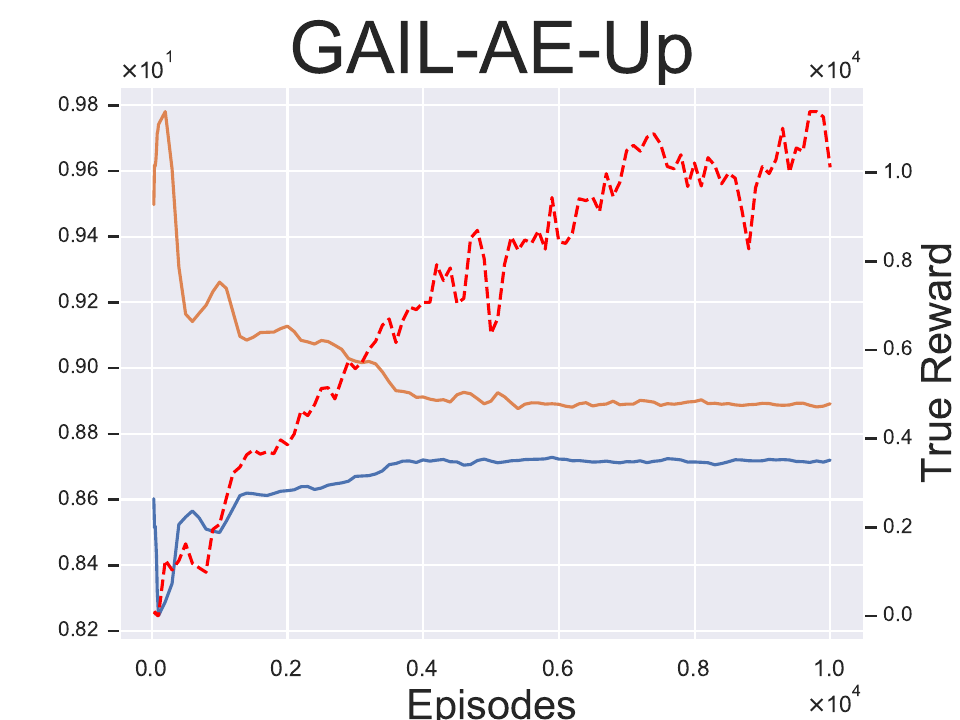}
        \includegraphics[ width=1\textwidth]{../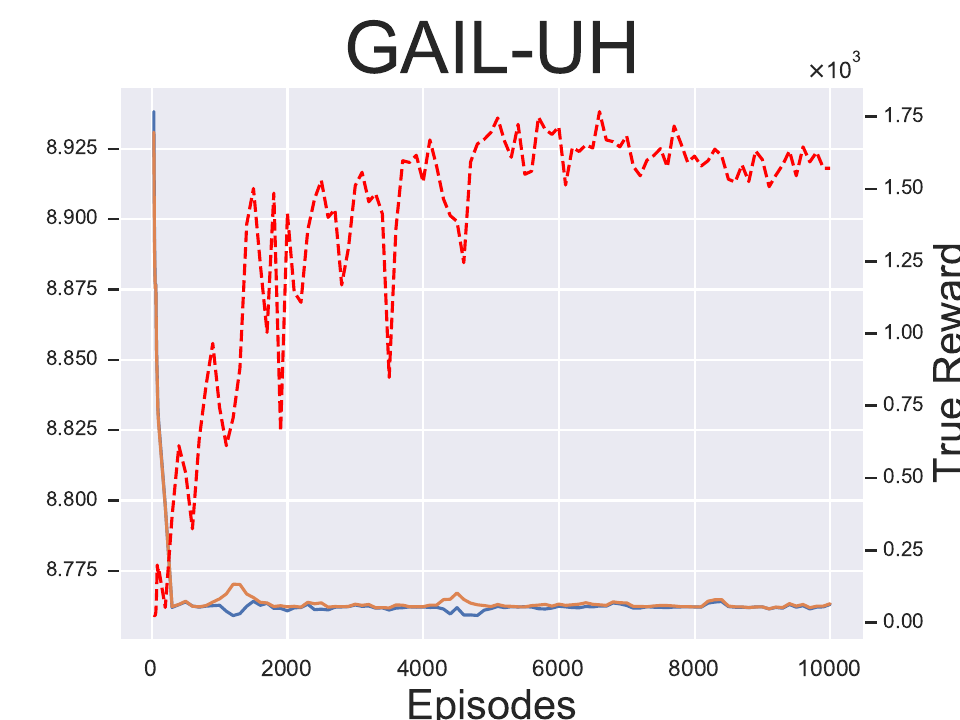}
        \includegraphics[ width=1\textwidth]{../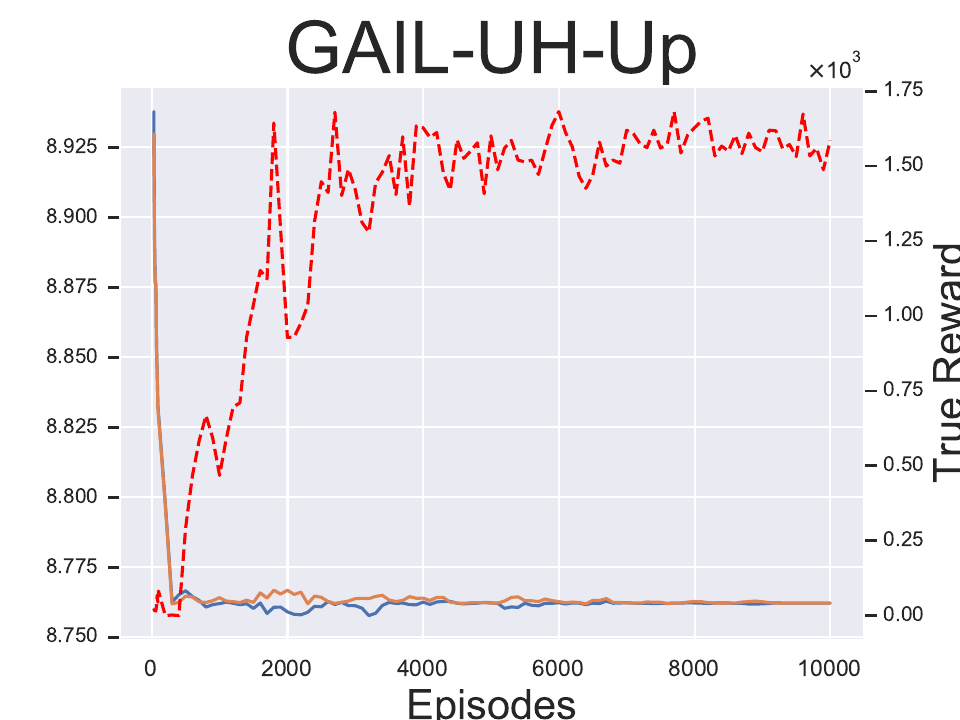}
        \includegraphics[ width=1\textwidth]{../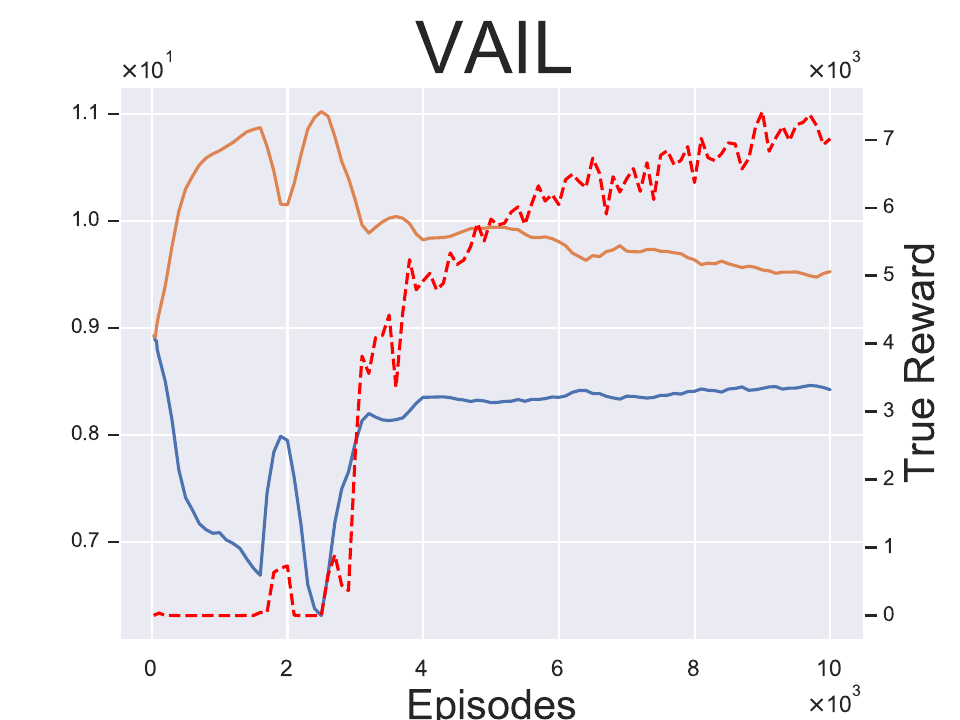}
        \includegraphics[ width=1\textwidth]{../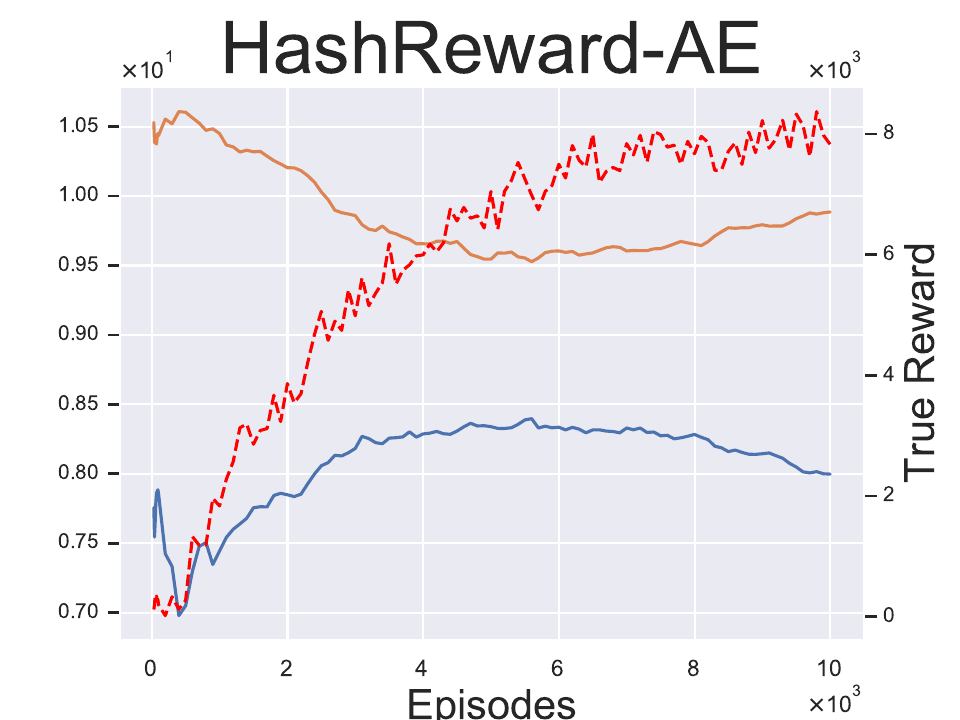}
        \includegraphics[ width=1\textwidth]{../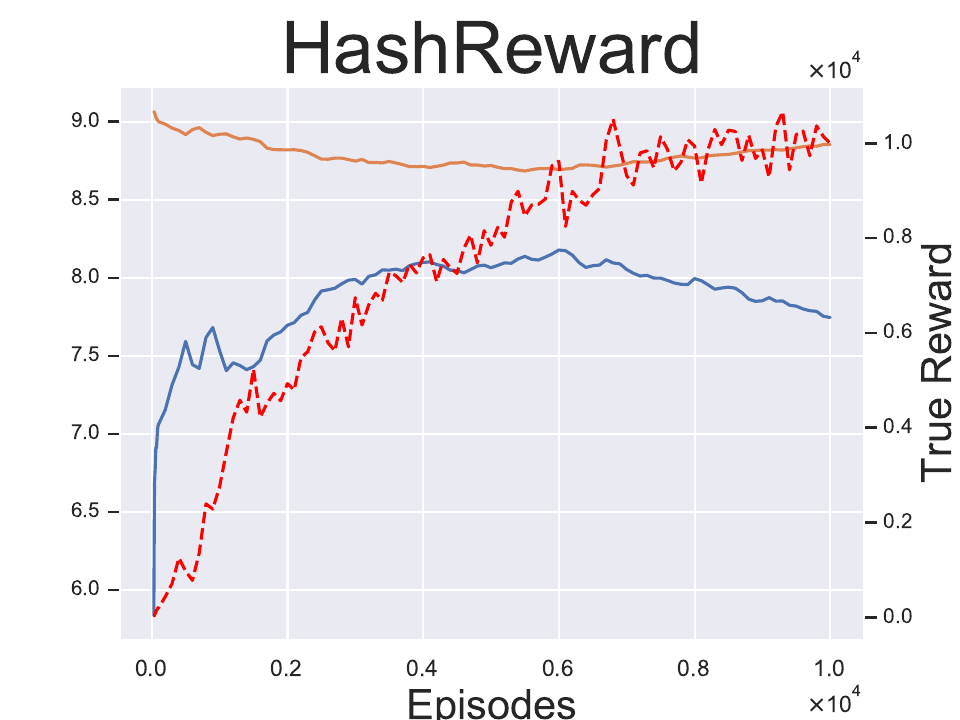}
    \end{minipage}}
\end{figure*}
\begin{figure*}[!t]
    \addtocounter{figure}{1}
    \centering
    \subfigure[Humanoid]{
    \begin{minipage}[b]{0.185\linewidth}
        \includegraphics[ width=1\textwidth]{../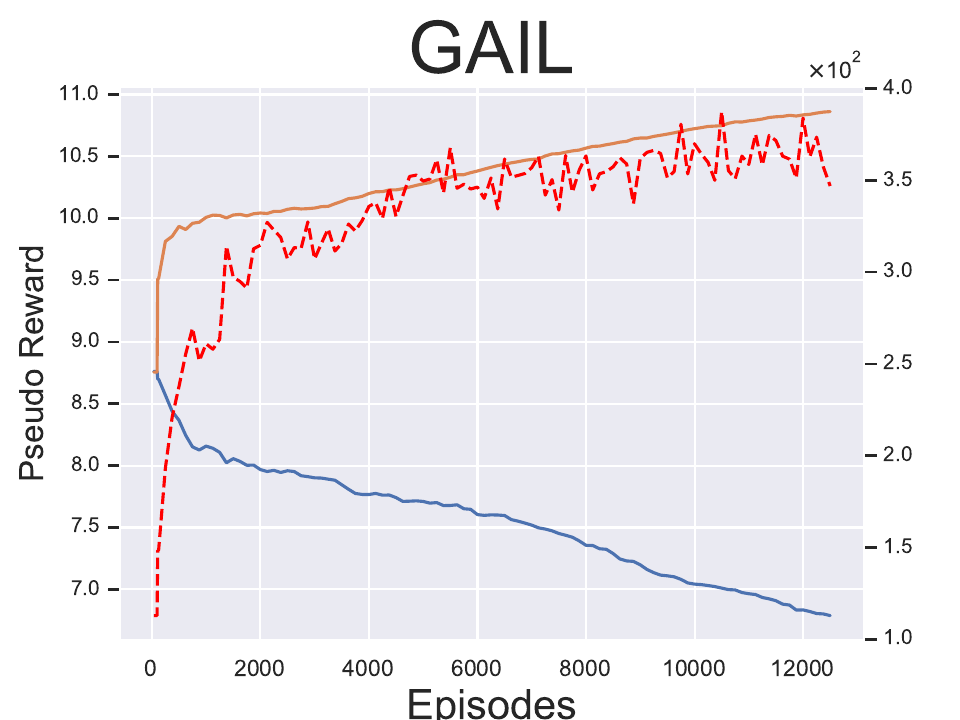}
        \includegraphics[ width=1\textwidth]{../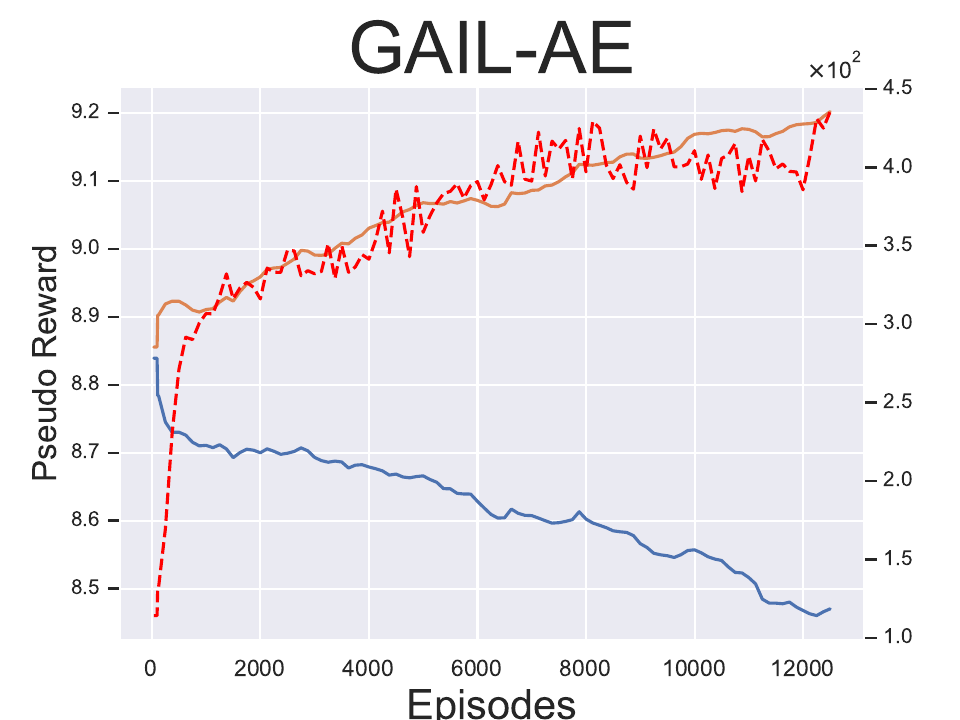}
        \includegraphics[ width=1\textwidth]{../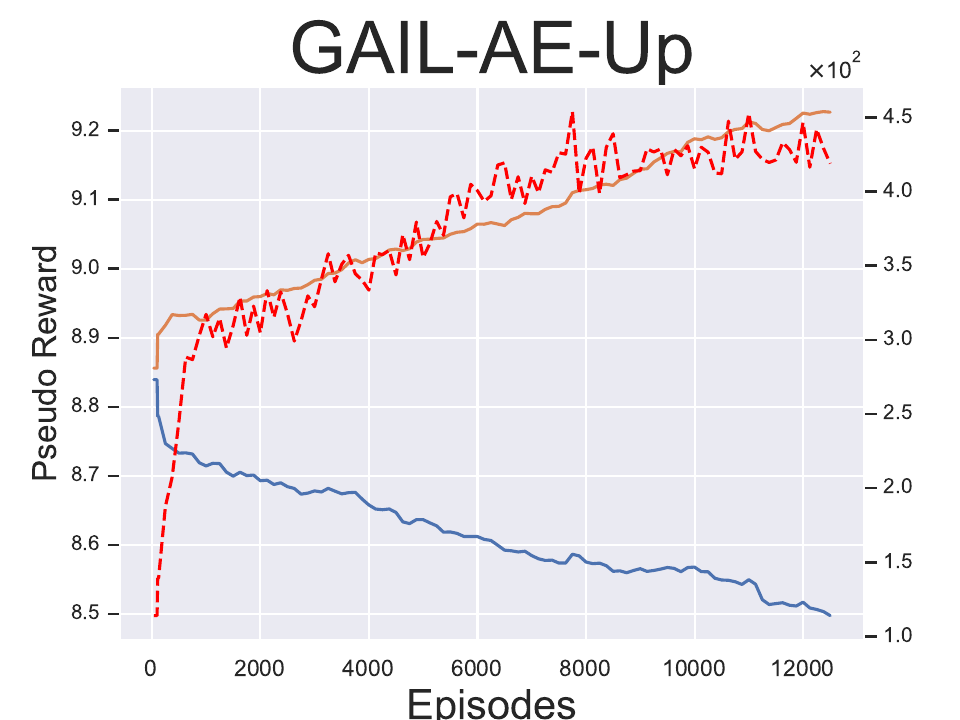}
        \includegraphics[ width=1\textwidth]{../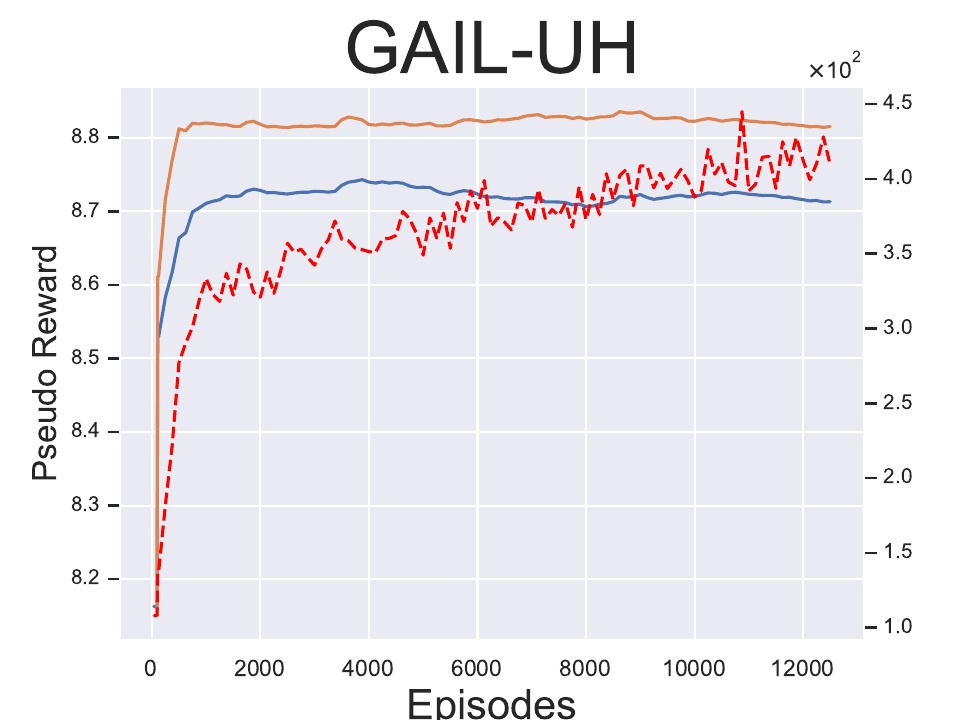}
        \includegraphics[ width=1\textwidth]{../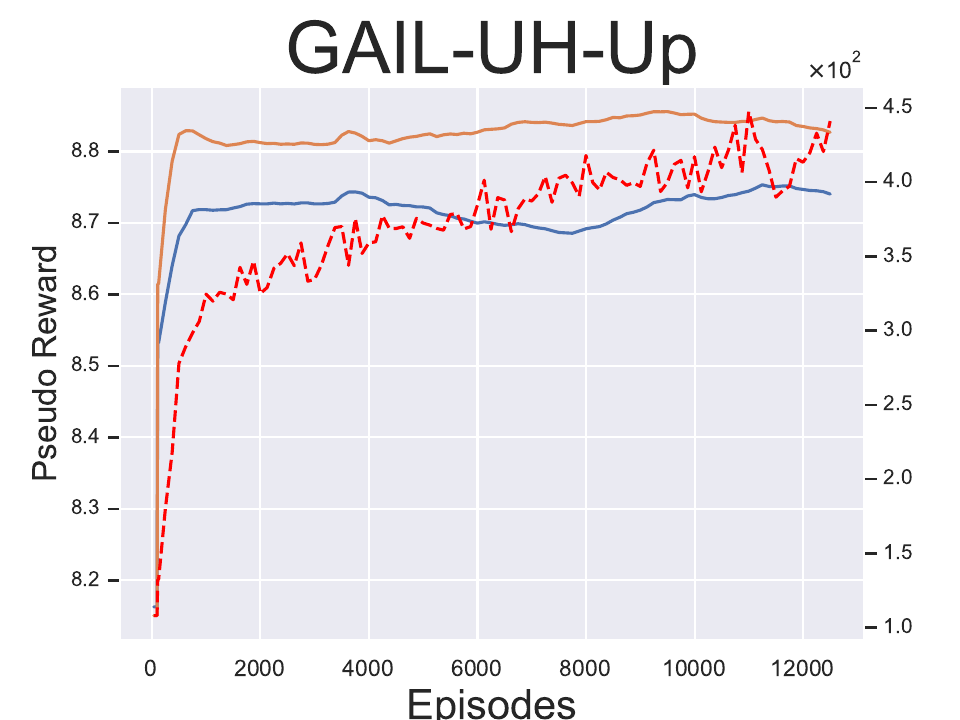}
        \includegraphics[ width=1\textwidth]{../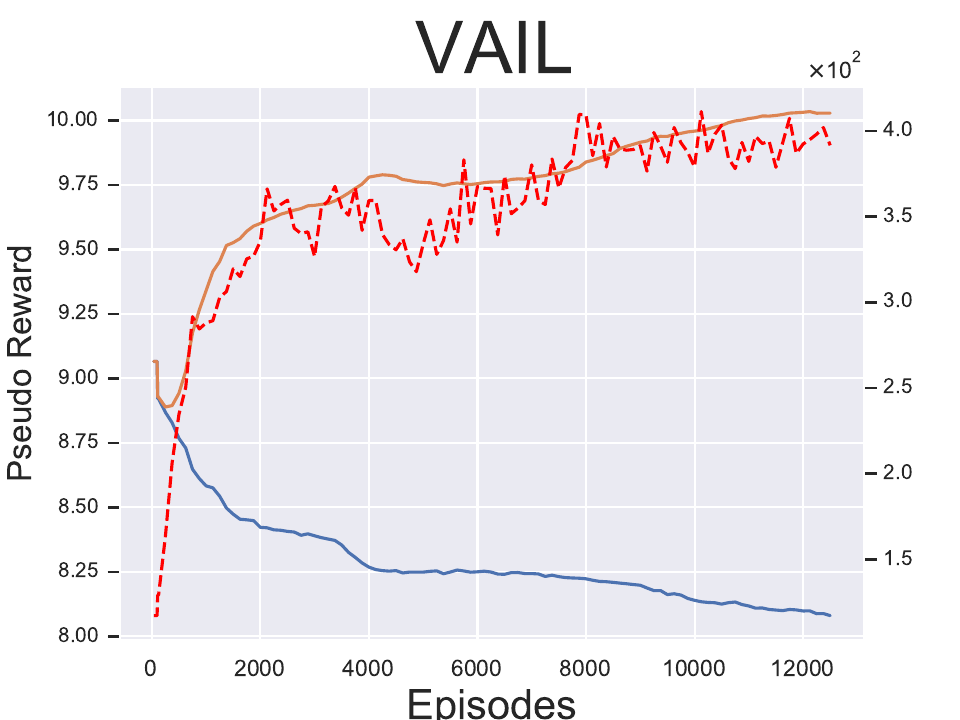}
        \includegraphics[ width=1\textwidth]{../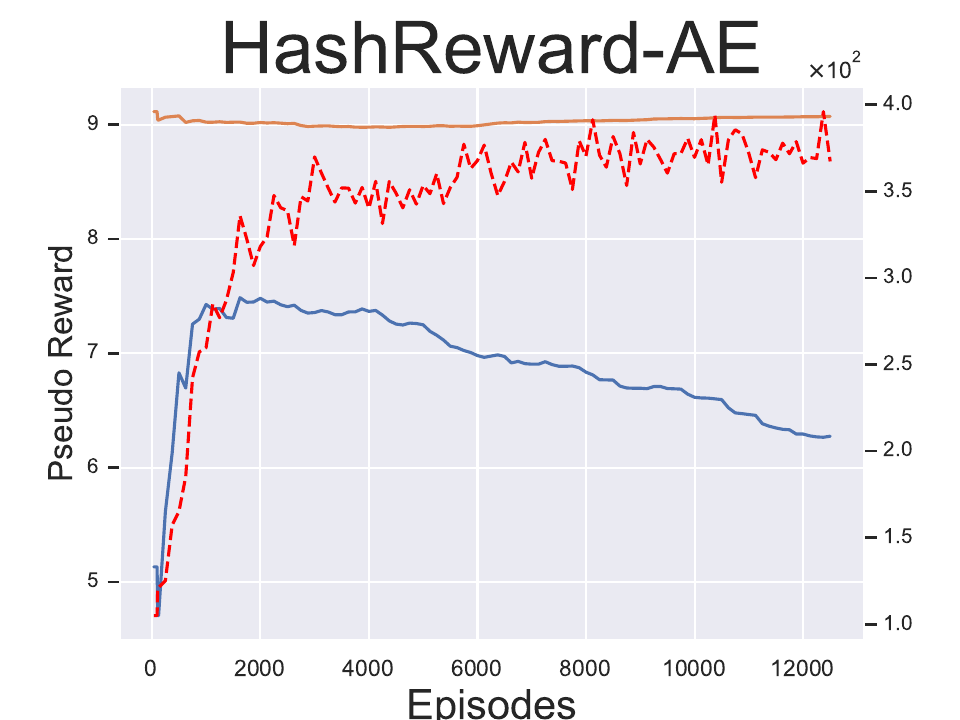}
        \includegraphics[ width=1\textwidth]{../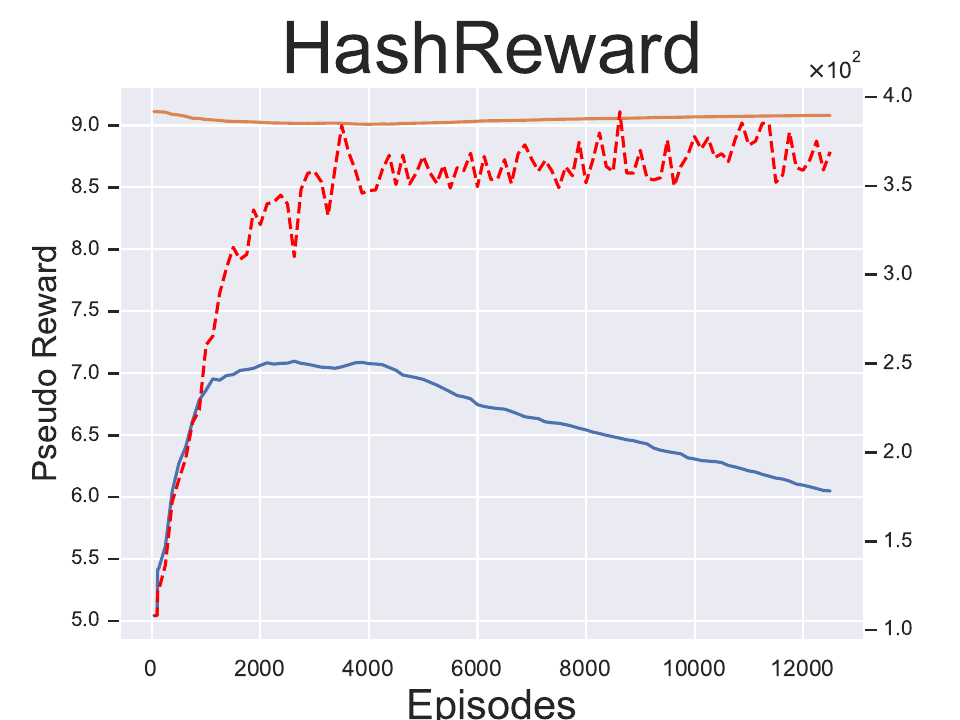}
    \end{minipage}}
    \subfigure[HalfCheetah]{
    \begin{minipage}[b]{0.185\linewidth}
        \includegraphics[ width=1\textwidth]{../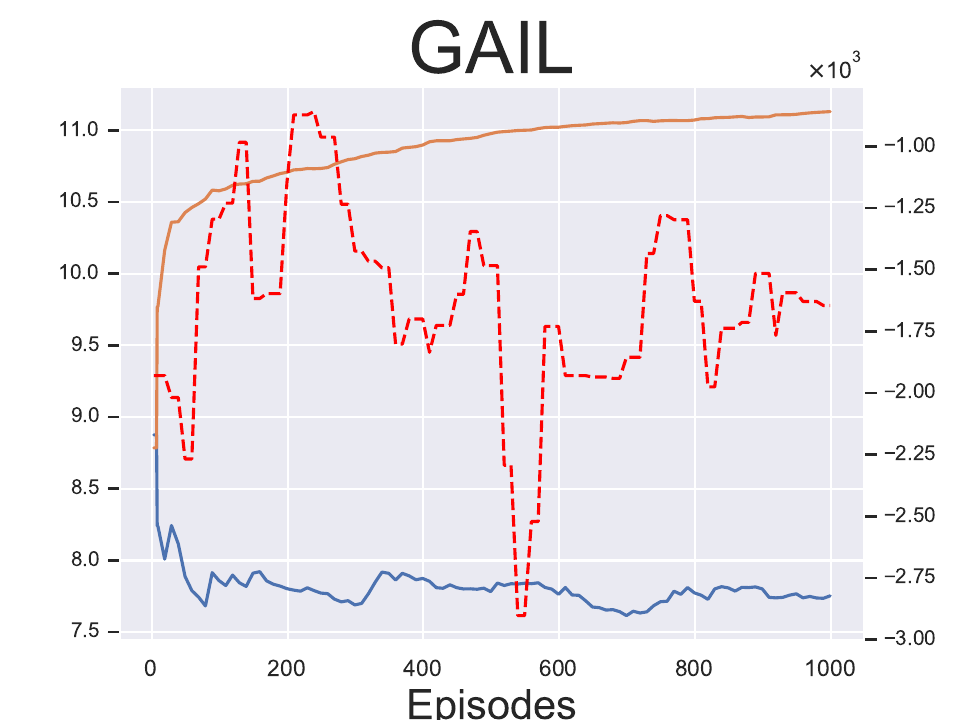}
        \includegraphics[ width=1\textwidth]{../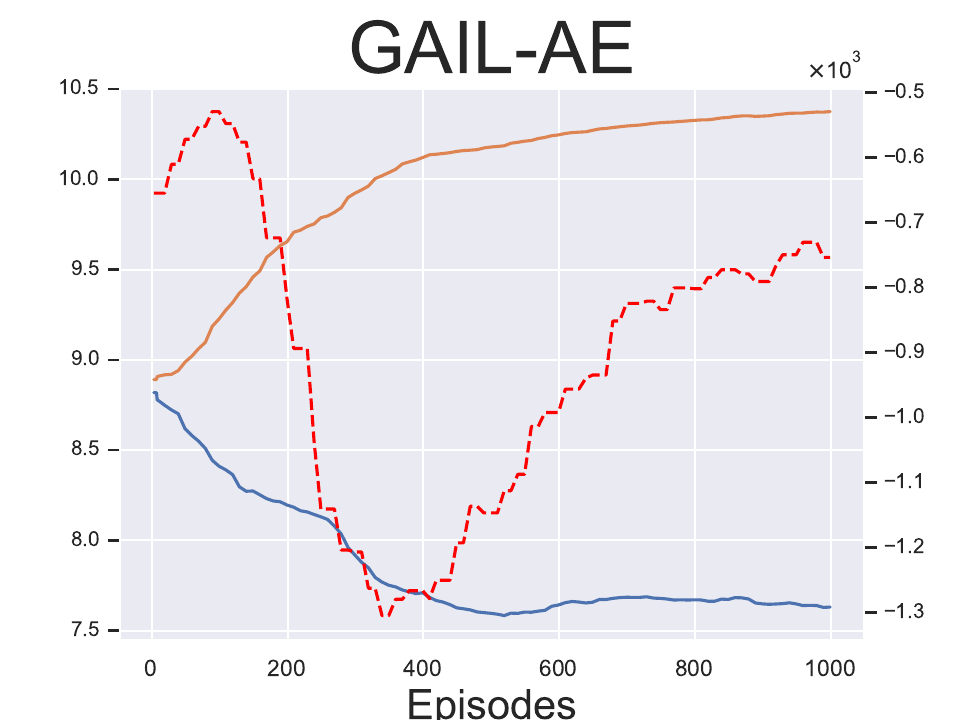}
        \includegraphics[ width=1\textwidth]{../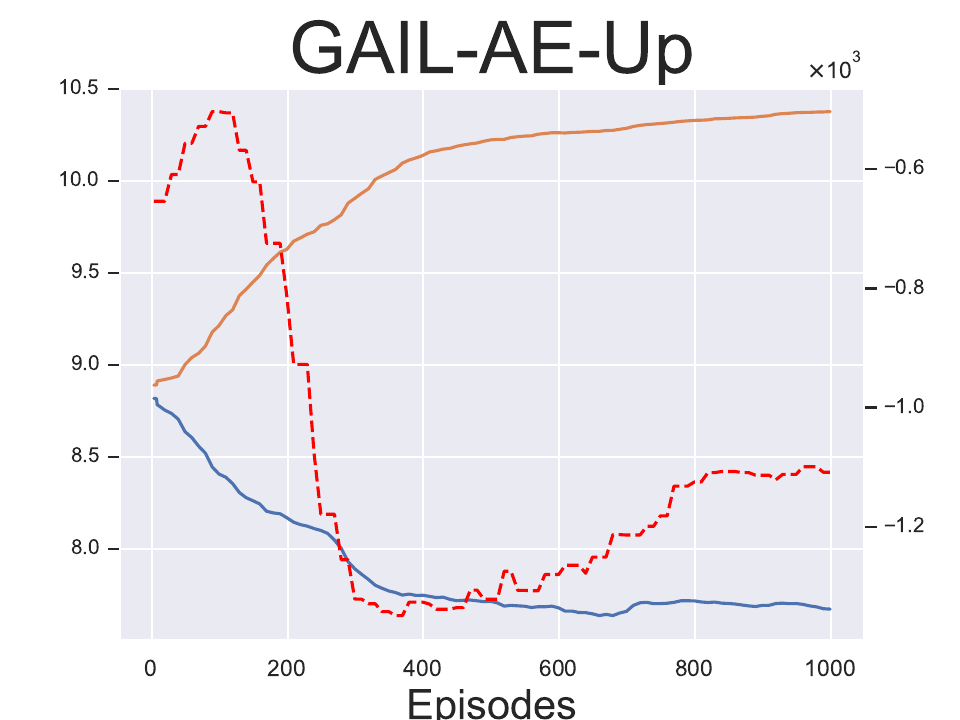}
        \includegraphics[ width=1\textwidth]{../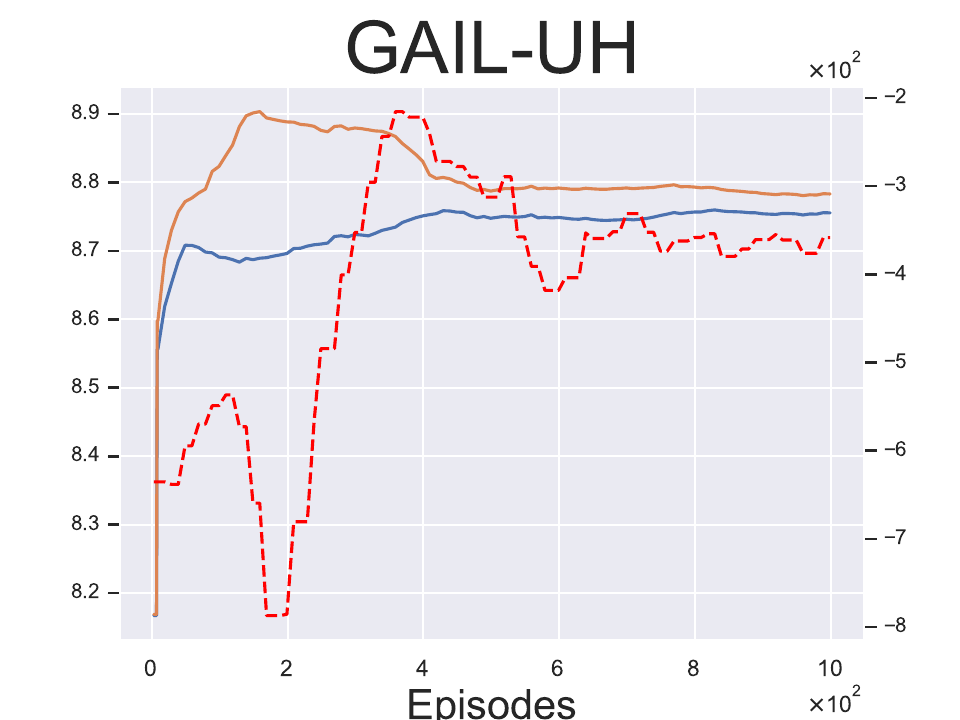}
        \includegraphics[ width=1\textwidth]{../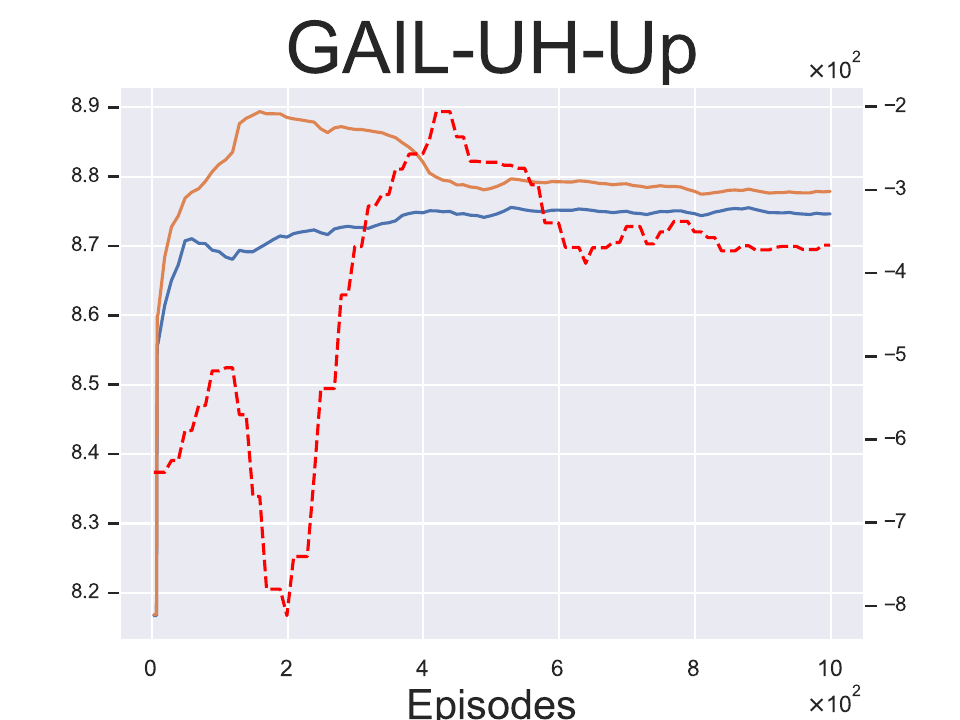}
        \includegraphics[ width=1\textwidth]{../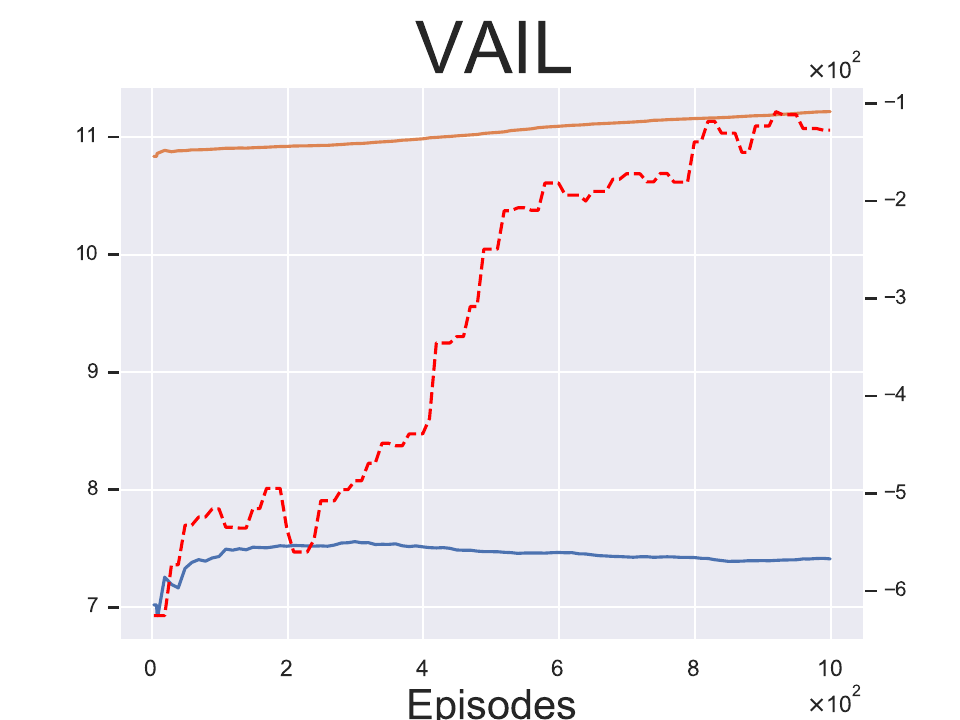}
        \includegraphics[ width=1\textwidth]{../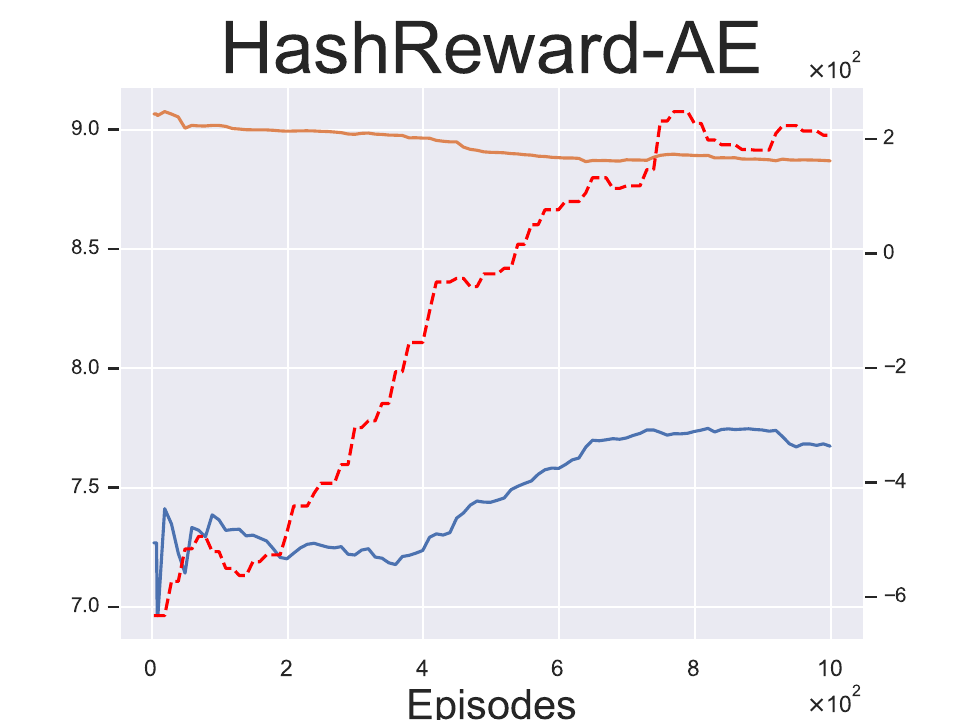}
        \includegraphics[ width=1\textwidth]{../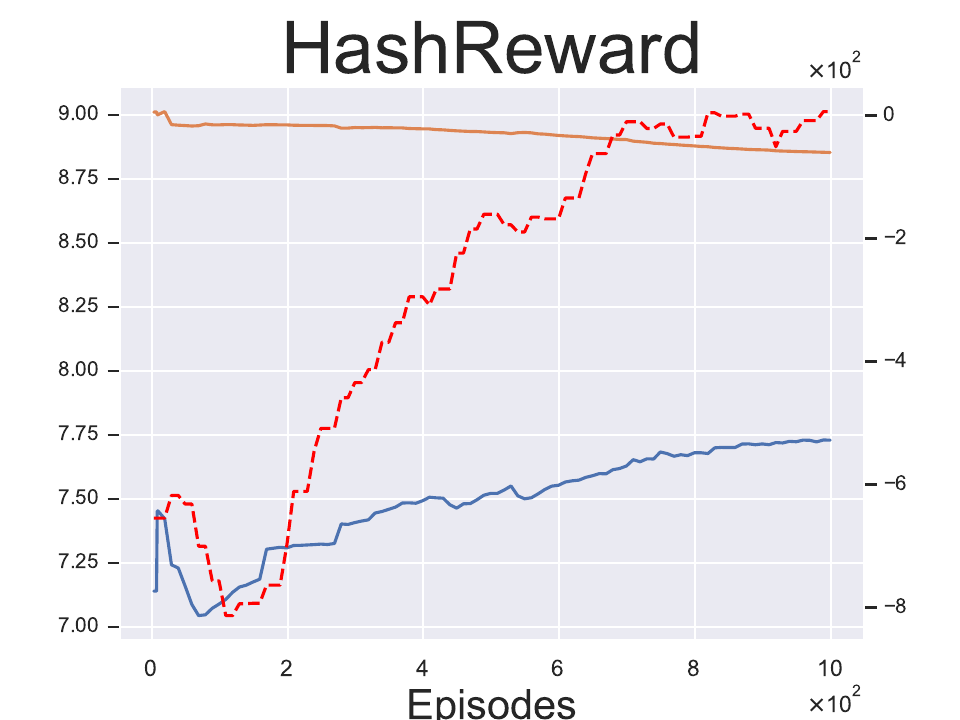}
    \end{minipage}}
    \subfigure[Hopper]{
    \begin{minipage}[b]{0.185\linewidth}
        \includegraphics[ width=1\textwidth]{../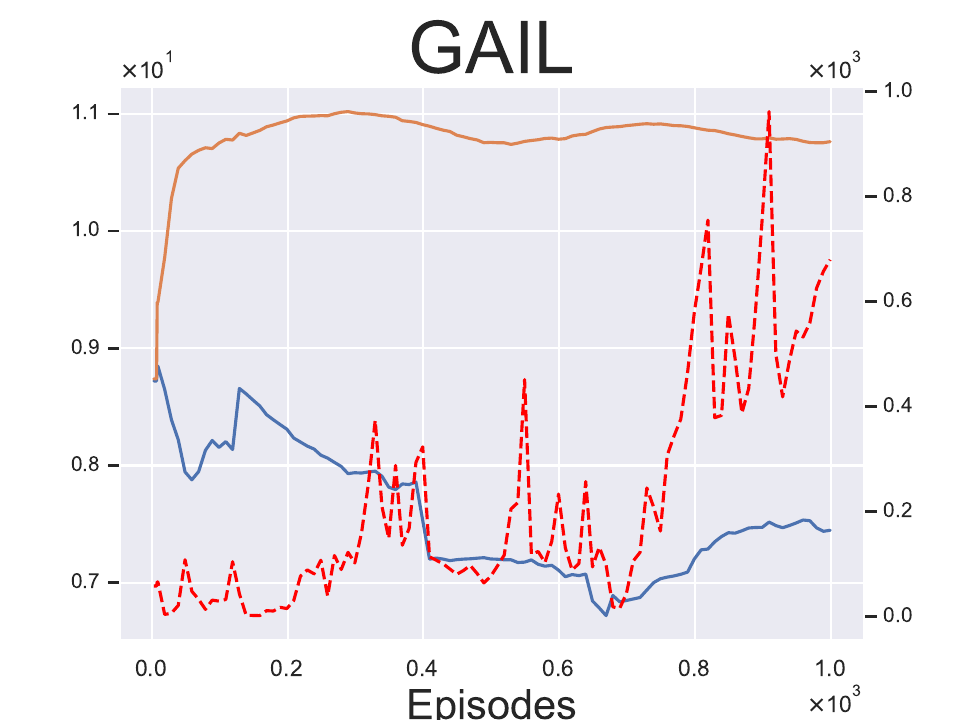}
        \includegraphics[ width=1\textwidth]{../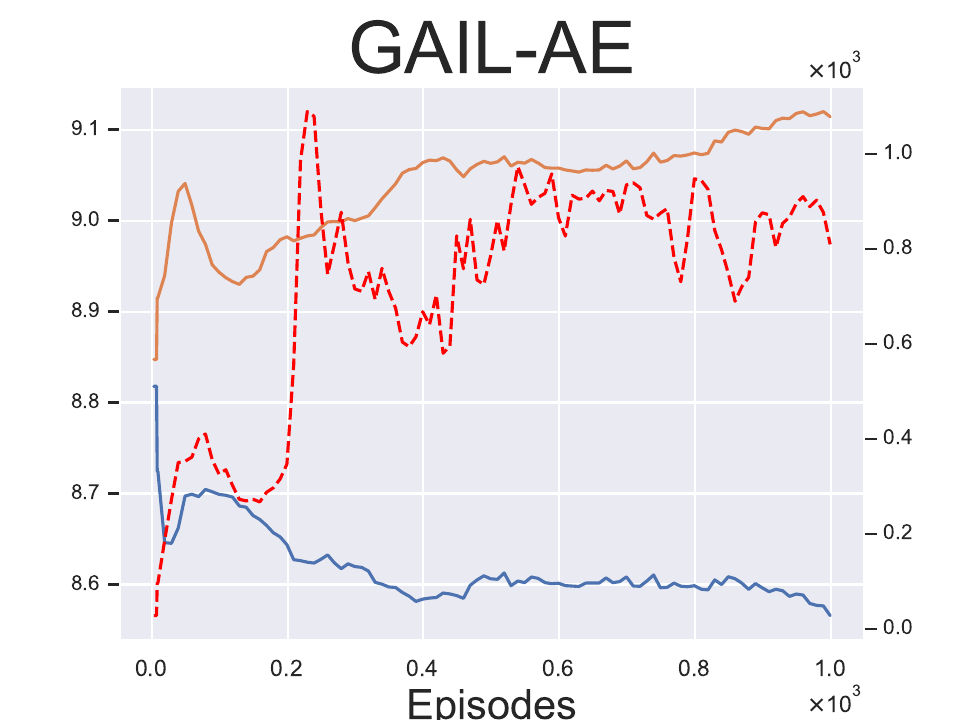}
        \includegraphics[ width=1\textwidth]{../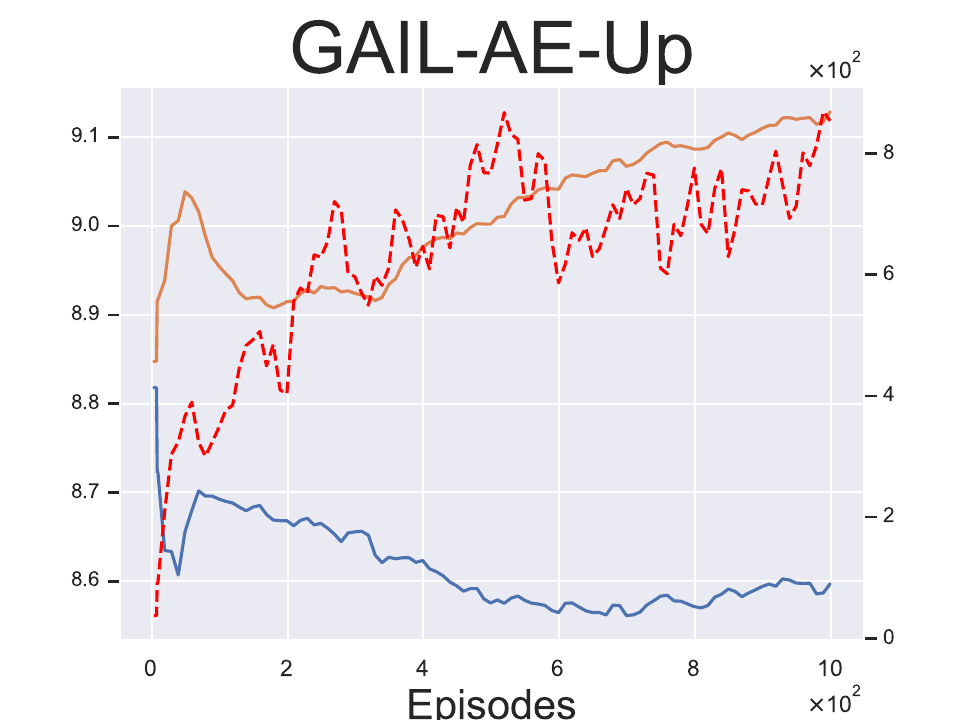}
        \includegraphics[ width=1\textwidth]{../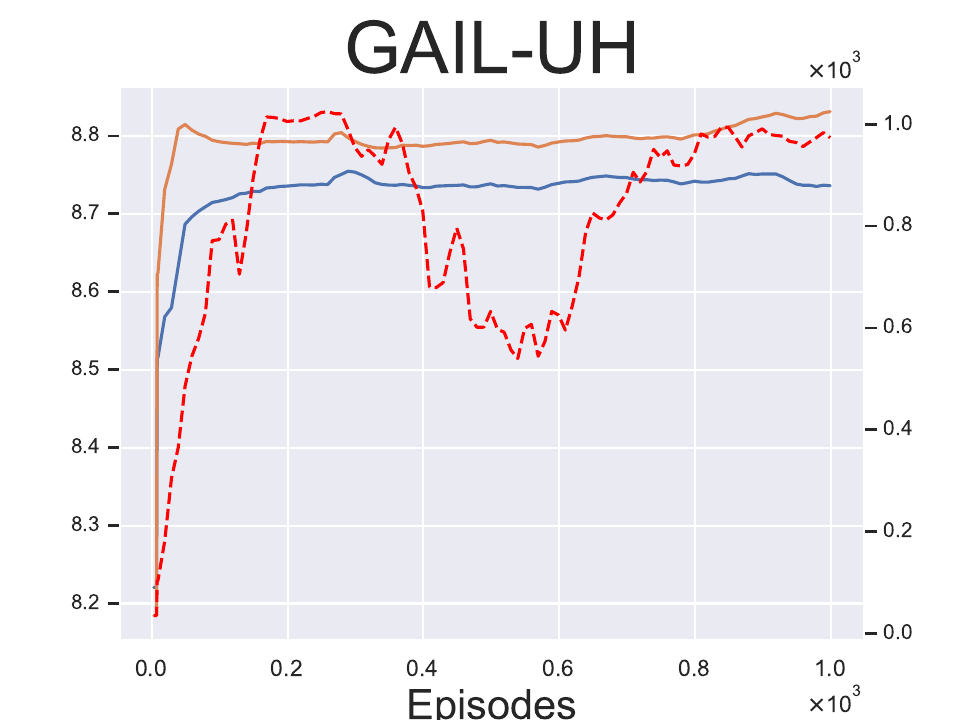}
        \includegraphics[ width=1\textwidth]{../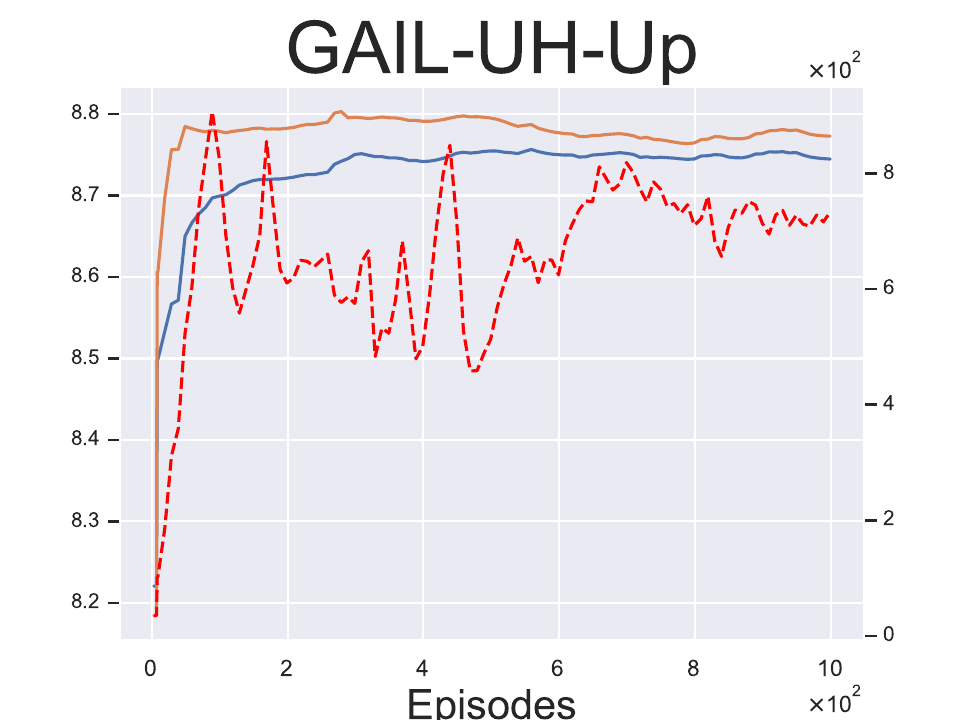}
        \includegraphics[ width=1\textwidth]{../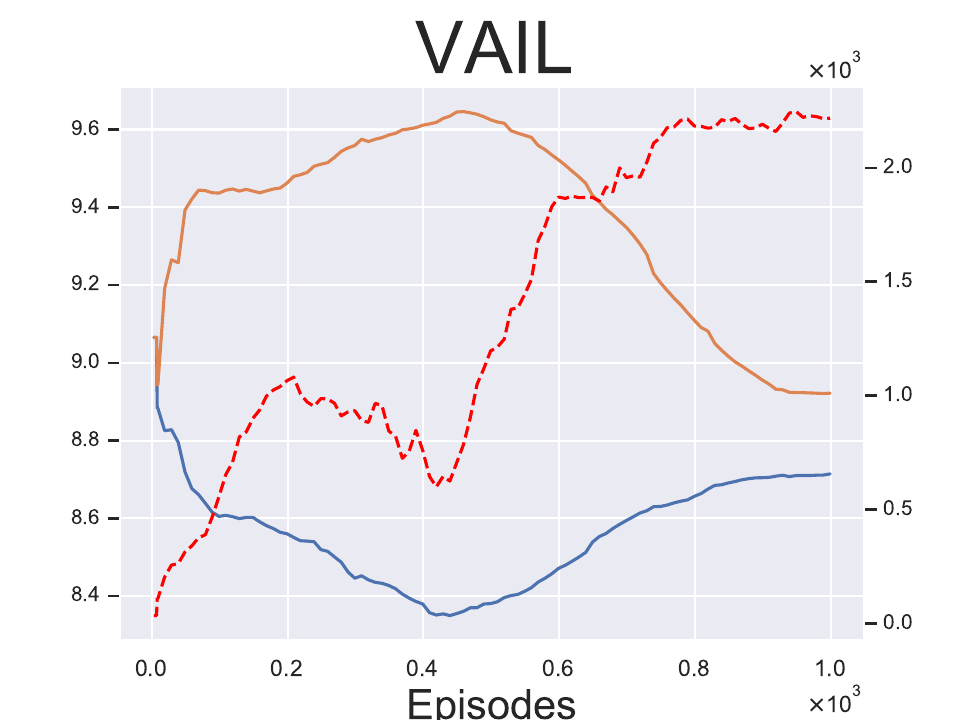}
        \includegraphics[ width=1\textwidth]{../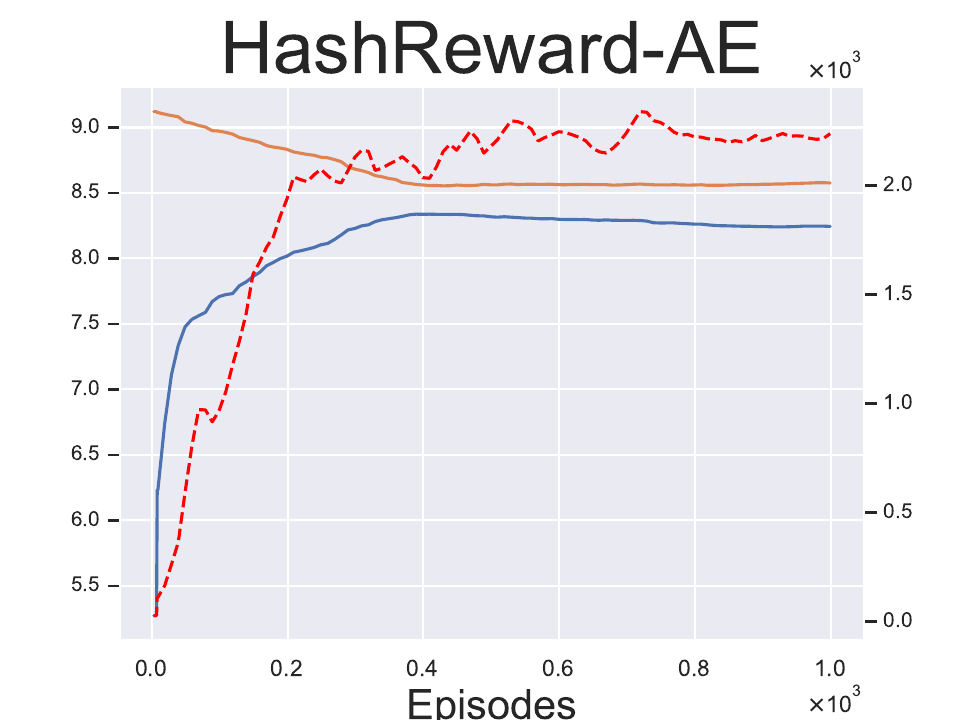}
        \includegraphics[ width=1\textwidth]{../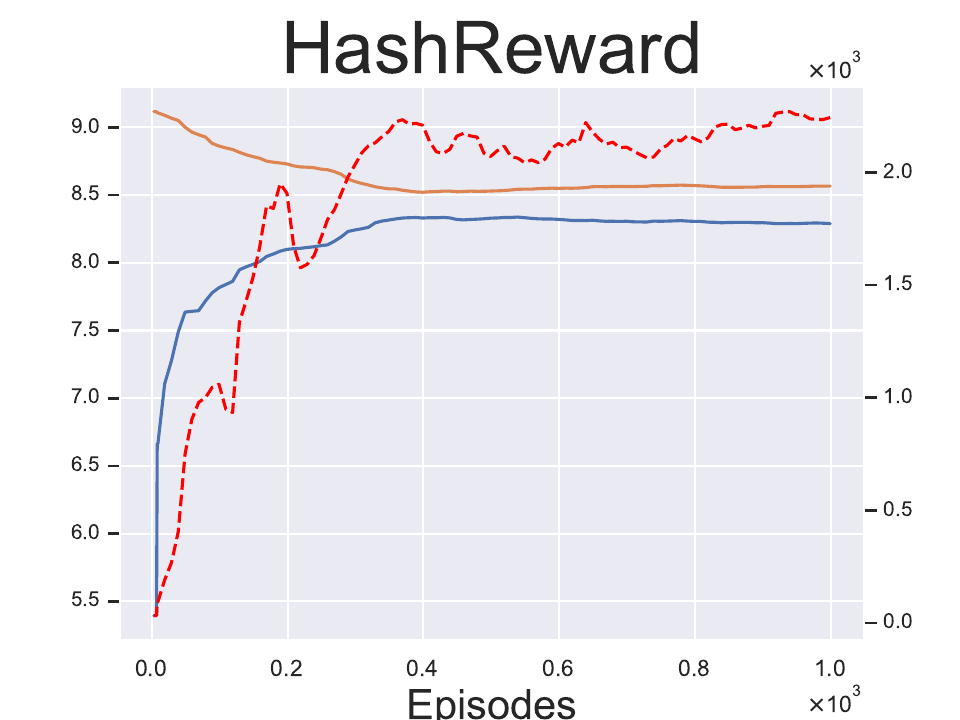}
    \end{minipage}}
    \subfigure[HumanoidStandup]{
    \begin{minipage}[b]{0.185\linewidth}
        \includegraphics[ width=1\textwidth]{../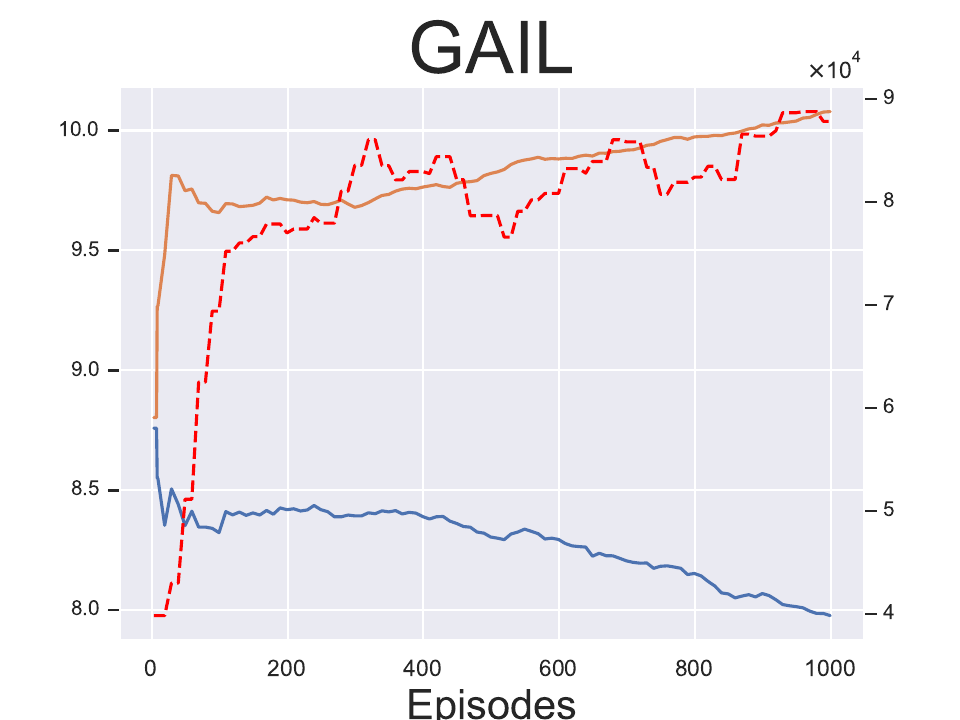}
        \includegraphics[ width=1\textwidth]{../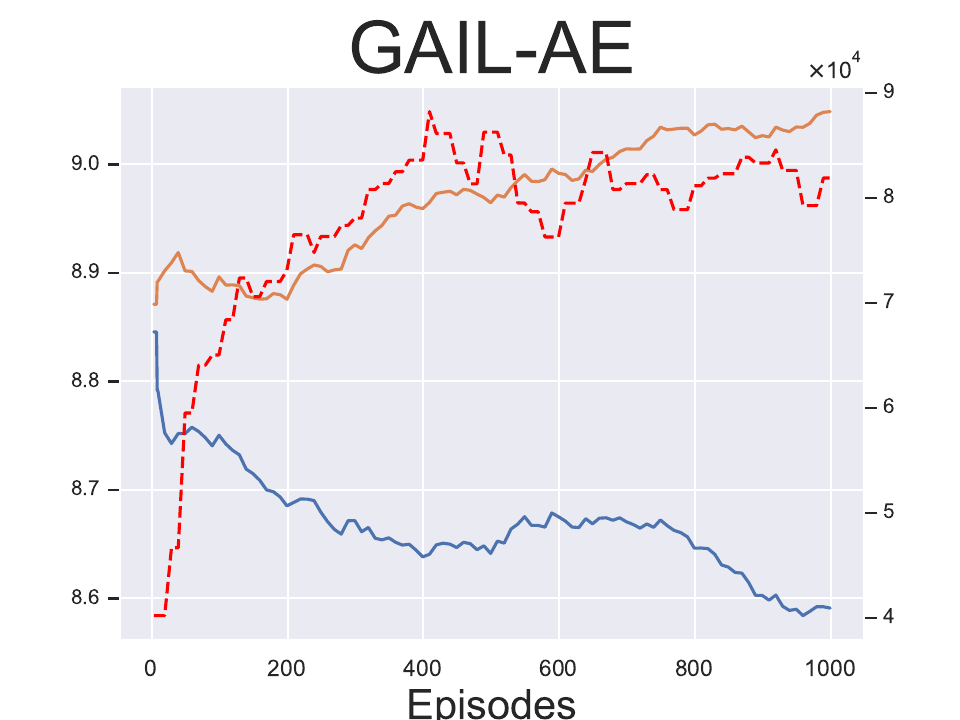}
        \includegraphics[ width=1\textwidth]{../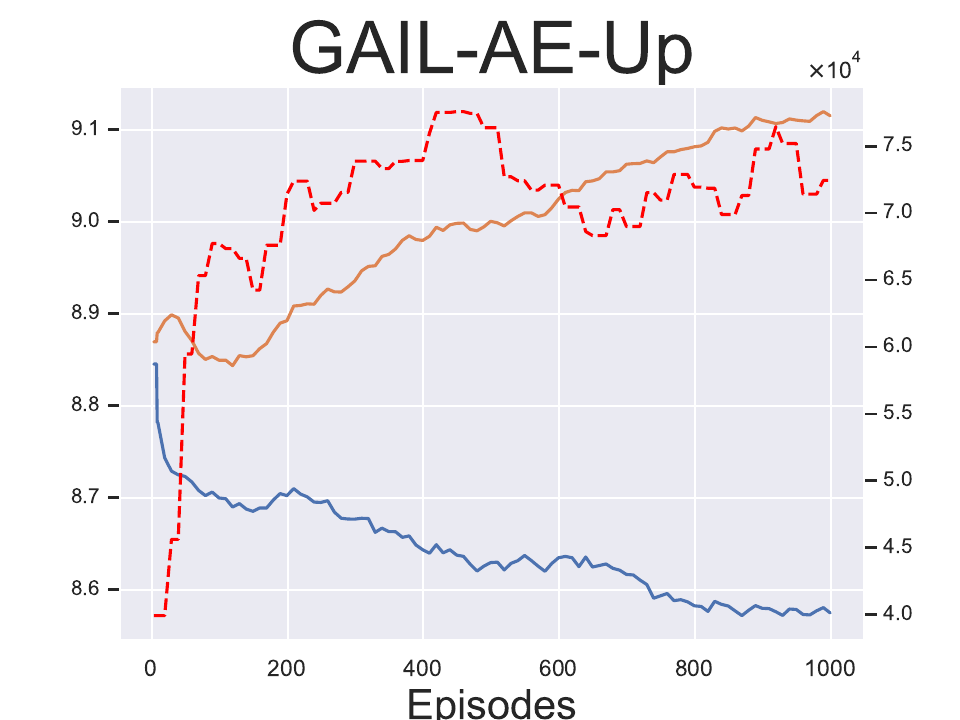}
        \includegraphics[ width=1\textwidth]{../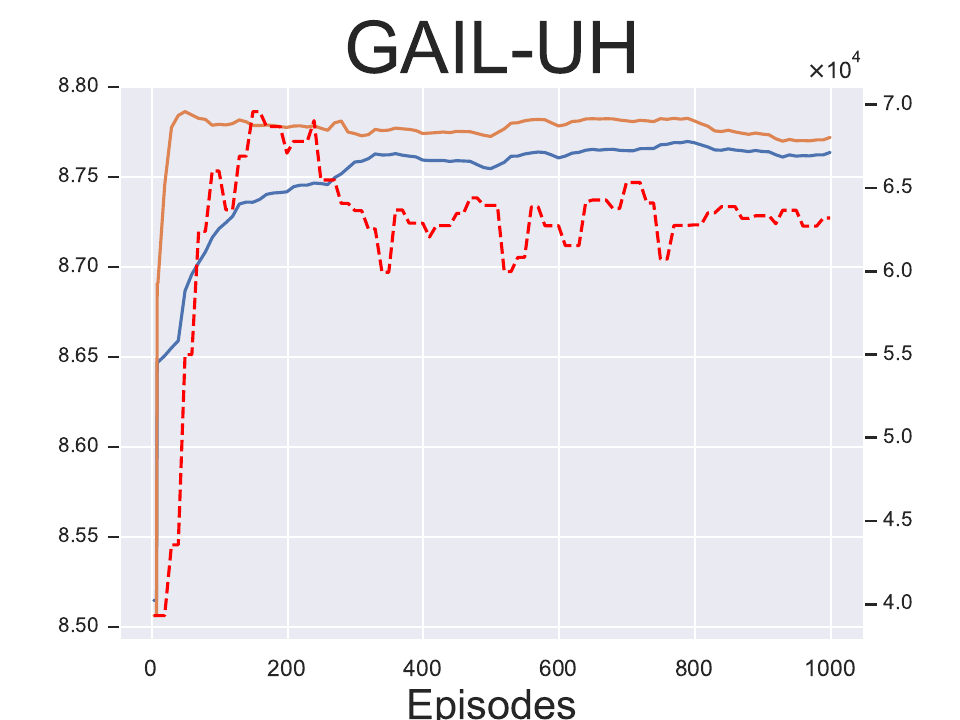}
        \includegraphics[ width=1\textwidth]{../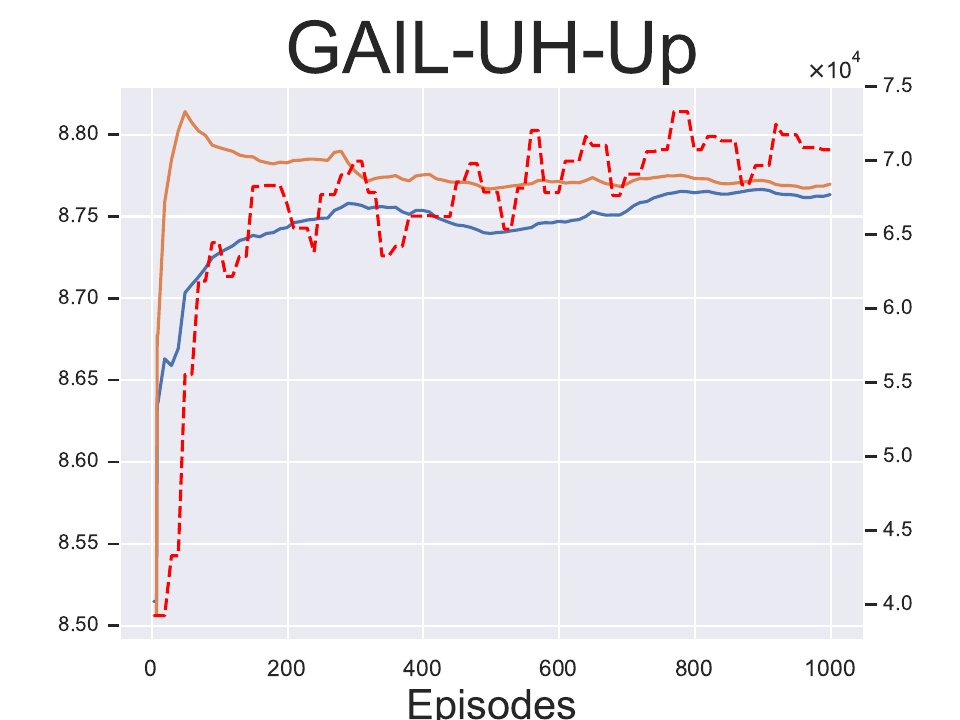}
        \includegraphics[ width=1\textwidth]{../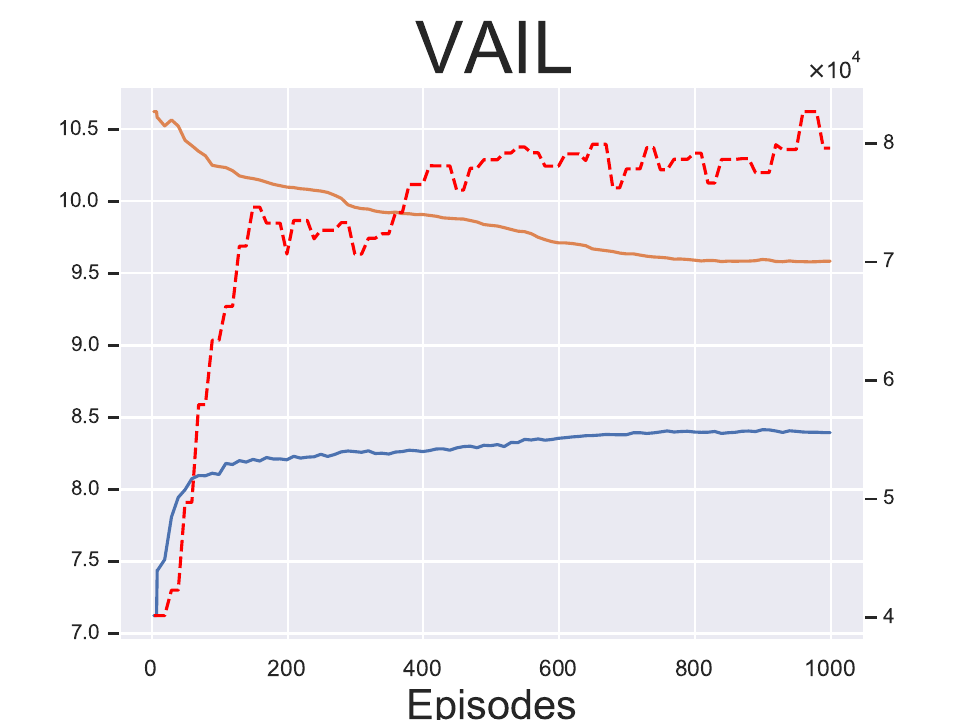}
        \includegraphics[ width=1\textwidth]{../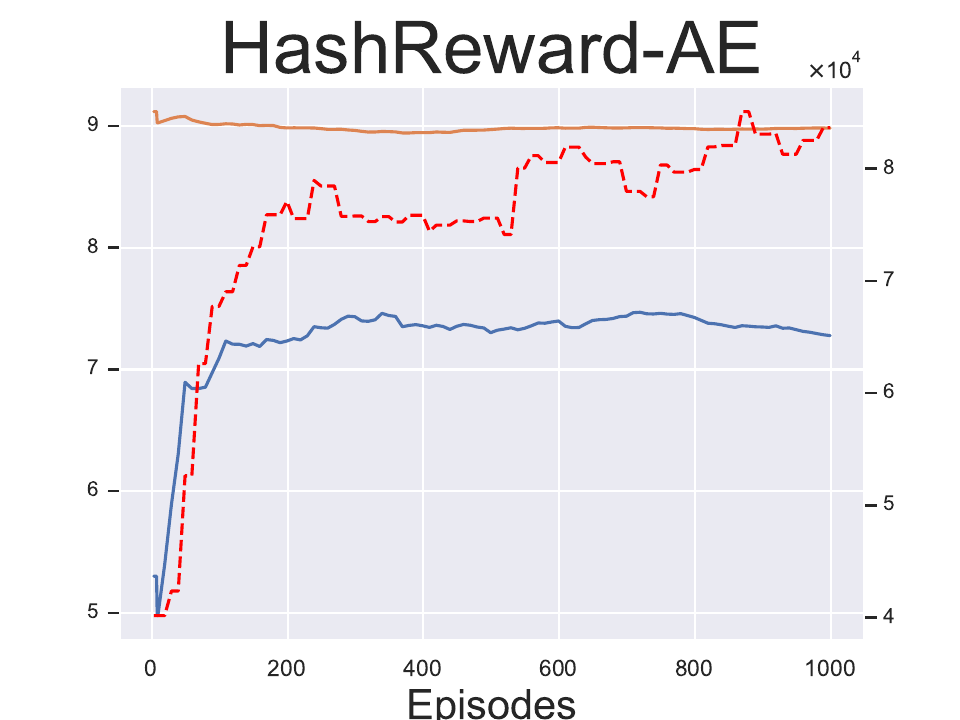}
        \includegraphics[ width=1\textwidth]{../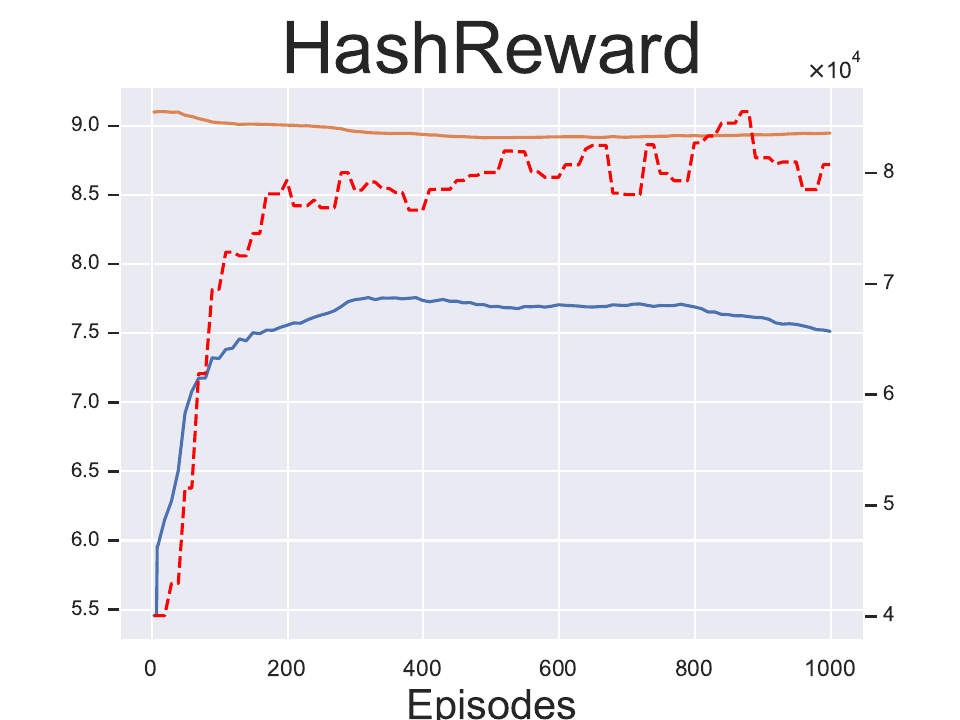}
    \end{minipage}}
    \subfigure[Reacher]{
    \begin{minipage}[b]{0.185\linewidth}
        \includegraphics[ width=1\textwidth]{../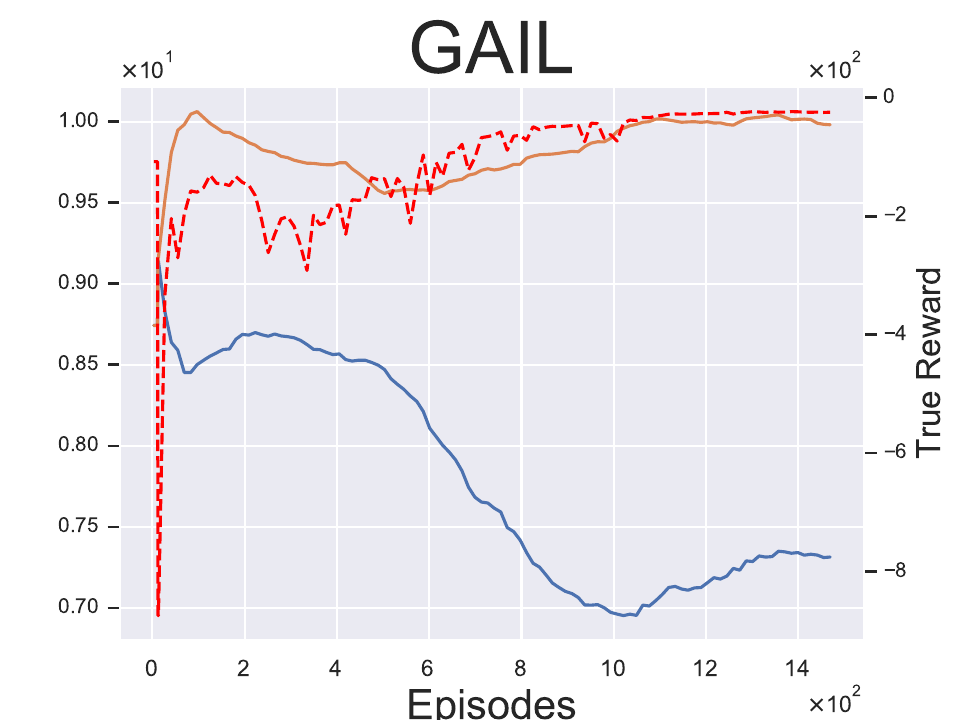}
        \includegraphics[ width=1\textwidth]{../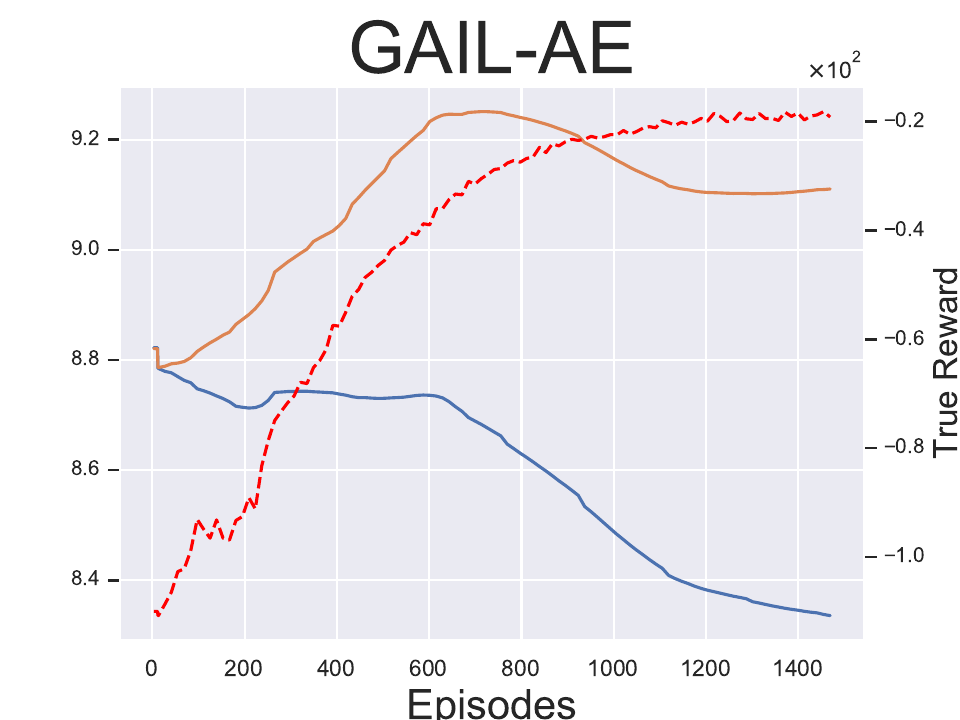}
        \includegraphics[ width=1\textwidth]{../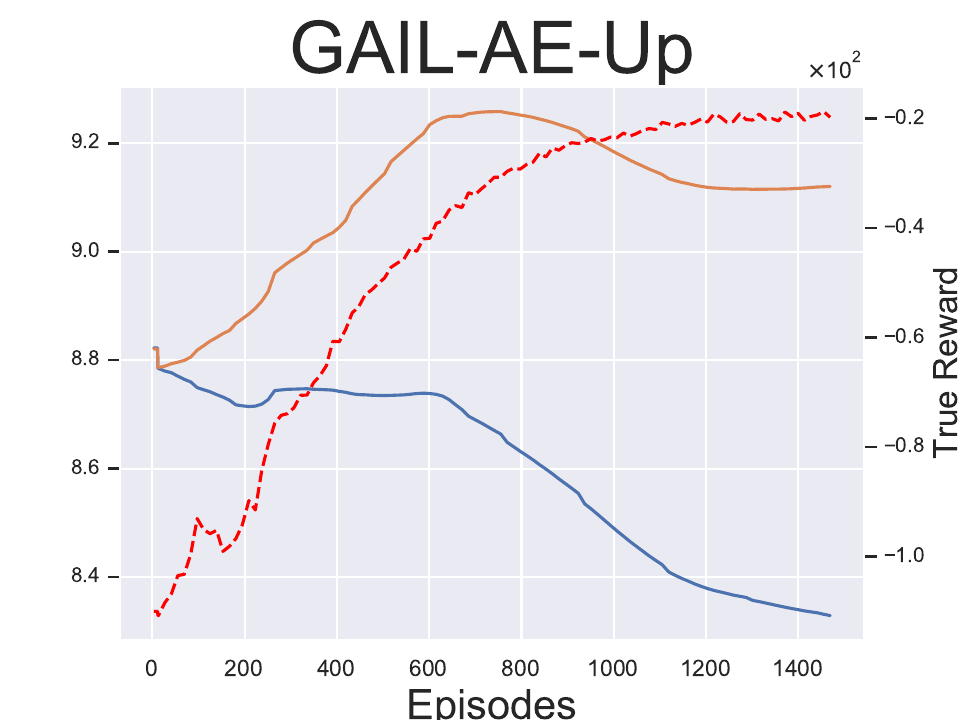}
        \includegraphics[ width=1\textwidth]{../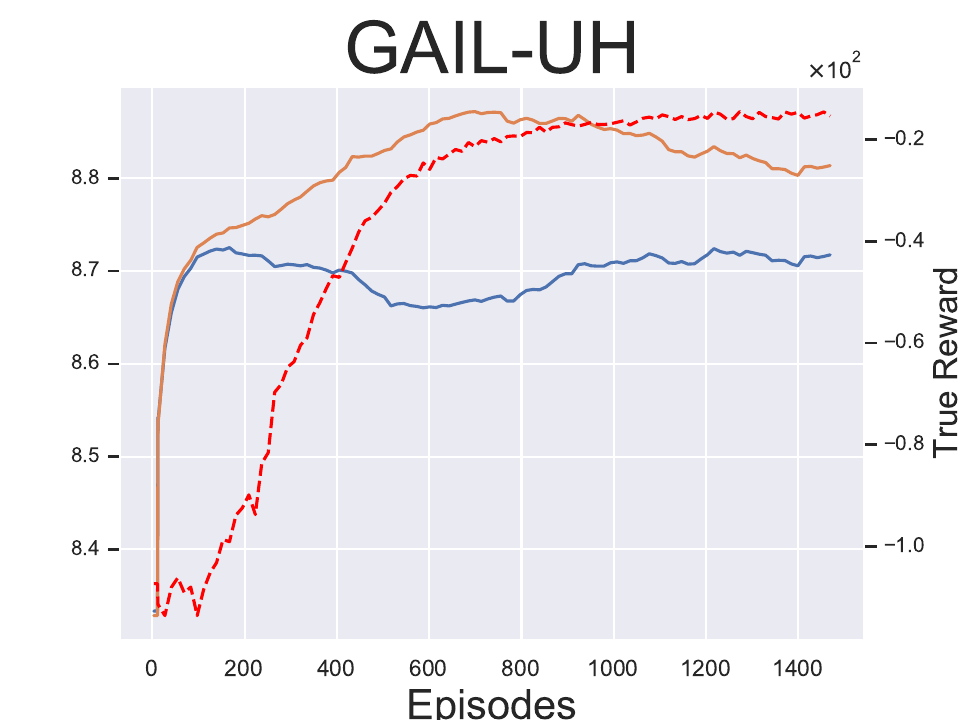}
        \includegraphics[ width=1\textwidth]{../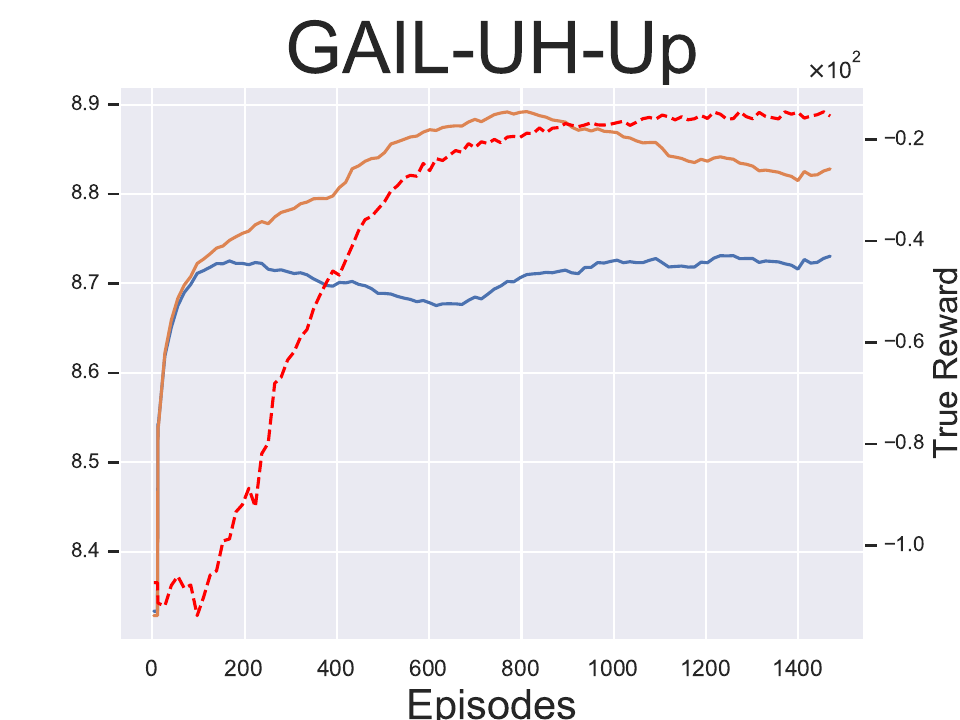}
        \includegraphics[ width=1\textwidth]{../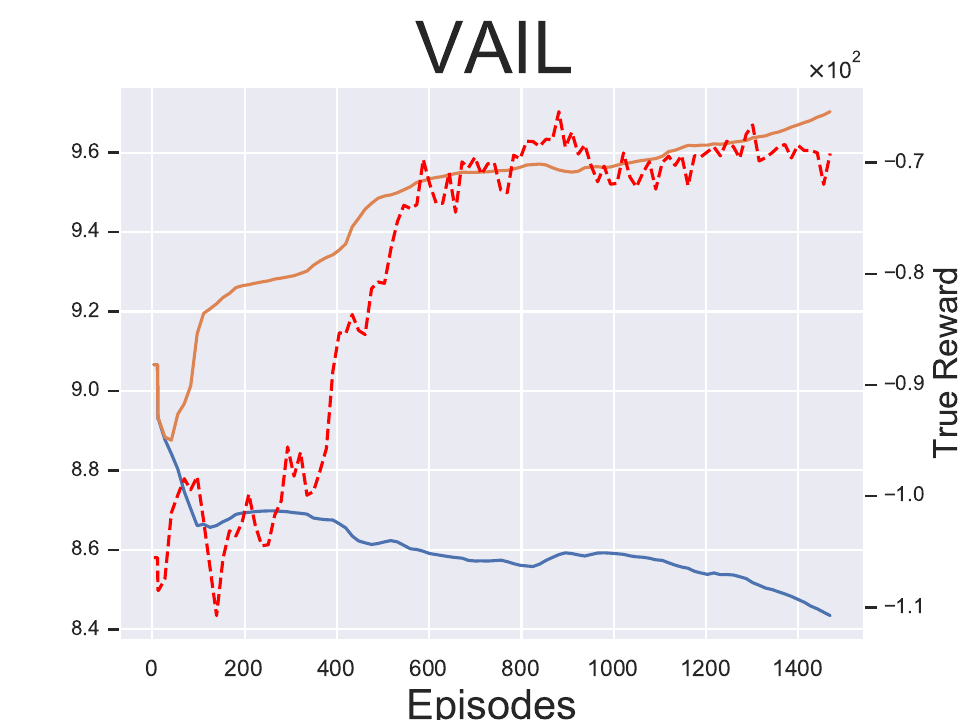}
        \includegraphics[ width=1\textwidth]{../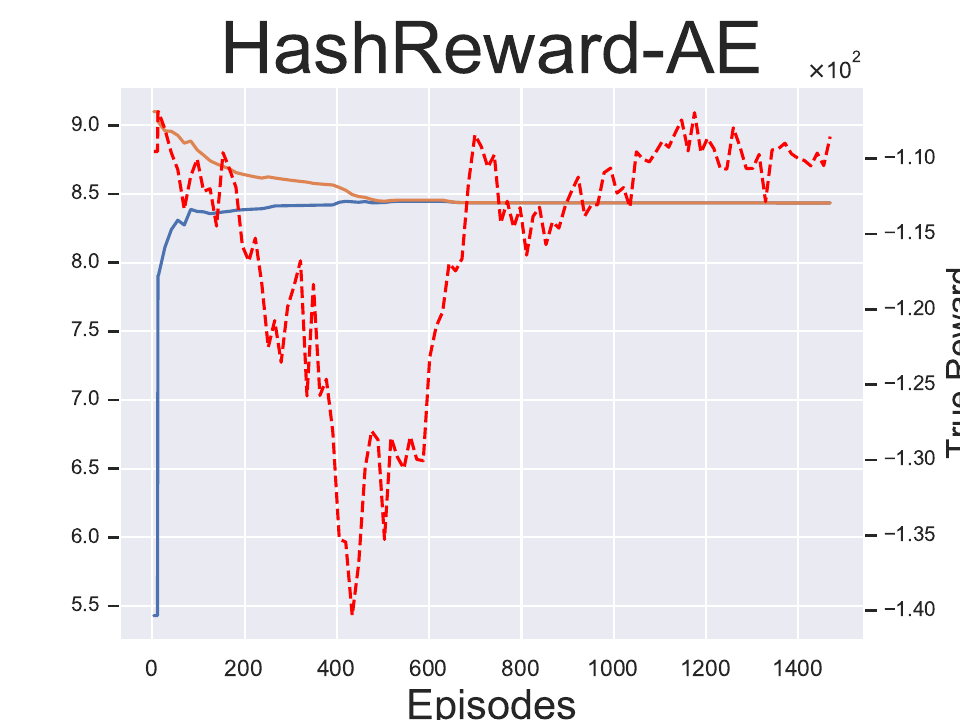}
        \includegraphics[ width=1\textwidth]{../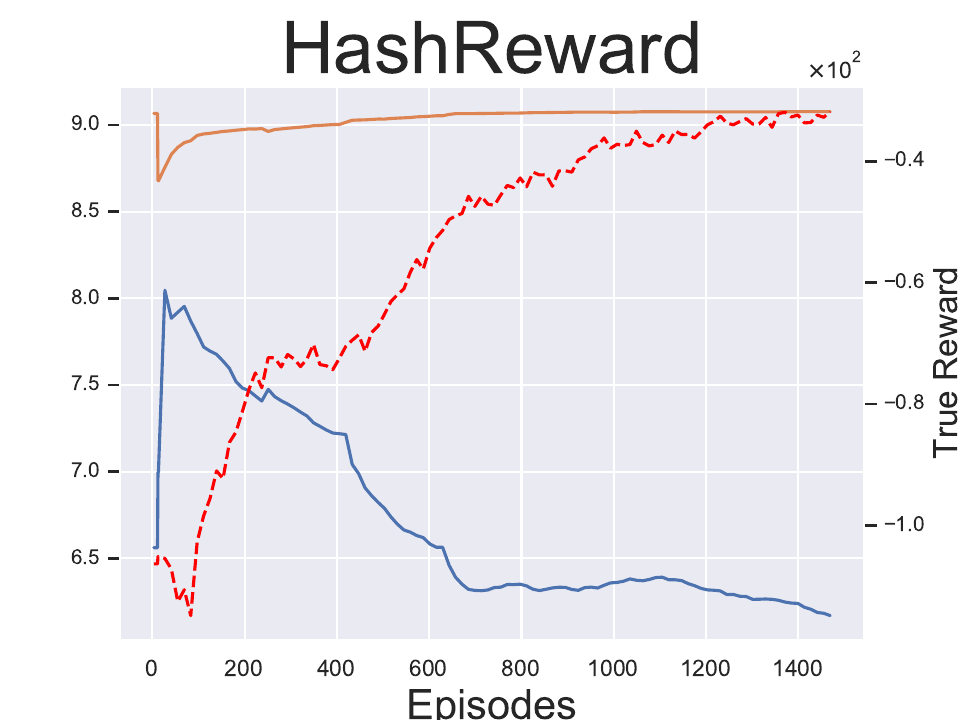}
    \end{minipage}}
\end{figure*}

\end{document}